\PassOptionsToPackage{dvipsnames,table}{xcolor}
\PassOptionsToPackage{ruled}{algorithm2e}
\PassOptionsToClass{cleveref}{jmlr}
\PassOptionsToPackage{noabbrev,nameinlink,capitalize}{cleveref}

\documentclass{arxiv}

\usepackage[utf8]{inputenc}
\usepackage[T1]{fontenc}
\usepackage{times}

\usepackage[T1]{fontenc}
\usepackage[utf8]{inputenc}

\usepackage{dsfont}
\usepackage{bm}

\usepackage{microtype}
\usepackage[autostyle]{csquotes}

\usepackage{enumitem}
\usepackage{xspace}

\usepackage[normalem]{ulem}

\renewcommand{\labelenumi}{\color{black!67}{\arabic{enumi}.}}

\usepackage{wrapfig}
\usepackage{adjustbox}

\usepackage{booktabs}
\usepackage{array}
\usepackage{multirow}

\usepackage[noabbrev,nameinlink,capitalize]{cleveref}
\creflabelformat{equation}{#2\textup{#1}#3}

\usepackage{tikz}
\usetikzlibrary{
  backgrounds,
  calc,
  quotes,
  arrows.meta,
  shapes.misc,
  positioning,
  graphs
}

\makeatletter
\usetikzlibrary{shapes,fit,chains,arrows}

\tikzstyle{latent} = [circle,fill=white,draw=black,inner sep=1pt,minimum size=20pt,font=\fontsize{10}{10}\selectfont,node distance=1]
\tikzstyle{obs}    = [latent,fill=gray!25]              %
\tikzstyle{const}  = [rectangle,inner sep=0pt,node distance=1]  %
\tikzstyle{factor} = [rectangle,fill=black,minimum size=5pt,inner sep=0pt,node distance=0.4]  %
\tikzstyle{det}    = [latent,diamond]                   %

\tikzstyle{plate}  = [draw,rectangle,rounded corners,fit=#1]  %
\tikzstyle{wrap}   = [inner sep=0pt,fit=#1]             %
\tikzstyle{gate}   = [draw,rectangle,dashed,fit=#1]     %

\tikzstyle{caption}        = [font=\footnotesize,node distance=0]
\tikzstyle{plate caption}  = [caption,node distance=0,inner sep=0pt,below left=5pt and 0pt of #1.south east]
\tikzstyle{factor caption} = [caption]
\tikzstyle{every label}   += [caption]

\makeatother

\pgfdeclarelayer{edgelayer}
\pgfdeclarelayer{nodelayer}
\pgfsetlayers{edgelayer,nodelayer,main}

\tikzset{>=latex}

\tikzstyle{none}   = [inner sep=0pt]
\tikzstyle{line}   = [-,thick,shorten <=1pt,shorten >=1pt]
\tikzstyle{arrow}  = [->,thick,shorten <=1pt,shorten >=1pt]
\tikzstyle{ardash} = [dashed,->,thick,shorten <=1pt,shorten >=1pt]

\tikzstyle{box}    = [rectangle,minimum width=1.5cm,minimum height=1.5cm,text centered,draw=black,inner sep=7pt]
\tikzstyle{neuron} = [circle,minimum width=4mm,very thick,draw=blue!80!black]
\tikzstyle{empty}  = [circle,opacity=0.0,text opacity=1.0,inner sep=0pt]

\tikzstyle{filled}       = [circle,thick,fill=hexcolor0xbfbfbf,draw=black]
\tikzstyle{hollow}       = [circle,thick,fill=white,draw=black]
\tikzstyle{param}        = [rectangle,fill=black,draw=black,inner sep=0pt,minimum width=4pt,minimum height=4pt]
\tikzstyle{paramhollow}  = [rectangle,thick,fill=white,draw=black,inner sep=0pt,minimum width=4pt,minimum height=4pt]

\usepackage{pgfplots}
\pgfplotsset{compat=newest}
\pgfplotsset{plot coordinates/math parser=false}
\usepgfplotslibrary{statistics}

\newlength\figureheight
\newlength\figurewidth
\newlength\figureheightsmall
\newlength\figurewidthsmall
\makeatletter
\tikzset{
  prefix after node/.style={
    prefix after command={\pgfextra{#1}}
  },
  /semifill/ang/.store in=\semi@ang,
  /semifill/ang=0,
  semifill/.style={
    circle,draw,
    prefix after node={
      \let\nodename\tikz@last@fig@name
      \fill[/semifill/.cd,/semifill/.search also={/tikz},#1]
        let \p1=(\nodename.north),\p2=(\nodename.center) in
        let \n1={\y1-\y2} in
        (\nodename.\semi@ang) arc[radius=\n1,start angle=\semi@ang,delta angle=180];
    },
  }
}
\makeatother

\usepackage{algorithm}
\csundef{If}\csundef{ElseIf}\csundef{Else}
\csundef{For}\csundef{ForPar}\csundef{ForEach}\csundef{ForAll}
\csundef{While}\csundef{Repeat}\csundef{Until}\csundef{Loop}
\csundef{Return}\csundef{Input}\csundef{Output}
\usepackage{algpseudocode}
\usepackage{tcolorbox}
\usepackage{wrapfig}
\usepackage{longtable}

\setlist[itemize]{leftmargin=0.5cm,topsep=2pt,itemsep=1pt,parsep=0pt}

\makeatletter
\let\c@lemma\relax
\let\c@proposition\relax
\let\c@corollary\relax
\let\c@definition\relax
\let\c@remark\relax
\let\c@conjecture\relax

\newtheorem{lemma}{Lemma}
\newtheorem{proposition}{Proposition}
\newtheorem{corollary}{Corollary}
\newtheorem{definition}{Definition}
\newtheorem{remark}{Remark}
\newtheorem{assumption}{Assumption}
\crefname{lemma}{Lemma}{Lemmas}
\crefname{proposition}{Proposition}{Propositions}
\crefname{corollary}{Corollary}{Corollaries}
\crefname{definition}{Definition}{Definitions}
\crefname{remark}{Remark}{Remarks}
\crefname{assumption}{Assumption}{Assumptions}
\makeatother

\newcommand{\defterm}[1]{\uline{\emph{#1}}}


\renewenvironment{proof}[1][\proofname]{\par\noindent{\bfseries\upshape #1.\ }}{\jmlrQED}

\usepackage{hyperref}
\hypersetup{
  colorlinks=true,
  citecolor=MidnightBlue,
  linkcolor=MidnightBlue,
  urlcolor=MidnightBlue
}

\title[What Information Buys in Active Inference]{What Information Buys in Active Inference}

\altauthor{\arxivnames{\Name{Melih Kandemir} \and \Name{Mustafa Mert \c{C}elikok}}}
\arxivaffil{\itshape Department of Mathematics and Computer Science, University of Southern Denmark}

\begin{document}

\maketitle

\begin{abstract}
Information-theoretic exploration is of primary interest in methodological studies of reinforcement learning (RL) and computational cognitive neuroscience, while the established RL theory builds regret guarantees on confidence sets. This situation leaves open the conditions under which the information-driven exploration found in nature is more advantageous than the exploration of the established RL theory. We put the state-of-the-art methods of these two paradigms, Active Inference (AIF) and Bayes-UCBVI, on an equal footing: both act through the same state abstraction of a tabular MDP with $S$ states, and their greedy forms differ only in the exploration term. We find that AIF's information gain decays too fast to serve as a confidence width, locking greedy agents that explore by it out of the optimal action. Calibrating the information gain by a square root yields calibrated AIF, which attains regret of order $\sqrt T$ for an exact abstraction, but no better guarantee than Bayes-UCBVI for a fixed reward. When an adversary chooses rewards from the agent's data, Bayes-UCBVI's value-specific widths lock out, whereas calibrated AIF pays a factor $\sqrt S$, which no goal-memoryless agent avoids on some MDP whose start states the adversary also chooses. Calibrated information gain thus acts as a reward-agnostic $\ell_1$ confidence set, preferable to value-specific widths when rewards are chosen from the agent's data, not when they are fixed.

\end{abstract}

\section{Introduction}
\label{sec:introduction}

Humans think in concepts, and each concept is an abstraction that treats many raw sensory states as equivalent. As the token spaces of frontier AI systems inherit such abstractions from human language, the remarkable navigation power of these systems is limited to the human concept space, and nothing in their training lets them form abstractions outside it. To form a new abstraction, an agent must decide which states to treat as equivalent while it acts, leaving the agent with the dual problems of collecting data to estimate the quality of the existing abstraction and acting in a way to find a better abstraction. We study this coupling of exploration and state abstraction with the tools of learning theory. Our agents act in tabular episodic Markov decision processes (MDPs) and observe the state through an abstraction $\phi$, which in general makes the observed process non-Markov \citep{starre2023analysis}.

Active Inference (AIF) is a normative account of this coupling. It formalises the enactivist thesis that an agent's representation cannot be decoupled from its action \citep{varela1991embodied,hutto2013radicalizing} by letting a single objective drive both its actions and the learning of its world model \citep{friston2017active}. Its single objective, the \emph{expected free energy}, scores an action by its expected reward and by the information it is expected to yield. The same inference learns the agent's world model by predicting the agent's observations, and this model includes the abstraction, that is, which states the agent treats as equivalent. AIF thus makes information gain the common currency of explorative policy search and of the abstraction of the state space. In its standard formulation, rewards enter the picture as \emph{preferences}, a fixed prior distribution over desired outcomes on which inference conditions and which it does not update.

Reinforcement learning (RL) theory instead derives exploration terms from the estimation error of a value, such as an upper-confidence bonus \citep{auer2002finite,azar2017minimax} or a posterior quantile \citep{kaufmann2012bayesian,tiapkin2022dirichlet}. It selects abstractions by their consequences for value \citep{maillard2013optimal,ortner2014selecting,jiang2015abstraction,dong2019provably}. Control as Inference (CaI) \citep{toussaint2009robot,levine2018reinforcement} is known to fail to explore, because it ignores epistemic uncertainty \citep{odonoghue2020making}, which the information-gain term of AIF captures. K-learning \citep{odonoghue2021variational} and VAPOR \citep{tarbouriech2023probabilistic} add a posterior-uncertainty term to CaI and attain sublinear Bayesian regret, that is, regret averaged over a prior on MDPs. EUBRL \citep{ma2026eubrl} adds an epistemic-uncertainty variable to CaI and attains high-probability regret on every MDP. No prior study reports such guarantees for AIF, or for CaI without an added exploration term. Motivated by this gap, we investigate the following question.
\begin{tcolorbox}[colback=gray!6,colframe=gray!55,boxrule=0.5pt,arc=1pt,left=5pt,right=5pt,top=3pt,bottom=3pt]
\emph{When is information gain advantageous, and when is it disadvantageous, in terms of regret, compared with confidence sets, for an agent that explores and learns through a state abstraction?}
\end{tcolorbox}

We put the state-of-the-art methods of these two paradigms, AIF and Bayes-UCBVI \citep{tiapkin2022dirichlet}, on an equal footing. Bayes-UCBVI is minimax-optimal for stage-dependent kernels, and its posterior quantiles act as value-specific confidence widths. Both agents act through the same abstraction and, under a fixed reward function, hold the same Dirichlet posterior over its transitions, and Bayes-UCBVI differs from the greedy variants of AIF only in its exploration term (Section~\ref{sec:aif-model}). The comparison therefore isolates information from value as the currency of exploration. We also compare AIF with CaI and with UCBVI \citep{azar2017minimax}. Two features of AIF make the question delicate. First, AIF adds information, measured in nats, to reward at an exchange rate, the precision. Which precision, if any, calibrates the two is unspecified by the core theory. Second, when the abstraction is learned, the data that select it are generated by the agent's exploration through it. Thus, a poor abstraction can steer the agent away from the data that would reveal its error. The three paragraphs below answer the question for a reward function fixed across episodes, episode-specific reward functions that an adaptive adversary chooses from the agent's data, and an abstraction that the agent selects from a candidate set, and the three parts of Table~\ref{tab:regret} list the results. We provide all the proofs in the appendix.

\begin{table}[t]
\centering
\caption{Regret over $K$ episodes of horizon $H$, $T=HK$ steps, with $S=|\mathcal S|$, $X=|\mathcal X|\ll S$, $A=|\mathcal A|$, the representation scale $\epsilon_{\text{rep}}$ (Definition~\ref{def:spread}) and its bound $\bar\varepsilon$ (Corollary~\ref{cor:minimax}), the branching number $\Gamma_\phi\le X$ (Theorem~\ref{thm:calibrated}), and the candidate set $\Phi$ of abstractions. Each row is a separate result for its agent. Lower bounds hold for every agent of the named kind on some MDP. $\widetilde{\mathcal O}$ hides $\mathrm{poly}\log(HSXAT/\delta)$. \emph{Status} marks a result as known, adapted, an instance of a general theorem, or new, and Appendix~\ref{app:notation} states the conditions of each row. (a)~PAC form in \citet[Thm.~2]{kolter2009near}. (b)~Frequentist counterpart of \citet{lu2019information}. (c)~Of \citet[Thm.~1]{xie2023role}. (d)~Boltzmann exploration in bandits \citep{cesabianchi2017boltzmann}.}
\label{tab:regret}
\small
\setlength{\tabcolsep}{2pt}
\begin{tabular}{@{}clllc@{}}
\toprule
\# & Agent & Bound & Result & Status \\
\midrule
\multicolumn{5}{@{}l}{\emph{Fixed reward function, with the accuracy of $\phi$ (rows 1--3) and exploration by information (rows 4--8)}}\\
1 & any agent acting through $\phi$ & $\Omega(\bar\varepsilon K+\sqrt{HXT})$ & Cor.~\ref{cor:minimax} & new \\
2 & UCBVI-CH on $\phi$ & $\mathcal O(\epsilon_{\text{rep}}K)+\widetilde{\mathcal O}(H\sqrt{XAT})$ & Thm.~\ref{thm:ucbvi-fixed}(a) & new \\
3 & Bayes-UCBVI on $\phi$ & $\mathcal O(\epsilon_{\text{rep}}K)+\widetilde{\mathcal O}(H^{3/2}\sqrt{XAT})$ & Thm.~\ref{thm:ucbvi-fixed}(b) & new \\
4 & AIF and CaI, softmax & $\Omega(K)$ & Prop.~\ref{prop:aif-cai-regret} & adapted$^{\text{d}}$ \\
5 & greedy, $\lambda\,\mathrm{Novelty}^q$, $q\in(\frac12,1]$ & $\Omega(K)$ & Thm.~\ref{thm:exponent}(a) & new$^{\text{a}}$ \\
6 & greedy, $\lambda\,\mathrm{Novelty}^q$, $q<\frac12$ & $\Omega(\min\{\lambda K^{1-q},K\})$ & Thm.~\ref{thm:exponent}(b) & new \\
7 & AIF's state term & $\Omega(K)$ & Prop.~\ref{prop:noisy-tv} & new \\
8 & calibrated AIF & $\mathcal O(\epsilon_{\text{rep}}K)+\widetilde{\mathcal O}(H^{3/2}\sqrt{\Gamma_\phi XAT})$ & Thm.~\ref{thm:calibrated} & new$^{\text{b}}$ \\
\midrule
\multicolumn{5}{@{}l}{\emph{Adaptive goals ($\phi$ exact)}}\\
9 & any goal-memoryless agent & $\Omega(X\sqrt{K})$ & Thm.~\ref{thm:price}(b) & new \\
10 & best agent, using past goals as data & $\widetilde\Theta(X^{3/4}\sqrt{K})$ & Thm.~\ref{thm:price}(a,\,c) & new \\
11 & calibrated AIF, uniform prior & $\widetilde{\mathcal O}(H^{3/2}X\sqrt{AT})$ & Thm.~\ref{thm:price}(c) & new \\
12 & Bayes-UCBVI & $\Omega(K/\ln K)$ & Prop.~\ref{prop:lockout} & new \\
13 & UCBVI-CH, goals fixed in advance & $\widetilde{\mathcal O}(H\sqrt{XAT})$ & Prop.~\ref{prop:oblivious} & adapted \\
\midrule
\multicolumn{5}{@{}l}{\emph{Learned abstractions ($\phi$ selected from $\Phi$)}}\\
14 & any agent, $\Phi$ all maps & $\Omega(\sqrt{HST})$ & Prop.~\ref{prop:unstructured} & adapted \\
15 & AIF, curious about its emission & $\Omega\big(\min\{K,S\}\big)$ & Prop.~\ref{prop:lure} & new \\
16 & latent-predictive, decoder-free & $\Omega(HK)$ & Prop.~\ref{prop:self-confirming} & new \\
17 & GOLF, fixed or adaptive latent goals & $\widetilde{\mathcal O}\big(H^2\sqrt{XAT(\ln|\Phi|+XA)}\big)$ & Thm.~\ref{thm:enactive} & instance$^{\text{c}}$ \\
\bottomrule
\end{tabular}
\end{table}

\emph{Information gain costs regret under a fixed reward function.} When the reward function is fixed across episodes, optimistic agents explore through $\phi$ at a cost that scales with $X$ instead of $S$. We call an exploration term \emph{valid} if, with high probability, it is at least the amount by which the estimated next-step value falls short of the true one, so that the agent's values stay optimistic. This shortfall decays with the visit count $N$ as $N^{-1/2}$, whereas the information gain about the transition kernel, which AIF calls novelty, decays as $N^{-1}$. With a weight that does not grow with the number of episodes, an agent with a bonus of this order stops playing the optimal action with constant probability (Theorem~\ref{thm:exponent}), and AIF and CaI incur linear regret (Proposition~\ref{prop:aif-cai-regret}). A transportation inequality shows that the square root of novelty, multiplied by the range of the value function and a logarithmic factor, is a valid width, a frequentist counterpart of the widths of \citet{lu2019information}. The resulting greedy agent, \emph{calibrated AIF}, has regret $\mathcal O(\epsilon_{\text{rep}}K)+\widetilde{\mathcal O}(H^{3/2}\sqrt{\Gamma_\phi XAT})$ (Theorem~\ref{thm:calibrated}). This bound exceeds that of Bayes-UCBVI's value-specific quantile by a factor $\sqrt{\Gamma_\phi}$, and calibrated information gain therefore does not improve on the bound attainable with visit counts. Under a fixed reward function, information seeking is thus disadvantageous before calibration and brings no advantage after it.

\emph{Under adaptive reward functions, calibrated information gain is advantageous over value-specific widths.} When an adaptive adversary chooses the reward functions, which we call goals, from the agent's data, Bayes-UCBVI's value-specific quantile is accurate for every fixed goal but not for a goal chosen after the data, which causes Bayes-UCBVI to incur regret $\Omega(K/\ln K)$ (Proposition~\ref{prop:lockout}). In contrast, novelty does not depend on the reward function. Calibrated AIF with a uniform prior is therefore optimistic for every reward function simultaneously, with a width within constants of the radius of an $\ell_1$ confidence set \citep{jaksch2010near}. Its advantage is thus over value-specific widths, not over $\ell_1$ confidence sets, which \citet{rosenberg2019online} use against adaptive rewards. Theorem~\ref{thm:price} determines the price in regret of this simultaneous optimism. On some MDP with $S$ states whose start states the adversary also chooses, every agent incurs a factor $S^{1/4}$ more regret than under a fixed goal. Every \emph{goal-memoryless} agent (Definition~\ref{def:goal-memoryless}), which conditions only on the observed transitions and the current goal and breaks ties at the last step by a local rule, incurs a factor $\sqrt S$ on such an MDP, and AIF is such an agent under the last-step conventions of Lemma~\ref{lem:membership}. Both factors hold up to logarithmic factors. Calibrated AIF with a uniform prior attains the factor $\sqrt S$, and goals that an oblivious adversary fixes before the first episode cost no extra regret. An agent that uses past goals as data about the transitions, which AIF's generative model cannot do, halves the exponent of the price.

\emph{Learned abstractions need a common target.} When the agent learns its abstraction, value is again the better target. AIF learns $\phi$ by predicting observations through an emission model, a probabilistic map from abstract states to ground states. Its information gain about the emission makes it prefer a worse action that yields many distinct observations, with regret linear in $K$ whenever $S\ge2K+2$ (Proposition~\ref{prop:lure}). Without the decoder, as in joint-embedding predictive architectures (JEPA) \citep{lecun2022path,assran2023self}, the agent scores each candidate abstraction by how well it predicts its own labels. This target is self-referential. A coarse abstraction, one that merges states whose best actions differ, predicts its own labels well and is never refuted by the data that it generates. It thus incurs linear regret (Proposition~\ref{prop:self-confirming}). GOLF \citep{jin2021bellman} avoids this problem, because it uses the value function as a target that all candidate abstractions share. It attains regret $\widetilde{\mathcal O}(\sqrt T)$ in realizable block MDPs with latent rewards. The rate also holds when an adaptive adversary chooses latent goals (Theorem~\ref{thm:enactive}).

Consequently, calibrated information gain acts as a reward-agnostic confidence set of the $\ell_1$ type. It is advantageous over value-specific confidence widths when the reward functions are chosen from the agent's own data, and it is not when the reward function is fixed. The first case also covers an agent that sets its own goals from its data, such as a homeostatic agent whose set points adapt to its trajectory \citep{sterling2012allostasis,pezzulo2015active} or an autotelic agent \citep{colas2022autotelic}. Calibrated AIF with a uniform prior stays optimistic whatever rule sets the goals, whereas Bayes-UCBVI locks out under a rule that uses only the agent's own data. When the agent learns its abstraction, information about the emission of observations misleads its exploration, and value is the better target.

\section{Episodic Control Under an Abstraction}
\label{sec:setting}
\label{sec:ucbvi}

We work in the finite-horizon episodic tabular setting of \citet{azar2017minimax}. An episodic MDP is a tuple $\mathcal M=(\mathcal S,\mathcal A,P^*,R^*,H)$ with $\mathcal S,\mathcal A$ finite, a known episode length $H$, a time-homogeneous kernel $P^*$, and a deterministic reward $R^*:\mathcal S\times\mathcal A\to[0,1]$. For a policy $\pi=(\pi_1,\dots,\pi_H)$, we set $V^\pi_{H+1}\equiv0$, that is, $V^\pi_{H+1}(s)=0$ for all $s$, and $Q^\pi_h(s,a) = R^*(s,a) + \sum_{s'} P^*(s'\mid s,a)\, V^\pi_{h+1}(s')$ and $V^\pi_h(s) = Q^\pi_h(s,\pi_h(s))$.
The optimal values satisfy $V^*_h(s)=\max_aQ^*_h(s,a)$ and $0\le V^*_h\le H-h+1$. The agent does not know $P^*$. Before episode $k$ it commits to a policy $\pi^k$ that may depend on all past observations. Its \emph{regret} over $T=HK$ steps is its shortfall against an agent that knows $P^*$ and follows an optimal policy, $\mathfrak R_K:=\sum_{k\le K}\big(V^*_1(s^k_1)-V^{\pi^k}_1(s^k_1)\big)$.

\emph{Value-based baselines.}\label{sec:bayesucbvi} We compare the information-driven agents of Section~\ref{sec:aif} with two value-based agents, UCBVI \citep{azar2017minimax} and Bayes-UCBVI \citep{tiapkin2022dirichlet}. On the ground MDP, both know $R^*$, as in their original analyses.\footnote{Every agent of this paper that acts through an abstraction, value-based or information-driven, uses the same empirical reward estimate (Section~\ref{sec:abstraction}), which makes the comparisons fair. On the ground MDP, knowing $R^*$ loses no generality for the upper bounds, since estimating a deterministic reward in $[0,1]$ adds a standard lower-order term \citep{jaksch2010near}.} Both compute optimistic values $\overline V^k_h$ by backward induction. At each step they add a nonnegative exploration term, which we call a \emph{confidence width}, to the Bellman backup $R^*+\widehat P\,\overline V^k_{h+1}$ under a kernel estimate $\widehat P$. Let $e_{k,h}:=(\widehat P-P^*)V^*_{h+1}$ be the error of the estimated next-step value. A width is \emph{valid} if, with high probability, it is at least $-e_{k,h}(s,a)$ for all $k$, $h$, $s$ and $a$, that is, if it compensates every underestimation of the next-step value. By backward induction, valid widths keep the computed values optimistic, $\overline V^k_h\ge V^*_h$.

UCBVI uses its empirical kernel $\widehat P_k$ and a Chernoff--Hoeffding (UCBVI-CH) or Bernstein--Freedman (UCBVI-BF) bonus as its width. Bayes-UCBVI places a Dirichlet posterior $\rho^k_h(s,a)$ on each row of the kernel, one for each step $h$. Its prior puts $n_0$ pseudo-counts on an absorbing pseudo-state $s_{\mathrm E}$ with reward $r_0>1$, an instrumental device that makes unvisited rows optimistic (Appendix~\ref{app:baselines}). Value functions are extended to the pseudo-state by its value, $V^*_{h+1}(s_{\mathrm E}):=r_0(H-h)$. The width of Bayes-UCBVI is its \emph{quantile gap} $\Delta^k_h(s,a)$, the difference between the quantile of order $\kappa^k_h(s,a)$ (Definition~\ref{def:bayesq-ground}) of the next-step value $p^\top\overline V^k_{h+1}$ under $p\sim\rho^k_h(s,a)$ and the next-step value under the posterior mean $\widehat P^k_h(\cdot\mid s,a)$. Let $\bar n=\bar n^k_h(s,a)$ be the visit count plus $n_0$, and let $\ell=\ell^k_h(s,a):=\ln\big(1/(1-\kappa^k_h(s,a))\big)=\mathcal O(\ln(SAHT/\delta))$ be the logarithmic factor of the quantile order $\kappa^k_h(s,a)$ (Definition~\ref{def:bayesq-ground}). On every sample path, the quantile gap is at most of order $H\sqrt{\ell/\bar n}$ (Lemma~\ref{lem:tiapkin-quantile} in Appendix~\ref{app:baselines}). It thus decays as $\bar n^{-1/2}$, the rate at which an empirical mean concentrates. At a fixed count, no valid width decays faster (Lemma~\ref{lem:novelty-order}(ii)).

\subsection{State Abstraction and Its Cost}
\label{sec:abstraction}
\label{sec:fixed-abstraction}
\label{sec:given}

An \emph{abstraction} is a map $\phi:\mathcal S\to\mathcal X$ with $|\mathcal X|=X\ll S$. Its \emph{blocks} $\phi^{-1}(x)$ are the sets of ground states it merges. A policy is \emph{$\phi$-measurable} if it depends on the ground state only through $\phi$. With a weighting $W(s\mid x)$ of each block, $\phi$ induces the abstract MDP $\overline{\mathcal M}_\phi=(\mathcal X,\mathcal A,\overline P_\phi,\overline R_\phi,H)$ with $\overline R_\phi(x,a)=\sum_{s\in\phi^{-1}(x)} W(s\mid x)\, R^*(s,a)$ and
\begin{equation}
\overline P_\phi(x'\mid x,a) = \textstyle\sum_{s\in\phi^{-1}(x)} W(s\mid x) \sum_{s'\in\phi^{-1}(x')} P^*(s'\mid s,a). \label{eq:abstract-mdp}
\end{equation}
We write $\overline V^\pi_{\phi,h},\overline Q^\pi_{\phi,h}$ for the value functions of $\overline{\mathcal M}_\phi$ and $\overline V^*_{\phi,h},\overline Q^*_{\phi,h}$ for its optimal value functions, defined as in Section~\ref{sec:setting} with $(\overline P_\phi,\overline R_\phi)$ in place of $(P^*,R^*)$. The agent cannot compute $\overline{\mathcal M}_\phi$ because $W$ is unknown. In particular, $\overline R_\phi$ is unknown even when $R^*$ is known.

\begin{definition}[Representation scale, cf.~\citealp{abel2016near}]
\label{def:spread}
With $\phi_\#$ the pushforward through $\phi$, let $\eta_R(\phi) := \max_{x,a}\max_{s,s'\in\phi^{-1}(x)} |R^*(s,a)-R^*(s',a)|$ and $\eta_P(\phi) := \max_{x,a}\max_{s,s'\in\phi^{-1}(x)}\allowbreak \|\phi_\# P^*(\cdot\mid s,a)-\phi_\# P^*(\cdot\mid s',a)\|_1$.
The \defterm{representation scale} of $\phi$ is $\epsilon_{\text{rep}}(\phi) := H\eta_R(\phi) + \tfrac12H^2\eta_P(\phi)$. We call $\phi$ \defterm{exact} if $\epsilon_{\text{rep}}(\phi)=0$ \citep{li2006towards}.
\end{definition}
In an exact abstraction, merged states have the same rewards and the same pushed-forward transition laws (Appendix~\ref{app:related}). The decoder of a block MDP with latent rewards (Assumption~\ref{assum:block} in Section~\ref{sec:learning-phi}) is an example. The representation scale bounds the value that an agent can lose in one episode by treating each block as a single state, and this bound is tight up to a constant factor (Proposition~\ref{prop:lb-abstraction}). A standard simulation argument \citep{abel2016near} shows that it bounds two gaps, the gap between the optimal values of $\mathcal M$ and $\overline{\mathcal M}_\phi$ and the gap between the values of any $\phi$-measurable policy in the two models (Lemma~\ref{lem:bias-metric} in Appendix~\ref{app:lem1}). Every bound below depends on $\phi$ only through $\epsilon_{\text{rep}}(\phi)$.

Under \emph{abstract optimism}, we run an optimistic algorithm unchanged on $\mathcal X$, with $x_\tau=\phi(s_\tau)$ as the observed state. It estimates the unknown $\overline R_\phi(x,a)$ by the empirical mean $\widehat R^k_\phi(x,a)$ of the rewards observed at the $N_k(x,a)$ visits to $(x,a)$ before episode $k$, and by $\widehat R^k_\phi(x,a):=1$ if $N_k(x,a)=0$. Optimism survives up to $\epsilon_{\text{rep}}(\phi)$ per episode (Appendix~\ref{app:thm-fixed}).

\begin{theorem}[Optimism is robust to abstraction]
\label{thm:ucbvi-fixed}\label{thm:bayesucbvi-fixed}
Run UCBVI and Bayes-UCBVI on the abstract data of $\phi$ with $\widehat R^k_\phi$, and let $L:=\max\{\iota,\bar\ell,n_0,1+\ln K\}$, where $\iota:=\ln(16X^2AHT^2/\delta)$, $\bar\ell:=\max_{k,h,x,a}\ell^k_h(x,a)$ is the largest logarithmic factor of Bayes-UCBVI's quantile orders (Definition~\ref{def:bayesq-ground} and Lemma~\ref{lem:tiapkin-quantile}), and $n_0$ is its pseudo-count. Then $L=\mathcal O(\ln(XAHT/\delta))$. With probability at least $1-\delta$, the following hold.
\begin{itemize}
\item[(a)] UCBVI-CH has $\mathfrak R_K\le\mathcal O(\epsilon_{\text{rep}}(\phi)K)+\mathcal O\big(HL\sqrt{XAT}\big)+\mathcal O\big(H^2X^2AL(H+\ln T)\big)$, in particular $\mathfrak R_K=\mathcal O(\epsilon_{\text{rep}}(\phi)K)+\widetilde{\mathcal O}(H\sqrt{XAT})$ for $T\ge H^4X^3A$.
\item[(b)] Bayes-UCBVI has $\mathfrak R_K\le\mathcal O(\epsilon_{\text{rep}}(\phi)K)+\mathcal O\big(H^{3/2}\sqrt{XATL}\big)+\mathcal O\big(H^3X^2AL^2\big)$, in particular $\mathfrak R_K=\mathcal O(\epsilon_{\text{rep}}(\phi)K)+\widetilde{\mathcal O}(H^{3/2}\sqrt{XAT})$ for $T\ge H^3X^3A$.
\item[(c)] If $\phi$ is exact and the agent is given $\overline R_\phi$, the abstract data form an episodic MDP on $\mathcal X$, and Theorems~\ref{thm:ucbvi-ground} and~\ref{thm:bayesucbvi-ground} hold with $S$ replaced by $X$. Their leading terms are $\widetilde{\mathcal O}(\sqrt{HXAT})$ for UCBVI-BF if $XA\ge H$, and $\widetilde{\mathcal O}(H\sqrt{XAT})$ for Bayes-UCBVI.
\end{itemize}
\end{theorem}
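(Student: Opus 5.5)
The proof reduces the abstract run to a perturbation of a standard optimistic analysis on the abstract MDP $\overline{\mathcal M}_\phi$. First, the regret splits through the abstract model. Every policy played by the agent is $\phi$-measurable. Lemma~\ref{lem:bias-metric} then gives $V^*_1(s)-\overline V^*_{\phi,1}(\phi(s))\le\epsilon_{\text{rep}}(\phi)$ and $|V^{\pi^k}_1(s)-\overline V^{\pi^k}_{\phi,1}(\phi(s))|\le\epsilon_{\text{rep}}(\phi)$. Hence
\[
\mathfrak R_K\le 2\epsilon_{\text{rep}}(\phi)K+\textstyle\sum_k\big(\overline V^*_{\phi,1}-\overline V^{\pi^k}_{\phi,1}\big)(x^k_1),
\]
and the work is to bound the abstract regret. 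The difficulty is that the observed abstract process is not Markov: the next abstract state at $(x,a)$ is drawn from $\phi_\#P^*(\cdot\mid s,a)$ for the hidden ground state $s$, not from $\overline P_\phi$. I will therefore not treat the data as samples from $\overline P_\phi$. Instead, at each visit I compare the empirical kernel with the \emph{running average} of the true pushed-forward laws at the ground states actually visited, $\widetilde P_k(\cdot\mid x,a):=N_k(x,a)^{-1}\sum_{i}\phi_\#P^*(\cdot\mid s_i,a)$. The rewards are treated the same way, against the average of $R^*(s_i,a)$.

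By Definition~\ref{def:spread}, each summand of $\widetilde P_k$ is within $\eta_P$ in $\ell_1$ of $\overline P_\phi(\cdot\mid x,a)$, and likewise rewards are within $\eta_R$. So $\widetilde P_k$ differs from $\overline P_\phi$ by at most $\eta_P$, and $\widetilde R_k$ from $\overline R_\phi$ by at most $\eta_R$. The deviation $\widehat P_k-\widetilde P_k$ is an average of martingale differences with respect to the filtration of the agent's history. Azuma--Hoeffding, applied with the fixed vectors $\overline V^*_{\phi,h+1}$ and a union bound over $(x,a,h,N)$ with the logarithm $\iota$, gives $|(\widehat P_k-\widetilde P_k)\overline V^*_{\phi,h+1}|\le H\sqrt{\iota/N}$. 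The same argument for rewards gives $\sqrt{\iota/N}$, and the convention $\widehat R=1$ covers $N=0$.

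The CH bonus therefore remains valid up to an additive bias of $\eta_R+\tfrac12 H\eta_P$ per step. Backward induction gives $\overline V^k_1\ge\overline V^*_{\phi,1}-\epsilon_{\text{rep}}(\phi)$, which is where the representation scale enters, and summing over $K$ episodes produces the $\mathcal O(\epsilon_{\text{rep}}K)$ term. The remaining UCBVI-CH analysis of \citet{azar2017minimax} carries over unchanged with $S\to X$. It decomposes $\overline V^k_1-V^{\pi^k}_1$ into bonuses plus correction terms and martingale terms along the trajectory, again using the averaged kernel as the centring. The correction term $(\widehat P_k-\widetilde P_k)(\overline V^k-\overline V^*)$ is handled through the usual $\ell_1$/Bernstein lower-order argument with union bound over $X$, which gives the $H^2X^2AL(H+\ln T)$ term. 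The bias incurred along the played trajectory adds at most another $\epsilon_{\text{rep}}$ per episode. Summing $\sum 1/\sqrt N\le\sqrt{XAT}$ yields (a).

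For (b), I follow the same template but take the width to be the quantile gap. The key input is the anti-concentration of Dirichlet quantiles from \citet{tiapkin2022dirichlet}, extended to the abstract setting. The quantile of $p^\top\overline V$ under the Dirichlet posterior must dominate $\widetilde P_k\overline V^*$, not merely $P\overline V^*$ for a fixed $P$. I expect this to be the main obstacle. Their lower bound on the quantile uses a KL/$\mathcal K_{\inf}$ deviation inequality for i.i.d.\ samples. Here I would use a martingale (self-normalized) version of the $\mathcal K_{\inf}$ concentration: the empirical measure concentrates around the averaged law $\widetilde P_k$, since the likelihood-ratio supermartingale argument only needs the conditional law at each step. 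With that in hand, optimism again holds up to $\epsilon_{\text{rep}}$. The upper bound $\Delta\lesssim H\sqrt{\ell/\bar n}$ from Lemma~\ref{lem:tiapkin-quantile} holds pathwise and needs no change. Summing then gives $H^{3/2}\sqrt{XATL}$: the extra $\sqrt H$ comes from using the time-homogeneous counts with widths applied per step and the pseudo-count terms. The pseudo-state $s_{\mathrm E}$ contributes the $H^3X^2AL^2$ term, through $n_0/\bar n$ sums.

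For (c), exactness makes $\phi_\#P^*(\cdot\mid s,a)$ constant on blocks, so $\widetilde P_k=\overline P_\phi$ and the abstract trajectory is a genuine MDP on $\mathcal X$. With $\overline R_\phi$ given, the ground theorems apply verbatim with $S\to X$. Finally, $L=\mathcal O(\ln(XAHT/\delta))$ because $\bar\ell$ and $n_0$ are logarithmic by Definition~\ref{def:bayesq-ground}. The stated simplifications follow when $T$ exceeds the thresholds, since the lower-order terms are then dominated by the leading ones.
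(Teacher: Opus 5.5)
\emph{Assessment.} Your approach to (a) and (c) is the paper's. You split the regret three ways through $\overline{\mathcal M}_\phi$, and centre each row's samples at the average of the conditional laws $\phi_\#P^*(\cdot\mid s_i,a)$ actually sampled. You then use Hoeffding for the fixed functions $\overline V^*_{\phi,h+1}$, a coordinatewise martingale Bernstein (Freedman) bound for the correction term, and optimism up to $\eta_R+\tfrac12H\eta_P$ per step. One detail: the martingale along the played trajectory must be centred at the current law $\phi_\#P^*(\cdot\mid s_{k,h},a_{k,h})$, not at the averaged kernel, with the gap to $\overline P_\phi$ paid through the $\eta_P$-diameter. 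Your remark on trajectory bias suggests you intend this.

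\emph{The gap is the optimism step of (b).} You want the Dirichlet quantile to dominate $\widetilde P_k\overline V^*_{\phi,h+1}$ through a $\mathcal K_{\inf}$ inequality centred at the running average. Your justification is that the supermartingale argument only needs each conditional law, but it does not deliver that. The method of mixtures behind the bound of \citet{tiapkin2022dirichlet} uses, for a \emph{fixed} threshold $\mu$, the products $\prod_{j\le n}\bigl(1-\lambda(f(x'_j)-\mu)/(b-\mu)\bigr)$ with $\lambda\in[0,1]$. The $j$-th factor has conditional mean $1-\lambda(P_jf-\mu)/(b-\mu)$. If $\mu$ is the average of the $P_jf$, every visit whose conditional mean lies below that average contributes a factor with conditional mean above $1$, so the product is not a supermartingale. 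The average is also random and moves with $n$, so there is no single process to which Ville's inequality applies. Finally, the ground state at each visit depends on the history, so the $P_j$ are adaptive and no i.i.d.\ argument can substitute. As written, your key lemma for (b) is unproven and its stated justification is incorrect.

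\emph{The repair} is to aim lower, as the paper does. Take $f=\overline V^*_{\phi,h+1}$, extended by $b=r_0(H-h)$ at the pseudo-state, and the deterministic threshold $u:=\overline P_\phi f-\tfrac12(H-h)\eta_P$.
\begin{itemize}
\item Every pushforward in the block lies within $\eta_P$ of $\overline P_\phi$ in $\ell_1$, so $P_jf\ge u$ at every visit.
\item Each factor is then nonnegative, since $f(x'_j)\le H-h<b$, and has conditional mean at most $1$. The products are therefore nonnegative supermartingales along the row's visits.
\item Ville's inequality gives the event of \citet{tiapkin2022dirichlet} with $u$ in place of the mean.
\item The rest of their proof (Dirichlet anti-concentration and the case $\bar n<2n_0$) is deterministic and needs only $u\le H-h<b$. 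Hence the $\kappa^k_h$-quantile of $p^\top f$ is at least $u$.
\item Monotonicity of the quantile in $f$ then carries optimism up to $(H-h+1)(\eta_R+\tfrac12H\eta_P)$ through the backward induction.
\end{itemize}

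\emph{Two attributions in your (b) are also off.} The $H^{3/2}$ comes from Bayes-UCBVI's $XAH$ step-indexed rows (its counts are not time-homogeneous), combined with bounding the quantile gap through the range $r_0H$ rather than a standard deviation. The $H^3X^2AL^2$ term comes from the second-order correction $\mathcal O(XH^2\iota/\bar n)$ summed over those rows. The pseudo-state's $\mathcal O(n_0H/\bar n)$ term contributes only $\mathcal O(XAH^2L^2)$.
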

Optimism thus turns an abstraction error into an additive cost, which the agent incurs without knowing $\epsilon_{\text{rep}}(\phi)$, as \citet{dong2019provably} showed for optimistic Q-learning with state aggregation. In the worst case, no agent avoids this cost, and UCBVI-CH matches the minimax rate up to $\sqrt H$ and logarithmic factors (Table~\ref{tab:regret}, row~1, and Corollary~\ref{cor:minimax} in Appendix~\ref{app:unstructured}).

\subsection{Goals}
\label{sec:goals}

Section~\ref{sec:fixed-goals} uses a reward function $R^*$ that is fixed across all episodes. Section~\ref{sec:goal-uniform} instead considers episode-specific reward functions $R_k:\mathcal X\times\mathcal A\to[0,1]$, which we call \emph{goals}. Before episode $k$, the agent receives the goal $R_k$ together with a start state $x^k_1$, chosen by an \emph{adversary}. The goals are said to be \emph{adaptive} if the adversary chooses them from the interaction history of the agent, and \emph{oblivious} if it fixes all goals and start states before the first episode. We call the adversary adaptive or oblivious accordingly. The regret against these goals is $\mathfrak R^{\mathrm{goal}}_K:=\sum_k\big(V^*_1(x^k_1;R_k)-V^{\pi^k}_1(x^k_1;R_k)\big)$.
The \emph{records} of a triple $(h,x,a)$ are the successors observed after playing $a$ at $x$ at step $h$.
\begin{definition}[Goal-memoryless agents]
\label{def:goal-memoryless}
An agent is \defterm{goal-memoryless} if (i)~its policy in each episode is a function of the records of all triples, of the current goal and of randomness drawn afresh for the episode, and (ii)~at the last step, at every state $x$ at which all actions have the same reward under the current goal, it draws its action by a fixed rule from the records of the triples $(H,x,\cdot)$ and from $U_{x,j}$. Here $j$ counts its earlier last-step visits to $x$, and the $U_{x,j}$ are independent uniform variables, independent of everything else. Fixed parameters, such as $K$ and $\delta$, may enter both rules.
\end{definition}
Condition~(ii) excludes the goal, the records of other states and the episode index from these choices. Otherwise the agent's actions at the last step could record the order of its earlier, goal-dependent choices (Appendix~\ref{app:adaptive-b}). A goal-memoryless agent conditions on the current goal but does not use past goals as data about the transitions. Bayes-UCBVI, CaI, AIF and calibrated AIF (Sections~\ref{sec:calibrated} and~\ref{sec:goal-uniform}) are goal-memoryless if, at the last step, the greedy agents break ties by a fixed order of the actions and AIF and calibrated AIF with Bayes-UCBVI's prior act on their index before the cap at $r_0$ (Lemma~\ref{lem:membership} in Appendix~\ref{app:adaptive-b}). UCBVI is not goal-memoryless, since it clamps its values at those of the previous episode, which were computed under earlier goals. Condition~(ii) constrains only the last step at states where all actions earn the same reward, and the agents above meet it with a fixed tie order or with randomness drawn by local visit. Neither our upper bound nor our lock-out argument covers UCBVI under adaptive goals.

\section{What Information Gain Costs Under a Fixed Reward Function}
\label{sec:fixed-goals}
\label{sec:learning}
\label{sec:aif}
\label{sec:certified}
\label{sec:aif-regret}
\label{sec:cai}

\subsection{AIF and CaI on an Abstraction}
\label{sec:aif-model}
\label{sec:aif-bonus}
We run AIF and CaI on the abstract MDP $\overline{\mathcal M}_\phi$, as Section~\ref{sec:fixed-abstraction} ran UCBVI and Bayes-UCBVI. Following \citet{friston2017active} and \citet{dacosta2020synthesis}, AIF posits hidden states $x_\tau$ that emit observations $s_\tau$ through an emission matrix $\mathbf A_\phi$ and evolve under actions through a transition matrix $B_\phi$. We identify the hidden states with the abstract states, $x_\tau=\phi(s_\tau)\in\mathcal X$. The rows $B_\phi(\cdot\mid x,a)$, one for each step as in Bayes-UCBVI, then model $\overline P_\phi(\cdot\mid x,a)$. Until Section~\ref{sec:learning-phi}, the agent applies a fixed abstraction $\phi$ to its observations, and our results hold for every such $\phi$. The observation $s_\tau$ then determines the hidden state $x_\tau$, which AIF calls \emph{exact perception}. The emission factor does not depend on $B_\phi$ and thus plays no role in the posterior over $B_\phi$ or in planning. \citeauthor{friston2017active}'s Dirichlet learning of $B_\phi$ (their Eq.~2.7), with Bayes-UCBVI's instrumental prior and one row per step, yields exactly $\rho^k_h(x,a)$. AIF thus holds the same posterior as Bayes-UCBVI. The representation cost adds $\mathcal O(\epsilon_{\text{rep}}(\phi)K)$ to each upper bound below, as in Theorem~\ref{thm:ucbvi-fixed}. Our lower bounds use exact abstractions, so that this section isolates the cost of exploration.

The AIF agent scores a policy $\pi$ by the expected free energy of the discrete-state formulation of AIF \citep[Eq.~2.5]{friston2017active},
\begin{equation}
\mathcal G(\pi) = \textstyle\sum_{\tau>t}\Big(-\mathds{E}_{\mathcal Q(r_\tau\mid\pi)}\big[\ln \widetilde P(r_\tau)\big] - I_{\mathcal Q}\big(x_\tau;s_\tau\mid\pi\big)\Big). \label{eq:efe-canonical}
\end{equation}
Here $\mathcal Q$ denotes the agent's approximate posterior and the predictive distributions that it induces (Appendix~\ref{app:models-aif}), and the preferences $\ln\widetilde P(r)=\beta r+\text{const}$ at \emph{precision} $\beta>0$ concern the reward outcome. Under the agent's model, this outcome is its estimate $r_\tau=\widehat R^k_\phi(x_\tau,a_\tau)$ (Appendix~\ref{app:models-aif}). An estimate is needed even when $R^*$ is known, because the abstract reward $\overline R_\phi$ depends on the unknown weighting $W$. The extrinsic value is thus $\beta$ times the expected estimated reward, up to a constant that does not depend on $\pi$. The epistemic value, which \citet{friston2017active} also call intrinsic value and we call the \emph{state term}, is the information that the observation carries about the hidden state. Under exact perception, it equals the entropy $\mathds{H}[\mathcal Q(x_\tau\mid\pi)]$ of the predicted abstract state \citep[Eq.~5]{millidge2021relationship}. Learning $B_\phi$ adds a third term. \citet[Eq.~16]{dacosta2020synthesis} and \citet[Eq.~7c]{nuijten2026whattype} call the information that an outcome carries about the emission parameters \emph{novelty}. We apply the same definition to $B_\phi$, with the next abstract state as the outcome. We compute the information under the predictive joint in which the row is drawn from the posterior and the next state from the drawn row, $B\sim\rho^k_h(x,a)$ and $x_{\tau+1}\mid B\sim\mathrm{Cat}(B)$, yielding
\begin{equation}
\mathrm{Novelty}(x,a) := I\big(B;x_{\tau+1}\mid x_\tau{=}x,a_\tau{=}a\big) = \mathds{H}\big[\widehat P^k_h(\cdot\mid x,a)\big] - \mathds{E}_{B\sim\rho^k_h(x,a)}\big[\mathds{H}[B]\big]. \label{eq:novelty-def}
\end{equation}
Under this joint, the marginal law of $x_{\tau+1}$ is the posterior mean $\widehat P^k_h(\cdot\mid x,a)$ that the planners use. The state term one step ahead, therefore, equals $\mathds{E}_B\mathds{H}[B]+\mathrm{Novelty}$. It contains novelty once, and for an exact abstraction it converges to the entropy of the true transition law, which does not vanish unless the transitions are deterministic. CaI uses the same model and posterior. It places reward in the likelihood of an observed optimality variable instead of a preference over outcomes \citep{levine2018reinforcement}. With $\phi$ given, $B_\phi$ enters CaI's planning only through its posterior mean (Appendix~\ref{app:models-cai}).

Dividing AIF's objective by $\beta$ expresses it in units of reward. With $\phi$ given, every Bayesian agent of this paper holds the posterior $\rho^k_h(x,a)$, with mean $\widehat P^k_h$, and values actions by
\begin{equation}
Q_{k,h}(x,a)=\widehat R^k_\phi(x,a)+\widehat P^k_h(\cdot\mid x,a)^\top V_{k,h+1}+\Psi_{k,h}(x,a),\qquad V_{k,h}(x)=\max_aQ_{k,h}(x,a),
\label{eq:psi-slot}
\end{equation}
with $V_{k,H+1}\equiv0$ and the pseudo-state value $V_{k,h}(x_{\mathrm E})=r_0(H-h+1)$. The agents differ in the epistemic term $\Psi$ and in the action rule. Bayes-UCBVI uses the quantile gap $\Delta^k_h$ and acts greedily. The AIF agent we analyse instantiates the discrete-state AIF of \citet{friston2017active} with Dirichlet learning of $B_\phi$, evaluated recursively as in sophisticated inference \citep{friston2021sophisticated}. Its term $\Psi$ is $\beta^{-1}$ times the sum of the state term and novelty. It thus counts novelty twice, once inside the state term and once as the parameter information gain of standard AIF (Appendix~\ref{app:models-aif}). It clamps its $Q$-values at $r_0(H-h+1)$ and samples its action at each step from a softmax at precision $\beta$. CaI uses $\Psi=0$ and samples its action from the same softmax. In the backup of Eq.~\eqref{eq:psi-slot}, it computes $V_{k,h}(x)$ by the soft maximum $\beta^{-1}\ln\big(A^{-1}\sum_ae^{\beta Q_{k,h}(x,a)}\big)$ in place of the maximum. Calibrated AIF uses the term of Section~\ref{sec:calibrated} and acts greedily. Appendix~\ref{app:models} derives these choices from the generative models, and Appendix~\ref{app:algs} specifies every agent.

\subsection{Raw Information Gain Has the Wrong Order}
\label{sec:validity}
\label{sec:exponent}

The next lemma compares novelty with the estimation error that every valid width must exceed.

\begin{lemma}[Novelty versus valid widths]
\label{lem:novelty-order}
Fix $(x,a,h)$ and let $\sigma_V^2:=\mathds{V}_{\overline P_\phi(\cdot\mid x,a)}\big[\overline V^*_{\phi,h+1}\big]$.
\begin{itemize}
\item[(i)] At visit count $n$, with $\bar n=n+n_0$, $\mathrm{Novelty}(x,a)\le X/(\bar n+1)$.
\item[(ii)] Let $\phi$ be exact, and let $e_n:=\big((\widetilde P_n-\overline P_\phi)\overline V^*_{\phi,h+1}\big)(x,a)$ be the estimation error of the empirical kernel $\widetilde P_n$ of the row's first $n$ samples. Let $\sigma_V^2\ge\nu H^2$ with $\nu\in(0,1]$ and $n\nu\ge1$. Then at every count $n$ fixed in advance, whatever the algorithm, the empirical kernel underestimates the value with constant probability, $\mathds{P}\big(e_n\le-H\sqrt{\nu}/(8\sqrt n)\big)\ge1/64$. Hence every $b_n$ with $\mathds{P}(b_n\ge-e_n)\ge1-\delta'>63/64$ satisfies $\mathds{P}\big(b_n\ge H\sqrt{\nu}/(8\sqrt n)\big)\ge1/64-\delta'$.
\end{itemize}
\end{lemma}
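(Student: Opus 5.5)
For part (i) I would express novelty through Dirichlet entropies. Let $\alpha=(\alpha_1,\dots,\alpha_{X'})$ be the posterior parameters of the row, with $\alpha_0=\sum_i\alpha_i=\bar n$ and $X'\le X+1$ when the pseudo-state is counted. Let $\hat p_i=\alpha_i/\alpha_0$ be the posterior mean. Under $B\sim\mathrm{Dir}(\alpha)$ the expected entropy has the closed form
\[
\mathds{E}\,\mathds{H}[B]=\psi(\alpha_0+1)-\textstyle\sum_i\hat p_i\,\psi(\alpha_i+1).
\]
The novelty $I(B;x_{\tau+1})$ is therefore
\[
-\textstyle\sum_i\hat p_i\ln\hat p_i-\psi(\alpha_0+1)+\sum_i\hat p_i\,\psi(\alpha_i+1).
\]
Two digamma facts handle this expression: $\psi(y+1)\ge\ln y+\tfrac1{2y}-\dots$, and more usefully $\ln y<\psi(y+1)<\ln y+1/(2y)$ up to the sharp constant. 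With them, each coordinate contributes roughly
\[
\hat p_i\big[\psi(\alpha_i+1)-\ln\alpha_i\big]\le \hat p_i\cdot\frac{1}{2\alpha_i}=\frac{1}{2\alpha_0}.
\]
The normalising term $\ln\alpha_0-\psi(\alpha_0+1)$ is negative. Summing gives novelty $\le X'/(2\bar n)$, and that is of order $X/(\bar n+1)$.

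To get the exact form $X/(\bar n+1)$, I would instead use the bound $\psi(y+1)-\ln y\le 1/y$ and handle coordinates with small $\alpha_i$ directly. An alternative route bounds the mutual information by the expected KL to the mean, which is at most an expected $\chi^2$ divergence:
\[
\mathds{E}\sum_i\frac{(B_i-\hat p_i)^2}{\hat p_i}=\sum_i\frac{1-\hat p_i}{\alpha_0+1}=\frac{X'-1}{\bar n+1}\le\frac{X}{\bar n+1}.
\]
This is cleaner and gives exactly the claimed constant, so I would use it. It rests on $\mathrm{KL}\le\chi^2$ and the Dirichlet variance $\hat p_i(1-\hat p_i)/(\alpha_0+1)$.

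For part (ii), the count $n$ is fixed in advance and the row's successors are drawn i.i.d. from $\overline P_\phi(\cdot\mid x,a)$, by the standard stack-of-rewards or skeleton argument. Because of this, $e_n$ is a centred average of $n$ i.i.d. variables $Y_j=\overline V^*(x'_j)-\overline P\,\overline V^*$ with variance $\sigma_V^2\ge\nu H^2$ and range at most $H$, whatever the algorithm does. The step is then an anti-concentration bound: $\mathds{P}(\bar Y\le-\sigma_V/(8\sqrt n))\ge1/64$, given $n\nu\ge1$. I would obtain it by Paley--Zygmund applied to the negative part of the sum. The symmetrised sum has second moment $n\sigma^2$ and fourth moment at most $3n^2\sigma^4+nH^2\sigma^2\le 4n^2\sigma^4$, using $H^2\le n\sigma^2$, which is where $n\nu\ge1$ enters. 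This gives $\mathds{P}(|S_n|\ge\sqrt n\sigma/2)\ge(3/4)^2/4$. One-sidedness follows because $\mathds{E}S_n=0$ forces the negative side to carry comparable mass. An easier alternative is a Berry--Esseen comparison, but the constant bookkeeping there is messier.

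The consequence is a union bound: $\mathds{P}(b_n\ge H\sqrt\nu/(8\sqrt n))\ge\mathds{P}(b_n\ge-e_n,\;-e_n\ge H\sqrt\nu/(8\sqrt n))\ge 1/64-\delta'$. I expect the main obstacle to be obtaining the one-sided constant $1/64$ cleanly. If the symmetrisation step becomes awkward, I would fall back on the inequality $\mathds{P}(Z\le -c\sigma)\ge(\mathds{E}Z^-)^2$-type bounds with $\mathds{E}Z^-=\tfrac12\mathds{E}|Z|\ge\tfrac12(\mathds{E}Z^2)^{3/2}/(\mathds{E}Z^4)^{1/2}$.
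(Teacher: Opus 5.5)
Your proposal is correct and follows the paper's proof. Part~(i) goes through $D_{\mathrm{KL}}\le\chi^2$ and the Dirichlet variance $\hat p_i(1-\hat p_i)/(\bar n+1)$. Part~(ii) goes through the i.i.d.\ table-of-samples reduction, the fourth-moment bound $\mathds{E}e_n^4\le4\sigma_V^4/n^2$ (where $n\nu\ge1$ enters), Hölder's lower bound on $\mathds{E}|e_n|$, the identity $\mathds{E}e_n^-=\tfrac12\mathds{E}|e_n|$, and Paley--Zygmund applied to $e_n^-$.

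Make that last chain your main argument rather than the fallback. The two-sided bound on $|S_n|$ together with $\mathds{E}S_n=0$ does not by itself give the one-sided constants $1/8$ and $1/64$; converting it costs a further Paley--Zygmund step with far worse constants. Paley--Zygmund on $e_n^-$, with $\mathds{E}(e_n^-)^2\le\sigma_V^2/n$, yields exactly $\tfrac14\cdot\tfrac1{16}=\tfrac1{64}$ at threshold $\sigma_V/(8\sqrt n)$.
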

Part~(ii) concerns the lower tail of the estimation error, which validity constrains, at every fixed count. The consequences of this lower bound on regret follow from Theorem~\ref{thm:exponent} (Appendix~\ref{app:lem-novelty-order}). By~(i), novelty falls below the threshold of~(ii) once $\bar n\ge64X^2/(\nu H^2)$. It scales as the square of a valid width, as a variance does against a standard deviation. No constant factor, the precision included, changes its order.

This mismatch costs regret, as we now show for a family of greedy agents. We call an agent \emph{information-driven with exponent $q>0$ and weight $\lambda\ge0$} if it acts greedily on Eq.~\eqref{eq:psi-slot} with $\Psi_{k,h}=\lambda\,\mathrm{Novelty}^q$, the known reward $R^*$ in place of $\widehat R^k_\phi$, and optionally a clamp at $r_0(H-h+1)$. The greedy novelty-only variant of AIF with the known reward has $q=1$ and $\lambda=1/\beta$. Greedy CaI with the known reward and a maximum in place of its soft maximum has $\lambda=0$. The first term of the calibrated width of Section~\ref{sec:calibrated} has $q=\frac12$.

\begin{theorem}[The exponent decides how the weight must scale]
\label{thm:exponent}
Let an information-driven agent with $n_0,r_0\ge1$, $q$, and $\lambda$ act on an exact abstraction.
\begin{itemize}
\item[(a)] \uline{Exponent above one half.} Let $q\in(\frac12,1]$. There is an MDP with $S=4$ and $A=H=2$, which depends only on $n_0$, $r_0$, $q$, and $\lambda$, on which, with probability at least $\frac1{24}$, the agent plays the optimal action at most $N^\star=\mathcal O(\lambda^{2/(2q-1)}+n_0^2r_0^2)$ times in total, with $N^\star$ given explicitly in Proposition~\ref{prop:exponent-explicit}. Hence $\mathds{E}\mathfrak R_K=\Omega\big((K-N^\star)/\sqrt{N^\star}\big)$.
\item[(b)] \uline{Exponent below one half.} Let $q\in(0,\frac12)$ and $0<\lambda\le r_0/2$. For every $K\ge16(\max\{n_0,4^{1/q}\}+1)$, there is an MDP with $S=A=H=2$ on which $\mathfrak R_K=\Omega(\min\{\lambda K^{1-q},K\})$.
\end{itemize}
\end{theorem}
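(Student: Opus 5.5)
Both parts rest on a lock-out construction. At the root of a small MDP sits a decision between a stochastic optimal action and a deterministic or better-known suboptimal one. The optimal action's empirical value can fall below its true value early on. We then compare how fast the bonus $\lambda\,\mathrm{Novelty}^q$ shrinks against how large the estimation error still is.

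For part (a) I take $H=2$, $A=2$ and $S=4$. At step $1$ in state $s_0$, the suboptimal action $a_2$ leads deterministically to a state $s_d$ worth $\mu$. The optimal action $a_1$ leads to a good state $s_g$ (value $1$) with probability $p$ and to a bad state $s_b$ (value $0$) otherwise, with $p>\mu$. The gap $p-\mu$ and $\mu$ are tuned in terms of $n_0,r_0,q,\lambda$.

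The steps in order are as follows.

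1. \emph{Burn in the pseudo-state.} With Bayes-UCBVI's prior, an unvisited row points to $s_{\mathrm E}$ with value $r_0$. The pseudo-count weight $n_0/(\bar n)$ on $s_{\mathrm E}$ decays, and after $\mathcal O(n_0 r_0)$ visits each action's optimistic index falls to within a constant of its empirical value. This is the source of the $n_0^2r_0^2$ term.

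2. \emph{Bound the optimal action's index.} By Lemma~\ref{lem:novelty-order}(i), the bonus of $a_1$ at count $n$ is at most $\lambda (X/(\bar n+1))^q$. For $q>\frac12$ this is $o(n^{-1/2})$. It falls below $c/\sqrt n$ once $n\gtrsim \lambda^{2/(2q-1)}$.

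3. \emph{Lock in $a_1$'s pessimism with constant probability.} By an anti-concentration argument, as in Lemma~\ref{lem:novelty-order}(ii), together with a maximal or ballot-type inequality, the empirical mean of $a_1$'s outcomes at the first time its count reaches $N^\star$ sits below $p-c'/\sqrt{N^\star}$ with probability bounded below. I set $p-\mu\approx c'/(2\sqrt{N^\star})$ so that on this event the index of $a_1$ is below $\mu$.

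4. \emph{Keep $a_2$ above it.} The suboptimal arm $a_2$ is deterministic, so its novelty goes to zero but its value estimate is exactly $\mu$ after burn-in. Its index therefore stays at least $\mu$.

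5. \emph{Conclude.} Greedy play then chooses $a_2$ forever, and each remaining episode costs $p-\mu=\Theta(1/\sqrt{N^\star})$. This gives $\mathds{E}\mathfrak R_K\ge\frac1{24}(K-N^\star)\Theta(1/\sqrt{N^\star})$.

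The main obstacle is step 3. Counts are random and adaptive, so I would use the stack-of-rewards (tape) construction. The outcomes of $a_1$ are i.i.d. draws fixed in advance, and the event concerns the tape prefix of length $N^\star$, where a fixed-$n$ statement applies. I also need the empirical mean to stay low on the whole stretch of counts before the bonus has fully decayed, or at least at the count where greedy first switches. For that I would pick the event "tape prefix mean at $N^\star$ is low", and show that once $a_1$ has $N^\star$ samples its index never recovers, because further $a_1$ samples never arrive. Getting the constant $\frac1{24}$ means combining the $1/64$-type anti-concentration with the probability that burn-in behaves, and I would tune $N^\star$ explicitly.

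For part (b), with $q<\frac12$ the bonus decays more slowly than $n^{-1/2}$, so the problem is over-exploration rather than lock-out. I use $S=A=H=2$ with a deterministic, fully known optimal action and a suboptimal stochastic action of gap $\Delta$. The stochastic action's novelty is of order $1/\bar n$ and nonvanishing, since its successor law is non-degenerate, so it attains $\lambda\,\mathrm{Novelty}^q\approx\lambda c\,\bar n^{-q}$. Greedy play keeps choosing it while $\lambda c\,n^{-q}>\Delta$, that is, for $n\approx(\lambda/\Delta)^{1/q}$ episodes, while concentration keeps the empirical estimate near truth. Regret is at least $\Delta\min\{K,(\lambda/\Delta)^{1/q}\}$. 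Choosing $\Delta\approx\lambda K^{-q}$ (capped at a constant, using $\lambda\le r_0/2$) gives $\Omega(\min\{\lambda K^{1-q},K\})$. The condition $K\ge16(\max\{n_0,4^{1/q}\}+1)$ ensures that burn-in and a lower bound on novelty, say $\mathrm{Novelty}\ge c/\bar n$ for a balanced Dirichlet row, hold by the relevant count. I would verify that lower bound directly from Eq.~\eqref{eq:novelty-def} via the digamma expansion.
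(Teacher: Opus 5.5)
Your part~(a) is essentially the paper's proof. The paper uses the same four-state instance, with $p=\frac12$ and reward $\frac12-\Delta$ on the deterministic branch. It draws the outcomes on a tape in advance and uses the event that the prefix mean at $N^\star$ is low. The risky index, once below the safe one, never recovers because only risky episodes feed the risky rows. The safe index never drops below its reward, and this holds from the first episode, not only after a burn-in. Three details need fixing there:
\begin{itemize}
\item The index of $a_1$ also carries the step-$2$ bonuses at $s_g$ and $s_b$, which you must absorb. The greedy agent keeps replaying the first action it tried at each of them, so each bonus is at most $\lambda N_y^{-q}$, and their posterior-weighted sum is at most $2\lambda N^{-q}$.
\item The pseudo-state bias $n_0r_0/N$ must fall below $c/\sqrt N$, not below a constant. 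This is exactly where $N\gtrsim n_0^2r_0^2$ comes from, and it is deterministic, so there is no ``probability that burn-in behaves''.
\item The constant $\frac1{24}$ comes from Paley--Zygmund applied to $(\widehat\mu_{N^\star}-\frac12)^2$, combined with the symmetry of a fair coin. The $\frac1{64}$ bound of Lemma~\ref{lem:novelty-order}(ii) would not give it.
\end{itemize}

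Part~(b) has a genuine gap: for this agent there is no ``fully known'' action. With Bayes-UCBVI's prior and $n_0\ge1$, a visited deterministic row has two categories with mass, its successor and $x_{\mathrm E}$. By Eqs.~\eqref{eq:novelty-lb} and~\eqref{eq:novelty-exact-ub} its novelty lies between $1/(6\bar n)$ and $1/(\bar n+1)$, the same order as your stochastic row's. It also keeps the pseudo-state optimism $\frac{n_0}{n+n_0}(r_0-v)$. The same misconception appears in your step~4 of part~(a), where it is harmless.

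So the greedy agent does not play the suboptimal action ``while $\lambda c\,n^{-q}>\Delta$''. Write both indices as reward plus $v+\varphi(n_a)$. Then it plays action~2 only when $\varphi(n_2)-\varphi(n_1)\ge\Delta$, a difference of two bonuses that decay at the same rate, and your count $(\lambda/\Delta)^{1/q}$ is unjustified. Because the two-sided novelty bounds differ by a factor $6$, you must show the counts cannot become imbalanced by more than that factor.

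The paper makes both actions deterministic: they lead to one absorbing state, so novelty comes only from the pseudo-state category and nothing is random. It then argues in two steps.
\begin{itemize}
\item Action~2 is tried within $K_1=\max\{n_0,4^{1/q}\}+1$ episodes. While its row is unvisited, its pseudo-state optimism exceeds the visited row's by at least $r_0/4$, whereas the visited row's bonus is at most $\lambda n_1^{-q}\le r_0/8$. This is where $\lambda\le r_0/2$ and $4^{1/q}$ enter.
\item Suppose $n_2<K/16$ and look at the last play of action~1, where $n_1\ge7K/8$. There $\varphi(n_2)-\varphi(n_1)\ge\lambda[(3K/4)^{-q}-(7K/8)^{-q}]\ge2\Delta$, a contradiction. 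This is why the gap carries the factor $1-(6/7)^q$.
\end{itemize}

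Making action~2 stochastic is also counterproductive. If its successors have different values, its estimation noise of order $K^{-1/2}$ exceeds $\Delta=\lambda K^{-q}$ whenever $\lambda\ll K^{q-1/2}$, a regime the statement covers. Concentration then cannot pin the estimate at scale $\Delta$, and the bound would lose its deterministic form.
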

With $n_0$, $r_0$, and $\lambda$ independent of $K$, part~(a) gives $\mathds{E}\mathfrak R_K=\Omega(K)$. With Bayes-UCBVI's $n_0=\Theta(\ln K)$, it gives $\Omega(K/\ln K)$, on an MDP that depends on $K$ (Appendix~\ref{app:exponent}, which also covers every $H\ge2$). Unless $n_0r_0\gtrsim\sqrt K$, regret $\mathcal O(\sqrt K)$ therefore requires $\lambda\gtrsim K^{q-1/2}$, that is, $\beta\lesssim K^{-1/2}$ for novelty. At $q=\frac12$, a weight that grows only as the square root of a logarithmic confidence factor suffices (Section~\ref{sec:calibrated}).

\begin{proposition}[Regret of AIF and CaI]
\label{prop:aif-cai-regret}
For every precision $\beta>0$ and every prior with $r_0\ge1$, AIF and CaI sample every action with probability at least $e^{-\beta r_0H}/A$. Hence $\mathds{E}\mathfrak R_K\ge K\Delta e^{-\beta r_0H}/A$ on every MDP on which every start state has an action with gap at least $\Delta>0$.
\end{proposition}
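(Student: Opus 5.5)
Two steps give the statement: a uniform lower bound on the softmax probabilities, and a per-episode bound on the regret that this exploration floor forces.

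\emph{Step 1: the softmax floor.} The aim is to show that every $Q$-value that enters the softmax lies in an interval of length at most $r_0H$. For AIF, the values are clamped above at $r_0(H-h+1)\le r_0H$. They are nonnegative because $\widehat R^k_\phi\ge0$, the state term and novelty are nonnegative, and $V_{k,h+1}\ge0$ by backward induction from $V_{k,H+1}\equiv0$, with the pseudo-state value nonnegative. For CaI, $\Psi=0$. Backward induction then gives $0\le Q_{k,h}\le r_0(H-h+1)$, using $\widehat R^k_\phi\le1\le r_0$. The posterior mean puts mass only on states, pseudo-state included, whose values at step $h+1$ are at most $r_0(H-h)$. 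The soft maximum never exceeds the maximum, and since the $A^{-1}$ averaging is inside the logarithm it is also at least the mean of the $Q$-values, so it is at least $0$. Hence for every action $a$,
\[\pi(a\mid x)=\frac{e^{\beta Q(x,a)}}{\sum_{a'}e^{\beta Q(x,a')}}\ge\frac{e^{0}}{A\,e^{\beta r_0H}}=\frac{e^{-\beta r_0H}}{A}.\]
This holds on every history, since the bound uses only the range of the values and not the posterior.

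\emph{Step 2: from the floor to regret.} At the first step of episode $k$, the agent is at $s^k_1$, where some action $a^\dagger$ has gap $\Delta(s^k_1,a^\dagger)=V^*_1(s^k_1)-Q^*_1(s^k_1,a^\dagger)\ge\Delta$. For any policy $\pi$ that picks $a^\dagger$ at step $1$ with probability $p$,
\[V^*_1(s)-V^\pi_1(s)=\sum_a\pi_1(a\mid s)\bigl(V^*_1(s)-Q^*_1(s,a)\bigr)+\sum_a\pi_1(a\mid s)\bigl(Q^*_1(s,a)-Q^\pi_1(s,a)\bigr)\ge p\Delta,\]
because every term in both sums is nonnegative. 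Conditional on the history before episode $k$, Step 1 gives $p\ge e^{-\beta r_0H}/A$, since the softmax is applied to values computed from that history. The per-episode expected regret is therefore at least $\Delta e^{-\beta r_0H}/A$. Summing over the $K$ episodes and taking expectations by the tower rule gives the bound.

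\emph{Main obstacle.} The delicate point is the range bound in Step 1. One must check that AIF's $\Psi$, which contains $\beta^{-1}$ times entropies, cannot push the values outside $[0,r_0H]$. The clamp settles the upper end and nonnegativity settles the lower end. For CaI, the soft maximum has to be bounded on both sides; the $A^{-1}$ normalisation makes it at least the mean of the $Q$-values, which is what keeps it nonnegative. If AIF's state term were instead written with a negative sign, the lower end would shift by at most $\beta^{-1}H\ln X$. The argument would still give a strictly positive floor, but the constant would change, so the sign convention of Appendix~\ref{app:models-aif} has to be confirmed.
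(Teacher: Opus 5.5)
Your proof is correct and follows the paper's: the same range argument, $0\le Q_{k,h}\le r_0(H-h+1)$, gives the softmax floor. For AIF it rests on the clamp and the nonnegative reward and epistemic terms, and for CaI on the soft maximum lying between the minimum and the maximum. Your one-step decomposition with $Q^*_1\ge Q^\pi_1$ is the first-step case of the performance-difference bound the paper invokes. The sign question you raise is settled by Algorithm~\ref{alg:aif}: there the state term enters $\Psi$ with a plus sign, so no $\beta^{-1}H\ln X$ correction is needed.
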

With $\beta$ and the prior independent of $K$, the greedy variants of AIF and CaI with the known reward also incur linear regret. With constant probability, they \emph{lock out} of the optimal action, that is, from some episode on they never play it again (Theorem~\ref{thm:exponent}(a)). AIF's state term incurs linear regret even when AIF is given $P^*$ and $R^*$ (Proposition~\ref{prop:noisy-tv}).

\subsection{Calibrated AIF}
\label{sec:calibrated}

AIF adds an epistemic term measured in nats to a value measured in reward. Pinsker's inequality relates the two units. Let $\mathrm{span}(f):=\max f-\min f$, taken over all categories, the pseudo-state included, and let a belief $B$ differ from the posterior mean $\bar p$ by the information $D_{\mathrm{KL}}(B\|\bar p)$. The expectations of a bounded function $f$ under $B$ and under $\bar p$ then differ by at most $\mathrm{span}(f)\sqrt{D_{\mathrm{KL}}(B\|\bar p)/2}$. Averaged over the posterior, this information is novelty. A transportation lemma makes this precise (Lemma~\ref{lem:transport} in Appendix~\ref{app:calibrated}). For every Dirichlet posterior and every $f$, the posterior variance of $B^\top f$ is at most $\frac12\mathrm{span}(f)^2\,\mathrm{Novelty}$. Consequently, Bayes-UCBVI's quantile gap at $f$ is at most $2\,\mathrm{span}(f)\sqrt{\ell\,\mathrm{Novelty}}+2\sqrt2\,r_0H\ell/\bar n$.

The first-order term of a valid width is thus the square root of novelty, multiplied by the range of the value function, as in the Bayesian widths of \citet[Lemma~3]{lu2019information}. \emph{Calibrated AIF} drops the state term and sets $\Psi_{k,h}=\Psi^+_{k,h}$ in Eq.~\eqref{eq:psi-slot}, where $\Psi^+_{k,h}$ is built from the novelty of Eq.~\eqref{eq:novelty-def},
\begin{equation}
\Psi^{+}_{k,h}(x,a) \;:=\; 2\,\mathrm{span}\big(V^+_{k,h+1}\big)\sqrt{\ell^k_h(x,a)\,\mathrm{Novelty}(x,a)} \;+\; 2\sqrt2\,r_0H\,\ell^k_h(x,a)/\bar n^k_h(x,a),
\label{eq:calibrated}
\end{equation}
with $\ell^k_h$ and $\bar n^k_h$ as in Section~\ref{sec:setting}. Calibrated AIF acts greedily on the recursive expected free energy \citep{friston2021sophisticated} with values clamped at $r_0(H-h+1)$ (Algorithm~\ref{alg:aif}). Its first term multiplies novelty by the exchange rate $2\,\mathrm{span}(V^+_{k,h+1})\sqrt{\ell/\mathrm{Novelty}}$. This rate is at least $2\,\mathrm{span}(V^+_{k,h+1})\sqrt{\ell\bar n/X}$ and thus grows as $\bar n^{1/2}$. This resolves the calibration problem of the introduction: no constant precision converts nats into reward, whereas this exchange rate does.

\begin{theorem}[Calibrated AIF]
\label{thm:calibrated}
Let Bayes-UCBVI's $n_0$, $r_0$, and quantile schedule (Definitions~\ref{def:dirichlet-ground}--\ref{def:bayesq-ground}) be used on $\mathcal X$. Let the \defterm{branching number} of the abstract data, the abstract counterpart of the maximal support size of \citet{fruit2018efficient}, be $\Gamma_\phi:=\max_{x,a}|\bigcup_{s\in\phi^{-1}(x)}\mathrm{supp}\,\phi_\#P^*(\cdot\mid s,a)|\le X$.
\begin{itemize}
\item[(i)] $\Psi^+_{k,h}$ dominates Bayes-UCBVI's quantile gap at every count, evaluated at calibrated AIF's own next-step values $V^+_{k,h+1}$. Hence, with probability at least $1-\delta/5$, simultaneously for all $k$ and $h$, calibrated AIF is optimistic for an exact abstraction, $V^+_{k,h}\ge\overline V^*_{\phi,h}$, and approximately optimistic in general, $V^+_{k,1}\ge\overline V^*_{\phi,1}-\epsilon_{\text{rep}}(\phi)$.
\item[(ii)] $\Psi^+_{k,h}(x,a)=\mathcal O\big(H\sqrt{\Gamma_\phi\ell/\bar n}+H\ell/\bar n\big)$, of the order $\bar n^{-1/2}$ that Lemma~\ref{lem:novelty-order}(ii) requires.
\item[(iii)] Let $L=\mathcal O(\ln(XAHT/\delta))$ be the logarithmic factor defined in Theorem~\ref{thm:ucbvi-fixed}. Let the \defterm{novelty budget} $\Gamma_T:=\sum_{k,h}\mathrm{Novelty}^k_h(x_{k,h},a_{k,h})$ be the novelty accrued along the played trajectory. It satisfies $\Gamma_T\le\Gamma_\phi XAH(1+\ln K)$. With probability at least $1-\delta$, $\mathfrak R_K\le\mathcal O(\epsilon_{\text{rep}}(\phi)K)+\mathcal O(H\sqrt{L\,T\,\Gamma_T}+H^{3/2}\sqrt{XATL})+\mathcal O(H^3X^2AL^2)$. In particular, $\mathfrak R_K=\mathcal O\big(\epsilon_{\text{rep}}(\phi)K\big)+\widetilde{\mathcal O}\big(H^{3/2}\sqrt{\Gamma_\phi XAT}\big)$ for $T\ge H^3X^3A$.
\end{itemize}
\end{theorem}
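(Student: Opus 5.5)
The proof has three parts. Part (i) reduces calibrated AIF to Bayes-UCBVI's optimism argument through the transportation lemma. Part (ii) is a direct bound on novelty under a support restriction. Part (iii) reruns the Bayes-UCBVI regret decomposition with $\Psi^+$ in place of the quantile gap, keeping novelty explicit along the trajectory.

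For (i), we fix $(k,h,x,a)$ and apply Lemma~\ref{lem:transport} to the Dirichlet row $\rho^k_h(x,a)$ with $f=V^+_{k,h+1}$, extended to the pseudo-state by its clamp value. The lemma gives exactly the quantile-gap bound quoted before Eq.~\eqref{eq:calibrated}, $\Delta^k_h\le 2\,\mathrm{span}(f)\sqrt{\ell\,\mathrm{Novelty}}+2\sqrt2 r_0H\ell/\bar n=\Psi^+_{k,h}$, and this holds deterministically at every count. Optimism then follows by backward induction, as in Bayes-UCBVI's analysis (Lemma~\ref{lem:tiapkin-quantile} and its event of probability $1-\delta/5$). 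On that event, the quantile of $p^\top V^+_{k,h+1}$ under the posterior dominates $\overline P_\phi V^+_{k,h+1}$, and we need the quantile to be at a value function that is monotone in the inductive hypothesis $V^+_{k,h+1}\ge\overline V^*_{\phi,h+1}$. The clamp at $r_0(H-h+1)$ preserves optimism because $\overline V^*_{\phi,h}\le H-h+1$. The reward estimate $\widehat R^k_\phi$ is handled as in Theorem~\ref{thm:ucbvi-fixed}, with a Hoeffding term absorbed into the $\ell/\bar n$ and $\sqrt{\ell/\bar n}$ parts. For inexact $\phi$, we invoke the simulation Lemma~\ref{lem:bias-metric} to lose $\epsilon_{\text{rep}}(\phi)$ at step $1$.

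For (ii), we bound novelty more finely than Lemma~\ref{lem:novelty-order}(i). Observed successors lie in the union of supports, of size at most $\Gamma_\phi$, plus the pseudo-state. The Dirichlet parameters on the remaining coordinates are therefore zero, which yields $\mathrm{Novelty}\le(\Gamma_\phi+1)/(\bar n+1)$. The calculation is the same as for (i), using the digamma expansion $\mathds H[\bar p]-\mathds E\mathds H[B]\le(\text{support}-1)/(\bar n+1)$ and restricting it to the support. Combining this with $\mathrm{span}(V^+)\le r_0H$ gives $\Psi^+=\mathcal O(H\sqrt{\Gamma_\phi\ell/\bar n}+H\ell/\bar n)$. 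The budget bound in (iii) follows by summing $(\Gamma_\phi+1)/(\bar n+1)$ over visits per $(h,x,a)$, which is harmonic and gives $\Gamma_\phi XAH(1+\ln K)$ up to constants.

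For (iii), we use the standard decomposition. By optimism, the regret is at most $\sum_k(V^+_{k,1}-V^{\pi^k}_1)(x^k_1)+\epsilon_{\text{rep}}K$. We unroll $V^+_{k,h}-V^{\pi^k}_h$ along the played trajectory into three groups of terms. The first is $\sum\Psi^+_{k,h}$. The second is a martingale difference sum, bounded by Azuma as $\mathcal O(H\sqrt{TL})$. The third is the correction $(\widehat P^k_h-\overline P_\phi)(V^+_{k,h+1}-\overline V^*_{\phi,h+1})$, handled by Bernstein and $\ell_1$ concentration as in Tiapkin et al. This correction yields $H^{3/2}\sqrt{XATL}$ plus the lower-order $H^3X^2AL^2$. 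For the sum of the first terms, we apply Cauchy--Schwarz: $\sum 2r_0H\sqrt{\ell\,\mathrm{Novelty}}\le \mathcal O(H\sqrt{L\,T\,\Gamma_T})$, and $\sum\ell/\bar n=\mathcal O(HXAL\ln K)$ is lower order. Substituting the budget bound gives $H\sqrt{LT\Gamma_\phi XAH\ln K}=\widetilde{\mathcal O}(H^{3/2}\sqrt{\Gamma_\phi XAT})$, and the threshold $T\ge H^3X^3A$ makes the additive terms lower order.

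The main obstacle is the correction term in (iii). The reference function $V^+_{k,h+1}$ depends on the data, so a union bound over value functions costs a factor $\sqrt X$. To avoid it, I would first reuse Bayes-UCBVI's treatment verbatim: its proof controls this term for its own optimistic values, and our $V^+$ satisfies the same structural properties, namely optimism, the clamp, and lying between $\overline V^*_\phi$ and $r_0(H-h+1)$. If that treatment turns out to be specific to Bayes-UCBVI's values, I would fall back on the support restriction, which limits the $\ell_1$ deviation to $\sqrt{\Gamma_\phi/\bar n}$. This fallback is consistent with the $\sqrt{\Gamma_\phi}$ factor already present in the leading term.
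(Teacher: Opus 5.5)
Your skeleton matches the paper's, and for an exact abstraction it goes through. The steps are: Lemma~\ref{lem:transport} at $f=V^+_{k,h+1}$ to dominate the quantile gap, Bayes-UCBVI's optimism induction, the support bound on novelty, and a width-agnostic regret bound applied with $w_{k,h}=\Psi^+_{k,h}$ and summed by Cauchy--Schwarz against $\Gamma_T$.

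\textbf{Main gap: the inexact case.} The statement claims this case explicitly, and your argument does not cover it. Every probabilistic step you borrow from \citet{tiapkin2022dirichlet}---the quantile-optimism event and the Hoeffding/Bernstein control of a row---assumes that a row's successors are i.i.d.\ draws from one fixed law. Under an inexact $\phi$ they are not. The $j$-th successor of $(x,a,h)$ has the history-dependent law $P_j=\phi_\#P^*(\cdot\mid s_\tau,a_\tau)$, which depends on the unobserved ground state, and in general the abstract data come from no MDP on $\mathcal X$. Lemma~\ref{lem:bias-metric} compares value functions of $\mathcal M$ and $\overline{\mathcal M}_\phi$; it cannot certify a posterior quantile built from such data. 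So ``invoking the simulation lemma to lose $\epsilon_{\text{rep}}(\phi)$ at step~1'' does not give $V^+_{k,1}\ge\overline V^*_{\phi,1}-\epsilon_{\text{rep}}(\phi)$ with probability $1-\delta/5$.

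The missing step is Lemma~\ref{lem:bucbvi-opt}. Every $P_j$ lies within $\eta_P$ of $\overline P_\phi$ in $\ell_1$, so $P_jf\ge u:=\overline P_\phi f-\frac12(H-h)\eta_P$ for $f=\overline V^*_{\phi,h+1}$. Tiapkin et al.'s mixture products $\prod_j\big(1-\lambda(f(x'_j)-u)/(b-u)\big)$, with $b:=r_0(H-h)$, are then nonnegative supermartingales along the row's visits. Ville's inequality gives their event with $u$ in place of the mean, and the remaining steps are deterministic. This costs $\eta=\eta_R+\frac12H\eta_P$ per step, and $(H-h+1)\eta\le\epsilon_{\text{rep}}(\phi)$ after backward induction.

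The same issue recurs in (iii), which is why the paper does not reuse Tiapkin et al.'s regret proof verbatim but a range-based direct analysis (Lemma~\ref{lem:direct}):
\begin{itemize}
\item a row concentrates around the running average $\bar P_n$ of its sampling laws, which needs martingale Hoeffding/Freedman bounds (Lemma~\ref{lem:row-conc});
\item the unrolling must use the true law $P_{k,h}$ in its martingale differences and pay $\eta_P$ to switch to $\overline P_\phi$;
\item the gaps (I) and (III) of Eq.~\eqref{eq:fixed-split} need Lemma~\ref{lem:bias-metric}(b)--(c).
\end{itemize}
Adding $\epsilon_{\text{rep}}(\phi)K$ at the end does not account for these.

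\textbf{Smaller points.}
\begin{itemize}
\item \emph{Reward estimate.} It carries no Hoeffding error. $R^*$ is deterministic, so $\widehat R^k_\phi(x,a)$ and $\overline R_\phi(x,a)$ are convex combinations of $\{R^*(s,a):s\in\phi^{-1}(x)\}$ and differ by at most $\eta_R$ on every path, with $\widehat R^k_\phi=1$ at unvisited pairs. This matters because $\Psi^+$ reserves no term of order $\sqrt{\ell/\bar n}$ for rewards: at $h=H$ it equals $2\sqrt2\,r_0H\ell/\bar n$. A genuine stochastic reward error therefore could not be ``absorbed''.
\item \emph{Which function the event concerns.} The optimism event is for the fixed function $\overline V^*_{\phi,h+1}$, not for $V^+_{k,h+1}$. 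The chain, under the inductive hypothesis and up to the shift above, is
\[
\widehat P^k_hV^+_{k,h+1}+\Psi^+_{k,h}\ge\mathds{Q}(p^\top V^+_{k,h+1},\kappa^k_h)\ge\mathds{Q}(p^\top\overline V^*_{\phi,h+1},\kappa^k_h)\ge\overline P_\phi\overline V^*_{\phi,h+1}.
\]
Lemma~\ref{lem:tiapkin-quantile} holds on every path and supplies no event.
\item \emph{Off-by-one in (ii).} The count $m$ in Eq.~\eqref{eq:novelty-exact-ub} already includes the pseudo-state, so $\mathrm{Novelty}\le\Gamma_\phi/(\bar n+1)$. This gives the budget bound with the constant the statement asserts.
\item \emph{Where $H^{3/2}\sqrt{XATL}$ comes from in (iii).} Your three groups omit the fixed-function term $(\widehat P^k_h-\overline P_\phi)\overline V^*_{\phi,h+1}$, and it is this Hoeffding term that produces $H^{3/2}\sqrt{XATL}$. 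The data-dependent correction needs no union bound over value functions. With $\Delta:=V^+_{k,h+1}-\overline V^*_{\phi,h+1}+(H-h)\eta\ge0$ and $B$ its range bound, the coordinatewise Bernstein bound Eq.~\eqref{eq:second-order}, $(\widetilde P_n-\bar P_n)\Delta\le\frac1H\bar P_n\Delta+5XHB\iota/n$, holds for all such $\Delta$ at once. It only inflates the recursion by $1+1/H$ and leaves the lower-order $H^3X^2AL^2$.
\item \emph{Your fallback.} Support-restricted $\ell_1$ concentration would recover the headline rate. It would, however, attach $\Gamma_\phi$ to the $H^{3/2}\sqrt{XATL}$ term, so it would not prove the explicit bound of (iii).
\end{itemize}
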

Calibration thus turns AIF's epistemic term into a valid width, but the bound of Theorem~\ref{thm:calibrated} exceeds Bayes-UCBVI's by $\sqrt{\Gamma_\phi}$, and its variance-aware bound for exact abstractions by $\sqrt{H\Gamma_\phi}$. The next section shows that calibrated information gain is advantageous over value-specific widths when the goals are chosen adaptively.

\section{Adaptive Goals}
\label{sec:goal-uniform}
\label{sec:adaptive-goals}

The bound of calibrated AIF exceeds that of Bayes-UCBVI because its width controls the deviation of the whole row $B$, whereas the value needs only the projection $B^\top V$. Controlling the whole row is useful when the value function changes with an adaptively chosen goal. Novelty does not depend on the goal, and its square root bounds the posterior deviation of every value function at once (Eq.~\eqref{eq:goal-uniform-posterior} in Appendix~\ref{app:adaptive-upper}). Under AIF's own uniform Dirichlet prior on $\mathcal X$, every category carries mass, and novelty is of the order $X/\bar n$, the squared radius of an $\ell_1$ confidence set \citep{jaksch2010near}. \emph{Calibrated AIF with a uniform prior} uses this prior, the width $\Psi^{\mathrm u}_{k,h}:=2\,\mathrm{span}(V_{k,h+1})\sqrt{\ell'\,\mathrm{Novelty}}$ with $\ell'=9+3\ln(2XAHK/\delta)$, and the clamp at $H-h+1$. It is goal-memoryless (Lemma~\ref{lem:membership}).

\emph{The Gaussian family.} The hard MDPs have horizon $2$, $A=2$, and $M$ start states, each with one decision. Action $b$ leads to a single absorbing state, and action $a$ leads to one of $Y=2m$ absorbing successors shared by all start states. At start state $i$, it reaches $y_{2j-1}$ with probability $(1+\varepsilon\tilde\theta^{(i)}_j)/(2m)$ and $y_{2j}$ with probability $(1-\varepsilon\tilde\theta^{(i)}_j)/(2m)$, with $\varepsilon=\sqrt{mM/(64K)}$, independent standard Gaussian vectors $\theta^{(1)},\dots,\theta^{(M)}$ in $\mathds R^m$ as the prior, and $\tilde\theta$ clipped to $[-1/(2\varepsilon),1/(2\varepsilon)]^m$. Hence $S=M+Y+1$ (Appendix~\ref{app:goal-uniform}). A goal pays $g\in[0,1]^Y$ at the successors and a value $v$ at the other absorbing state, and the agent must compare the expected payoff $P_ig$ of action $a$ at start state $i$ with $v$. In the following theorem, an adversary chooses goals and start states from the transition law and the history, and expectations include the prior.

\begin{theorem}[The price of adaptive goals]
\label{thm:price}
There are absolute constants $c,C>0$ such that the following holds for $M\ge1$, $Y=2m$ with $m\ge C$, and $K\ge CMm^2\ln^3(mK)$ with $M$ dividing $K$, on the Gaussian family with $S=M+Y+1$ states.
\begin{itemize}
\item[(a)] \uline{A lower bound for every agent.} Some adversary forces every agent, including one that uses past goals as data and knows the adversary's strategy, to $\mathds{E}\mathfrak R^{\mathrm{goal}}_K\ge cY^{1/4}\sqrt{MK/\ln Y}$.
\item[(b)] \uline{A larger lower bound for goal-memoryless agents.} If $Y\ge600\ln(3K)$, every goal-memoryless agent has $\mathds{E}\mathfrak R^{\mathrm{goal}}_K\ge\sqrt{MYK}/16000$ against some adversary.
\item[(c)] \uline{Both rates are attained.} On every MDP with an exact abstraction, with probability at least $1-\delta/2$ and simultaneously for all sequences of goals and start states, calibrated AIF with a uniform prior is optimistic for every goal, $V_{k,h}\ge V^*_h(\cdot\,;R_k)$. Against every adversary, with probability at least $1-\delta$, it has $\mathfrak R^{\mathrm{goal}}_K\le8H^{3/2}X\sqrt{A\ell'T}+2H\sqrt{2T\ln(2/\delta)}$, which is $\mathcal O(S\sqrt{K\ln(SK/\delta)})$ on the Gaussian family. Against every adversary, some agent that uses past goals as data and knows the adversary's strategy has expected regret at most $2Y^{1/4}\sqrt{MK\ln(1+16K)}$.
\item[(d)] \uline{Goals fixed in advance cost no extra regret.} Against goals and start states that an oblivious adversary fixes before the first episode, a single fixed goal included, UCBVI-CH without its clamp at the previous values has regret $\widetilde{\mathcal O}(\sqrt{SK})$ on every member of the family for $K\ge S^3$ (Proposition~\ref{prop:oblivious}).
\end{itemize}
\end{theorem}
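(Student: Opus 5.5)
The proof splits along the four parts, and I would treat (c) and (d) first because they are closest to earlier results.

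For the optimism claim in (c), I would use the goal-uniform posterior bound (Eq.~\eqref{eq:goal-uniform-posterior}). Lemma~\ref{lem:transport} bounds the posterior variance of $B^\top f$ by $\frac12\mathrm{span}(f)^2\,\mathrm{Novelty}$ for every $f$. Under the uniform prior, novelty is of order $X/\bar n$, which is comparable to the squared $\ell_1$ radius. I would therefore replace the value-specific concentration by an $\ell_1$ deviation bound on the row, $\|\widehat P-\overline P\|_1\lesssim\sqrt{X\ell'/\bar n}$, union-bounded over $(x,a,h)$ and counts. This bound holds for all $f$ simultaneously, so it covers any goal the adversary picks. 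Backward induction with $\Psi^{\mathrm u}\ge\mathrm{span}(V)\|\widehat P-\overline P\|_1/2$ and the clamp then gives $V_{k,h}\ge V^*_h(\cdot;R_k)$ for all goal sequences.

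The regret bound in (c) follows the usual decomposition $V_{k,1}-V^{\pi^k}_1\le\sum_h(2\Psi^{\mathrm u}+\text{martingale})$. I would bound $\sum\sqrt{X\ell'/\bar n}$ by pigeonhole, $\sum_{x,a,h}\sqrt{N}\le\sqrt{XAH\cdot T}$, which gives the $H^{3/2}X\sqrt{A\ell' T}$ term. Azuma handles the martingale and gives the $2H\sqrt{2T\ln(2/\delta)}$ term. For the agent that uses past goals as data, I would run a Bayesian agent that keeps a Gaussian posterior on each $\theta^{(i)}$ and acts greedily on it. Its regret comes from the posterior variance of $P_ig$ along the projections the adversary queries; summing these by an elliptical-potential argument in dimension $m$ gives the $Y^{1/4}\sqrt{MK}$ rate. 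Part (d) is Proposition~\ref{prop:oblivious}: with goals fixed in advance, Hoeffding on $(\widehat P-P)V^*_{k}$ for each fixed goal suffices, and a union bound is taken over the $K$ predetermined goals at a $\ln K$ cost.

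For (a), the adversary sets $v=P_ig$ for a direction $g$ on which the agent's posterior is most uncertain, with $g$ computed from the prior and the history. I would reduce the problem to a Bayesian sequential test. Each episode's regret is $\mathds E|P_ig-v|\cdot\mathds 1[\text{wrong}]$, and since $v$ sits at the posterior mean, this is at least of order the posterior standard deviation of $\varepsilon\theta^\top\tilde g$. Along the best direction this standard deviation is about $\varepsilon\sqrt{1/(1+n\varepsilon^2/m)}$. Each sample of action $a$ reveals only about $\varepsilon^2/m$ bits per coordinate, and the adversary can choose $g$ orthogonal to what is known. Balancing gives $Y^{1/4}$, where the $\ln Y$ comes from the clipping and Gaussian maxima. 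The step I expect to be hardest is (b). The plan there is a lock-out argument. A goal-memoryless agent's policy depends only on the records and the current goal, so the adversary can simulate it and choose $g$ so that the agent's decision is wrong by about $\varepsilon\sqrt{Y}\cdot(\text{noise of }\widehat P)$, exploiting that $\|\widehat P-P\|_1$ is about $\sqrt{Y/n}$ rather than $1/\sqrt n$. The adversary aligns $g$ with the empirical error $\widehat P-P$, which it knows.

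Condition (ii) of Definition~\ref{def:goal-memoryless} is needed so that the agent cannot encode the past goals in its last-step actions. I would use it to show that the records are conditionally a function of i.i.d. samples with no goal information, so the adversary's alignment remains valid. Summing over episodes, per-state counts of $K/M$ and $\varepsilon=\sqrt{mM/64K}$ give $\sqrt{MYK}$.
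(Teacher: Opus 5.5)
Your treatment of part~(d) and of the first claim of part~(c) follows the paper: an $\ell_1$ confidence set per row, union-bounded over sign vectors, rows and counts, holds for all functions at once and hence for every goal. One correction: $\Psi^{\mathrm u}$ dominates this confidence set because of the \emph{lower} bound $\mathrm{Novelty}\ge X/(12\bar n)$ of Eq.~\eqref{eq:novelty-lb}, which holds because the uniform prior gives every category mass at least one. Lemma~\ref{lem:transport} bounds posterior variance from above and plays no role there. The genuine gaps are in part~(a), in the goal-reading agent of part~(c), and, less seriously, in part~(b).

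\emph{Part~(a).} An adversary that computes $g$ from the prior and the history cannot create the factor $Y^{1/4}$. Its goals carry no information about $\theta^{(i)}$ beyond the samples. The posterior covariance $C_k$ of $\tilde\theta^{(i)}$ is therefore diagonal, with $\mathds{E}\,\mathrm{tr}\,C_k\le m$. The agent that acts greedily on its posterior mean then pays at most $\frac{\varepsilon}{2m}\mathds{E}\sqrt{c_g^\top C_kc_g}\le\frac{\varepsilon}{2\sqrt m}$ per episode, about $\sqrt{MK}/16$ in total.

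Your standard deviation $\varepsilon\sqrt{1/(1+n\varepsilon^2/m)}$ omits the factor $\|c_g\|/(2m)\le1/(2\sqrt m)$ with which $\theta$ enters $P_ig$. Also, $v=P_ig$ would make both actions equally good, so it forces no regret.

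The missing idea is that the adversary must use $\theta^{(i)}$ itself and still hide a sign from an agent that reads the goals and knows the strategy. The paper draws the goal direction uniformly from the cap $\{u:|\langle u,\theta\rangle|\ge z_0\}$, with $z_0\approx m^{1/4}$, in the orthogonal complement of the data and of the earlier directions. The reflection $I-2\hat c_f\hat c_f^\top$ then leaves unchanged the prior, the earlier goals, the cap probability and the linear part of the likelihood. So the sign needs about $m/(\varepsilon^2z_0^2)$ samples to resolve. The fourth root comes from fitting about $K\varepsilon^2z_0^2/m$ such phases, each consuming energy $z_0^2$, into $\|\theta\|^2\approx m$.

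Most of the paper's proof, which your sketch does not touch, controls the quadratic and cubic terms of the log-likelihood ratio. It also shows that the data and the earlier caps never exhaust this energy.

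\emph{The goal-reading agent of part~(c).} Greedy action fails here.
\begin{itemize}
\item If its Gaussian posterior ignores the goals, the agent is goal-memoryless, and part~(b) gives it regret $\sqrt{MYK}/16000$.
\item If it conditions on the goals, an adversary can fix one hidden-sign cap direction and set $v$ just above the agent's posterior mean. The agent then plays $b$ forever, never samples $a$, and its posterior never moves. It loses a constant fraction of the gap in every episode, which is of order $\sqrt{MYK}$ up to logarithms for a cap threshold of order $\sqrt m$.
\end{itemize}
The paper instead uses Thompson sampling on the exact posterior given samples and goals. For it, Cantelli's inequality gives $r_k\le4\sqrt{p_k}\,s_k$, which ties the regret to the probability $p_k$ of sampling.

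An elliptical-potential argument in dimension $m$ does not give $m^{1/4}$ either. Samples reveal random coordinates rather than the queried direction, and goals that depend on $\theta$ can raise the posterior variance along that direction. A log-determinant potential therefore yields only a factor $\sqrt m$. The paper combines two ingredients:
\begin{itemize}
\item the bound $c^\top Cc\le\sum_{j,l}|C_{jl}|\le\sqrt m\,(\Xi+m\eta)$, with $\Xi=\sum_{j,l}C_{jl}^2/(C_{ll}+\eta)$;
\item the coordinatewise potential $\sum_{i,j}\ln(\mathds V[\tilde\theta^{(i)}_j]+\eta)$, which is concave in the law (so goals do not raise it in expectation) and drops by $p_k\varepsilon^2\Xi/m$ per episode.
\end{itemize}

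\emph{Part~(b).} Aligning $g$ with the empirical error $\widetilde P_n-P$ is tailored to agents that act on $\widetilde P_n$. The paper instead reads a threshold $\tau(g)$ off the agent's play probabilities. It turns the thresholds into an estimate $\widehat Q_k$ with $\|\widehat Q_k-P\|_1\le4\max_g|\tau(g)-Pg|$. It then bounds the $\ell_1$ error of every estimator from below, uniformly over all counts $n<K$, because the count is random and depends on $P$ through the goals. This is Lemma~\ref{lem:gauss-l1}, and it is where the condition $Y\ge600\ln(3K)$ enters.
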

For $M=Y$ and $K\ge S^3$, a fixed goal thus costs $\widetilde{\mathcal O}(\sqrt{SK})$, every agent incurs $\widetilde\Theta(S^{3/4}\sqrt K)$, and goal-memoryless agents incur $\widetilde\Theta(S\sqrt K)$.

Value-specific widths are not even optimistic for all goals at once, and Bayes-UCBVI's quantiles lock out under adaptive goals (Proposition~\ref{prop:lockout} in Appendix~\ref{app:adaptive-lockout}). A value-specific quantile is accurate for every fixed goal but not for a data-dependent goal, whereas the confidence set behind $\Psi^{\mathrm u}$ certifies whatever value the agent computes. In Proposition~\ref{prop:lockout}, Bayes-UCBVI keeps its pseudo-state prior, whereas part~(c) uses the uniform prior, so this contrast changes the prior as well as the width.

\emph{Goals as data.} AIF treats goals as preferences, on which it conditions but which it does not update. It therefore does not use past goals as data, and part~(b) applies to it and to its calibrated variant (Lemma~\ref{lem:membership}). By part~(c), calibrated AIF with a uniform prior attains the rate of part~(b) in $S$ and $K$. Calibration is thus what makes information gain useful under adaptive goals, and no goal-memoryless agent does better. The remaining gap between parts~(a) and~(b) is the value of using past goals as data. A goal chosen by an adversary that knows the transition law is an observation of that law, and AIF's generative model has no variable for it.

\section{Learned Abstractions}
\label{sec:learning-phi}
\label{sec:enactive}
\label{sec:selection}
\label{sec:learned}
\label{sec:structural}

We now let the agent learn $\phi$ from a candidate set $\Phi$ of abstractions, with a uniform prior, in the realizable case. The agent scores each candidate abstraction by how well it predicts a target.

\begin{assumption}[Realizable block MDP]
\label{assum:block}
For some $\phi^\dagger\in\Phi$, latent kernel $B^*$, and emission $\mathbf A^*$ with $\mathbf A^*(\cdot\mid x)$ supported on $\phi^{\dagger-1}(x)$, $P^*(s'\mid s,a)=B^*(\phi^\dagger(s')\mid\phi^\dagger(s),a)\,\mathbf A^*(s'\mid\phi^\dagger(s'))$.
The block MDP has \defterm{latent rewards} if moreover $R^*(s,a)=r^*(\phi^\dagger(s),a)$ for some $r^*:\mathcal X\times\mathcal A\to[0,1]$.
\end{assumption}
The value function is a target that all candidate abstractions share and that suffices for control. Let $\mathcal F_h:=\{(s,a)\mapsto g(\phi(s),a):\phi\in\Phi,\ g\in[0,H-h+1]^{\mathcal X\times\mathcal A}\}$ be the value functions that some candidate abstraction can express. AIF couples representation and action through a single objective, yet scores its abstraction by how well it predicts observations. We call an agent \emph{enactive} if the target that scores its abstraction is itself the value of its actions. Our enactive agent is an instance of the GOLF algorithm \citep{jin2021bellman} on $\mathcal F=\mathcal F_1\times\dots\times\mathcal F_H$. Before each episode, it keeps the functions whose squared Bellman residual exceeds that of the best predictor of the same target by at most a radius $\omega$. It then acts greedily for the most optimistic of them (Appendix~\ref{app:enactive}). Its residual is a self-prediction, since $f_h$ predicts the reward plus $f_{h+1}$ at the next state. Unlike a latent self-prediction, however, it compares $f_h$ with the best predictor of the same target. The candidate abstractions thus compete on a common target. Under a goal $R_k$ revealed before episode $k$, the agent \emph{relabels} its data, thereby recomputing its version space from all past transitions with the rewards of $R_k$.

\begin{theorem}[Value-based selection of the abstraction]
\label{thm:enactive}
Let Assumption~\ref{assum:block} hold with latent rewards and a fixed start state, let $C_{\mathrm e}:=\ln|\Phi|+XA\ln(HK/\delta)$, and let the enactive agent use the confidence radius $\omega=cH^2\ln(\mathcal NHK/\delta)$ on its squared Bellman residuals (Eq.~\eqref{eq:golf-radius} in Appendix~\ref{app:enactive}), where $\mathcal N$ is the covering number of $\mathcal F$ at scale $H/K$ in the supremum norm and $c$ is an absolute constant. With probability at least $1-\delta$, the following hold.
\begin{itemize}
\item[(a)] The enactive agent has $\mathfrak R_K=\mathcal O\big(H^2\sqrt{XAT\,C_{\mathrm e}\,(1+\ln K)}\big)$.
\item[(b)] Let a latent goal $R_k(s,a)=r_k(\phi^\dagger(s),a)$, with $r_k:\mathcal X\times\mathcal A\to[0,1]$, be revealed before each episode, possibly chosen by an adaptive adversary. The relabelling enactive agent has $\mathfrak R^{\mathrm{goal}}_K=\mathcal O\big(H^2\sqrt{XAT\,C_{\mathrm e}\,(1+\ln K)}\big)$, with $\mathfrak R^{\mathrm{goal}}_K$ as in Section~\ref{sec:goals} on the ground MDP.
\end{itemize}
\end{theorem}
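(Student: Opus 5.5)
We will obtain Theorem~\ref{thm:enactive} as an instance of the GOLF analysis of \citet{jin2021bellman} for classes of low Bellman--Eluder dimension (cf.\ \citet[Thm.~1]{xie2023role}), which bounds regret by $\mathcal O\big(H\sqrt{d\,K\,\omega}\big)$ up to the $(1+\ln K)$ factor, with $d$ the dimension of the class. Three ingredients are needed: realizability and Bellman completeness of $\mathcal F$ under Assumption~\ref{assum:block}, a bound on $\ln\mathcal N$, and a bound on the relevant dimension of $\mathcal F$. Given these, $\omega=\mathcal O(H^2C_{\mathrm e})$ and a dimension of order $XA$ give the stated $H^2\sqrt{XAT C_{\mathrm e}(1+\ln K)}$, since $H\sqrt{XA\cdot K\cdot H^2C_{\mathrm e}}=H^{3/2}\sqrt{XAT C_{\mathrm e}}\le H^2\sqrt{XATC_{\mathrm e}}$.

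\textbf{Realizability and completeness.} Because the rewards are latent and $P^*$ factors through $\phi^\dagger$, backward induction gives $Q^*_h(s,a)=g^*_h(\phi^\dagger(s),a)$ with $g^*_h\in[0,H-h+1]^{\mathcal X\times\mathcal A}$, so $Q^*_h\in\mathcal F_h$. For any $f_{h+1}\in\mathcal F_{h+1}$, even one built on a different candidate $\phi$, the backup
\begin{equation*}
(\mathcal T f_{h+1})(s,a)=r^*(\phi^\dagger(s),a)+\sum_{x'}B^*(x'\mid\phi^\dagger(s),a)\sum_{s'}\mathbf A^*(s'\mid x')\max_{a'}f_{h+1}(s',a')
\end{equation*}
depends on $s$ only through $\phi^\dagger(s)$, and it lies in $[0,H-h+1]$ once values are clipped. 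Hence it lies in $\mathcal F_h$, which is exactly the completeness that GOLF's least-squares concentration needs. A covering at scale $H/K$ takes a union over $\Phi$ and, for each $\phi$, a grid on the box $[0,H]^{XA}$ at each step. This gives $\ln\mathcal N\le H\big(\ln|\Phi|+XA\ln(HK)\big)$, so $\omega=\mathcal O(H^2 C_{\mathrm e})$ after absorbing the factor $H$ (or keeping it inside the logarithm, as the statement's $C_{\mathrm e}$ suggests per step).

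\textbf{Dimension, the main obstacle.} A union over $\Phi$ could inflate the Bellman--Eluder dimension. We avoid this by using the $Q$-type BE dimension with respect to the roll-in distributions $d^\pi_h$ of the ground MDP. The Bellman residual $f_h-\mathcal T f_{h+1}$ averaged under $d^{\pi}_h$ equals $\mathds E_{x\sim\phi^\dagger_\#d^\pi_h}$ of a function that depends on $s$ only through conditional averages within blocks. Rewrite each expected residual as an inner product $\langle w^\pi_h,\,\theta_f\rangle$ in $\mathds R^{XA}$, where $w^\pi_h$ is the latent occupancy of $(x,a)$ under $\pi$ and $\theta_f(x,a)=\mathds E_{s\sim\mathbf A^*(\cdot\mid x)}[f_h-\mathcal T f_{h+1}](s,a)$. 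The within-block law of $s$ given $x$ is the policy-independent emission $\mathbf A^*(\cdot\mid x)$, by the block structure, whenever $h\ge2$ (at $h=1$ the start state is fixed). This low-rank factorization bounds the Bellman rank by $XA$ and hence the BE dimension by $\widetilde{\mathcal O}(XA)$. GOLF's analysis with the Bellman-rank/BE bound then gives (a). The care needed is that the greedy policy of a candidate built on a wrong $\phi$ is not $\phi^\dagger$-measurable. However, the factorization only uses that $d^\pi_h(s)=\sum_x w^\pi_h(x)\mathbf A^*(s\mid x)$, which holds for every policy, so I expect this to go through.

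\textbf{Adaptive latent goals.} For (b), the plan is to redo the argument with reward-free confidence sets. Take the union bound over the class $\mathcal F$ jointly with reward functions $r\in[0,1]^{XA}$ on a grid at scale $1/K$. This adds only $XA\ln K$ to $\ln\mathcal N$, already inside $C_{\mathrm e}$. Since concentration then holds uniformly over all latent rewards, the relabelled version space at episode $k$ contains $Q^*(\cdot\,;R_k)$ whatever the adaptive adversary chose, so optimism holds. The regret decomposition sums Bellman residuals of the episode-$k$ optimistic function under its own policy. The dimension argument applies to the union class over rewards, because the factorization above is reward-independent. We conclude (b) with the same rate.
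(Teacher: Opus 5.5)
Your completeness argument, the covering bound and the union bound over a grid of latent goals in (b) match the paper. The gap is in the step meant to produce the factor $XA$. In a Q-type residual the roll-in policy also chooses the action at step $h$. With $w^\pi_h(x)$ the latent-state occupancy,
\begin{equation*}
\mathds E_{(s,a)\sim d^\pi_h}\big[f_h-\mathcal T f_{h+1}\big]=\sum_x w^\pi_h(x)\sum_s\mathbf A^*(s\mid x)\sum_a\pi_h(a\mid s)\big(f_h-\mathcal T f_{h+1}\big)(s,a).
\end{equation*}
The identity $d^\pi_h(s)=\sum_xw^\pi_h(x)\mathbf A^*(s\mid x)$ holds for every policy, but it concerns only the state marginal. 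The inner sum depends jointly on $f$ and on the roll-in policy's within-block action pattern $s\mapsto\pi_h(\cdot\mid s)$. So there is no $\theta_f\in\mathds R^{XA}$ independent of $\pi$. Under $d^\pi_h$, the law of $s$ given $(x,a)$ is $\mathbf A^*(\cdot\mid x)$ only if $\pi_h$ is $\phi^\dagger$-measurable, and that is exactly what fails for greedy policies of candidates built on a wrong $\phi$.

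What your identity does give is the V-type factorisation of rank $X$, where the action at step $h$ is chosen by $f$ rather than by the roll-in policy. Guarantees of that type concern algorithms that play uniformly random actions at the evaluated step, not the greedy enactive agent. The Q-type bound also cannot be repaired in general. When $\Phi$ is rich (for instance all maps, where the paper's bound matches Proposition~\ref{prop:unstructured}), a candidate's greedy action can be an arbitrary function on a block. A residual can then be the indicator of a single ground state. Hence the Q-type Bellman rank, and the distributional Eluder dimension at fine scales, can grow with the block size rather than with $XA$.

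The missing idea is coverability, which needs no factorisation. For every policy, measurable or not, and every $h\ge2$,
\begin{equation*}
d^\pi_h(s,a)\le d^\pi_h(s)=\mathds P^\pi\big(\phi^\dagger(s_h)=\phi^\dagger(s)\big)\,\mathbf A^*(s\mid\phi^\dagger(s))\le\mathbf A^*(s\mid\phi^\dagger(s)).
\end{equation*}
Take the distribution $\mu_h(s,a):=\mathbf A^*(s\mid\phi^\dagger(s))/(XA)$, with $\mu_1(s,a):=\mathds 1\{s=s_1\}/A$ at the fixed start state. This gives $C_{\mathrm{cov}}\le XA$ (Lemma~\ref{lem:coverability}). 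Theorem~1 of \citet{xie2023role} analyses exactly greedy GOLF under completeness and coverability. Combined with your completeness argument and $\ln\mathcal N\le H(\ln|\Phi|+XA\ln(K+2))$, it yields (a) once you undo the normalisation by $H$.

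For (b), your goal cover gives optimism for every adaptively chosen $R_k$. The remaining step then goes through because coverability depends only on the dynamics. Moreover, the coverability argument bounds sums of residuals for an arbitrary sequence of functions, here one that changes with the goal.

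A minor point: $\ln\mathcal N$ carries a factor $H$, so the radius is $\omega=\mathcal O(H^3C_{\mathrm e})$ rather than $\mathcal O(H^2C_{\mathrm e})$. The slack between $H^{3/2}$ and $H^2$ in your final count absorbs this.
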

Our contribution to part~(a) is to build the class $\mathcal F$ from the candidate abstractions and to show that, with latent rewards, it is closed under Bellman backups. The coverability analysis of \citet{xie2023role} therefore applies, and the observations enter only through $\ln|\Phi|$.

\emph{Other targets.} AIF predicts the next ground state by posterior averaging over the candidate abstractions, a consistent score (Proposition~\ref{prop:selection}), but its information gain about the emission directs its exploration (Appendix~\ref{app:aif-selection}). Given the decoder, the latent kernel and the rewards of a block MDP with $S$ states and latent rewards, AIF prefers a worse action that yields many distinct observations, with regret linear in $K$ whenever $S\ge2K+2$ (Proposition~\ref{prop:lure}). Removing the emission model, as joint-embedding predictive architectures and self-predictive RL \citep{schwarzer2021data,ni2024bridging} do, removes this information gain but changes the target. The decoder-free counterpart of AIF scores each candidate abstraction by the Bayesian likelihood $\Lambda^{\mathrm{lat}}_\phi$ of its own labels, and a \emph{latent-predictive} agent acts through a maximiser of this score, exploring by any rule. A coarse candidate abstraction that merges two states whose best actions differ can score higher than the true abstraction under every policy, and the labels that the agent generates by acting through it cannot refute it (Appendix~\ref{app:aif-selection}). Every latent-predictive agent therefore incurs regret $\Omega(HK)$, whereas GOLF on the same candidate set has regret $\widetilde{\mathcal O}(H^2\sqrt T)$ (Proposition~\ref{prop:self-confirming} in Appendix~\ref{app:aif-selection}).

\section{Conclusion}
\label{sec:conclusion}

For a fixed reward function, raw information gain of vanilla AIF incurs linear regret, and calibrated information gain does not improve on the bound attainable with visit counts. Under adaptive goals, calibrated information gain makes the agent optimistic for every goal at once, as an $\ell_1$ confidence set does, whereas the value-specific widths of Bayes-UCBVI lock out with regret $\Omega(K/\ln K)$. The price of adaptive goals is a factor $S^{1/4}$ for every agent and $\sqrt S$ for goal-memoryless agents such as AIF, up to logarithmic factors. For learned abstractions, predicting observations or the abstraction's own labels misleads exploration and selection, whereas the value function is a target that all candidate abstractions share. Calibrated information gain is therefore advantageous over value-specific confidence widths when the goals are chosen from the agent's own data, and not when the reward function is fixed. Open questions are whether calibrated AIF with Bayes-UCBVI's pseudo-state prior is optimistic for every goal, whether the price of adaptive goals holds with a fixed start state and for every horizon, whether it holds on every MDP rather than in expectation over a prior, and whether a single agent that uses past goals as data attains the factor $S^{1/4}$ without knowing the adversary's strategy. A further open problem concerns an agent that moves its own goals towards the attractors into which the MDP and its exploration rule drift it. Goals and policy then form a coupled dynamical system, and it is unknown whether this system converges and to which attractors. Whether physiological set points are defended targets or settling points of such coupled dynamics is itself debated in mathematical biology \citep{mrosovsky1990rheostasis,speakman2011set}. A two-time-scale analysis \citep{borkar1997stochastic} with goals adapting slower than the policy is a natural starting point.

\section{AI Disclosure}

The discovery of the untapped problem in the way it stands in the main question box in the introduction belongs completely to the authors. All the most creative moments of the investigation process have also been due to the deep and to-the-point involvement of the authors with their both broad field knowledge and their nuanced intuitions. That being said, the authors used frontier AI technologies in all steps of the work, sometimes as a companion and other times as the main workhorse. The eventual outcome benefited substantially from AI support, while being an original product of the authors.

The overall structure of the paper is engineered in detail by the authors and AI is used across several tens of times to revise the whole text end-to-end based on lengthy prompts created by the authors after their careful critical reading. These prompts contained structural improvements, new literature elements to be checked, new perspectives to be put on some of the existing content, and new concept pairs to be connected. Nearly all the abstract is the original text of the authors. The paragraph-level design of the introduction and a substantial amount of the text contained therein are native to the authors. The authors provided the AI agents a long list of papers known well to them, pointing to their precise pieces of content that are relevant for the work to be done. The follow-up literature survey text is then left to AI generation, though its post-hoc fact checking is done first by the authors and by alternative AI models afterwards.

The approaching angle to the problem, for example putting AIF and Bayes-UCBVI into the equal footing by defining an abstract MDP, belongs completely to the authors. Similarly, the authors gave several rounds of long instructions to the AI agents about what sort of theoretical claims to be sought and which proof techniques may lead to a successful solution. For example, a verbal (though coarse) description of Theorem 1 was provided to the AI agent as a very likely outcome expected from putting together certain pieces of a particular list of existing work. The counter-examples found for regret lower bounds came from AI agents, despite after long interactions where many intermediate strategic decisions were made by the authors based on their scrutiny and discussion. The idea of looking into adaptive goals emerged as a long and argumentative interaction of the first author with Claude Opus 5.5 high effort, where both sides made creative contributions to the final decision of investigating that realm. 

The authors used multiple AI agents in different roles at different stages of the work, that span Claude Opus 5.5 medium, high, and max effort levels mainly for content-related matters, GPT Sol 6.1 for auditing, and GLM 5.3 for narrowly defined text manipulation tasks. The authors made tedious and repeated checks on the whole spectrum of the final product and to the best of their efforts, the claims are sound.

This very AI disclosure text is written verbatim by the authors without any computer involvement whatsoever, including typo finding or text polishing, in order to set a clear reference for their own writing style. Potential grammatical or typographical errors should be attributed to this fact.

\vskip 0.2in
\bibliography{refs}

\begin{thebibliography}{89}
\providecommand{\natexlab}[1]{#1}
\providecommand{\url}[1]{\texttt{#1}}
\expandafter\ifx\csname urlstyle\endcsname\relax
  \providecommand{\doi}[1]{doi: #1}\else
  \providecommand{\doi}{doi: \begingroup \urlstyle{rm}\Url}\fi

\bibitem[Abel et~al.(2016)Abel, Hershkowitz, and Littman]{abel2016near}
David Abel, D.~Ellis Hershkowitz, and Michael~L. Littman.
\newblock Near optimal behavior via approximate state abstraction.
\newblock In \emph{Proceedings of the 33rd International Conference on Machine
  Learning (ICML)}, 2016.

\bibitem[Abel et~al.(2018)Abel, Arumugam, Lehnert, and Littman]{abel2018state}
David Abel, Dilip Arumugam, Lucas Lehnert, and Michael~L. Littman.
\newblock State abstractions for lifelong reinforcement learning.
\newblock In \emph{Proceedings of the 35th International Conference on Machine
  Learning (ICML)}, 2018.

\bibitem[Agarwal et~al.(2020)Agarwal, Kakade, Krishnamurthy, and
  Sun]{agarwal2020flambe}
Alekh Agarwal, Sham Kakade, Akshay Krishnamurthy, and Wen Sun.
\newblock {FLAMBE}: Structural complexity and representation learning of low
  rank {MDP}s.
\newblock In \emph{Advances in Neural Information Processing Systems}, 2020.

\bibitem[Arumugam and Van~Roy(2021)]{arumugam2021deciding}
Dilip Arumugam and Benjamin Van~Roy.
\newblock Deciding what to learn: A rate-distortion approach.
\newblock In \emph{Proceedings of the 38th International Conference on Machine
  Learning (ICML)}, 2021.

\bibitem[Arumugam and Van~Roy(2022)]{arumugam2022deciding}
Dilip Arumugam and Benjamin Van~Roy.
\newblock Deciding what to model: Value-equivalent sampling for reinforcement
  learning.
\newblock In \emph{Advances in Neural Information Processing Systems}, 2022.

\bibitem[Assran et~al.(2023)Assran, Duval, Misra, Bojanowski, Vincent, Rabbat,
  LeCun, and Ballas]{assran2023self}
Mahmoud Assran, Quentin Duval, Ishan Misra, Piotr Bojanowski, Pascal Vincent,
  Michael Rabbat, Yann LeCun, and Nicolas Ballas.
\newblock Self-supervised learning from images with a joint-embedding
  predictive architecture.
\newblock In \emph{Proceedings of the IEEE/CVF Conference on Computer Vision
  and Pattern Recognition (CVPR)}, pages 15619--15629, 2023.

\bibitem[Auer et~al.(2002)Auer, Cesa-Bianchi, and Fischer]{auer2002finite}
Peter Auer, Nicol{\`o} Cesa-Bianchi, and Paul Fischer.
\newblock Finite-time analysis of the multiarmed bandit problem.
\newblock \emph{Machine Learning}, 47\penalty0 (2--3):\penalty0 235--256, 2002.

\bibitem[Azar et~al.(2017)Azar, Osband, and Munos]{azar2017minimax}
Mohammad~Gheshlaghi Azar, Ian Osband, and R{\'e}mi Munos.
\newblock Minimax regret bounds for reinforcement learning.
\newblock In \emph{Proceedings of the 34th International Conference on Machine
  Learning (ICML)}, 2017.

\bibitem[Bacon et~al.(2017)Bacon, Harb, and Precup]{bacon2017optioncritic}
Pierre-Luc Bacon, Jean Harb, and Doina Precup.
\newblock The option-critic architecture.
\newblock In \emph{Proceedings of the 31st AAAI Conference on Artificial
  Intelligence}, 2017.

\bibitem[Borkar(1997)]{borkar1997stochastic}
Vivek~S. Borkar.
\newblock Stochastic approximation with two time scales.
\newblock \emph{Systems \& Control Letters}, 29\penalty0 (5):\penalty0
  291--294, 1997.

\bibitem[Boucheron et~al.(2013)Boucheron, Lugosi, and
  Massart]{boucheron2013concentration}
St{\'e}phane Boucheron, G{\'a}bor Lugosi, and Pascal Massart.
\newblock \emph{Concentration Inequalities: A Nonasymptotic Theory of
  Independence}.
\newblock Oxford University Press, 2013.

\bibitem[Brafman and Tennenholtz(2002)]{brafman2002rmax}
Ronen~I. Brafman and Moshe Tennenholtz.
\newblock {R-MAX} -- a general polynomial time algorithm for near-optimal
  reinforcement learning.
\newblock \emph{Journal of Machine Learning Research}, 3:\penalty0 213--231,
  2002.

\bibitem[Burda et~al.(2019)Burda, Edwards, Pathak, Storkey, Darrell, and
  Efros]{burda2019largescale}
Yuri Burda, Harri Edwards, Deepak Pathak, Amos Storkey, Trevor Darrell, and
  Alexei~A. Efros.
\newblock Large-scale study of curiosity-driven learning.
\newblock In \emph{International Conference on Learning Representations
  (ICLR)}, 2019.

\bibitem[Cesa-Bianchi and Lugosi(2006)]{cesabianchi2006prediction}
Nicol{\`o} Cesa-Bianchi and G{\'a}bor Lugosi.
\newblock \emph{Prediction, Learning, and Games}.
\newblock Cambridge University Press, 2006.

\bibitem[Cesa-Bianchi et~al.(2017)Cesa-Bianchi, Gentile, Lugosi, and
  Neu]{cesabianchi2017boltzmann}
Nicol{\`o} Cesa-Bianchi, Claudio Gentile, G{\'a}bor Lugosi, and Gergely Neu.
\newblock Boltzmann exploration done right.
\newblock In \emph{Advances in Neural Information Processing Systems}, 2017.

\bibitem[Colas et~al.(2022)Colas, Karch, Sigaud, and
  Oudeyer]{colas2022autotelic}
C{\'e}dric Colas, Tristan Karch, Olivier Sigaud, and Pierre-Yves Oudeyer.
\newblock Autotelic agents with intrinsically motivated goal-conditioned
  reinforcement learning: A short survey.
\newblock \emph{Journal of Artificial Intelligence Research}, 74:\penalty0
  1159--1199, 2022.

\bibitem[Cover and Thomas(2006)]{cover2006elements}
Thomas~M. Cover and Joy~A. Thomas.
\newblock \emph{Elements of Information Theory}.
\newblock Wiley, 2nd edition, 2006.

\bibitem[Da~Costa et~al.(2020)Da~Costa, Parr, Sajid, Veselic, Neacsu, and
  Friston]{dacosta2020synthesis}
Lancelot Da~Costa, Thomas Parr, Noor Sajid, Sebastijan Veselic, Victorita
  Neacsu, and Karl Friston.
\newblock Active inference on discrete state-spaces: A synthesis.
\newblock \emph{Journal of Mathematical Psychology}, 99:\penalty0 102447, 2020.

\bibitem[Da~Costa et~al.(2023)Da~Costa, Sajid, Parr, Friston, and
  Smith]{dacosta2023reward}
Lancelot Da~Costa, Noor Sajid, Thomas Parr, Karl Friston, and Ryan Smith.
\newblock Reward maximization through discrete active inference.
\newblock \emph{Neural Computation}, 35\penalty0 (5):\penalty0 807--852, 2023.

\bibitem[Domingues et~al.(2021)Domingues, M{\'e}nard, Kaufmann, and
  Valko]{domingues2021episodic}
Omar~Darwiche Domingues, Pierre M{\'e}nard, Emilie Kaufmann, and Michal Valko.
\newblock Episodic reinforcement learning in finite {MDP}s: Minimax lower
  bounds revisited.
\newblock In \emph{Proceedings of the 32nd International Conference on
  Algorithmic Learning Theory}, 2021.

\bibitem[Dong et~al.(2019)Dong, Van~Roy, and Zhou]{dong2019provably}
Shi Dong, Benjamin Van~Roy, and Zhengyuan Zhou.
\newblock Provably efficient reinforcement learning with aggregated states.
\newblock \emph{arXiv preprint arXiv:1912.06366}, 2019.

\bibitem[Du et~al.(2019)Du, Krishnamurthy, Jiang, Agarwal, Dud{\'i}k, and
  Langford]{du2019provably}
Simon~S. Du, Akshay Krishnamurthy, Nan Jiang, Alekh Agarwal, Miroslav
  Dud{\'i}k, and John Langford.
\newblock Provably efficient {RL} with rich observations via latent state
  decoding.
\newblock In \emph{Proceedings of the 36th International Conference on Machine
  Learning (ICML)}, 2019.

\bibitem[Dwork et~al.(2015)Dwork, Feldman, Hardt, Pitassi, Reingold, and
  Roth]{dwork2015preserving}
Cynthia Dwork, Vitaly Feldman, Moritz Hardt, Toniann Pitassi, Omer Reingold,
  and Aaron Roth.
\newblock Preserving statistical validity in adaptive data analysis.
\newblock In \emph{ACM Symposium on Theory of Computing}, 2015.

\bibitem[Efroni et~al.(2022)Efroni, Misra, Krishnamurthy, Agarwal, and
  Langford]{efroni2022provably}
Yonathan Efroni, Dipendra Misra, Akshay Krishnamurthy, Alekh Agarwal, and John
  Langford.
\newblock Provably filtering exogenous distractors using multistep inverse
  dynamics.
\newblock In \emph{International Conference on Learning Representations}, 2022.

\bibitem[Farahmand et~al.(2017)Farahmand, Barreto, and
  Nikovski]{farahmand2017value}
Amir-massoud Farahmand, Andr{\'e} Barreto, and Daniel Nikovski.
\newblock Value-aware loss function for model-based reinforcement learning.
\newblock In \emph{International Conference on Artificial Intelligence and
  Statistics}, 2017.

\bibitem[Freedman(1975)]{freedman1975tail}
David~A. Freedman.
\newblock On tail probabilities for martingales.
\newblock \emph{The Annals of Probability}, 3\penalty0 (1):\penalty0 100--118,
  1975.
\newblock \doi{10.1214/aop/1176996452}.

\bibitem[Friston et~al.(2017)Friston, FitzGerald, Rigoli, Schwartenbeck, and
  Pezzulo]{friston2017active}
Karl Friston, Thomas FitzGerald, Francesco Rigoli, Philipp Schwartenbeck, and
  Giovanni Pezzulo.
\newblock Active inference: A process theory.
\newblock \emph{Neural Computation}, 29\penalty0 (1):\penalty0 1--49, 2017.

\bibitem[Friston et~al.(2021)Friston, Da~Costa, Hafner, Hesp, and
  Parr]{friston2021sophisticated}
Karl Friston, Lancelot Da~Costa, Danijar Hafner, Casper Hesp, and Thomas Parr.
\newblock Sophisticated inference.
\newblock \emph{Neural Computation}, 33\penalty0 (3):\penalty0 713--763, 2021.

\bibitem[Fruit et~al.(2018)Fruit, Pirotta, Lazaric, and
  Ortner]{fruit2018efficient}
Ronan Fruit, Matteo Pirotta, Alessandro Lazaric, and Ronald Ortner.
\newblock Efficient bias-span-constrained exploration-exploitation in
  reinforcement learning.
\newblock In \emph{Proceedings of the 35th International Conference on Machine
  Learning (ICML)}, 2018.

\bibitem[Gelada et~al.(2019)Gelada, Kumar, Buckman, Nachum, and
  Bellemare]{gelada2019deepmdp}
Carles Gelada, Saurabh Kumar, Jacob Buckman, Ofir Nachum, and Marc~G.
  Bellemare.
\newblock {DeepMDP}: Learning continuous latent space models for representation
  learning.
\newblock In \emph{Proceedings of the 36th International Conference on Machine
  Learning (ICML)}, 2019.

\bibitem[Givan et~al.(2003)Givan, Dean, and Greig]{givan2003equivalence}
Robert Givan, Thomas Dean, and Matthew Greig.
\newblock Equivalence notions and model minimization in {M}arkov decision
  processes.
\newblock \emph{Artificial Intelligence}, 147\penalty0 (1-2):\penalty0
  163--223, 2003.

\bibitem[Grimm et~al.(2020)Grimm, Barreto, Singh, and Silver]{grimm2020value}
Christopher Grimm, Andr{\'e} Barreto, Satinder Singh, and David Silver.
\newblock The value equivalence principle for model-based reinforcement
  learning.
\newblock In \emph{Advances in Neural Information Processing Systems}, 2020.

\bibitem[Han et~al.(2015)Han, Jiao, and Weissman]{han2015minimax}
Yanjun Han, Jiantao Jiao, and Tsachy Weissman.
\newblock Minimax estimation of discrete distributions under $\ell_1$ loss.
\newblock \emph{IEEE Transactions on Information Theory}, 61\penalty0
  (11):\penalty0 6343--6354, 2015.

\bibitem[Hardt and Ullman(2014)]{hardt2014preventing}
Moritz Hardt and Jonathan Ullman.
\newblock Preventing false discovery in interactive data analysis is hard.
\newblock In \emph{IEEE Symposium on Foundations of Computer Science}, 2014.

\bibitem[Houthooft et~al.(2016)Houthooft, Chen, Duan, Schulman, De~Turck, and
  Abbeel]{houthooft2016vime}
Rein Houthooft, Xi~Chen, Yan Duan, John Schulman, Filip De~Turck, and Pieter
  Abbeel.
\newblock {VIME}: Variational information maximizing exploration.
\newblock In \emph{Advances in Neural Information Processing Systems
  (NeurIPS)}, volume~29, 2016.

\bibitem[Hutto and Myin(2013)]{hutto2013radicalizing}
Daniel~D. Hutto and Erik Myin.
\newblock \emph{Radicalizing Enactivism: Basic Minds without Content}.
\newblock MIT Press, 2013.

\bibitem[Jaksch et~al.(2010)Jaksch, Ortner, and Auer]{jaksch2010near}
Thomas Jaksch, Ronald Ortner, and Peter Auer.
\newblock Near-optimal regret bounds for reinforcement learning.
\newblock \emph{Journal of Machine Learning Research}, 11:\penalty0 1563--1600,
  2010.

\bibitem[Jiang et~al.(2015)Jiang, Kulesza, and Singh]{jiang2015abstraction}
Nan Jiang, Alex Kulesza, and Satinder Singh.
\newblock Abstraction selection in model-based reinforcement learning.
\newblock In \emph{Proceedings of the 32nd International Conference on Machine
  Learning}, 2015.

\bibitem[Jiang et~al.(2017)Jiang, Krishnamurthy, Agarwal, Langford, and
  Schapire]{jiang2017contextual}
Nan Jiang, Akshay Krishnamurthy, Alekh Agarwal, John Langford, and Robert~E.
  Schapire.
\newblock Contextual decision processes with low {B}ellman rank are
  {PAC}-learnable.
\newblock In \emph{Proceedings of the 34th International Conference on Machine
  Learning (ICML)}, 2017.

\bibitem[Jin et~al.(2018)Jin, Allen-Zhu, Bubeck, and Jordan]{jin2018qlearning}
Chi Jin, Zeyuan Allen-Zhu, Sebastien Bubeck, and Michael~I. Jordan.
\newblock Is {Q}-learning provably efficient?
\newblock In \emph{Advances in Neural Information Processing Systems}, 2018.

\bibitem[Jin et~al.(2020)Jin, Krishnamurthy, Simchowitz, and
  Yu]{jin2020rewardfree}
Chi Jin, Akshay Krishnamurthy, Max Simchowitz, and Tiancheng Yu.
\newblock Reward-free exploration for reinforcement learning.
\newblock In \emph{Proceedings of the 37th International Conference on Machine
  Learning (ICML)}, pages 4870--4879, 2020.

\bibitem[Jin et~al.(2021)Jin, Liu, and Miryoosefi]{jin2021bellman}
Chi Jin, Qinghua Liu, and Sobhan Miryoosefi.
\newblock Bellman eluder dimension: New rich classes of {RL} problems, and
  sample-efficient algorithms.
\newblock In \emph{Advances in Neural Information Processing Systems}, 2021.

\bibitem[Kakade and Langford(2002)]{kakade2002approximately}
Sham Kakade and John Langford.
\newblock Approximately optimal approximate reinforcement learning.
\newblock In \emph{Proceedings of the Nineteenth International Conference on
  Machine Learning (ICML)}, 2002.

\bibitem[Kaufmann et~al.(2012)Kaufmann, Capp{\'e}, and
  Garivier]{kaufmann2012bayesian}
Emilie Kaufmann, Olivier Capp{\'e}, and Aur{\'e}lien Garivier.
\newblock On {B}ayesian upper confidence bounds for bandit problems.
\newblock In \emph{Proceedings of the 15th International Conference on
  Artificial Intelligence and Statistics (AISTATS)}, 2012.

\bibitem[Kolter and Ng(2009)]{kolter2009near}
J.~Zico Kolter and Andrew~Y. Ng.
\newblock Near-{B}ayesian exploration in polynomial time.
\newblock In \emph{International Conference on Machine Learning}, 2009.

\bibitem[LeCun(2022)]{lecun2022path}
Yann LeCun.
\newblock A path towards autonomous machine intelligence.
\newblock OpenReview preprint, version 0.9.2, 2022.

\bibitem[Levine(2018)]{levine2018reinforcement}
Sergey Levine.
\newblock Reinforcement learning and control as probabilistic inference:
  Tutorial and review.
\newblock \emph{arXiv preprint arXiv:1805.00909}, 2018.

\bibitem[Levy et~al.(2023)Levy, Cohen, Cassel, and Mansour]{levy2023efficient}
Orin Levy, Alon Cohen, Asaf Cassel, and Yishay Mansour.
\newblock Efficient rate optimal regret for adversarial contextual {MDP}s using
  online function approximation.
\newblock In \emph{Proceedings of the 40th International Conference on Machine
  Learning (ICML)}, 2023.

\bibitem[Li et~al.(2024)Li, Yan, Chen, and Fan]{li2024minimax}
Gen Li, Yuling Yan, Yuxin Chen, and Jianqing Fan.
\newblock Minimax-optimal reward-agnostic exploration in reinforcement
  learning.
\newblock In \emph{Proceedings of the 37th Conference on Learning Theory
  (COLT)}, pages 3431--3436, 2024.

\bibitem[Li et~al.(2006)Li, Walsh, and Littman]{li2006towards}
Lihong Li, Thomas~J. Walsh, and Michael~L. Littman.
\newblock Towards a unified theory of state abstraction for {MDP}s.
\newblock In \emph{Ninth International Symposium on Artificial Intelligence and
  Mathematics (ISAIM)}, 2006.

\bibitem[Li et~al.(2026{\natexlab{a}})Li, Tsuchiya, and
  Yamanishi]{li2026policy}
Mingyi Li, Taira Tsuchiya, and Kenji Yamanishi.
\newblock Policy optimization achieves data-dependent regret bounds in {MDP}s
  with unknown transitions.
\newblock \emph{arXiv preprint arXiv:2606.31769}, 2026{\natexlab{a}}.

\bibitem[Li et~al.(2026{\natexlab{b}})Li, Parashar, Zhou, and
  Fan]{li2026curiosity}
Yingke Li, Anjali Parashar, Enlu Zhou, and Chuchu Fan.
\newblock Curiosity is knowledge: Self-consistent learning and no-regret
  optimization with active inference.
\newblock \emph{arXiv preprint arXiv:2602.06029}, 2026{\natexlab{b}}.

\bibitem[Lu and Van~Roy(2019)]{lu2019information}
Xiuyuan Lu and Benjamin Van~Roy.
\newblock Information-theoretic confidence bounds for reinforcement learning.
\newblock In \emph{Advances in Neural Information Processing Systems}, 2019.

\bibitem[Ma and Lee(2026)]{ma2026eubrl}
Jianfei Ma and Wee~Sun Lee.
\newblock {EUBRL}: Epistemic uncertainty directed {B}ayesian reinforcement
  learning.
\newblock In \emph{International Conference on Learning Representations
  (ICLR)}, 2026.

\bibitem[Maillard et~al.(2011)Maillard, Ryabko, and
  Munos]{maillard2011selecting}
Odalric-Ambrym Maillard, Daniil Ryabko, and R{\'e}mi Munos.
\newblock Selecting the state-representation in reinforcement learning.
\newblock In \emph{Advances in Neural Information Processing Systems
  (NeurIPS)}, 2011.

\bibitem[Maillard et~al.(2013)Maillard, Nguyen, Ortner, and
  Ryabko]{maillard2013optimal}
Odalric-Ambrym Maillard, Phuong Nguyen, Ronald Ortner, and Daniil Ryabko.
\newblock Optimal regret bounds for selecting the state representation in
  reinforcement learning.
\newblock In \emph{Proceedings of the 30th International Conference on Machine
  Learning (ICML)}, 2013.

\bibitem[McDiarmid(1989)]{mcdiarmid1989method}
Colin McDiarmid.
\newblock On the method of bounded differences.
\newblock In \emph{Surveys in Combinatorics, 1989}, volume 141 of \emph{London
  Mathematical Society Lecture Note Series}, pages 148--188. Cambridge
  University Press, 1989.

\bibitem[Millidge et~al.(2020)Millidge, Tschantz, Seth, and
  Buckley]{millidge2021relationship}
Beren Millidge, Alexander Tschantz, Anil~K. Seth, and Christopher~L. Buckley.
\newblock On the relationship between active inference and control as
  inference.
\newblock In \emph{Active Inference: First International Workshop (IWAI 2020)},
  2020.

\bibitem[Misra et~al.(2020)Misra, Henaff, Krishnamurthy, and
  Langford]{misra2020kinematic}
Dipendra Misra, Mikael Henaff, Akshay Krishnamurthy, and John Langford.
\newblock Kinematic state abstraction and provably efficient rich-observation
  reinforcement learning.
\newblock In \emph{Proceedings of the 37th International Conference on Machine
  Learning (ICML)}, 2020.

\bibitem[Modi and Tewari(2020)]{modi2020noregret}
Aditya Modi and Ambuj Tewari.
\newblock No-regret exploration in contextual reinforcement learning.
\newblock In \emph{Proceedings of the 36th Conference on Uncertainty in
  Artificial Intelligence (UAI)}, 2020.

\bibitem[Mrosovsky(1990)]{mrosovsky1990rheostasis}
Nicholas Mrosovsky.
\newblock \emph{Rheostasis: The Physiology of Change}.
\newblock Oxford University Press, 1990.

\bibitem[Ni et~al.(2024)Ni, Eysenbach, Seyedsalehi, Ma, Gehring, Mahajan, and
  Bacon]{ni2024bridging}
Tianwei Ni, Benjamin Eysenbach, Erfan Seyedsalehi, Michel Ma, Clement Gehring,
  Aditya Mahajan, and Pierre-Luc Bacon.
\newblock Bridging state and history representations: Understanding
  self-predictive {RL}.
\newblock In \emph{International Conference on Learning Representations
  (ICLR)}, 2024.

\bibitem[Nuijten et~al.(2026)Nuijten, Lukashchuk, van~de Laar, and
  de~Vries]{nuijten2026whattype}
Wouter W.~L. Nuijten, Mykola Lukashchuk, Thijs van~de Laar, and Bert de~Vries.
\newblock What type of inference is active inference?
\newblock In \emph{Proceedings of the 42nd Conference on Uncertainty in
  Artificial Intelligence (UAI)}, volume 337 of \emph{Proceedings of Machine
  Learning Research}, pages 5003--5039, 2026.

\bibitem[O'Donoghue(2021)]{odonoghue2021variational}
Brendan O'Donoghue.
\newblock Variational {B}ayesian reinforcement learning with regret bounds.
\newblock In \emph{Advances in Neural Information Processing Systems}, 2021.

\bibitem[O'Donoghue et~al.(2020)O'Donoghue, Osband, and
  Ionescu]{odonoghue2020making}
Brendan O'Donoghue, Ian Osband, and Catalin Ionescu.
\newblock Making sense of reinforcement learning and probabilistic inference.
\newblock In \emph{International Conference on Learning Representations}, 2020.

\bibitem[Ortner et~al.(2014)Ortner, Maillard, and Ryabko]{ortner2014selecting}
Ronald Ortner, Odalric-Ambrym Maillard, and Daniil Ryabko.
\newblock Selecting near-optimal approximate state representations in
  reinforcement learning.
\newblock In \emph{Proceedings of the 25th International Conference on
  Algorithmic Learning Theory}, 2014.

\bibitem[Osband and Van~Roy(2016)]{osband2016lower}
Ian Osband and Benjamin Van~Roy.
\newblock On lower bounds for regret in reinforcement learning.
\newblock Technical Report 1608.02732, arXiv, 2016.

\bibitem[Paninski(2008)]{paninski2008coincidence}
Liam Paninski.
\newblock A coincidence-based test for uniformity given very sparsely sampled
  discrete data.
\newblock \emph{IEEE Transactions on Information Theory}, 54\penalty0
  (10):\penalty0 4750--4755, 2008.

\bibitem[Pezzulo et~al.(2015)Pezzulo, Rigoli, and Friston]{pezzulo2015active}
Giovanni Pezzulo, Francesco Rigoli, and Karl Friston.
\newblock Active inference, homeostatic regulation and adaptive behavioural
  control.
\newblock \emph{Progress in Neurobiology}, 134:\penalty0 17--35, 2015.

\bibitem[Ridel and Peled-Cohen(2026)]{ridel2026improved}
Oran Ridel and Alon Peled-Cohen.
\newblock Improved bounds for reward-agnostic and reward-free exploration.
\newblock In \emph{Proceedings of the 43rd International Conference on Machine
  Learning (ICML)}, 2026.

\bibitem[Rosenberg and Mansour(2019)]{rosenberg2019online}
Aviv Rosenberg and Yishay Mansour.
\newblock Online convex optimization in adversarial {M}arkov decision
  processes.
\newblock In \emph{Proceedings of the 36th International Conference on Machine
  Learning (ICML)}, 2019.

\bibitem[Russo and Van~Roy(2018)]{russo2018learning}
Daniel Russo and Benjamin Van~Roy.
\newblock Learning to optimize via information-directed sampling.
\newblock \emph{Operations Research}, 66\penalty0 (1):\penalty0 230--252, 2018.

\bibitem[Schwartenbeck et~al.(2019)Schwartenbeck, Passecker, Hauser,
  FitzGerald, Kronbichler, and Friston]{schwartenbeck2019computational}
Philipp Schwartenbeck, Johannes Passecker, Tobias~U. Hauser, Thomas H.~B.
  FitzGerald, Martin Kronbichler, and Karl~J. Friston.
\newblock Computational mechanisms of curiosity and goal-directed exploration.
\newblock \emph{eLife}, 8:\penalty0 e41703, 2019.

\bibitem[Schwarzer et~al.(2021)Schwarzer, Anand, Goel, Hjelm, Courville, and
  Bachman]{schwarzer2021data}
Max Schwarzer, Ankesh Anand, Rishab Goel, R.~Devon Hjelm, Aaron Courville, and
  Philip Bachman.
\newblock Data-efficient reinforcement learning with self-predictive
  representations.
\newblock In \emph{International Conference on Learning Representations
  (ICLR)}, 2021.

\bibitem[Sekar et~al.(2020)Sekar, Rybkin, Daniilidis, Abbeel, Hafner, and
  Pathak]{sekar2020planning}
Ramanan Sekar, Oleh Rybkin, Kostas Daniilidis, Pieter Abbeel, Danijar Hafner,
  and Deepak Pathak.
\newblock Planning to explore via self-supervised world models.
\newblock In \emph{Proceedings of the 37th International Conference on Machine
  Learning (ICML)}, 2020.

\bibitem[Speakman et~al.(2011)Speakman, Levitsky, Allison, Bray, de~Castro,
  Clegg, Clapham, Dulloo, Gruer, Haw, Hebebrand, Hetherington, Higgs, Jebb,
  Loos, Luckman, Luke, Mohammed-Ali, O'Rahilly, Pereira, Perusse, Robinson,
  Rolls, Symonds, and Westerterp-Plantenga]{speakman2011set}
John~R. Speakman, David~A. Levitsky, David~B. Allison, Molly~S. Bray, John~M.
  de~Castro, Deborah~J. Clegg, John~C. Clapham, Abdul~G. Dulloo, Laurence
  Gruer, Sally Haw, Johannes Hebebrand, Marion~M. Hetherington, Susanne Higgs,
  Susan~A. Jebb, Ruth J.~F. Loos, Simon Luckman, Amy Luke, Vidya Mohammed-Ali,
  Stephen O'Rahilly, Mark Pereira, Louis Perusse, Tom~N. Robinson, Barbara
  Rolls, Michael~E. Symonds, and Margriet~S. Westerterp-Plantenga.
\newblock Set points, settling points and some alternative models: Theoretical
  options to understand how genes and environments combine to regulate body
  adiposity.
\newblock \emph{Disease Models \& Mechanisms}, 4\penalty0 (6):\penalty0
  733--745, 2011.

\bibitem[Srinivas et~al.(2010)Srinivas, Krause, Kakade, and
  Seeger]{srinivas2010gaussian}
Niranjan Srinivas, Andreas Krause, Sham~M. Kakade, and Matthias Seeger.
\newblock {G}aussian process optimization in the bandit setting: No regret and
  experimental design.
\newblock In \emph{International Conference on Machine Learning}, 2010.

\bibitem[Starre et~al.(2023)Starre, Loog, Congeduti, and
  Oliehoek]{starre2023analysis}
Rolf A.~N. Starre, Marco Loog, Elena Congeduti, and Frans~A. Oliehoek.
\newblock An analysis of model-based reinforcement learning from abstracted
  observations.
\newblock \emph{Transactions on Machine Learning Research}, 2023.

\bibitem[Steinke and Ullman(2015)]{steinke2015interactive}
Thomas Steinke and Jonathan Ullman.
\newblock Interactive fingerprinting codes and the hardness of preventing false
  discovery.
\newblock In \emph{Conference on Learning Theory}, 2015.

\bibitem[Sterling(2012)]{sterling2012allostasis}
Peter Sterling.
\newblock Allostasis: A model of predictive regulation.
\newblock \emph{Physiology \& Behavior}, 106\penalty0 (1):\penalty0 5--15,
  2012.

\bibitem[Sutton et~al.(1999)Sutton, Precup, and Singh]{sutton1999between}
Richard~S. Sutton, Doina Precup, and Satinder Singh.
\newblock Between {MDP}s and semi-{MDP}s: A framework for temporal abstraction
  in reinforcement learning.
\newblock \emph{Artificial Intelligence}, 112\penalty0 (1-2):\penalty0
  181--211, 1999.

\bibitem[Szita and L{\H o}rincz(2008)]{szita2008many}
Istv{\'a}n Szita and Andr{\'a}s L{\H o}rincz.
\newblock The many faces of optimism: A unifying approach.
\newblock In \emph{Proceedings of the 25th International Conference on Machine
  Learning (ICML)}, 2008.

\bibitem[Tarbouriech et~al.(2023)Tarbouriech, Lattimore, and
  O'Donoghue]{tarbouriech2023probabilistic}
Jean Tarbouriech, Tor Lattimore, and Brendan O'Donoghue.
\newblock Probabilistic inference in reinforcement learning done right.
\newblock In \emph{Advances in Neural Information Processing Systems}, 2023.

\bibitem[Tiapkin et~al.(2022)Tiapkin, Belomestny, Moulines, Naumov, Samsonov,
  Tang, Valko, and Menard]{tiapkin2022dirichlet}
Daniil Tiapkin, Denis Belomestny, Eric Moulines, Alexey Naumov, Sergey
  Samsonov, Yunhao Tang, Michal Valko, and Pierre Menard.
\newblock From {D}irichlet to {R}ubin: Optimistic exploration in {RL} without
  bonuses.
\newblock In \emph{Proceedings of the 39th International Conference on Machine
  Learning (ICML)}, 2022.

\bibitem[Tiapkin et~al.(2025)Tiapkin, Chzhen, and Stoltz]{tiapkin2025narrowing}
Daniil Tiapkin, Evgenii Chzhen, and Gilles Stoltz.
\newblock Narrowing the gap between adversarial and stochastic {MDP}s via
  policy optimization.
\newblock In \emph{Proceedings of the 28th International Conference on
  Artificial Intelligence and Statistics (AISTATS)}, volume 258 of
  \emph{Proceedings of Machine Learning Research}, pages 3331--3339, 2025.

\bibitem[Toussaint(2009)]{toussaint2009robot}
Marc Toussaint.
\newblock Robot trajectory optimization using approximate inference.
\newblock In \emph{Proceedings of the 26th International Conference on Machine
  Learning (ICML)}, 2009.

\bibitem[Varela et~al.(1991)Varela, Thompson, and Rosch]{varela1991embodied}
Francisco~J. Varela, Evan Thompson, and Eleanor Rosch.
\newblock \emph{The Embodied Mind: Cognitive Science and Human Experience}.
\newblock MIT Press, Cambridge, MA, 1991.

\bibitem[Wei(2024)]{wei2024value}
Ran Wei.
\newblock Value of information and reward specification in active inference and
  {POMDP}s.
\newblock In \emph{NeurIPS 2024 Workshop on NeuroAI}, 2024.

\bibitem[Xie et~al.(2023)Xie, Foster, Bai, Jiang, and Kakade]{xie2023role}
Tengyang Xie, Dylan~J. Foster, Yu~Bai, Nan Jiang, and Sham~M. Kakade.
\newblock The role of coverage in online reinforcement learning.
\newblock In \emph{International Conference on Learning Representations}, 2023.

\end{thebibliography}

\clearpage
\appendix
\crefalias{section}{appendix}
\crefalias{subsection}{subappendix}

\section{Notation}
\label{app:notation-table}

Table~\ref{tab:notation} collects the notation that the main text and several appendices share. A few letters denote local quantities inside a single proof, which defines them there.

\begin{longtable}{@{}>{\raggedright\arraybackslash}p{3.4cm}>{\raggedright\arraybackslash}p{11.2cm}@{}}
\caption{Notation.}\label{tab:notation}\\
\toprule
Symbol & Meaning \\
\midrule
\endfirsthead
\toprule
Symbol & Meaning \\
\midrule
\endhead
\bottomrule
\endfoot
\multicolumn{2}{@{}l}{\emph{MDP and regret}}\\
$\mathcal S,\mathcal A$; $S,A$ & ground state space and action space; their cardinalities \\
$P^*,R^*,H$ & true transition kernel, deterministic reward $R^*:\mathcal S\times\mathcal A\to[0,1]$ fixed across episodes, horizon \\
$K$, $T=HK$ & number of episodes, number of steps \\
$k$, $h$, $\tau$ & episode index, step index within an episode, step index within a rollout \\
$s^k_h$, $a^k_h$; $s_{k,h}$, $a_{k,h}$ & state and action at step $h$ of episode $k$ (both notations are used) \\
$\pi^k$ & policy of episode $k$ \\
$V^\pi_h,Q^\pi_h$; $V^*_h,Q^*_h$ & value functions of $\pi$; optimal value functions \\
$\mathfrak R_K$ & regret over $K$ episodes \\
$\mathcal T_h$, $d^\pi_h$ & Bellman operator of step $h$; occupancy of policy $\pi$ at step $h$ \\
\midrule
\multicolumn{2}{@{}l}{\emph{Abstraction}}\\
$\phi:\mathcal S\to\mathcal X$; $X=|\mathcal X|$ & abstraction and abstract state space; its cardinality \\
$\phi^{-1}(x)$ & block of $x$, the ground states that $\phi$ merges into $x$ \\
$\phi_\#$ & pushforward through $\phi$; $\phi_\#P^*(\cdot\mid s,a)$ is the law of the next abstract state \\
$W(s\mid x)$ & weighting of block $\phi^{-1}(x)$ that defines the abstract MDP \\
$\overline{\mathcal M}_\phi,\overline P_\phi,\overline R_\phi$ & abstract MDP of Eq.~\eqref{eq:abstract-mdp}, its kernel and its reward \\
$\overline V^\pi_{\phi,h},\overline V^*_{\phi,h}$ & value functions of $\overline{\mathcal M}_\phi$ \\
$\eta_R,\eta_P$; $\epsilon_{\text{rep}}$ & within-block spreads of reward and transitions; representation scale (Definition~\ref{def:spread}) \\
exact & $\epsilon_{\text{rep}}(\phi)=0$ \\
$\widehat R^k_\phi$, $N_k(x,a)$ & empirical reward estimate (Eq.~\eqref{eq:rhat}); number of visits to $(x,a)$ before episode $k$ \\
$\Gamma_\phi$ & branching number (Theorem~\ref{thm:calibrated}), the abstract counterpart of the maximal support size of \citet{fruit2018efficient} \\
\midrule
\multicolumn{2}{@{}l}{\emph{Posteriors and widths}}\\
$\rho^k_h(x,a)$ & Dirichlet posterior of the row $(x,a,h)$ before episode $k$ \\
$\widehat P^k_h(\cdot\mid x,a)$ & its mean \\
$n$, $\bar n=\bar n^k_h(x,a)$ & visit count of a row; visit count plus $n_0$ \\
$n_0$, $r_0$, $x_{\mathrm E}$ ($s_{\mathrm E}$) & pseudo-counts, pseudo-state reward, pseudo-state of Bayes-UCBVI's prior (Definition~\ref{def:dirichlet-ground}) \\
$\kappa^k_h$, $\ell=\ell^k_h$ & quantile order of Bayes-UCBVI; $\ell=\ln(1/(1-\kappa))$ \\
$\Delta^k_h$ & quantile gap, Bayes-UCBVI's width (Lemma~\ref{lem:tiapkin-quantile}) \\
$\widetilde P_n$ & empirical law of the first $n$ successors of a row \\
$\mathrm{span}(f)$ & $\max f-\min f$, over all categories, the pseudo-state included \\
$\mathrm{Novelty}(x,a)$ & information gain about the row, Eq.~\eqref{eq:novelty-def} \\
$\Gamma_T$ & novelty budget (Theorem~\ref{thm:calibrated}) \\
$\Psi$, $\Psi^+$, $\Psi^{\mathrm u}$ & epistemic term of Eq.~\eqref{eq:psi-slot}; calibrated width (Eq.~\eqref{eq:calibrated}); calibrated width under a uniform prior (Section~\ref{sec:goal-uniform}) \\
$\lambda$, $q$ & weight and exponent of an information-driven agent \\
$\delta$ & failure probability \\
\midrule
\multicolumn{2}{@{}l}{\emph{Active Inference and CaI}}\\
$\mathbf A_\phi$, $B_\phi$ & emission matrix and transition matrix of AIF's generative model \\
$\mathcal G(\pi)$ & expected free energy of policy $\pi$, Eq.~\eqref{eq:efe-canonical} \\
$\mathcal Q$ & the agent's approximate posterior and the predictive distributions that it induces \\
$\widetilde P$ & preference distribution over outcomes \\
$\beta$ & precision \\
\midrule
\multicolumn{2}{@{}l}{\emph{Goals}}\\
$R_k$, $x^k_1$ & goal (episode-specific reward function) and start state of episode $k$ \\
$\mathfrak R^{\mathrm{goal}}_K$ & regret against the goals \\
records; $U_{x,j}$ & successors observed at a step, state and action; the variables of the last-step rule of goal-memoryless agents (Definition~\ref{def:goal-memoryless}) \\
$M$, $Y=2m$ & numbers of start states and of successors in the Gaussian family \\
$\theta^{(i)}\sim N(0,I_m)$ & independent standard Gaussian vectors in $\mathds R^m$; $\tilde\theta$ is the clipped vector \\
$\varepsilon$ & scale of the perturbations in the Gaussian and hypercube families \\
$g$, $v$ & payoff of a goal at the successors and at the other absorbing state \\
$\mathcal U$ & uniform law \\
\midrule
\multicolumn{2}{@{}l}{\emph{Learned abstractions}}\\
$\Phi$ & candidate set of abstractions \\
$\phi^\dagger$, $B^*$, $\mathbf A^*$ & true decoder, latent kernel and emission of a block MDP (Assumption~\ref{assum:block}) \\
$\mathcal F_h$, $C_{\mathrm e}$ & value class of the enactive agent; complexity term of Theorem~\ref{thm:enactive} \\
$\Lambda_\phi$, $\Lambda^{\mathrm{lat}}_\phi$ & marginal likelihood of candidate $\phi$ for the observations; for its own labels \\
\midrule
\multicolumn{2}{@{}l}{\emph{Information and probability}}\\
$\mathds H$, $I$, $D_{\mathrm{KL}}$ & entropy, mutual information, Kullback--Leibler divergence, in nats \\
$\mathds E$, $\mathds V$, $\mathds P$ & expectation, variance, probability \\
$\mathrm{Dir}$, $\mathrm{Cat}$, $\mathrm{Ber}$, $\mathrm{Bin}$ & Dirichlet, categorical, Bernoulli and binomial laws \\
$f\equiv c$ & $f(y)=c$ for every argument $y$ \\
$\widetilde{\mathcal O}$ & $\mathcal O$ up to factors polylogarithmic in $HSXAT/\delta$ \\
\end{longtable}

\section{Conditions of Table 1}
\label{app:notation}

The bounds of Table~\ref{tab:regret} show only the dependence on the quantities named in its caption, and the cited results state the remaining constants. $\widetilde{\mathcal O}$ hides logarithmic factors but no polynomial lower-order terms, which the full bounds contain. Upper bounds hold with probability $1-\delta$, except the upper bound of row~10, which bounds the expected regret under a prior. Lower bounds hold in expectation, rows~9 and~10 also over a prior, except row~6, which holds deterministically, and rows~12 and~16, which hold with probability $1-e^{-(X-1)/32}$ and $1-\delta$. In rows~1, 6, 9 and~14, the hard MDP depends on $K$. These rows are therefore minimax statements.

\emph{Fixed reward function.} Row~1 takes $A=2$, $H\ge4$ and $K\ge X$. The leading terms of rows~2 and~3 hold for $T\ge H^4X^3A$ and $T\ge H^3X^3A$ (Theorem~\ref{thm:ucbvi-fixed}). On the ground states, UCBVI-BF has $\widetilde{\mathcal O}(\sqrt{HSAT})$ and Bayes-UCBVI $\widetilde{\mathcal O}(H\sqrt{SAT})$, both minimax for their classes (Theorems~\ref{thm:ucbvi-ground} and~\ref{thm:bayesucbvi-ground}). Row~4 takes $\beta$ and $r_0$ independent of $K$, and it holds for every $n_0$. Rows~5 and~7 take the precision, the prior, and the weight $\lambda$ of an information bonus $\lambda\,\mathrm{Novelty}^q$ independent of $K$. Row~5 takes the known reward and a maximum backup. With Bayes-UCBVI's prior, $n_0=\Theta(\ln K)$, it becomes $\Omega(K/\ln K)$, on a hard MDP that depends on $K$. Row~6 needs $0<\lambda\le r_0/2$. Row~7 needs $\beta\Delta<\ln J$ and is proved for AIF that is given $P^*$ and $R^*$. We do not prove lock-out for greedy AIF with the state term and novelty combined, a gap that does not affect AIF as we specify it, since its softmax already incurs linear regret by Proposition~\ref{prop:aif-cai-regret}. The leading term of row~8 holds for $T\ge H^3X^3A$ (Theorem~\ref{thm:calibrated}(iii)).

\emph{Adaptive goals.} Rows~9--13 assume an exact abstraction and a goal revealed before each episode, chosen from the history in rows~9--12. Rows~9 and~10 take the Gaussian family of Theorem~\ref{thm:price} with $M=Y$ start states and horizon $2$, for which $X=2Y+1$, in expectation over its prior, and need $K\ge CMY^2\ln^3(YK)$. Row~9 also needs $Y\ge600\ln(3K)$. Both let the adversary choose the start states and use the transition law. Row~9 concerns the goal-memoryless agents of Definition~\ref{def:goal-memoryless}, against an adversary that may depend on the agent. The lower bound of row~10 concerns every agent, and its upper bound an agent that uses past goals as data and knows the adversary's strategy. Proposition~\ref{prop:goal-reading} gives an upper bound of the same order for a single agent and all adversaries, on a hypercube prior.
 Row~12 needs $X\ge X_0=\mathcal O(\ln(n_0/\delta))$, and its loss per episode is at least $\sqrt{\ell_1/(4n_1)}=\Omega(1/n_0)$ for $n_0^2\ge X$, since $\ell_1\ge1$ and $n_1=\mathcal O(X+n_0^2)$. With Bayes-UCBVI's $n_0=\Theta(\ln K)$, the regret is therefore $\Omega(K/\ln K)$ at a fixed $\delta$ and $X$. Row~13 takes goals and start states fixed before the first episode, possibly using the transition law, and its leading term holds for $T\ge H^4X^3A$.

\emph{Learned abstractions.} Rows~14--17 assume a realizable block MDP with latent rewards. Row~14 takes $A=2$, $H\ge3$ and $K\ge S$. Row~15 needs $S\ge12$ and $\beta<0.09/\Delta$, is proved for AIF that is given the decoder, and is linear in $K$ for $S\ge2K+2$. Row~16 needs $H\ge4$, concerns agents that score candidates by the likelihood of their own labels over a common label set, and holds after $k_0=\mathcal O(H\ln(HK/\delta))$ episodes. Row~17 needs a fixed start state. Under goals, these are latent and the agent relabels its data (Theorem~\ref{thm:enactive}(b)).

Throughout, counts, posteriors and widths carry the episode index $k$ and the step index $h$ of the algorithm that maintains them, and $\tau$ counts steps within a rollout.

\section{Further Related Work}
\label{app:related}

\begin{table}[tb]
\centering
\caption{The closest analyses, with their setting, the epistemic term they analyse, how the agent obtains its representation, and the type of guarantee. The setting names the model class, and the objective where it is average reward or adversarial. GP stands for Gaussian process.}
\label{tab:related}
\footnotesize
\setlength{\tabcolsep}{2pt}
\begin{tabular}{@{}l>{\raggedright\arraybackslash}p{2.1cm}>{\raggedright\arraybackslash}p{3.1cm}l>{\raggedright\arraybackslash}p{3.3cm}@{}}
\toprule
Work & Setting & Epistemic term & Representation & Guarantee \\
\midrule
\citet{kolter2009near} & tabular MDP & bonus $\propto N^{-1}$ & ground states & near Bayes-optimal \\
\citet{lu2019information} & tabular MDP & $\sqrt{\text{information gain}}$ & ground states & Bayesian regret \\
\citet{tiapkin2022dirichlet} & tabular MDP & posterior quantile & ground states & regret \\
\citet{maillard2013optimal} & tabular, average reward & optimism & selected from $\Phi$ & regret \\
\citet{ortner2014selecting} & tabular, average reward & optimism & selected from $\Phi$ & regret \\
\citet{xie2023role} & coverable MDP & optimism over $\mathcal F$ & implicit in $\mathcal F$ & regret \\
\citet{wei2024value} & known POMDP & expected free energy & given & gap to Bayes-optimal \\
\citet{li2026curiosity} & GP bandit & expected free energy & no state & regret, tuned weight \\
\citet{ni2024bridging} & MDP, POMDP & none & latent self-prediction & stationary points, no regret \\
\citet{tiapkin2025narrowing} & tabular, adversarial & policy optimisation & ground states & regret, oblivious rewards \\
\midrule
This paper & tabular MDP & novelty, state entropy, $\sqrt{\text{novelty}}$ & given or learned & regret, upper and lower \\
\bottomrule
\end{tabular}
\end{table}

Table~\ref{tab:related} positions our contribution among the closest analyses, and the footnotes of Table~\ref{tab:regret} name the results we build on. Information-gain bonuses \citep{houthooft2016vime} and ensemble-disagreement bonuses \citep{sekar2020planning} drive exploration in deep RL without regret guarantees. Both measure information or posterior variance. Hence, Theorem~\ref{thm:exponent} applies to their tabular counterparts.

\emph{Adaptive goals and adversarial MDPs.} With unknown transitions and full-information rewards, $\ell_1$ confidence sets attain $\widetilde{\mathcal O}(HS\sqrt{AK})$ against adaptive reward sequences \citep{rosenberg2019online}, as \citet[\S3.2]{tiapkin2025narrowing} note, and policy optimisation attains $\mathrm{poly}(H)\sqrt{SAT}$ against oblivious ones \citep{tiapkin2025narrowing}. These works compare with the best fixed policy and reveal each reward after its episode. We instead reveal each goal before its episode and compare with each goal's optimum. Neither setting therefore contains the other. Our protocol is that of adversarial contextual MDPs \citep{modi2020noregret,levy2023efficient}, with the goal as the context and transitions that do not depend on it. These works learn context-dependent transitions and rewards, whereas our agents are given the goal, and our lower bounds rest on an adversary that chooses it from the agent's samples. Reward-free exploration needs a number of episodes quadratic in $S$ for arbitrary rewards \citep{jin2020rewardfree,ridel2026improved}, but only linear in $S$ for polynomially many rewards fixed in advance \citep{li2024minimax}. Theorem~\ref{thm:price}(b) and Proposition~\ref{prop:oblivious} are the online counterparts of these two rates. For policy optimisation against adversarial rewards, closing the remaining gaps is open even in the worst case \citep{li2026policy}.

\emph{Adaptive data analysis.} The goals of Theorem~\ref{thm:price}(b) are aligned with the agent's sampling error, which makes its estimates biased exactly along the values it is asked to compute. This is the false-discovery problem of adaptive data analysis \citep{dwork2015preserving,hardt2014preventing,steinke2015interactive}, with one difference. There, the analyst sees only the answers, and noise added to them restores validity. Our adversary sees the samples. Hence, no noise added to the agent's answers helps.

\emph{Self-predictive representations.} Joint-embedding predictive architectures \citep{lecun2022path,assran2023self} and self-predictive RL \citep{gelada2019deepmdp,schwarzer2021data} learn a representation by predicting its own next value rather than the next observation. \citet{ni2024bridging} relate latent self-prediction, together with reward prediction, to model-irrelevance abstractions through a condition on the full state or history. Proposition~\ref{prop:self-confirming} shows that selecting among candidates by the likelihood of their own labels can favour, under every policy, a candidate that merges two states whose best actions differ, and that the labels of such a candidate cannot refute it when the agent acts through it. GOLF's Bellman residual is also a self-prediction, but it compares each function with the best predictor of the same target. Selecting an abstraction by its consequences for value relates to the value-equivalence principle, which learns a model only as far as it matters for value \citep{farahmand2017value,grimm2020value,arumugam2022deciding}. Representation selection with regret guarantees incurs a factor $\sqrt{|\Phi|}$ through \emph{hard selection}, an $\arg\max$ or an elimination test \citep{maillard2011selecting,maillard2013optimal,ortner2014selecting}, whereas Theorem~\ref{thm:enactive} incurs $\sqrt{\ln|\Phi|}$ in block MDPs with latent rewards.

State abstractions, unlike the temporal abstractions of hierarchical RL \citep{sutton1999between,bacon2017optioncritic}, are computed from a known model \citep{givan2003equivalence,li2006towards}, learned from tasks \citep{abel2018state}, or selected from data \citep{jiang2015abstraction}. The regret of a fixed aggregation is bounded from above by \citet{dong2019provably}. Data observed through an abstraction call for martingale concentration \citep{starre2023analysis}. Work on rich-observation RL learns the decoder of a block MDP while exploring, which is the realizable case of our setting \citep{du2019provably,misra2020kinematic,agarwal2020flambe,efroni2022provably,xie2023role}. The case $\eta_R=\eta_P=0$ of Definition~\ref{def:spread} is model irrelevance \citep[Def.~3]{li2006towards}, and a map with small within-block spreads is an approximate model-similarity abstraction \citep[Def.~13]{abel2016near}. The $\varepsilon$-approximation of \citet[Def.~1]{ortner2014selecting} compares the full next-state distributions of merged states, and every $\varepsilon$-approximation has $\eta_R\le\varepsilon$ and $\eta_P\le2\varepsilon$, since $\phi_\#$ contracts $\ell_1$ distances. Among several abstractions, the one with the smallest worst-case bound of Section~\ref{sec:fixed-abstraction} balances $\epsilon_{\text{rep}}(\phi)K$ against $\sqrt{XT}$, as in the approximation--estimation trade-off of \citet{jiang2015abstraction} and the rate--distortion view of \citet{arumugam2021deciding}. \citet{millidge2021relationship} relate AIF and CaI through the placement of value in the generative model, which Appendix~\ref{app:models} uses. We use UCBVI and Bayes-UCBVI as their authors define them \citep{azar2017minimax,tiapkin2022dirichlet}. For AIF and CaI \citep{friston2017active,millidge2021relationship} we analyse explicitly specified variants (Appendix~\ref{app:algs}), which instantiate novelty for the transition model, fix the backup and the action rule, and give AIF and CaI Bayes-UCBVI's prior.

Theorem~\ref{thm:exponent}(a) puts into regret form the result of \citet{kolter2009near} that bonuses decaying faster than $N^{-1/2}$ are not PAC-MDP. The mechanism applies to AIF because novelty has the order $\bar n^{-1}$ of their Bayesian exploration bonus, which is near Bayes-optimal (their Thm.~1). The calibrated width is a frequentist counterpart of the widths of \citet{lu2019information} and rests on the Dirichlet quantile bounds of \citet{tiapkin2022dirichlet}. \citet{maillard2013optimal} suggest that a dependence on $\sqrt{\log|\Phi|}$, instead of their $\sqrt{|\Phi|}$, seems plausible when information can be propagated to all candidates. In block MDPs with latent rewards, Theorem~\ref{thm:enactive} attains $\sqrt{\ln|\Phi|}$, inside the square root with $T$. Lemma~\ref{lem:enactive-complete} gives a function class closed under Bellman backups, built from the candidate decoders, for block MDPs with latent rewards. Existing guarantees for AIF either assume that the agent knows the transition kernel and the observation model of the POMDP it acts in \citep{wei2024value}, or concern a Gaussian-process bandit whose curiosity weight exceeds a problem-dependent threshold \citep{li2026curiosity}. Theorem~\ref{thm:exponent} shows how fast such a weight must grow in an MDP.

\emph{Beyond the tabular setting.} The mismatch of orders between information gains and valid widths is not specific to Dirichlet posteriors. For a Gaussian process with noise variance $\sigma^2$, an observation at $x$ carries at most $\sigma_t^2(x)/(2\sigma^2)$ nats, quadratic in the posterior standard deviation $\sigma_t(x)$, whereas the GP-UCB width is linear in $\sigma_t(x)$ \citep{srinivas2010gaussian}, the order of the square root of the information gain and the form of our calibrated width.

Several results connect to earlier work beyond Table~\ref{tab:related}. Proposition~\ref{prop:aif-cai-regret} puts into regret form, for a fixed precision, the observation of \citet{odonoghue2020making} that CaI ignores epistemic uncertainty and can fail to explore. Proposition~\ref{prop:noisy-tv} formalises the maximum-entropy reading of AIF's epistemic term \citep{friston2017active} and is an instance of the noisy-TV phenomenon \citep{burda2019largescale}. It is the finite-precision counterpart of the Bellman optimality of recursive AIF as $\beta\to\infty$ \citep{dacosta2023reward}. The form $H\sqrt{T\,\Gamma_T}$ of Theorem~\ref{thm:calibrated} is the tabular analogue of the $\sqrt{T\beta_T\gamma_T}$ of GP-UCB \citep{srinivas2010gaussian} and of \citet[Prop.~10]{russo2018learning}. Proposition~\ref{prop:lure} is the MDP form of the decision-irrelevant information of \citet[Example~5]{russo2018learning} and the AIF counterpart of the exogenous-distractor problem \citep{efroni2022provably}.

\section{Agents and Their Generative Models}
\label{app:agents}

This appendix restates the value-based baselines and the results that we cite from their authors, derives the planning objectives of AIF and CaI from their generative models, and specifies every analysed agent.

\subsection{Baseline Algorithms and Cited Bounds}
\label{app:baselines}
\label{sec:ucbvi-algo}

We restate the two algorithms of Section~\ref{sec:bayesucbvi} and the results that we cite from their authors.

\emph{UCBVI.} At the start of episode $k$, UCBVI \citep{azar2017minimax} holds the counts $N_k(s,a)$ and $N_k(s,a,s')$ of the transitions observed in episodes $1,\dots,k-1$ and the empirical kernel $\widehat P_k(s'\mid s,a)=N_k(s,a,s')/N_k(s,a)$, which is arbitrary if $N_k(s,a)=0$. It computes optimistic values by backward induction from $\widehat V_{k,H+1}\equiv 0$,
\begin{equation}
\widehat Q_{k,h}(s,a) = \min\Big\{\widehat Q_{k-1,h}(s,a),\, H,\; R^*(s,a) + \big(\widehat P_k \widehat V_{k,h+1}\big)(s,a) + b_{k,h}(s,a)\Big\}, \label{eq:ucbvi-recursion}
\end{equation}
with $\widehat V_{k,h}(s)=\max_a \widehat Q_{k,h}(s,a)$ and $\widehat Q_{k,h}(s,a)=H$ if $N_k(s,a)=0$, and it acts greedily. UCBVI-CH uses the Chernoff--Hoeffding bonus of their Algorithm~3,
\begin{equation}
b_{k,h}(s,a)=7HL\sqrt{1/N_k(s,a)},\qquad L=\ln(5SAT/\delta),
\label{eq:azar-bonus-ch}
\end{equation}
which depends on the data only through the count. UCBVI-BF, which attains their optimal rate, uses a Bernstein--Freedman bonus that combines an empirical-variance term, a range correction, and a term that certifies optimism under the empirical variance (their Algorithm~4).

\begin{theorem}[UCBVI regret, \citealp{azar2017minimax}, Thms.~1--2]
\label{thm:ucbvi-ground}
Let $L=\ln(5HSAT/\delta)$. With probability at least $1-\delta$, UCBVI-CH has $\mathfrak R_K\le20HL\sqrt{SAT}+250H^2S^2AL^2$, which is $\widetilde{\mathcal O}(H\sqrt{SAT})$ for $T\ge H^2S^3A$, and UCBVI-BF satisfies
\begin{equation}
\mathfrak R_K \le 30HL\sqrt{SAK} + 2500H^2S^2AL^2 + 4H^{3/2}\sqrt{KL},
\end{equation}
which is $\widetilde{\mathcal O}(\sqrt{HSAT})$ for $T\ge H^3S^3A$ and $SA\ge H$.
\end{theorem}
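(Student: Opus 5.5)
This restates Theorems~1--2 of \citet{azar2017minimax}, so the plan is to check that our setting matches theirs and then retrace their argument, not to invent a new proof. Their setting is a finite-horizon episodic MDP with a time-homogeneous kernel and a known reward $R^*$ in $[0,1]$. Our setup in Section~\ref{sec:setting} is exactly this, and UCBVI as restated in Eq.~\eqref{eq:ucbvi-recursion}, with the bonuses of their Algorithms~3 and~4, is their algorithm. The only bookkeeping is the logarithmic factor. We take $L=\ln(5HSAT/\delta)$, which is at least their $\ln(5SAT/\delta)$, so their bounds hold a fortiori. Their constants in the lower-order terms are absorbed by the stated $250H^2S^2AL^2$ and $2500H^2S^2AL^2$.

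For a self-contained account of the Chernoff--Hoeffding case, I would proceed in four steps.

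\textbf{Concentration event.} Define an event $\mathcal E$ of probability at least $1-\delta$. On $\mathcal E$, Hoeffding's inequality bounds $|(\widehat P_k-P^*)V^*_{h+1}|(s,a)$ by $HL\sqrt{1/N_k}$ for all $(k,h,s,a)$. Because $V^*_{h+1}$ is a fixed function, no covering argument is needed for this term. On the same event, an $\ell_1$ or Bernstein bound controls $\|\widehat P_k-P^*\|$ row-wise, and Azuma's inequality controls the martingale terms along the trajectories.

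\textbf{Optimism.} By backward induction on $h$, using validity of $b_{k,h}$ and the clamp at the previous episode's values and at $H$, we get $\widehat V_{k,h}\ge V^*_h$ on $\mathcal E$.

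\textbf{Regret decomposition.} Optimism gives
\[
\mathfrak R_K\le\sum_k\big(\widehat V_{k,1}-V^{\pi^k}_1\big)(s^k_1).
\]
Write $\tilde\Delta_{k,h}=\widehat V_{k,h}-V^{\pi^k}_h$ and unroll it along the played trajectory. Each step contributes the bonus, the term $(\widehat P_k-P^*)V^*_{h+1}$, the correction $(\widehat P_k-P^*)(\widehat V_{k,h+1}-V^*_{h+1})$, and a martingale difference.

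\textbf{Summing the terms.} The bonuses sum via the pigeonhole bound $\sum_{k,h}N_k^{-1/2}\le\mathcal O(\sqrt{SAT})$, giving $\mathcal O(HL\sqrt{SAT})$. The martingale terms give $\mathcal O(H\sqrt{TL})$. The correction term is the main obstacle, since $\widehat V_{k,h+1}$ depends on the data. I would handle it as Azar et al.\ do, with a Bernstein bound on each coordinate: $|\widehat P_k-P^*|(s'\mid s,a)\lesssim\sqrt{P^*(s')L/N_k}+L/N_k$. Combined with $ab\le a^2/(2H)+Hb^2/2$, this yields a $(1+1/H)$ factor on the next-step $\tilde\Delta$, which compounds only to $e$ over $H$ steps, plus an $\mathcal O(H^2S^2AL^2)$ lower-order term.

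For the Bernstein--Freedman variant, the extra ingredient is the law of total variance. The sum of $\mathds V_{P^*}V^{\pi^k}_{h+1}$ over an episode is $\mathcal O(H^2)$, and this brings the leading term down to $\sqrt{HSAT}$. It also requires showing that the empirical variance of $\widehat V_{k,h+1}$ tracks the true variance of $V^*_{h+1}$, the step Azar et al.\ manage with their correction term. I expect that step, together with the correction term above, to be the hardest part. The additional term $4H^{3/2}\sqrt{KL}$ comes from the martingale term with variance-aware (Freedman) concentration.
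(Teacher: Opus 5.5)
Your plan is the paper's. The paper gives no proof of its own: it cites Azar et al., noting only that their $20H^{3/2}L\sqrt{SAK}$ equals $20HL\sqrt{SAT}$ for $K=T/H$, and that their theorems already state $L=\ln(5HSAT/\delta)$ while only their bonuses use $\ln(5SAT/\delta)$. One small correction to your optional sketch: the term $4H^{3/2}\sqrt{KL}=4H\sqrt{TL}$ comes from an Azuma--Hoeffding bound on martingale differences of range $\mathcal O(H)$, not from a Freedman bound, which is exactly why $SA\ge H$ is needed for it to be dominated by $\sqrt{HSAT}$.
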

Up to poly-logarithmic factors, the bound of UCBVI-BF matches the minimax lower bound $\Omega(\sqrt{HSAT})$ \citep{jaksch2010near,osband2016lower}, which \citet[App.~C]{domingues2021episodic} prove for episodic MDPs. \citet{azar2017minimax} state the CH bound as $20H^{3/2}L\sqrt{SAK}$, which equals $20HL\sqrt{SAT}$ for $K=T/H$. Its second term is of lower order, up to logarithmic factors, once $T\ge H^2S^3A$. Their bonuses use $\ln(5SAT/\delta)$ in place of $L$, and their theorems state the larger $L$ above, which covers both.

\emph{Bayes-UCBVI.} \citet{tiapkin2022dirichlet} replace UCBVI's hand-built bonus by an optimistic rule read off a Dirichlet posterior over the transition kernel, with $R^*$ known. Their algorithm is stated for time-inhomogeneous MDPs, which include our time-homogeneous class, and it keeps a separate belief at every step. We therefore index its posteriors and quantile orders by $h$.

\begin{definition}[Dirichlet posterior with a pseudo-state, \citealp{tiapkin2022dirichlet}]
\label{def:dirichlet-ground}
Extend $\mathcal S$ to $\mathcal S'=\mathcal S\cup\{s_{\mathrm E}\}$ with an absorbing pseudo-state $s_{\mathrm E}$ of reward $r_0=2$, so that $V^*_h(s_{\mathrm E})=r_0(H-h+1)$ \citep[cf.][]{brafman2002rmax,szita2008many}. With the pseudo-count $n_0=\lceil c_{n_0}+\log_{17/16}K\rceil$, where $c_{n_0}$ is an absolute constant (their Appendix~B.2), place on every row the prior $\rho^0_h(s,a)=\mathrm{Dir}\big(n_0\cdot\mathds{1}\{s'=s_{\mathrm E}\}\big)_{s'\in\mathcal S'}$, and update it with the counts $N^h_k(s,a,s')$ of the transitions from $(s,a)$ to $s'$ at step $h$ in episodes $1,\dots,k-1$. The posterior is $\rho^k_h(s,a)=\mathrm{Dir}\big(\bar n^k_h(s'\mid s,a)\big)_{s'}$ with $\bar n^k_h(s'\mid s,a)=N^h_k(s,a,s')+n_0\mathds{1}\{s'=s_{\mathrm E}\}$. \end{definition}
The prior is improper, and we follow \citet{tiapkin2022dirichlet} in using it only as an instrumental device that defines $\rho^k_h$ by this update, never through a marginal likelihood.

\begin{definition}[Optimistic Bayesian value iteration, \citealp{tiapkin2022dirichlet}, Eqs.~1--2]
\label{def:bayesq-ground}
With the raw visit count $n^k_h(s,a)$, the total mass $\bar n^k_h(s,a)=n^k_h(s,a)+n_0$, and the quantile order
\begin{equation}
\kappa^k_h(s,a) = 1 - \frac{C_\kappa\,\delta}{SAH\,[2n^k_h(s,a)+1]^3\,[\bar n^k_h(s,a)]^{3/2}}, \qquad C_\kappa=\tfrac{1}{5(e\pi)^3},
\end{equation}
define, for $h=H,\dots,1$, with $\overline V^k_{H+1}\equiv0$ and $\overline V^k_h(s_{\mathrm E})\equiv V^*_h(s_{\mathrm E})$,
\begin{equation}
\overline Q^k_h(s,a) = R^*(s,a) + \mathds{Q}_{p\sim\rho^k_h(s,a)}\big(p^\top \overline V^k_{h+1},\, \kappa^k_h(s,a)\big), \qquad \overline V^k_h(s) = \max_a \overline Q^k_h(s,a),
\end{equation}
where $\mathds{Q}_{p\sim\rho}(f,\kappa)$ is the order-$\kappa$ quantile of the pushforward of $\rho$ through $p\mapsto p^\top f$. The agent plays $\pi^k_h(s)=\arg\max_a\overline Q^k_h(s,a)$.
\end{definition}
\begin{lemma}[Quantile-gap bound, \citealp{tiapkin2022dirichlet}]
\label{lem:tiapkin-quantile}
Let $\ell=\ell^k_h(s,a):=\ln\big(1/(1-\kappa^k_h(s,a))\big)$ and $\bar n:=\bar n^k_h(s,a)$. On every sample path, the quantile gap satisfies
\begin{align*}
\Delta^k_h(s,a) &:= \mathds{Q}_{p\sim\rho^k_h(s,a)}\big(p^\top\overline V^k_{h+1},\kappa^k_h(s,a)\big) - \widehat P^k_h(\cdot\mid s,a)^\top \overline V^k_{h+1} \\
&\;\le\; 2\sqrt{\mathds{V}_{\widehat P^k_h(\cdot\mid s,a)}[\overline V^k_{h+1}]\,\ell/\bar n} \;+\; 2\sqrt2\, r_0 H\, \ell/\bar n.
\end{align*}
\end{lemma}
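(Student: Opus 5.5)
The bound is a Bernstein-type estimate for a quantile of the Dirichlet pushforward $p^\top f$, with $f=\overline V^k_{h+1}$ and $p\sim\rho:=\rho^k_h(s,a)=\mathrm{Dir}(\bar n(\cdot))$. The gap only concerns the centred variable $Z:=p^\top f-\bar p^\top f$, where $\bar p=\widehat P^k_h(\cdot\mid s,a)$ is the posterior mean. It therefore suffices to control the upper tail of $Z$ at level $1-\kappa=e^{-\ell}$. That is, we want a deviation $t$ with $\mathds P(Z>t)\le e^{-\ell}$. Any such $t$ upper bounds the order-$\kappa$ quantile of $Z$, and $\Delta^k_h(s,a)\le t$ follows on every sample path, since $\rho$ and $f$ are fixed given the history.

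\textbf{Step 1: Gamma representation.} I would write $p_i=G_i/\sum_jG_j$ with independent $G_i\sim\mathrm{Gamma}(\bar n_i,1)$. The event $\{Z>t\}$ then becomes $\{\sum_iG_i(f_i-\bar p^\top f-t)>0\}$. This is a sum of independent centred Gamma terms $(G_i-\bar n_i)(f_i-\bar p^\top f)$ minus the deterministic drift $t\sum_iG_i$.

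\textbf{Step 2: Chernoff bound on the sum.} Centred Gamma variables are sub-gamma on the right. Each term has variance proxy $\bar n_i(f_i-\bar p^\top f)^2$ and scale at most $\mathrm{span}(f)$. Summing the proxies gives total variance $\bar n\,\mathds V_{\bar p}[f]$. For the drift term, I would use the lower tail of $\sum_iG_i\sim\mathrm{Gamma}(\bar n,1)$, which stays above $\bar n/2$ except with probability $e^{-\bar n/8}$, or fold it directly into the same exponential moment. Together these give
\[
\mathds P(Z>t)\le\exp\Big(-\frac{\bar n t^2}{2\mathds V_{\bar p}[f]+2\,\mathrm{span}(f)\,t}\Big)
\]
up to constants.

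\textbf{Step 3: Invert at level $e^{-\ell}$.} Inverting the bound gives
\[
t=2\sqrt{\mathds V_{\bar p}[f]\,\ell/\bar n}+c\,\mathrm{span}(f)\,\ell/\bar n .
\]
Finally, $f$ takes values in $[0,r_0H]$, the pseudo-state included, because $r_0\ge1$ and the values are clamped. Hence $\mathrm{span}(f)\le r_0H$, which yields the stated second term $2\sqrt2\,r_0H\ell/\bar n$.

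\textbf{Main obstacle.} The hard part is getting the exact constants $2$ and $2\sqrt2$ while handling the random normaliser $\sum_iG_i$. I would rely on the sharper route of \citet{tiapkin2022dirichlet}. They bound Dirichlet quantiles through the KL/$\mathcal K_{\inf}$ characterisation of the Dirichlet tail, $\mathds P(Z>t)\le\exp(-\bar n\,\mathcal K_{\inf}(\bar p,\bar p^\top f+t))$, and then lower bound $\mathcal K_{\inf}$ by the Bernstein form $t^2/(2\mathds V_{\bar p}[f]+\tfrac23\mathrm{span}(f)\,t)$. Inverting that inequality directly produces the claimed constants. If the polynomial prefactor in their tail bound is nontrivial, it is absorbed by the choice of $\kappa$ in Definition~\ref{def:bayesq-ground}.
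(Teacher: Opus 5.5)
Your route is the right one, and it is what sits behind the result the paper cites: the paper gives no proof of its own and invokes the Bernstein-type inequality for weighted Dirichlet sums of \citet{tiapkin2022dirichlet} (their Lemma~C.8). However, the step you propose to lean on in your last paragraph is false. The Dirichlet tail is governed by $\mathcal K_{\inf}(\bar p,\mu)=\inf\{D_{\mathrm{KL}}(\bar p\|q):q^\top f\ge\mu\}$, with $\bar p$ in the \emph{first} argument. For this direction the Bernstein form $t^2/(2\mathds V_{\bar p}[f]+\frac23\mathrm{span}(f)\,t)$ fails; the $\frac23$ belongs to the forward direction, which controls empirical means.

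Concretely, take $n_0=1$ and $f=0$ on every observed successor. Then $p^\top f=b\,p_{\mathrm E}$ with $b=f(x_{\mathrm E})$ and $p_{\mathrm E}\sim\mathrm{Beta}(1,n)$. Its tail $\mathds P(p_{\mathrm E}\ge x)=(1-x)^n$ decays like $e^{-\bar nx}$, not $e^{-3\bar nx/2}$. The true quantile gap is therefore about $b(\ell-1)/\bar n$, which exceeds $\sqrt{2\mathds V_{\bar p}[f]\ell/\bar n}+\frac23b\ell/\bar n$ for large $\ell$ (with $n\gg\ell$), since $\mathds V_{\bar p}[f]\le b^2/\bar n$. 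Inverting your inequality would thus ``prove'' a false, stronger statement. So it is not true that inversion ``produces the claimed constants'': the constants $2$ and $2\sqrt2$ have slack precisely because the correct coefficient of the span term is larger.

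The repair is your Step~2, whose form $\bar nt^2/(2V+2\,\mathrm{span}(f)\,t)$ is correct, provided the normaliser is folded into the exponential moment rather than split off. Write $c_i:=f_i-\bar p^\top f$, $V:=\mathds V_{\bar p}[f]$ and $b':=\max_{i\in\mathrm{supp}\,\bar p}c_i\le r_0H$; the tail vanishes for $t\ge b'$. For $0\le s<1/(b'-t)$, Markov's inequality and the Gamma moment generating function give exactly $\mathds P(Z\ge t)\le\prod_i(1-s(c_i-t))^{-\bar n_i}$, with no prefactor. Now use $\ln(1+u)\ge u-u^2/(2(1-\gamma))$ for $u\ge-\gamma$, with $\gamma=s(b'-t)$, together with $\mathds E_{\bar p}(t-c)^2=V+t^2$ and the choice $s=t/(V+b't)$. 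This yields $\mathds P(Z\ge t)\le\exp\big(-\bar nt^2/(2(V+b't))\big)$. At $t=\sqrt{2V\ell/\bar n}+2b'\ell/\bar n$ the exponent is at least $\ell$, so the gap is at most $\sqrt{2V\ell/\bar n}+2r_0H\ell/\bar n$, which lies inside the stated bound.

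Three smaller points. First, splitting off $\sum_iG_i\ge\bar n/2$ leaves an additive $e^{-\bar n/8}$, which is not below $e^{-\ell}$ on rows with $\bar n$ of order $\ell$, where the bound is not vacuous. Second, a prefactor could not be ``absorbed by the choice of $\kappa$'': $\ell$ is defined from $\kappa$ in the statement, so any prefactor would enlarge the bound itself (fortunately there is none). Third, Bayes-UCBVI has no clamp; $0\le\overline V^k_{h+1}\le r_0(H-h)$ instead holds by induction, because $R^*\le1\le r_0$ and a quantile of $p^\top f$ never exceeds $\max f$.

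Done this way, your argument is self-contained and slightly more general than the cited Lemma~C.8. It needs neither integer Dirichlet parameters nor $f\le f(x_{\mathrm E})$, only a bound on $\max f-\bar p^\top f$.
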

Here $\widehat P^k_h(\cdot\mid s,a)=\bar n^k_h(\cdot\mid s,a)/\bar n^k_h(s,a)$ is the posterior mean, which differs from the empirical kernel by $\mathcal O(n_0/\bar n^k_h(s,a))$. The lemma is the upper inequality of their Corollary~B.7, stated as their Lemma~3.4. It follows from a Bernstein-type inequality for weighted Dirichlet sums (their Lemma~C.8), which needs no high-probability event and holds for every $f$ with $0\le f\le f(s_{\mathrm E})$ and integer Dirichlet parameters, the form in which Lemma~\ref{lem:transport} uses it. Validity (Section~\ref{sec:setting}) is one-sided, since a width must compensate an underestimation of the next-step value but not an overestimation. At an unvisited row, for instance, the posterior is the point mass at the pseudo-state, and the quantile gap is zero. The posterior mean then assigns the next-step value $r_0(H-h)$, which exceeds every value of the MDP, and the agent stays optimistic without a width.

\begin{theorem}[Bayes-UCBVI regret, \citealp{tiapkin2022dirichlet}, Theorem 3.1]
\label{thm:bayesucbvi-ground}
With the $n_0,r_0$ of Definition~\ref{def:dirichlet-ground}, with probability at least $1-\delta$,
\begin{equation}
\mathfrak R_K = \mathcal O\big(H\sqrt{SAT}\cdot L + H^3S^2A L^2\big), \qquad L=\mathcal O(\ln(HSAT/\delta)).
\end{equation}
\end{theorem}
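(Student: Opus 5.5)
The plan follows the argument of \citet{tiapkin2022dirichlet} in three stages: optimism, a regret decomposition, and a summation of the widths. All statements are made on a single high-probability event.

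\emph{Optimism.} I would prove $\overline V^k_h\ge V^*_h$ by backward induction on $h$. Suppose $\overline V^k_{h+1}\ge V^*_{h+1}$ pointwise, with both extended to $s_{\mathrm E}$ by the same value $r_0(H-h)$. Then $p^\top\overline V^k_{h+1}\ge p^\top V^*_{h+1}$ for every $p$, so the order-$\kappa$ quantile under $\rho^k_h(s,a)$ is monotone in the function. It therefore suffices to show that the quantile of $p^\top V^*_{h+1}$ is at least $P^*(\cdot\mid s,a)^\top V^*_{h+1}$. The gain is that $V^*_{h+1}$ is a fixed function, so the only data dependence is through the first $n$ successors of the row.

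For fixed $(s,a,h)$ and fixed $n$, the Dirichlet posterior has parameters given by the empirical counts plus $n_0$ on a pseudo-state whose value exceeds $\max V^*_{h+1}$. I would use the Dirichlet anti-concentration bound (their lower bound on $\mathds P_{p\sim\mathrm{Dir}}(p^\top f\ge\mu)$ in terms of a KL-infimum). Combined with a KL-type concentration of the empirical law, this shows that the posterior puts mass at least $1-\kappa$ above the true mean, except on an event of probability at most $\mathcal O\big(\delta/(SAH(2n+1)^3\bar n^{3/2})\big)$. The pseudo-counts $n_0=\Theta(\ln K)$ absorb the missing polynomial factors. The schedule $\kappa^k_h$ is chosen exactly so that a union bound over $n$, $s$, $a$ and $h$ sums to at most $\delta/2$. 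This step is the main obstacle. The anti-concentration of a weighted Dirichlet sum must be matched to the frequentist deviation with the right constants, and the pseudo-state is what makes the bound hold at small counts. I would import their anti-concentration lemma as the black box and verify only the union-bound arithmetic.

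\emph{Decomposition.} Given optimism, I would bound $V^*_1(s^k_1)-V^{\pi^k}_1(s^k_1)$ by $\overline V^k_1-V^{\pi^k}_1$ and unroll along the trajectory. Writing $\widehat P=\widehat P^k_h$ and $P=P^*$, each step contributes
\[
\Delta^k_h+(\widehat P-P)V^*_{h+1}+(\widehat P-P)\big(\overline V^k_{h+1}-V^*_{h+1}\big),
\]
plus a martingale difference. The martingale terms sum to $\mathcal O(H\sqrt{TL})$ by Azuma--Hoeffding.

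\emph{Summation.} The term $(\widehat P-P)V^*_{h+1}$ is handled by Hoeffding for a fixed function, together with the bias $\mathcal O(r_0Hn_0/\bar n)$ of the posterior mean. I would bound $\Delta^k_h$ by Lemma~\ref{lem:tiapkin-quantile}, using variance at most $H^2$, which gives $2H\sqrt{\ell/\bar n}+2\sqrt2 r_0H\ell/\bar n$. For the cross term, I would apply Bernstein per successor state, then $\sqrt{ab}\le a/H+Hb$. This yields $\frac1H P(\overline V-V^*)$, which is reabsorbed by the standard $(1+1/H)^H\le e$ recursion, plus a remainder $\mathcal O(H^2SL/\bar n)$.

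Summing over visits with $\sum 1/\sqrt{\bar n}\le 2\sqrt{SAHT}$ and $\sum 1/\bar n\le SAH\ln T$ (with per-step counts) gives the leading term $H\sqrt{SAT}\,L$. The lower-order terms, including the $n_0$ bias and the $S$ from the cross term, give $H^3S^2AL^2$.
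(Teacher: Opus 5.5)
Your optimism step and your decomposition are fine as an outline. The summation step, however, does not deliver the stated rate.

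With one row per step there are $SAH$ rows and $T=HK$ row visits, so $\sum_{k,h}(\bar n^k_h)^{-1/2}\le2\sqrt{SAHT}$, as you write. Multiplying by the width $2H\sqrt{\ell/\bar n}$, which you get by bounding the variance in Lemma~\ref{lem:tiapkin-quantile} by $H^2$, gives $4H^{3/2}\sqrt{SAT\ell}$, not $H\sqrt{SAT}\,L$. Your Hoeffding bound $H\sqrt{L/\bar n}$ on $(\widehat P-P)V^*_{h+1}$ sums to the same order.

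Your argument therefore proves only $\mathfrak R_K=\mathcal O(H^{3/2}\sqrt{SATL}+H^3S^2AL^2)$. In terms of episodes, that is $H^2\sqrt{SAK}$ instead of $H^{3/2}\sqrt{SAK}$, up to logarithms. This matches the bound of the paper's own direct analysis of Bayes-UCBVI, which follows your route (Lemma~\ref{lem:direct} and Theorem~\ref{thm:ucbvi-fixed}(b) with $\epsilon_{\text{rep}}(\phi)=0$). The paper remarks there that summing the widths through the range of the value function costs exactly this factor $\sqrt H$. It does not reprove Theorem~\ref{thm:bayesucbvi-ground}, but cites it from \citet{tiapkin2022dirichlet}.

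The missing idea is the variance-aware summation through the law of total variance.
\begin{enumerate}
\item Keep the empirical-variance term $2\sqrt{\mathds V_{\widehat P^k_h}[\overline V^k_{h+1}]\,\ell/\bar n}$ of Lemma~\ref{lem:tiapkin-quantile} instead of bounding it by the range. Use Bernstein rather than Hoeffding for $(\widehat P-P)V^*_{h+1}$. Both first-order terms then have the form $\sqrt{\mathds V\,L/\bar n}$.
\item Replace both variances by $\mathds V_{P}[V^{\pi^k}_{h+1}]$ at the visited pair. By optimism, $0\le V^{\pi^k}_{h+1}\le V^*_{h+1}\le\overline V^k_{h+1}\le r_0H$ on the real states. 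The differences are therefore at most $2r_0H\,P(\overline V^k_{h+1}-V^{\pi^k}_{h+1})$, plus concentration and pseudo-count terms.
\item After $\sqrt{ab}\le a/H+Hb$, these differences are reabsorbed by your $(1+1/H)$ recursion, leaving only lower-order remainders.
\item For each episode, conditionally on the history before it, the law of total variance gives $\mathds E\big[\sum_h\mathds V_P[V^{\pi^k}_{h+1}](s^k_h,a^k_h)\big]=\mathds V\big[\sum_hR^*(s^k_h,a^k_h)\big]\le H^2$.
\item After a martingale concentration step, $\sum_{k,h}\mathds V_P[V^{\pi^k}_{h+1}](s^k_h,a^k_h)=\mathcal O(HT)$. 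Cauchy--Schwarz then yields $\sqrt{HT}\cdot\sqrt{L\,SAH(1+\ln K)}=\mathcal O(H\sqrt{SAT}\,L)$.
\end{enumerate}
Per episode, this replaces $\sum_hH=H^2$ by $\sum_h\sqrt{\mathds V_h}\le\sqrt{H\cdot H^2}=H^{3/2}$. That is exactly the factor $\sqrt H$ your proposal loses.
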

For $T\ge H^4S^3A$, this bound matches the $\Omega(H\sqrt{SAT})$ lower bound for time-inhomogeneous kernels \citep{jin2018qlearning,domingues2021episodic} up to poly-logarithmic factors. \citet{tiapkin2022dirichlet} count episodes by $T$ and state the bound as $\mathcal O(\sqrt{H^3SAK}\,L+H^3S^2AL^2)$, which becomes the form above with $K=T/H$. They note that it matches the lower bound for $K\ge H^3S^3A$. Its factor $\sqrt H$ over UCBVI is the price of their larger, time-inhomogeneous class (their \S1, footnote~1), not of their exploration rule.

\subsection{Generative Models}
\label{app:models}
\label{app:models-common}

This subsection derives the planning objectives of AIF and CaI (Appendix~\ref{app:algs}) from their generative models. We specify the steps that arise from a modelling choice. Within an episode, the models generate latent abstract states $x_{1:H+1}$, observations $s_{1:H+1}$ and actions $a_{1:H}$, condition on the start state $s_1$, and emit one observation for each transition.

\emph{\texttt{model(G)}, with the abstraction given.} We assume $\phi(\mathcal S)=\mathcal X$, extend the latent and observation spaces by the pseudo-states, $\mathcal X'=\mathcal X\cup\{x_{\mathrm E}\}$ and $\mathcal S'=\mathcal S\cup\{s_{\mathrm E}\}$ with $\phi(s_{\mathrm E}):=x_{\mathrm E}$, and take as parameters the rows $B=(B_{h,x,a})$, distributions on $\mathcal X'$, with the absorbing row $B_{h,x_{\mathrm E},a}:=\delta_{x_{\mathrm E}}$. The forward model is
\begin{equation}
P\big(s_{2:H+1},x_{1:H+1}\mid s_1,a_{1:H},B\big) \;=\; \mathds{1}\{x_1=\phi(s_1)\}\prod_{h=1}^{H}\mathrm{Cat}\big(x_{h+1};B_{h,x_h,a_h}\big)\;\mathbf A^0_\phi\big(s_{h+1}\mid x_{h+1}\big),
\label{eq:model-forward}
\end{equation}
where the emission $\mathbf A^0_\phi(\cdot\mid x)$ is a fixed distribution on the block $\phi^{-1}(x)$ and $\mathbf A^0_\phi(\cdot\mid x_{\mathrm E}):=\delta_{s_{\mathrm E}}$. It is the abstract MDP of Eq.~\eqref{eq:abstract-mdp}, extended by the pseudo-state and written as a latent-variable model. Since the blocks partition $\mathcal S'$, the observation determines $x_\tau=\phi(s_\tau)$, and the emission factor does not depend on $B$. It therefore drops out of the posterior over $B$ and out of every planning quantity below, which makes the choice of $\mathbf A^0_\phi$ immaterial. On each row we place the instrumental prior of Definition~\ref{def:dirichlet-ground}, with the conjugate update $\rho^k_h(x,a)=\mathrm{Dir}(\bar n^k_h(\cdot\mid x,a))$ as the posterior. Under the agent's model, the reward outcome of a step is its estimate $r_h=\widehat R^k_\phi(x_h,a_h)$, with $\widehat R^k_\phi(x_{\mathrm E},\cdot):=r_0$, whereas the data are generated by $R^*$ and $P^*$.

\emph{\texttt{model(L)}, with the abstraction learned.} Each candidate $\phi\in\Phi$ is surjective onto $\mathcal X$. Given $\phi$, the parameters are the transition rows $B_{\phi,h,x,a}$, distributions on $\mathcal X$, and the emission rows $\mathbf A_{\phi,h,x}$, distributions on $\phi^{-1}(x)$, one for the state reached at each step, with independent uniform Dirichlet priors and no pseudo-state. The forward model is Eq.~\eqref{eq:model-forward} with $B_{\phi,h}$ and $\mathbf A_{\phi,h}$ in place of $B_h$ and $\mathbf A^0_\phi$. A candidate's posterior predictive of the next observation, $P_\phi(s'\mid s,a)=\sum_{x'}\widehat B_\phi(x'\mid\phi(s),a)\,\widehat{\mathbf A}_\phi(s'\mid x')$ with posterior means $\widehat B_\phi$ and $\widehat{\mathbf A}_\phi$, sums to one because every block is nonempty, whereas a map with an empty block would leak the mass that its rows put on the empty latent state. The marginal likelihood $\Lambda_\phi$ of a candidate is the product of its sequential predictive probabilities, and its posterior is $\mathcal Q(\phi\mid\mathcal H)\propto P(\phi)\Lambda_\phi$. \texttt{model(L)} needs proper priors, since under the instrumental prior every real successor seen for the first time would have predictive probability zero.

\emph{\texttt{model(L)} without its emission.} Dropping the emission factor from \texttt{model(L)} leaves, for each candidate, a Bayesian model of its own labels, with rows $B_{\phi,h,x,a}$, distributions on a common label set $\mathcal X$, with independent uniform Dirichlet priors. Its marginal likelihood $\Lambda^{\mathrm{lat}}_\phi$ of the label sequences $\phi(s^j_{1:H+1})$ given the actions is the score of the latent-predictive agent of Appendix~\ref{app:aif-selection}. It needs no surjectivity, since it predicts labels rather than ground states, and its rows do not grow with $S$. Since different candidates assign likelihoods to different sequences, $\Lambda^{\mathrm{lat}}_\phi$ is not a likelihood of the observed data, and its ratios across candidates are not likelihood ratios.

\subsubsection{Active Inference}
\label{app:models-aif}

At the current step $t$ of an episode, after the observations $s_{1:t}$, AIF performs variational inference over the structured family
\begin{multline}
\mathcal Q\big(x_{1:H+1},\theta,\pi\mid s_{1:t}\big) = \mathcal Q(\pi)\,\mathcal Q(\phi)\,\mathcal Q(B_\phi\mid\phi)\,\mathcal Q(\mathbf A_\phi\mid\phi)\\
\times\prod_{\tau\le t}\delta_{\phi(s_\tau)}(x_\tau)\prod_{\tau>t}B_{\phi,\tau-1}(x_\tau\mid x_{\tau-1},a_{\tau-1}),
\label{eq:aif-meanfield}
\end{multline}
with $\theta=(\phi,\mathbf A_\phi,B_\phi)$, in which each observed latent state is tied to the abstraction that decodes it and each future latent state is drawn from the row of $B_\phi$ for the previous state and action. The observations enter only through the point masses $\delta_{\phi(s_\tau)}$ and through the posterior factors over $\phi$, $B_\phi$ and $\mathbf A_\phi$, and we suppress them in the notation below. A factor $\mathcal Q(x_\tau)$ that does not depend on $\phi$ could not represent the posterior, since two candidates with positive weight can decode the same observation differently. In \texttt{model(L)}, the minimiser of the variational free energy of the observed data is the exact posterior, by the Gibbs variational principle. In \texttt{model(G)}, where the improper prior leaves the free energy undefined, the posterior is the conjugate update $\rho^k_h$ by definition. Hence AIF and Bayes-UCBVI hold the same posterior.

\emph{Predictions.} The factors of Eq.~\eqref{eq:aif-meanfield} define a predictive joint in which a row is drawn from the posterior and the next state from the drawn row, and every information term below is a mutual information under this joint. In particular, novelty is $I_{\mathcal Q}(B;x_{\tau+1}\mid x_\tau,a_\tau)$, Eq.~\eqref{eq:novelty-def}, and the information about the emission used in Proposition~\ref{prop:lure} is $I_{\mathcal Q}(\mathbf A;s_{\tau+1}\mid x_{\tau+1})$. With $\phi$ given, the rows used along a path are indexed by distinct steps and are therefore independent under the posterior, and the marginal law of a path is the product of the posterior means $\widehat P^k_h$. With $\phi$ uncertain, the rows are independent only given $\phi$, and the predictive law of an episode is the mixture $\sum_\phi\mathcal Q(\phi)\prod_\tau\widehat B_{\phi,\tau}$ over a shared decoder.

\emph{Expected free energy.} The outcome of step $\tau$ is $o_\tau=(s_\tau,r_\tau)$ and its latent variables are $z_\tau=(x_\tau,\theta)$. AIF adds a policy prior $P(\pi)$ and the preference distribution $\widetilde P(o_\tau)=\widetilde P(r_\tau)\,\mathcal U(s_\tau)$ with $\ln\widetilde P(r)=\beta r-\ln Z_\beta$ on $[0,r_0]$, where $\mathcal U$ is uniform on $\mathcal S'$. The preference thus concerns only the reward \citep[cf.][]{millidge2021relationship}. Value enters through the biased joint $\widetilde P(o_\tau,z_\tau\mid\pi)=\widetilde P(o_\tau)\,\mathcal Q(z_\tau\mid o_\tau,\pi)$ \citep[Eq.~2.5]{friston2017active}, and AIF scores a plan by
\begin{align}
\mathcal G(\pi) \;&=\; \sum_{\tau>t}\mathds{E}_{\mathcal Q(o_\tau,z_\tau\mid\pi)}\Big[\ln\mathcal Q\big(z_\tau\mid\pi\big)-\ln\widetilde P\big(o_\tau,z_\tau\mid\pi\big)\Big]\nonumber\\
&=\;\sum_{\tau>t}\Big(-\mathds{E}_{\mathcal Q}\big[\ln\widetilde P(o_\tau)\big]-I_{\mathcal Q}\big(z_\tau;o_\tau\mid\pi\big)\Big).
\label{eq:efe-raw}
\end{align}
The first term is $-\beta\,\mathds{E}_{\mathcal Q}[r_\tau]$ up to constants that do not depend on $\pi$. The plan $\pi$ fixes the action $a_\tau$. In \texttt{model(G)}, the abstraction $\phi$ is fixed, and the outcome $r_\tau=\widehat R^k_\phi(\phi(s_\tau),a_\tau)$ is thus a function of $s_\tau$ given $\pi$. In \texttt{model(L)}, we take the reward to be a known function $R^*(s_\tau,a_\tau)$ of the ground state and the action, the same for every candidate abstraction, as in Proposition~\ref{prop:lure}, where AIF knows the rewards. In both models, $r_\tau$ is therefore a function of $s_\tau$ given $\pi$. Hence $I_{\mathcal Q}(z_\tau;o_\tau\mid\pi)=I_{\mathcal Q}(z_\tau;s_\tau\mid\pi)$, and the chain rule gives
\begin{equation}
\mathcal G(\pi) = \sum_{\tau>t}\Big(\underbrace{-\beta\,\mathds{E}_{\mathcal Q}\big[r_\tau\big]}_{\text{extrinsic value}} \;-\; \underbrace{I_{\mathcal Q}\big(x_\tau;s_\tau\mid\pi\big)}_{\text{intrinsic value}} \;-\; \underbrace{I_{\mathcal Q}\big(\theta;s_\tau\mid x_\tau,\pi\big)}_{\text{parameter information gain}}\Big)+\text{const}.
\label{eq:efe-three-term}
\end{equation}
In \texttt{model(G)}, the parameter term vanishes, since $s_\tau$ is independent of $B$ given $x_\tau$, and under exact perception the intrinsic value is the entropy $\mathds{H}[\mathcal Q(x_\tau\mid\pi)]$ of the predicted abstract state. One step ahead from $(x,a)$, this entropy is $\mathds{E}_B\mathds{H}[B]+\mathrm{Novelty}(x,a)$ by Eq.~\eqref{eq:novelty-def}. It thus contains the information about $B$ once, and its aleatoric part does not shrink with data (Proposition~\ref{prop:noisy-tv}).

\emph{The analysed variant.} The AIF agent we analyse makes three modelling choices in addition to Eq.~\eqref{eq:efe-three-term}. First, it adds $\mathrm{Novelty}(x_\tau,a_\tau)$ as a separate term, as the information gain about the emission parameters does in \citet[Eq.~16]{dacosta2020synthesis} and \citet{schwartenbeck2019computational}, which add it to the Dirichlet learning of \citet{friston2017active}. Hence, novelty is counted twice. Second, it evaluates these terms recursively, one step ahead from each $(x,a)$ \citep{friston2021sophisticated}, with maximum backups and the clamp at $r_0(H-h+1)$. Third, it samples its action at each step from a softmax over the resulting values (Algorithm~\ref{alg:aif}). Sophisticated inference selects actions step by step, as our variant does, whereas the discrete-state AIF of \citet{friston2017active} samples a whole plan from its policy posterior $\mathcal Q(\pi)\propto P(\pi)e^{-\mathcal G(\pi)}$. Proposition~\ref{prop:noisy-tv} holds for both rules under a uniform plan prior, and Proposition~\ref{prop:lure} holds for the greedy rule and the per-step softmax. Counting novelty once changes no result of this paper. Theorem~\ref{thm:exponent} holds for every weight on novelty, and the instance of Proposition~\ref{prop:noisy-tv} has no novelty. In \texttt{model(L)}, the parameter term contains the novelty of the emission rows, which drives Proposition~\ref{prop:lure}.

\subsubsection{Control as Inference}
\label{app:models-cai}

CaI uses \texttt{model(G)} and places value in the likelihood of a binary optimality variable,
\begin{equation}
\ln P\big(O_\tau=1\mid x_\tau,a_\tau\big) \;=\; \beta\big(\widehat R^k_\phi(x_\tau,a_\tau)-r_0\big) \;\le\;0 ,
\label{eq:optimality}
\end{equation}
where subtracting the constant $r_0$ keeps the likelihood at most one, the pseudo-state included, and shifts every backed-up value of a step by the same amount. CaI uses the structured variational family of \citet{levine2018reinforcement}, in which only the policy factors $\mathcal Q(a_h\mid x_h)$ are free and the dynamics factors are the model's predictive dynamics, here exactly $\widehat P^k_h$. Minimising $D_{\mathrm{KL}}\big(\mathcal Q\,\|\,P(\cdot\mid O_{1:H}=1)\big)$ over the policy factors then amounts to minimising
\begin{equation}
\mathcal G_{\mathrm{CaI}}(\pi) \;=\; -\beta\sum_\tau\mathds{E}_{\mathcal Q}\big[\widehat R^k_\phi(x_\tau,a_\tau)\big] \;+\; \sum_\tau\mathds{E}_{\mathcal Q}\,D_{\mathrm{KL}}\big(\mathcal Q(a_\tau\mid x_\tau)\,\|\,P(a_\tau\mid x_\tau)\big)
\label{eq:cai-objective}
\end{equation}
up to a constant, which is Eq.~1 of \citet{millidge2021relationship} without the state-divergence term, which vanishes under the structured family, and the observation-ambiguity term of POMDPs. With a uniform action prior, backward induction gives the soft backup $V_h(x)=\beta^{-1}\ln\big(A^{-1}\sum_a\exp(\beta Q_h(x,a))\big)$ and the softmax policy of Algorithm~\ref{alg:aif}. The structured constraint matters. Under a mean-field factorisation $\mathcal Q(B)\,\mathcal Q(x_{1:H+1})$ without it, the optimal dynamics factor involves the geometric mean $\exp\mathds{E}_\rho[\ln B]$ instead of $\widehat P^k_h$ and tilts towards successors of high value. In \texttt{model(L)}, the predictive law of an episode is a mixture of products, not a product of mixtures, and we derive no CaI planner for it.

\subsection{Algorithms}
\label{app:algs}

Table~\ref{tab:agents} specifies every analysed agent. UCBVI appears as equations in Appendix~\ref{app:baselines}, calibrated AIF with a uniform prior in Section~\ref{sec:goal-uniform} and Appendix~\ref{app:goal-uniform}, and the agents that learn $\phi$ in Section~\ref{sec:learning-phi} and Appendix~\ref{app:enactive}. Algorithm~\ref{alg:aif} gives AIF, calibrated AIF and CaI on the same abstract space and the same posterior $\rho^k_h$, so that the three can be compared line by line. Every agent plans once per episode, by backward induction with the beliefs held at the start of the episode, and acts by a per-step rule on the resulting values, which makes its policy closed-loop.

\begin{table}[tbp]
\centering
\caption{The analysed agents. All Bayesian agents on $\phi$ use the instrumental pseudo-state prior with one Dirichlet row per step, and estimate rewards by $\widehat R^k_\phi$, which is $1$ at unvisited pairs (Eq.~\eqref{eq:rhat}). The column \emph{Value} states how $V_h(x)$ is computed from $Q_h(x,\cdot)$ in Eq.~\eqref{eq:psi-slot}, by a maximum or a soft maximum, the column \emph{Epistemic term} gives $\Psi$, and ``clamp'' is the cap at $r_0(H-h+1)$.}
\label{tab:agents}
\footnotesize
\setlength{\tabcolsep}{3pt}
\rowcolors{2}{gray!12}{white}
\begin{tabular}{@{}>{\raggedright\arraybackslash}p{2.6cm}>{\raggedright\arraybackslash}p{2.2cm}>{\raggedright\arraybackslash}p{1.0cm}>{\raggedright\arraybackslash}p{3.0cm}>{\raggedright\arraybackslash}p{2.5cm}>{\raggedright\arraybackslash}p{2.0cm}@{}}
\toprule
Agent (result) & Reward, belief & Value & Epistemic term & Action rule & Abstraction \\
\midrule
UCBVI-CH on $\phi$ (Thm.~\ref{thm:ucbvi-fixed}(a)) & $\widehat R^k_\phi$, empirical kernel pooled over steps & max & Hoeffding bonus Eq.~\eqref{eq:azar-bonus-ch}, clamps at $H$ and at the previous values & greedy & given \\
Bayes-UCBVI on $\phi$ (Thm.~\ref{thm:ucbvi-fixed}(b), Prop.~\ref{prop:lockout}) & $\widehat R^k_\phi$ or the goal $R_k$, with $\rho^k_h$ & max & $\kappa^k_h$-quantile of $p^\top V$ & greedy & given \\
information-driven (Thm.~\ref{thm:exponent}) & $R^*$, $\rho^k_h$ & max & $\lambda\,\mathrm{Novelty}^q$, optional clamp & greedy & given, exact \\
AIF (Prop.~\ref{prop:aif-cai-regret}, Alg.~\ref{alg:aif}) & $\widehat R^k_\phi$, $\rho^k_h$ & max & $\beta^{-1}(\mathds{H}[\widehat P^k_h]+\mathrm{Novelty})$, novelty counted twice, clamp & softmax at $\beta$ per step, or posterior over plans (App.~\ref{app:noisy-tv}) & given \\
calibrated AIF (Thm.~\ref{thm:calibrated}, Alg.~\ref{alg:aif}) & $\widehat R^k_\phi$, $\rho^k_h$ & max & $\Psi^+_{k,h}$ of Eq.~\eqref{eq:calibrated}, clamp & greedy & given \\
calibrated AIF, uniform prior (Thm.~\ref{thm:price}) & goal $R_k$, $\mathrm{Dir}(\mathds 1+\text{counts})$ on $\mathcal X$ & max & $\Psi^{\mathrm u}_{k,h}$, clamp at $H-h+1$ & greedy & given, exact \\
CaI (Prop.~\ref{prop:aif-cai-regret}, Alg.~\ref{alg:aif}) & $\widehat R^k_\phi$, $\rho^k_h$ & soft max & none & softmax at $\beta$ per step & given (\texttt{model(G)}) \\
posterior averaging (Prop.~\ref{prop:selection}) & --, exact mixture over $\Phi$ & -- & -- (a predictor) & -- & learned \\
AIF learning its emission (Prop.~\ref{prop:lure}) & $R^*$, known latent kernel, Dirichlet emission rows & max & $\beta^{-1}$ times the novelty about the emission, clamp & greedy or softmax at $\beta$ & decoder known \\
latent-predictive (Prop.~\ref{prop:self-confirming}) & any, through $\phi_k$ & any & any & any rule that acts through $\phi_k$ & maximiser $\phi_k$ of $\Lambda^{\mathrm{lat}}_\phi$ \\
enactive agent, GOLF (Thm.~\ref{thm:enactive}) & squared Bellman residuals, relabelled with $R_k$ in (b) & max & none; optimism over the version space of $\mathcal F$ & greedy & one candidate per step, through $\mathcal F$ \\
\bottomrule
\end{tabular}
\end{table}

Algorithm~\ref{alg:aif} turns AIF into calibrated AIF by three changes. First, it drops the state term $\beta^{-1}\mathds{H}[\widehat P^k_h(\cdot\mid x,a)]$. Second, it replaces $\beta^{-1}\mathrm{Novelty}$ by $\Psi^+_{k,h}$ of Eq.~\eqref{eq:calibrated}, whose logarithmic factor $\ell^k_h$ comes from Bayes-UCBVI's quantile schedule (Definition~\ref{def:bayesq-ground}). Third, it acts greedily instead of sampling from the softmax. The precision $\beta$ thus disappears. The posterior $\rho^k_h$, the pseudo-state prior, the clamp, and the recursive expected free energy stay the same. CaI differs from AIF by having no epistemic term and a soft maximum in the backup.

\begin{algorithm}[tbp]
\caption{AIF, calibrated AIF and CaI on an abstraction (Section~\ref{sec:aif}, Proposition~\ref{prop:aif-cai-regret} and Theorem~\ref{thm:calibrated}). Comments mark this paper's own constructions and the three changes from AIF to calibrated AIF.}
\label{alg:aif}
\label{alg:cai}
\begin{algorithmic}[1]
\State \textbf{Input:} $\mathcal X,\mathcal A,H,K,\delta,\phi$, $n_0$, $r_0=2$, precision $\beta$, variant $\in\{\text{AIF},\text{calibrated},\text{CaI}\}$
\State $\bar n_h(x'\mid x,a)\gets n_0\mathds{1}\{x'=x_{\mathrm E}\}$ for all $h,x,a,x'$, and $\mathrm{cnt}(x,a)\gets0$, $\mathrm{sum}(x,a)\gets0$ for all $x,a$
\For{episode $k=1,2,\dots,K$}
    \State $\widehat R^k_\phi(x,a)\gets\mathrm{sum}(x,a)/\mathrm{cnt}(x,a)$ if $\mathrm{cnt}(x,a)\ge1$, else $1$ \Comment{Eq.~\eqref{eq:rhat}}
    \State $\rho^k_h(x,a)\gets\mathrm{Dir}(\bar n_h(\cdot\mid x,a))$ and $\widehat P^k_h(\cdot\mid x,a)\gets\bar n_h(\cdot\mid x,a)/\bar n_h(x,a)$ for all $h,x,a$
    \State $V_{H+1}\gets0$ and $V_h(x_{\mathrm E})\gets r_0(H-h+1)$ for all $h$
    \For{$h=H,H-1,\dots,1$} \Comment{recursive expected free energy}
        \For{each $(x,a)\in\mathcal X\times\mathcal A$}
            \State $\mathrm{Nov}\gets\mathrm{Novelty}(x,a)$ under $\rho^k_h(x,a)$, Eq.~\eqref{eq:novelty-def} \Comment{our instantiation}
            \If{variant $=$ AIF}
                \State $\Psi\gets\beta^{-1}\big(\mathds{H}[\widehat P^k_h(\cdot\mid x,a)]+\mathrm{Nov}\big)$
            \ElsIf{variant $=$ calibrated} \Comment{first change, no state term}
                \State $\ell\gets\ln\big(1/(1-\kappa^k_h(x,a))\big)$
                \State $\Psi\gets2\,\mathrm{span}(V_{h+1})\sqrt{\ell\,\mathrm{Nov}}+2\sqrt2\,r_0H\ell/\bar n_h(x,a)$ \Comment{second change}
            \Else
                \State $\Psi\gets0$ \Comment{CaI}
            \EndIf
            \State $Q_h(x,a)\gets\widehat R^k_\phi(x,a)+\widehat P^k_h(\cdot\mid x,a)^\top V_{h+1}+\Psi$, capped at $r_0(H-h+1)$ unless variant $=$ CaI
        \EndFor
        \State $V_h(x)\gets\max_aQ_h(x,a)$, or $\beta^{-1}\ln\big(A^{-1}\sum_a\exp(\beta Q_h(x,a))\big)$ if variant $=$ CaI, for all $x\in\mathcal X$
    \EndFor
    \For{$h=1,\dots,H$}
        \State Observe $s^k_h$ and set $x^k_h\gets\phi(s^k_h)$
        \State $a^k_h\gets\arg\max_aQ_h(x^k_h,a)$ if variant $=$ calibrated \Comment{third change}
        \State otherwise $a^k_h\sim\mathcal Q(\cdot\mid x^k_h)\propto\exp\big(\beta Q_h(x^k_h,\cdot)\big)$ \Comment{softmax at precision $\beta$}
        \State Execute $a^k_h$, observe $r^k_h$ and $s^k_{h+1}$, and set $x'\gets\phi(s^k_{h+1})$
        \State $\mathrm{cnt}(x^k_h,a^k_h)\mathrel{+}=1$, $\mathrm{sum}(x^k_h,a^k_h)\mathrel{+}=r^k_h$, $\bar n_h(x'\mid x^k_h,a^k_h)\mathrel{+}=1$
    \EndFor
\EndFor
\end{algorithmic}
\end{algorithm}

The softmax of Algorithm~\ref{alg:aif} puts mass at least $e^{-\beta r_0H}/A$ on every action, which gives the linear lower bound of Proposition~\ref{prop:aif-cai-regret} at every fixed $\beta$. Making this floor vanish requires $\beta\to\infty$, which removes the epistemic terms from AIF's action rule. Boltzmann exploration with a monotone schedule is known to be suboptimal already in multi-armed bandits \citep{cesabianchi2017boltzmann}.

\section{Proofs for Section~\ref{sec:setting}}
\label{app:proofs-setting}

\subsection{Representation Error}
\label{app:lem1}

The following lemma is standard. It is the episodic form of the simulation argument behind approximate model-similarity abstractions \citep[Lem.~2]{abel2016near}, with the within-block spreads of Definition~\ref{def:spread}. The second gap of Section~\ref{sec:abstraction} is not zero, because the projected process $\{\phi(s_\tau)\}$ is in general not Markov. Transition errors compound over the remaining steps, hence the factor $H^2$ in $\epsilon_{\text{rep}}(\phi)$. We include its short proof because the cited form is discounted and concerns optimal values only, whereas our proofs use the episodic constants and part~(c), which bounds the value of $\phi$-measurable policies although the projected process is not Markov.

\begin{lemma}[Control of representation error]
\label{lem:bias-metric}
Let $\epsilon_{\text{bias}}(\phi):=\max_{h,s,a}|Q^*_h(s,a)-\overline Q^*_{\phi,h}(\phi(s),a)|$ be the optimal-value bias of $\phi$. For all $x,a$, all $s\in\phi^{-1}(x)$, and all Markov $\phi$-measurable $\pi$,
\begin{itemize}
\item[(a)] $\big|R^*(s,a)-\overline R_\phi(x,a)\big|\le \eta_R(\phi)$ and $\big\|\phi_\# P^*(\cdot\mid s,a)-\overline P_\phi(\cdot\mid x,a)\big\|_1 \le \eta_P(\phi)$.
\item[(b)] $\epsilon_{\text{bias}}(\phi)\le\epsilon_{\text{rep}}(\phi)$.
\item[(c)] $\max_{h,s}\big|V^\pi_h(s)-\overline V^\pi_{\phi,h}(\phi(s))\big|\le\epsilon_{\text{rep}}(\phi)$.
\end{itemize}
\end{lemma}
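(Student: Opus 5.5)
Part~(a) follows directly from the definitions, and I would prove it first. Since $\overline R_\phi(x,a)$ is a $W(\cdot\mid x)$-average of $R^*(s',a)$ over $s'\in\phi^{-1}(x)$, I would write
\[
R^*(s,a)-\overline R_\phi(x,a)=\sum_{s'}W(s'\mid x)\big(R^*(s,a)-R^*(s',a)\big).
\]
Each summand is at most $\eta_R(\phi)$ in absolute value. In the same way, Eq.~\eqref{eq:abstract-mdp} shows that $\overline P_\phi(\cdot\mid x,a)=\sum_{s'}W(s'\mid x)\,\phi_\#P^*(\cdot\mid s',a)$. The triangle inequality for $\|\cdot\|_1$ then bounds the distance to $\phi_\#P^*(\cdot\mid s,a)$ by $\max_{s'}\|\phi_\#P^*(\cdot\mid s,a)-\phi_\#P^*(\cdot\mid s',a)\|_1\le\eta_P(\phi)$.

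For (c), I would fix a Markov $\phi$-measurable $\pi$ and write $\pi_h(s)=\bar\pi_h(\phi(s))$. I would then run a backward induction on $h$ for $\Delta_h:=\max_s|V^\pi_h(s)-\overline V^\pi_{\phi,h}(\phi(s))|$, with $\Delta_{H+1}=0$. Let $a=\bar\pi_h(x)$. The key identity is that $\overline V^\pi_{\phi,h+1}\circ\phi$ is constant on blocks, so
\[
\sum_{s'}P^*(s'\mid s,a)\,\overline V^\pi_{\phi,h+1}(\phi(s'))=\phi_\#P^*(\cdot\mid s,a)^\top\overline V^\pi_{\phi,h+1}.
\]
This gives
\[
|V^\pi_h(s)-\overline V^\pi_{\phi,h}(x)|\le\eta_R+\Delta_{h+1}+\big|(\phi_\#P^*(\cdot\mid s,a)-\overline P_\phi(\cdot\mid x,a))^\top\overline V^\pi_{\phi,h+1}\big|.
\]
Two probability vectors have a difference that sums to zero. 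I can therefore center $\overline V^\pi_{\phi,h+1}$, which takes values in $[0,H-h]$, and bound the last term by $\tfrac12\eta_P\,(H-h)$ using part~(a). Unrolling the recursion gives $\Delta_1\le H\eta_R+\tfrac12\eta_P\sum_h(H-h)\le H\eta_R+\tfrac12H^2\eta_P=\epsilon_{\text{rep}}(\phi)$, and the same bound holds for every $\Delta_h$.

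For (b), I would run the same recursion on $Q$-values with $\delta_h:=\max_{s,a}|Q^*_h(s,a)-\overline Q^*_{\phi,h}(\phi(s),a)|$. I would use $|\max_aQ^*_{h+1}(s',a)-\max_a\overline Q^*_{\phi,h+1}(\phi(s'),a)|\le\delta_{h+1}$ together with the fact that $\overline V^*_{\phi,h+1}\circ\phi$ is block-constant. This yields $\delta_h\le\eta_R+\delta_{h+1}+\tfrac12\eta_P(H-h)$ and hence $\delta_h\le\epsilon_{\text{rep}}(\phi)$.

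The main obstacle is the non-Markovness of the projected process in (c). The comparison has to be carried out on ground states, with the abstract value composed with $\phi$. The step above is what makes this work: a function that is constant on blocks turns a ground-state expectation into one under the pushforward $\phi_\#$. The Markov, $\phi$-measurable form of $\pi$ is needed so that $V^\pi$ satisfies a ground Bellman recursion with the same action $\bar\pi_h(\phi(s))$ in both models.
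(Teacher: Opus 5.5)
Your proposal is correct and follows essentially the same route as the paper's proof. Part~(a) uses the same convex-combination argument. Parts~(b) and~(c) use the same backward recursions: block-constancy of the abstract value converts the ground expectation into one under $\phi_\#P^*$, and for (b) you use $|\max f-\max g|\le\max|f-g|$. The only difference is that you center the next-step value to get a transition term $\tfrac12(H-h)\eta_P$ per step, where the paper applies H\"older's inequality and gets $(H-h)\eta_P$. After summation, both stay within $H\eta_R+\tfrac12H^2\eta_P$.
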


\begin{proof}
Fix $\phi$, write $\eta_R:=\eta_R(\phi)$, $\eta_P:=\eta_P(\phi)$, and fix $x$, $a$ and $s\in\phi^{-1}(x)$.

\emph{Part (a), the one-step bounds.} Summing Eq.~\eqref{eq:abstract-mdp} over the block gives $\overline P_\phi(\cdot\mid x,a)=\sum_{s''}W(s''\mid x)\,\phi_\#P^*(\cdot\mid s'',a)$, and $\overline R_\phi(x,a)=\sum_{s''}W(s''\mid x)R^*(s'',a)$. Both are convex combinations, over $s''\in\phi^{-1}(x)$, of points of sets whose diameters Definition~\ref{def:spread} calls $\eta_R$ and $\eta_P$. A point $f(s)$ of such a set is within its diameter $\eta$ of every convex combination,
\begin{equation}
\Big|f(s)-\sum_{s''}w(s'')f(s'')\Big| \;=\; \Big|\sum_{s''}w(s'')\big(f(s)-f(s'')\big)\Big| \;\le\; \eta,
\label{eq:step1-episodic}
\end{equation}
applied coordinatewise and summed for the $\ell_1$ case. Hence
\begin{equation}
\big|R^*(s,a)-\overline R_\phi(x,a)\big| \le \eta_R, \qquad \big\|\phi_\#P^*(\cdot\mid s,a) - \overline P_\phi(\cdot\mid x,a)\big\|_1 \le \eta_P.
\label{eq:step1b}
\end{equation}

\emph{Part (b), the recurrence for optimal values.} Let $q_h:=\max_{s,a}|Q^*_h(s,a)-\overline Q^*_{\phi,h}(\phi(s),a)|$ and $e_h:=\max_s|V^*_h(s)-\overline V^*_{\phi,h}(\phi(s))|$, with $q_{H+1}=e_{H+1}=0$, so that $\epsilon_{\text{bias}}(\phi)=\max_hq_h$ and $e_h\le q_h$, since $|\max_af(a)-\max_ag(a)|\le\max_a|f(a)-g(a)|$. Write $g:=\overline V^*_{\phi,h+1}$, which takes values in $[0,H-h]$. Since
\begin{equation*}
\sum_{s'}P^*(s'\mid s,a)g(\phi(s'))=\phi_\#P^*(\cdot\mid s,a)^\top g,
\end{equation*}
\begin{align}
\big|Q^*_h(s,a)-\overline Q^*_{\phi,h}(x,a)\big| &\le \big|R^*(s,a)-\overline R_\phi(x,a)\big| + \Big|\sum_{s'}P^*(s'\mid s,a)\big(V^*_{h+1}(s')-g(\phi(s'))\big)\Big| \nonumber\\
&\quad + \big|\big(\phi_\#P^*(\cdot\mid s,a)-\overline P_\phi(\cdot\mid x,a)\big)^\top g\big| \;\le\; \eta_R+e_{h+1}+(H-h)\eta_P,
\label{eq:step3-episodic}
\end{align}
by Eq.~\eqref{eq:step1b} and Hölder's inequality. Hence $q_h\le\eta_R+(H-h)\eta_P+q_{h+1}$, and summing the recurrence from $h$ to $H$,
\begin{equation*}
q_h\;\le\;(H-h+1)\,\eta_R+\tbinom{H-h+1}{2}\,\eta_P\;\le\;H\eta_R+\tfrac12H^2\eta_P\;=\;\epsilon_{\text{rep}}(\phi).
\end{equation*}

\emph{Part (c), the recurrence for a fixed policy.} Let $\pi$ be Markov and $\phi$-measurable, and $e^\pi_h:=\max_s|V^\pi_h(s)-\overline V^\pi_{\phi,h}(\phi(s))|$. Both value functions use the same action $a=\pi_h(\phi(s))$. Hence, Eq.~\eqref{eq:step3-episodic} with $V^\pi$, $\overline V^\pi_\phi$ in place of $V^*$, $\overline V^*_\phi$ gives $e^\pi_h\le\eta_R+(H-h)\eta_P+e^\pi_{h+1}$ without the step $e_h\le q_h$, and the same sum gives $e^\pi_h\le\epsilon_{\text{rep}}(\phi)$.
\end{proof}

\subsection{Proof of Theorem~\ref{thm:ucbvi-fixed}}
\label{app:thm-fixed}

Under abstract optimism (Section~\ref{sec:abstraction}), we run an optimistic algorithm unchanged on $\mathcal X$, with $x_\tau=\phi(s_\tau)$ as the observed state. It estimates the unknown $\overline R_\phi(x,a)$ by the empirical mean $\widehat R^k_\phi(x,a)$ of the rewards observed at the $N_k(x,a)$ visits to $(x,a)$ before episode $k$ \citep{jaksch2010near}, and by $\widehat R^k_\phi(x,a):=1$ if $N_k(x,a)=0$. Given $(x_\tau,a_\tau)$, the next abstract state is drawn from $\phi_\#P^*(\cdot\mid s_\tau,a_\tau)$, a law that depends on $s_\tau$. The samples of a row are thus not identically distributed. Optimism nevertheless survives up to $\epsilon_{\text{rep}}(\phi)$ per episode, without widening the width, because every sample law lies within $\eta_P(\phi)$ of $\overline P_\phi(\cdot\mid x_\tau,a_\tau)$, as we now show.

Fix $\phi$, abbreviate $\eta_R:=\eta_R(\phi)$ and $\eta_P:=\eta_P(\phi)$, and let $\eta:=\eta_R+\frac12H\eta_P$, so that $H\eta=\epsilon_{\text{rep}}(\phi)$ and
\begin{equation}
H\eta_R + H^2\eta_P \;\le\; 2\,\epsilon_{\text{rep}}(\phi). \label{eq:eta-to-rep}
\end{equation}
We may assume $\epsilon_{\text{rep}}(\phi)\le H$, since otherwise every bound below exceeds the trivial $HK$. Let $P_{k,h}:=\phi_\#P^*(\cdot\mid s_{k,h},a_{k,h})$ be the law of $x_{k,h+1}$ given everything observed before it, ground states included. The policies $\pi^k$ of the agents below are Markov and $\phi$-measurable within an episode.

\textbf{Three-way decomposition.} For each $k$, at $s^k_1$ and $\phi(s^k_1)$,
\begin{equation}
V^*_1 - V^{\pi^k}_1 \;=\; \underbrace{\big(V^*_1 - \overline V^*_{\phi,1}\big)}_{\text{(I) optimal-value bias}} \;+\; \underbrace{\big(\overline V^*_{\phi,1} - \overline V^{\pi^k}_{\phi,1}\big)}_{\text{(II) abstract-MDP regret}} \;+\; \underbrace{\big(\overline V^{\pi^k}_{\phi,1} - V^{\pi^k}_1\big)}_{\text{(III) policy-evaluation gap}}. \label{eq:fixed-split}
\end{equation}
Term~(III) is not zero, since the projected process $\{\phi(s_\tau)\}$ is not the Markov chain $\overline{\mathcal M}_\phi$, but Lemma~\ref{lem:bias-metric}(b)--(c) bounds (I) and (III) by $\epsilon_{\text{rep}}(\phi)$ each. Term~(II) is the regret of $\pi^k$ in $\overline{\mathcal M}_\phi$, whose reward is unknown and whose kernel $\overline P_\phi$ is not the law $\phi_\#P^*(\cdot\mid s_\tau,a_\tau)$ of the observed abstract transitions, which depends on the ground state $s_\tau$.

\textbf{The reward estimate.} The agents use
\begin{equation}
\widehat R^k_\phi(x,a) = \begin{cases}\displaystyle\frac{1}{N_k(x,a)}\sum_{\tau:\,\phi(s_\tau)=x,\ a_\tau=a} R^*(s_\tau,a_\tau) & \text{if } N_k(x,a)\ge1,\\[1ex] 1 & \text{if } N_k(x,a)=0.\end{cases}
\label{eq:rhat}
\end{equation}
For $N_k(x,a)\ge1$, let $\widehat W_k(\cdot\mid x,a)$ be the empirical distribution of the ground states realised at the visits to $(x,a)$ before episode $k$. Since $R^*$ is deterministic,
\begin{equation}
\widehat R^k_\phi(x,a) = \sum_{s\in\phi^{-1}(x)}\widehat W_k(s\mid x,a)R^*(s,a), \qquad \overline R_\phi(x,a) = \sum_{s\in\phi^{-1}(x)}W(s\mid x)R^*(s,a),
\label{eq:rhat-convex}
\end{equation}
two convex combinations of $\{R^*(s,a):s\in\phi^{-1}(x)\}$, a set of diameter $\eta_R$. With $p:=\widehat W_k$ and $q:=W$,
\begin{equation}
\big|\widehat R^k_\phi(x,a)-\overline R_\phi(x,a)\big| = \Big|\sum_{s,s'}p(s)q(s')\big(R^*(s,a)-R^*(s',a)\big)\Big| \le \eta_R,
\label{eq:rhat-bias-proof}
\end{equation}
deterministically. At an unvisited pair, $\widehat R^k_\phi(x,a)=1\ge\overline R_\phi(x,a)$. Hence, for all $k$ and $(x,a)$,
\begin{equation}
\overline R_\phi(x,a)-\eta_R\;\le\;\widehat R^k_\phi(x,a)\;\le\;\overline R_\phi(x,a)+\eta_R+\mathds{1}\{N_k(x,a)=0\}.
\label{eq:rhat-two-sided}
\end{equation}

\textbf{The samples of a row.} A \emph{row} is a pair $(x,a)$ pooled over steps, for UCBVI, or a triple $(x,a,h)$, for Bayes-UCBVI. At the $j$-th visit of a row, the next abstract state $x'_j$ has conditional law $P_j$, one of the laws $P_{k,h}$, given the history. Let $\widetilde P_n$ be the empirical law of $x'_1,\dots,x'_n$ and $\bar P_n:=n^{-1}\sum_{j\le n}P_j$. The laws $P_j$, $P_{k,h}$ at a visit of the row, $\bar P_n$ and $\overline P_\phi(\cdot\mid x,a)$ all lie in the convex hull of $\{\phi_\#P^*(\cdot\mid s,a):s\in\phi^{-1}(x)\}$, a set of $\ell_1$-diameter at most $\eta_P$ by Lemma~\ref{lem:bias-metric}(a). As in Eq.~\eqref{eq:step1-episodic}, any two of them are within $\eta_P$ in $\ell_1$,
\begin{equation}
\|P_{k,h}-\bar P_n\|_1\le\eta_P,\qquad\|\bar P_n-\overline P_\phi(\cdot\mid x,a)\|_1\le\eta_P,\qquad\|P_{k,h}-\overline P_\phi(\cdot\mid x,a)\|_1\le\eta_P .
\label{eq:three-facts}
\end{equation}
For probability vectors $P,P'$ and every $f$, $|(P-P')f|\le\frac12\mathrm{span}(f)\|P-P'\|_1$, since $(P-P')f=(P-P')(f-c)$ for the midpoint $c$ of the range of $f$. The estimate of a row thus concentrates around $\bar P_n$, which lies within $\eta_P$ of $\overline P_\phi$,
\begin{equation}
\big\|\widetilde P_n-\overline P_\phi(\cdot\mid x,a)\big\|_1\;\le\;\big\|\widetilde P_n-\bar P_n\big\|_1+\eta_P .
\label{eq:misspec-split}
\end{equation}

\begin{lemma}[Concentration on a row]
\label{lem:row-conc}
Fix a row, a count $n\ge1$ and $\delta'\in(0,1/e]$, and consider the event that the row has at least $n$ visits.
\begin{itemize}
\item[(i)] For every $g:\mathcal X\to[0,b]$ fixed in advance, with probability at least $1-2\delta'$, on this event, $|(\widetilde P_n-\bar P_n)g|\le b\sqrt{\ln(1/\delta')/(2n)}$.
\item[(ii)] For every $y\in\mathcal X$, on this event and with probability at least $1-2\delta'$, $|\widetilde P_n(y)-\bar P_n(y)| \le 3\sqrt{\bar P_n(y)\,\iota'/n}+2\iota'/n$, with $\iota':=\ln\big((2+\log_2n)/\delta'\big)$.
\end{itemize}
\end{lemma}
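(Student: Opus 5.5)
The plan is to write both deviations as martingales along the visits of the row and apply martingale concentration. The $j$-th visit happens at a stopping time. Let $\mathcal F_j$ be the history up to the $j$-th visit, and $\mathcal F_j^+$ that history together with $x'_j$. Set $D_j:=g(x'_j)-P_jg$ for part~(i) and $D_j:=\mathds 1\{x'_j=y\}-P_j(y)$ for part~(ii).

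To avoid handling the event ``at least $n$ visits'' by hand, we use the standard skeleton construction: extend the row's sample sequence by drawing, after the last real visit, auxiliary successors from an arbitrary fixed law. Then $(D_j)_{j\le n}$ is a martingale difference sequence with respect to $(\mathcal F_j^+)$ for every realisation. On the event, the first $n$ terms coincide with the real ones, so $n(\widetilde P_n-\bar P_n)g=\sum_{j\le n}D_j$. The key point is that $P_j$ is $\mathcal F_j$-measurable, being the law $\phi_\#P^*(\cdot\mid s,a)$ at the realised ground state. The data are not identically distributed, but that does not matter for martingale bounds.

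\textbf{Part (i).} Since $g$ is fixed in advance and $D_j\in[-P_jg,\,b-P_jg]$, an interval of length $b$, Azuma--Hoeffding gives
\[
\mathds P\Big(\Big|\textstyle\sum_{j\le n}D_j\Big|\ge b\sqrt{n\ln(1/\delta')/2}\Big)\le2\delta'.
\]
Dividing by $n$ yields the claim.

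\textbf{Part (ii).} Here $|D_j|\le1$ and the conditional variances sum to
\[
\textstyle\sum_{j\le n}P_j(y)(1-P_j(y))\le n\bar P_n(y).
\]
Because this predictable variance is random, we apply Freedman's inequality \citep{freedman1975tail} with a peeling argument. Split the range of the variance $v\in[0,n]$ into the bin $v\le1$ and the dyadic bins $(2^{i-1},2^i]$. This gives at most $1+\lceil\log_2n\rceil\le 2+\log_2 n$ bins. Apply Freedman with confidence $\delta'/(2+\log_2 n)$ on each bin, and on both sides, with the bin's upper end $2v$ as the variance bound. A union bound then gives, with probability at least $1-2\delta'$,
\[
\Big|\textstyle\sum_j D_j\Big|\le\sqrt{2\cdot2v\,\iota'}+\tfrac23\iota'\quad\text{in the bin containing } v,
\]
plus an $\mathcal O(\iota')$ term for the bin $v\le1$. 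Substituting $v\le n\bar P_n(y)$ and dividing by $n$ gives
\[
|\widetilde P_n(y)-\bar P_n(y)|\le2\sqrt{\bar P_n(y)\iota'/n}+c\,\iota'/n.
\]
The stated constants $3$ and $2$ absorb the low-variance bin, using $\sqrt{2\iota'}\le2\iota'$ since $\iota'\ge1$ for $\delta'\le1/e$.

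\textbf{Main obstacle.} The delicate step is the peeling bookkeeping in part~(ii). We must make the bin count match $2+\log_2n$, so that the logarithmic factor is exactly $\iota'$, and check that the constants $3$ and $2$ hold in every bin, including the one where $v\le1$. A secondary point is justifying that conditioning on the random visit times preserves the martingale structure; the skeleton construction addresses this.
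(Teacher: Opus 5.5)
Your proposal is correct up to one constant in the low-variance bin, and it follows essentially the paper's route. You use a martingale along the visits of the row, Azuma--Hoeffding with predictable ranges for (i), and for (ii) a Bernstein-type exponential bound with a dyadic union bound to handle the random variance $n\bar P_n(y)$. The differences are minor. The paper handles the event through indicators $\chi_t$ of the first $n$ visits over all steps of play, rather than a skeleton. It also grids over the exponent $\lambda\in\{2^{-i}\}$ of the supermartingale $\exp\big(\lambda\sum_tZ_t-\lambda^2\sum_t\chi_tP_t(y)\big)$, obtained from $e^u\le1+u+u^2$, instead of peeling over the predictable variance with Freedman's inequality.

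The constant in the bin $v\le1$ needs repair. There your bound is $\sqrt{2\iota'}+\frac23\iota'$. When $\bar P_n(y)$ is close to zero, the square-root term of the statement absorbs nothing, so you need $\sqrt{2\iota'}+\frac23\iota'\le2\iota'$. The inequality you cite, $\sqrt{2\iota'}\le2\iota'$, only gives $\frac83\iota'$. The correct justification is $\sqrt{2\iota'}\le\frac43\iota'$, which holds for $\iota'\ge\frac98$. This is satisfied, since $n\ge1$ and $\delta'\le1/e$ give $\iota'\ge\ln(2e)=1+\ln2$.

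This step also needs the Bernstein form of Freedman's bound, with $t/3$ in the denominator. The commonly quoted form $\exp\big(-t^2/(2(\sigma^2+t))\big)$ would give $\sqrt{2\iota'}+2\iota'$ in that bin and miss the constant $2$.
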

\begin{proof}
Index the steps of play by $t$, let $\mathcal F_t$ be the history before $x_{t+1}$ is drawn and $P_t$ the law of $x_{t+1}$ given $\mathcal F_t$. The indicator $\chi_t$ that step $t$ is one of the first $n$ visits of the row is $\mathcal F_t$-measurable, and $\sum_t\chi_t\le n$, with equality on the event. For $g$ fixed in advance, $Z_t:=\chi_t\big(g(x_{t+1})-P_tg\big)$ satisfies $\mathds{E}[Z_t\mid\mathcal F_t]=0$, and $\sum_tZ_t=n(\widetilde P_n-\bar P_n)g$ on the event. (i)~$Z_t$ lies in an interval of length $b\chi_t$ with $\mathcal F_t$-measurable endpoints. By Hoeffding's lemma, $\exp\big(\lambda\sum_{t'\le t}Z_{t'}-\lambda^2b^2\sum_{t'\le t}\chi_{t'}/8\big)$ is a supermartingale for every $\lambda>0$. Markov's inequality at $\lambda=4\varepsilon/(b^2n)$ bounds the probability that $\sum_tZ_t\ge\varepsilon$ on the event by $e^{-2\varepsilon^2/(b^2n)}$, which is $\delta'$ at $\varepsilon=b\sqrt{n\ln(1/\delta')/2}$. The lower tail is symmetric. (ii)~For $g=\mathds 1\{y\}$, $|Z_t|\le\chi_t$ and $\mathds{E}[Z_t^2\mid\mathcal F_t]\le\chi_tP_t(y)$. For $\lambda\in(0,1]$, the inequality $e^u\le1+u+u^2$ for $u\le1$ gives $\mathds{E}[e^{\lambda Z_t}\mid\mathcal F_t]\le\exp(\lambda^2\chi_tP_t(y))$, making $\exp\big(\lambda\sum_{t'\le t}Z_{t'}-\lambda^2\sum_{t'\le t}\chi_{t'}P_{t'}(y)\big)$ a supermartingale. With $V:=n\bar P_n(y)$, for each fixed $\lambda$, $\sum_tZ_t\le\lambda V+\ln(1/\delta'')/\lambda$ on the event, with probability at least $1-\delta''$ \citep[cf.][]{freedman1975tail}. Take $\lambda\in\{2^{-i}:0\le i\le I\}$ with $I:=\lceil\frac12\log_2n\rceil$ and $\delta''=\delta'/(I+1)$, so that $\ln(1/\delta'')\le\iota'$ and $\iota'\ge1$. If $V\le\iota'$, then $\lambda=1$ gives $\sum_tZ_t\le2\iota'$. Otherwise $\lambda^*:=\sqrt{\iota'/V}$ lies in $[2^{-I},1)$, since $V\le n\le4^I\iota'$. Hence, a grid point lies in $(\lambda^*/2,\lambda^*]$, and $\sum_tZ_t\le\lambda^*V+2\iota'/\lambda^*=3\sqrt{V\iota'}$. The lower tail is symmetric, and dividing by $n$ gives (ii).
\end{proof}

Let $\iota:=\ln(16X^2AHT^2/\delta)$. Apply Lemma~\ref{lem:row-conc}(i) to the at most $XAH$ rows, the counts $n\le T$ and the functions $g=\overline V^*_{\phi,h+1}$, $h\le H$, which take values in $[0,H]$, each at level $\delta'=\delta/(8XAH^2T)$, and part~(ii) to the rows, the counts and the $y\in\mathcal X$, each at level $\delta'=\delta/(8X^2AHT)$. Since $2+\log_2T\le2T$, both logarithms are at most $\iota$, and a union bound gives an event $\mathcal E_{\mathrm{conc}}$ of probability at least $1-\delta/2$ on which, for every row, every count $n\ge1$ that the row reaches, every $h$ and every $y$,
\begin{equation}
\big|(\widetilde P_n-\bar P_n)\overline V^*_{\phi,h+1}\big|\le H\sqrt{\iota/(2n)},\qquad\big|\widetilde P_n(y)-\bar P_n(y)\big|\le3\sqrt{\bar P_n(y)\,\iota/n}+2\iota/n .
\label{eq:row-events}
\end{equation}
The first bound concerns fixed functions only. The second holds for all coordinates at once, which is what makes it applicable to value functions computed from the data. On $\mathcal E_{\mathrm{conc}}$, for every row with $n\ge1$ visits and every $\Delta:\mathcal X\to[0,B]$,
\begin{equation}
(\widetilde P_n-\bar P_n)\Delta\;\le\;\tfrac1H\,\bar P_n\Delta+5XHB\iota/n .
\label{eq:second-order}
\end{equation}
Indeed $(\widetilde P_n-\bar P_n)\Delta\le\sum_y|\widetilde P_n(y)-\bar P_n(y)|\Delta(y)$, and the AM--GM inequality gives
\begin{equation*}
3\sqrt{\bar P_n(y)\iota/n}\,\Delta(y)\;=\;3\sqrt{(\bar P_n(y)\Delta(y)/H)(H\iota\Delta(y)/n)}\;\le\;\bar P_n(y)\Delta(y)/H+\tfrac94H\iota\Delta(y)/n,
\end{equation*}
while $\Delta(y)\le B$ and $\frac94H+2\le5H$.

\subsubsection*{Part (a), UCBVI-CH}
UCBVI-CH runs Eq.~\eqref{eq:ucbvi-recursion} on $\mathcal X$ with $\widehat R^k_\phi$ in place of $R^*$, the empirical kernel $\widehat P_k=\widetilde P_n$ of the pooled row $(x,a)$, $n=N_k(x,a)$, and the bonus $b_k(x,a)=7HL_A/\sqrt n$ of Eq.~\eqref{eq:azar-bonus-ch}, $L_A=\ln(5XAT/\delta)$. Since $16X^2AHT^2/\delta\le(5XAT/\delta)^4$, $\iota\le4L_A\le98L_A^2$. Hence, on $\mathcal E_{\mathrm{conc}}$ the bonus dominates the first bound of Eq.~\eqref{eq:row-events}, $b_k(x,a)\ge H\sqrt{\iota/(2n)}$.

\emph{Approximate optimism.} On $\mathcal E_{\mathrm{conc}}$, $\widehat Q_{k,h}\ge\overline Q^*_{\phi,h}-(H-h+1)\eta$ for all $k$ and $h$. We prove the bound by an outer induction over $k$, in increasing order, and, for each $k$, an inner induction over $h$, in decreasing order. The outer hypothesis is the bound at $(k-1,h)$ for every $h$, and the inner hypothesis is the bound at $(k,h+1)$. The case $h=H+1$ is trivial, and at an unvisited pair $\widehat Q_{k,h}=H\ge\overline Q^*_{\phi,h}$. Otherwise the inner hypothesis gives $\widehat V_{k,h+1}\ge\overline V^*_{\phi,h+1}-(H-h)\eta$, and since $\widehat P_k$ is a probability vector, Eqs.~\eqref{eq:rhat-two-sided}, \eqref{eq:row-events} and~\eqref{eq:three-facts} give
\begin{align*}
\widehat R^k_\phi+\widehat P_k\widehat V_{k,h+1}+b_k\;&\ge\;\overline R_\phi-\eta_R+\widehat P_k\overline V^*_{\phi,h+1}-(H-h)\eta+b_k\\
&\ge\;\overline R_\phi-\eta_R+\bar P_n\overline V^*_{\phi,h+1}-(H-h)\eta\\
&\ge\;\overline R_\phi+\overline P_\phi\overline V^*_{\phi,h+1}-\eta_R-\tfrac12H\eta_P-(H-h)\eta=\overline Q^*_{\phi,h}-(H-h+1)\eta .
\end{align*}
The clamps preserve this bound, since $H\ge\overline Q^*_{\phi,h}$ and $\widehat Q_{k-1,h}\ge\overline Q^*_{\phi,h}-(H-h+1)\eta$ by the outer hypothesis, with $\widehat Q_{0,h}:=H$. In particular $\widehat V_{k,1}\ge\overline V^*_{\phi,1}-\epsilon_{\text{rep}}(\phi)$.

\emph{The recursion.} Let $\delta_{k,h}:=\widehat V_{k,h}(x_{k,h})-\overline V^{\pi^k}_{\phi,h}(x_{k,h})$, $(x,a):=(x_{k,h},a_{k,h})$, $n:=N_k(x,a)$, and $\xi_{k,h}:=P_{k,h}\big(\widehat V_{k,h+1}-\overline V^{\pi^k}_{\phi,h+1}\big)-\big(\widehat V_{k,h+1}-\overline V^{\pi^k}_{\phi,h+1}\big)(x_{k,h+1})$, a martingale difference with $|\xi_{k,h}|\le2H$, since $\widehat V_{k,h+1}-\overline V^{\pi^k}_{\phi,h+1}$ is fixed before episode $k$ and takes values in $[-H,H]$. By approximate optimism and $\overline V^*_{\phi}\ge\overline V^{\pi^k}_\phi$, $\delta_{k,h+1}+\xi_{k,h}=P_{k,h}(\widehat V_{k,h+1}-\overline V^{\pi^k}_{\phi,h+1})\ge-\epsilon_{\text{rep}}(\phi)$. If $n=0$, then $\delta_{k,h}\le H\le(1+1/H)(\delta_{k,h+1}+\xi_{k,h})+H+2\epsilon_{\text{rep}}(\phi)$. If $n\ge1$, greediness and the fact that the clamps only lower $\widehat Q_{k,h}$ give, at $(x,a)$,
\begin{equation*}
\delta_{k,h}\;\le\;\big(\widehat R^k_\phi-\overline R_\phi\big)+b_k+\big(\widehat P_k-\overline P_\phi\big)\widehat V_{k,h+1}+\overline P_\phi\big(\widehat V_{k,h+1}-\overline V^{\pi^k}_{\phi,h+1}\big).
\end{equation*}
The reward term is at most $\eta_R$. The span of $\widehat V_{k,h+1}-\overline V^{\pi^k}_{\phi,h+1}$ is at most $2H$. By Eq.~\eqref{eq:three-facts}, the last term is therefore at most $\delta_{k,h+1}+\xi_{k,h}+H\eta_P$. For the middle term, let $g:=\overline V^*_{\phi,h+1}$ and $\Delta:=\widehat V_{k,h+1}-g+(H-h)\eta$, which takes values in $[0,2H]$. Since both kernels are probability vectors, $(\widehat P_k-\overline P_\phi)\widehat V_{k,h+1}=(\widehat P_k-\overline P_\phi)g+(\widehat P_k-\overline P_\phi)\Delta$. By Eqs.~\eqref{eq:row-events} and~\eqref{eq:three-facts}, $(\widehat P_k-\overline P_\phi)g\le b_k+\frac12H\eta_P$. By Eqs.~\eqref{eq:second-order} with $B=2H$ and~\eqref{eq:three-facts},
\begin{equation*}
(\widehat P_k-\overline P_\phi)\Delta\le\frac1H\bar P_n\Delta+10XH^2\iota/n+H\eta_P,
\end{equation*}
and
\begin{equation*}
\bar P_n\Delta\;\le\;P_{k,h}\Delta+H\eta_P\;=\;P_{k,h}\big(\widehat V_{k,h+1}-g\big)+(H-h)\eta+H\eta_P\;\le\;\delta_{k,h+1}+\xi_{k,h}+H\eta+H\eta_P,
\end{equation*}
using $g\ge\overline V^{\pi^k}_{\phi,h+1}$. Collecting, with $\eta=\eta_R+\frac12H\eta_P$,
\begin{equation}
\delta_{k,h}\;\le\;\big(1+\tfrac1H\big)\big(\delta_{k,h+1}+\xi_{k,h}\big)+2b_k+10XH^2\iota/n+2\eta_R+4H\eta_P .
\label{eq:ch-recursion}
\end{equation}
Let $u_{k,h}$ be the sum of the last four terms if $n\ge1$, and $3H\ge H+2\epsilon_{\text{rep}}(\phi)$ if $n=0$. Unrolling from $h=1$ with $\delta_{k,H+1}=0$ gives $\delta_{k,1}\le e\sum_hu_{k,h}+\sum_h(1+1/H)^h\xi_{k,h}$.

\emph{Count sums.} For a pair $(x,a)$, let $n_k\le H$ be its visits in episode $k$ and $N_k=\sum_{j<k}n_j$. Episodes with $1\le N_k<H$ contribute fewer than $2H$ visits in total. If $N_k\ge H$, then $N_{k+1}\le2N_k$, which yields $n_k/\sqrt{N_k}\le2\sqrt2(\sqrt{N_{k+1}}-\sqrt{N_k})$ and $n_k/N_k\le2\ln(N_{k+1}/N_k)$, by $(b-a)/\sqrt b\le2(\sqrt b-\sqrt a)$ and $(b-a)/b\le\ln(b/a)$. Summing over episodes and pairs, with Cauchy--Schwarz over the pairs,
\begin{equation*}
\sum_{k,h:\,n\ge1}\frac1{\sqrt{N_k(x_{k,h},a_{k,h})}}\le2HXA+2\sqrt{2XAT},\qquad\sum_{k,h:\,n\ge1}\frac1{N_k(x_{k,h},a_{k,h})}\le2XA(H+\ln T),
\end{equation*}
and the steps at unvisited pairs number at most $XAH$. The biases sum to $K(2H\eta_R+4H^2\eta_P)\le8\epsilon_{\text{rep}}(\phi)K$ by Eq.~\eqref{eq:eta-to-rep}. By the Azuma--Hoeffding inequality, with probability at least $1-\delta/4$, $\sum_{k,h}(1+1/H)^h\xi_{k,h}\le2eH\sqrt{2T\ln(4/\delta)}$.

\emph{Collecting.} By approximate optimism $(\mathrm{II})_k\le\delta_{k,1}+\epsilon_{\text{rep}}(\phi)$, and (I) and (III) cost $\epsilon_{\text{rep}}(\phi)$ each. With probability at least $1-\delta$,
\begin{align*}
\mathfrak R_K\;\le\;&(3+8e)\,\epsilon_{\text{rep}}(\phi)K+14eHL_A\big(2HXA+2\sqrt2\sqrt{XAT}\big)\\
&+20eX^2AH^2\iota(H+\ln T)+3eXAH^2+2eH\sqrt{2T\ln(4/\delta)},
\end{align*}
which is part~(a), since the factor $L$ of Theorem~\ref{thm:ucbvi-fixed} satisfies $L\ge\iota\ge L_A$.

\subsubsection*{Part (b), Bayes-UCBVI}
We prove part~(b) through two lemmas that also serve calibrated AIF (Appendix~\ref{app:calibrated}). Both concern agents that use Bayes-UCBVI's rows $\rho^k_h$ on $\mathcal X$, with posterior mean $\widehat P^k_h=(n\widetilde P_n+n_0\delta_{x_{\mathrm E}})/\bar n$ for a row with $n$ visits, and that compute $V_{k,H+1}\equiv0$, $V_{k,h}(x_{\mathrm E})=r_0(H-h+1)$, and
\begin{equation}
Q_{k,h}(x,a)=\min\big\{r_0(H-h+1),\ \widehat R^k_\phi(x,a)+\widehat P^k_h(\cdot\mid x,a)^\top V_{k,h+1}+w_{k,h}(x,a)\big\},
\label{eq:dirichlet-agent}
\end{equation}
$V_{k,h}(x)=\max_aQ_{k,h}(x,a)$,
with a width $w_{k,h}\ge0$, and act greedily. Bayes-UCBVI is such an agent with $w_{k,h}=\Delta^k_h$, the quantile gap at $V_{k,h+1}$. Its clamp is never active, since $\widehat R^k_\phi\le1$ and the quantile is at most $r_0(H-h)$.

\begin{lemma}[Approximate optimism of Dirichlet-optimistic agents]
\label{lem:bucbvi-opt}
There is an event, denoted $\mathcal E_{\mathrm{opt}}$, of probability at least $1-\delta/5$ on which, for all $k,h,x,a$, with $\overline V^*_{\phi,h+1}$ extended by $r_0(H-h)$ at $x_{\mathrm E}$,
\begin{equation*}
\mathds{Q}_{p\sim\rho^k_h(x,a)}\big(p^\top\overline V^*_{\phi,h+1},\kappa^k_h(x,a)\big)\;\ge\;\big(\overline P_\phi\overline V^*_{\phi,h+1}\big)(x,a)-\tfrac12H\eta_P .
\end{equation*}
On $\mathcal E_{\mathrm{opt}}$, every agent of the form Eq.~\eqref{eq:dirichlet-agent} whose width is at least Bayes-UCBVI's quantile gap at its own next-step values, $w_{k,h}\ge\mathds{Q}_{p\sim\rho^k_h}(p^\top V_{k,h+1},\kappa^k_h)-\widehat P^k_hV_{k,h+1}$, satisfies $V_{k,h}\ge\overline V^*_{\phi,h}-(H-h+1)\eta$ on $\mathcal X$ for all $k$ and $h$.
\end{lemma}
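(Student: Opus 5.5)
The proof splits into the quantile lower bound for the fixed function $\overline V^*_{\phi,h+1}$ and a backward induction that turns it into approximate optimism for every agent whose width dominates the quantile gap.

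\emph{Step 1: the quantile bound for a fixed row and count.} Fix a row $(x,a,h)$ and a count $n$, and write $f:=\overline V^*_{\phi,h+1}$, extended by $r_0(H-h)$ at $x_{\mathrm E}$. The target quantile is
\begin{equation*}
\mathds{Q}_{p\sim\rho^k_h(x,a)}\big(p^\top f,\kappa^k_h(x,a)\big),
\end{equation*}
evaluated at the posterior $\mathrm{Dir}(n\widetilde P_n+n_0\delta_{x_{\mathrm E}})$. I would invoke the anti-concentration (lower) half of the Dirichlet quantile inequalities of \citet{tiapkin2022dirichlet}, the counterpart of their Corollary~B.7 whose upper half is Lemma~\ref{lem:tiapkin-quantile}. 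This is the step that yields their optimism. It says that the $\kappa$-quantile of $p^\top f$ dominates the quantity
\begin{equation*}
\inf\Big\{q^\top f:\ D_{\mathrm{KL}}\big(\widetilde P_n\,\|\,q\big)\le \ell/n\Big\},
\end{equation*}
up to the pseudo-state's extra mass, which only helps because $f(x_{\mathrm E})\ge\max f$. Combining this with the KL-form concentration of $\widetilde P_n$ around $\bar P_n$, one gets $\mathds{Q}\ge\bar P_nf$ with probability at least $1-(1-\kappa)\cdot\mathrm{poly}(n)$. The martingale nature of the samples needs care here: the samples of a row are not i.i.d. but have conditional laws $P_j$. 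I would therefore apply the KL deviation bound with a supermartingale (Chernoff) argument against the averaged law $\bar P_n$, as in Lemma~\ref{lem:row-conc}. The method of types gives the bound
\begin{equation*}
\mathds{P}\Big(D_{\mathrm{KL}}\big(\widetilde P_n\,\|\,\bar P_n\big)\ge\ell/n\Big)\lesssim(2n+1)^{X}e^{-\ell}.
\end{equation*}
The factors $(2n+1)^3\bar n^{3/2}$ in $\kappa$ are exactly what \citet{tiapkin2022dirichlet} use to absorb these polynomial terms. Finally, Eq.~\eqref{eq:three-facts} and $|(P-P')f|\le\frac12\mathrm{span}(f)\|P-P'\|_1$, with $\mathrm{span}(f)\le H$ on $\mathcal X$, give $\bar P_nf\ge\overline P_\phi f-\frac12H\eta_P$.

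\emph{Step 2: union bound.} Sum the failure probabilities over the at most $XAH$ rows and the counts $n\le T$. The choice $C_\kappa\delta/(XAH(2n+1)^3\bar n^{3/2})$ makes the sum over $n$ converge, and the total is at most $\delta/5$. This defines $\mathcal E_{\mathrm{opt}}$. The point is that $f$ is deterministic, so no covering over data-dependent functions is needed.

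\emph{Step 3: backward induction.} On $\mathcal E_{\mathrm{opt}}$, prove $V_{k,h}\ge\overline V^*_{\phi,h}-(H-h+1)\eta$ downward in $h$. The pseudo-state holds by definition. The clamp is harmless since $r_0(H-h+1)\ge\overline Q^*_{\phi,h}$. For the unclamped branch:
\begin{enumerate}
\item Use monotonicity of quantiles in $f$ and translation equivariance (adding the constant $(H-h)\eta$ on $\mathcal X$). The induction hypothesis $V_{k,h+1}\ge f-(H-h)\eta$ on $\mathcal X$, with equality at $x_{\mathrm E}$, then gives $\mathds{Q}(p^\top V_{k,h+1})\ge\mathds{Q}(p^\top f)-(H-h)\eta$.
\item The width assumption gives $\widehat P^k_hV_{k,h+1}+w_{k,h}\ge\mathds{Q}(p^\top V_{k,h+1})$.
\item Use $\widehat R^k_\phi\ge\overline R_\phi-\eta_R$ from Eq.~\eqref{eq:rhat-two-sided}.
\end{enumerate}
Adding these with Step 1 yields $Q_{k,h}\ge\overline Q^*_{\phi,h}-\eta_R-\frac12H\eta_P-(H-h)\eta=\overline Q^*_{\phi,h}-(H-h+1)\eta$. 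Maximising over $a$ closes the induction.

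The main obstacle is Step 1: transferring Tiapkin et al.'s i.i.d. Dirichlet anti-concentration to non-identically distributed row samples. I would handle it by working with $\bar P_n$ and a martingale KL bound, accepting the misspecification cost $\frac12H\eta_P$.
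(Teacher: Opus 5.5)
\textbf{Steps 1--2 have a genuine gap.} Your Step~3 is the same backward induction the paper uses. The problem is the full-simplex KL concentration: it costs a polynomial factor whose exponent grows with $X$, and the fixed quantile schedule $\kappa^k_h$ cannot absorb it.

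Since $e^{-\ell}=1-\kappa^k_h=C_\kappa\delta/(XAH(2n+1)^3\bar n^{3/2})$, your own bound gives a failure probability at count $n$ of order
$$(2n+1)^{X}e^{-\ell}=(2n+1)^{X-3}\,\bar n^{-3/2}\,C_\kappa\delta/(XAH).$$
The exponents $3$ and $3/2$ in $\kappa^k_h$ do not depend on $X$. So for $X\ge4$ these terms are not summable in $n$, and individually they exceed one for large $n$. The union bound of Step~2 therefore cannot deliver $\delta/5$.

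Making the whole ball $\{q:D_{\mathrm{KL}}(\widetilde P_n\|q)\le\ell/n\}$ valid would need $\ell\gtrsim X\ln n$. That is a different, $X$-dependent quantile order, which is exactly the $\ell_1$-type price the lemma is meant to avoid.

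There is also a slip in Step~1. You need the quantile to dominate the $\sup$ of $q^\top f$ over that ball, not the $\inf$. With the $\inf$, knowing that $\bar P_n$ lies in the ball does not give $\mathds{Q}\ge\bar P_nf$.

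\textbf{The missing idea: only the projection onto the fixed $f$ matters.} Tiapkin et al.'s anti-concentration is stated through the scalar
$$\mathcal K_{\inf}(\bar p,\mu,f)=\inf\{D_{\mathrm{KL}}(\bar p\|q):q^\top f\ge\mu\}.$$
It has the dual form $\max_{\lambda\in[0,1]}\mathds{E}_{\bar p}\ln\bigl(1-\lambda(f-\mu)/(b-\mu)\bigr)$, with $b=f(x_{\mathrm E})$. Its deviation is controlled by a one-parameter method of mixtures over $\lambda\in[0,1]$. The resulting polynomial factor is independent of $X$, and it is what $(2n+1)^3\bar n^{3/2}$ absorbs.

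\textbf{How the paper handles non-identical samples.} The paper keeps this scalar structure. It replaces the mean by the fixed reference $u:=\overline P_\phi f-\tfrac12(H-h)\eta_P$, which lower-bounds every conditional mean $P_jf$ by Eq.~\eqref{eq:three-facts}. Then:
\begin{itemize}
\item each factor $1-\lambda(f(x'_j)-u)/(b-u)$ is nonnegative with conditional mean at most one;
\item the products are therefore nonnegative supermartingales along the row's visits;
\item Ville's inequality yields the same event as in the i.i.d.\ case;
\item the remaining deterministic steps (Dirichlet anti-concentration and the case $\bar n<2n_0$) give $\mathds{Q}\ge u$.
\end{itemize}
Using a fixed $u$ instead of the random average $\bar P_n$ also removes the random-centering difficulty in your proposed supermartingale argument.
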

\begin{proof}
\emph{The quantile.} Fix $(x,a,h)$ and let $f:=\overline V^*_{\phi,h+1}$, $b:=f(x_{\mathrm E})=r_0(H-h)$ and $u:=\overline P_\phi f-\frac12(H-h)\eta_P$. The claim is trivial for $h=H$, where $f\equiv0$. We therefore let $h<H$, whence $u<b$. At the $j$-th visit of the row, the conditional mean of $f$ is $P_jf\ge\overline P_\phi f-\frac12(H-h)\eta_P=u$, by Eq.~\eqref{eq:three-facts}, since $f$ takes values in $[0,H-h]$ on $\mathcal X$. The optimism lemma of \citet{tiapkin2022dirichlet} (their Lemmas~B.4--B.5 and Theorem~C.4) shows, for i.i.d.\ successors with mean $\overline P_\phi f$, that the $\kappa^k_h$-quantile exceeds $\overline P_\phi f$ for all $k$, $h$, $x$, $a$ on an event of probability at least $1-\delta/5$. Its only probabilistic step is the method of mixtures applied to the nonnegative martingales
\begin{equation*}
\prod_{j\le n}\big(1-\lambda(f(x'_j)-\overline P_\phi f)/(b-\overline P_\phi f)\big),
\end{equation*}
 $\lambda\in[0,1]$. With $u$ in place of $\overline P_\phi f$, each factor is nonnegative and has conditional mean $1-\lambda(P_jf-u)/(b-u)\le1$, making the products nonnegative supermartingales along the visits of the row, and Ville's inequality gives the same event with $u$ in place of $\overline P_\phi f$. The remaining steps of their proof, the anti-concentration of the Dirichlet distribution and the case $\bar n<2n_0$, are deterministic and use only $u\le H-h$. They give $\mathds{Q}(p^\top f,\kappa^k_h)\ge u\ge\overline P_\phi f-\tfrac12H\eta_P$.

\emph{The induction.} Let $d:=(H-h)\eta$, and assume $V_{k,h+1}\ge f-d$ on $\mathcal X$, with $V_{k,h+1}(x_{\mathrm E})=f(x_{\mathrm E})$. The quantile is monotone in the function and $p^\top(f-d\mathds 1_{\mathcal X})\ge p^\top f-d$, which yields
\begin{equation*}
\widehat P^k_hV_{k,h+1}+w_{k,h}\ge\mathds{Q}(p^\top V_{k,h+1},\kappa^k_h)\ge\mathds{Q}(p^\top f,\kappa^k_h)-d\ge\overline P_\phi f-\frac12H\eta_P-d.
\end{equation*}
With $\widehat R^k_\phi\ge\overline R_\phi-\eta_R$ (Eq.~\eqref{eq:rhat-two-sided}) and $\overline Q^*_{\phi,h}\le H-h+1\le r_0(H-h+1)$, this gives $Q_{k,h}\ge\overline Q^*_{\phi,h}-\eta-d$, hence $V_{k,h}\ge\overline V^*_{\phi,h}-(H-h+1)\eta$.
\end{proof}

\begin{lemma}[Direct regret bound for Dirichlet-optimistic agents]
\label{lem:direct}
Let $r_0=2$, and let an agent of the form Eq.~\eqref{eq:dirichlet-agent} satisfy $V_{k,h}\ge\overline V^*_{\phi,h}-(H-h+1)\eta$ on $\mathcal X$ for all $k$ and $h$, on an event $\mathcal E'$. There is an event $\mathcal E_{\mathrm{mart}}$ of probability at least $1-\delta/4$ such that, on $\mathcal E'\cap\mathcal E_{\mathrm{conc}}\cap\mathcal E_{\mathrm{mart}}$,
\begin{align*}
\mathfrak R_K\;\le\;&(3+13e)\,\epsilon_{\text{rep}}(\phi)K+e\sum_{k,h}w_{k,h}(x_{k,h},a_{k,h})+2eH\sqrt{\iota XAHT}\\
&+e\big(4n_0H+20XH^2\iota\big)XAH(1+\ln K)+4eXAH^2+3eH\sqrt{2T\ln(4/\delta)} .
\end{align*}
\end{lemma}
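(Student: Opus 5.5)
The argument will follow the template of the UCBVI-CH proof of part~(a), adapted to Eq.~\eqref{eq:dirichlet-agent}. Its ingredients are the three-way split Eq.~\eqref{eq:fixed-split}, a one-step recursion for $\delta_{k,h}:=V_{k,h}(x_{k,h})-\overline V^{\pi^k}_{\phi,h}(x_{k,h})$, unrolling with the factor $(1+1/H)^H\le e$, and count sums. Terms (I) and (III) cost $\epsilon_{\text{rep}}(\phi)$ each by Lemma~\ref{lem:bias-metric}. By the assumed approximate optimism on $\mathcal E'$, $(\mathrm{II})_k\le\delta_{k,1}+\epsilon_{\text{rep}}(\phi)$. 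Approximate optimism also gives $\delta_{k,h+1}+\xi_{k,h}\ge-\epsilon_{\text{rep}}(\phi)$. Here $\xi_{k,h}$ is the martingale difference built from $P_{k,h}$ and $V_{k,h+1}-\overline V^{\pi^k}_{\phi,h+1}$, which now takes values in $[-H,r_0H]$, so $|\xi_{k,h}|\le3H$ and Azuma--Hoeffding defines $\mathcal E_{\mathrm{mart}}$ at level $\delta/4$. This accounts for the term $3eH\sqrt{2T\ln(4/\delta)}$.

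\emph{The recursion.} At a visited row I drop the clamp, since it only lowers $Q$, and use greediness. I write $\widehat P^k_h=(n\widetilde P_n+n_0\delta_{x_{\mathrm E}})/\bar n$, so that
\[
\widehat P^k_hV_{k,h+1}=\widetilde P_nV_{k,h+1}+\tfrac{n_0}{\bar n}\big(V_{k,h+1}(x_{\mathrm E})-\widetilde P_nV_{k,h+1}\big)\le\widetilde P_nV_{k,h+1}+2n_0H/\bar n .
\]
The remaining difference $(\widetilde P_n-\overline P_\phi)V_{k,h+1}$ splits as in part~(a) into $(\widetilde P_n-\bar P_n)g$ with $g=\overline V^*_{\phi,h+1}$, plus $(\widetilde P_n-\bar P_n)\Delta$ with $\Delta=V_{k,h+1}-g+(H-h)\eta\in[0,3H]$, plus misspecification terms. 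The first is at most $H\sqrt{\iota/(2n)}$ on $\mathcal E_{\mathrm{conc}}$. The second is at most $\frac1H\bar P_n\Delta+15XH^2\iota/n$ by Eq.~\eqref{eq:second-order}. The misspecification terms are at most a constant times $H\eta_P$, by Eq.~\eqref{eq:three-facts}. The reward error is at most $\eta_R$ by Eq.~\eqref{eq:rhat-two-sided}. Together these give
\[
\delta_{k,h}\le(1+\tfrac1H)(\delta_{k,h+1}+\xi_{k,h})+w_{k,h}+H\sqrt{\iota/(2n)}+(4n_0H+20XH^2\iota)/\bar n+c(\eta_R+H\eta_P).
\]
Unvisited rows are handled as in part~(a) with the cost $\mathcal O(H)$, which accounts for the $4eXAH^2$ term. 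The rows are triples $(x,a,h)$ here rather than pooled pairs, and this changes the count sums.

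\emph{Count sums.} Each triple is visited at most once per episode, so $\sum 1/\sqrt{n}\le 2\sqrt{XAHK}$ by Cauchy--Schwarz. Multiplied by $H\sqrt{\iota}$, this gives $2H\sqrt{\iota XAHT}/\sqrt H\cdot\sqrt H$, matching $2eH\sqrt{\iota XAHT}$ up to the stated constant. Similarly, $\sum 1/\bar n\le XAH(1+\ln K)$. The bias terms sum to $\mathcal O(\epsilon_{\text{rep}}(\phi)K)$ via Eq.~\eqref{eq:eta-to-rep}, and adding the two $\epsilon_{\text{rep}}$ costs from (I) and (III) gives the constant $3+13e$. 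The width sum $e\sum w_{k,h}$ is left unevaluated, as the statement requires.

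\emph{Main obstacle.} The delicate step is the pseudo-state bookkeeping. The posterior mean differs from $\widetilde P_n$ by mass $n_0/\bar n$ on $x_{\mathrm E}$, where $V$ reaches $r_0H$, so $\Delta$ has range $\mathcal O(H)$ only after the clamp. I must also check that Eq.~\eqref{eq:second-order} is applied on $\mathcal X$ alone, excluding $x_{\mathrm E}$, and that the $1/\bar n$ versus $1/n$ conversion costs only a constant factor once $n\ge n_0$. For $n<n_0$, the $4n_0H/\bar n$ term absorbs the cost.
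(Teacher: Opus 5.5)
Your proposal follows the paper's proof essentially step for step. It uses the same split into (I)--(III), the same $\xi_{k,h}$ with $|\xi_{k,h}|\le(1+r_0)H$, the same shifted difference $\Delta=V_{k,h+1}-g+(H-h)\eta\ge0$ fed into Eq.~\eqref{eq:second-order}, the same charge at unvisited rows, and the same unrolling and count sums. The only organisational difference concerns the pseudo-state. You peel its mass off first, $\widehat P^k_hV_{k,h+1}\le\widetilde P_nV_{k,h+1}+2n_0H/\bar n$, and then work with $\widetilde P_n$. The paper instead extends $g$ to $x_{\mathrm E}$ by $V_{k,h+1}(x_{\mathrm E})$ and keeps the factor $n/\bar n$ in front of $\widetilde P_n$. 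That factor turns its rates into $\bar n^{-1/2}$ and $\bar n^{-1}$, via $(n/\bar n)n^{-1/2}\le\bar n^{-1/2}$ and $(n/\bar n)\cdot5XHB\iota/n=5XHB\iota/\bar n$.

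Two points in your write-up need correcting.

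\emph{The conversion to $\bar n$.} Your justification in the last paragraph is false. For $n<n_0$, the term $4n_0H/\bar n\le4H$ cannot absorb $15XH^2\iota/n$, which is of order $XH^2\iota$ at $n=1$. The conversion is unnecessary, however. You can keep your rates in $n$, since $\sum_{n=1}^{N-1}n^{-1/2}\le2\sqrt N$ and $\sum_{n=1}^{N-1}n^{-1}\le1+\ln K$ give exactly the totals the statement uses. Alternatively, retain the factor $n/\bar n$ as the paper does.

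\emph{The Cauchy--Schwarz total.} It is $\sum_{\text{rows}}2\sqrt N\le2\sqrt{XAHT}$, not $2\sqrt{XAHK}$. For each $h$, the counts of the rows $(x,a,h)$ sum to $K$, so $\sum_{x,a}\sqrt N\le\sqrt{XAK}$. Summing over $h$ gives $H\sqrt{XAK}=\sqrt{XAHT}$. With this value, $H\sqrt{\iota/2}\cdot2\sqrt{XAHT}\le2H\sqrt{\iota XAHT}$ follows directly, without the garbled expression.

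Your unspecified bias constant is fine. With your ranges, the per-step bias is at most $2\eta_R+5.5H\eta_P$, within the paper's $2\eta_R+6.5H\eta_P$ and hence within $(3+13e)\,\epsilon_{\text{rep}}(\phi)K$ overall.
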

\begin{proof}
Let $\delta_{k,h}:=V_{k,h}(x_{k,h})-\overline V^{\pi^k}_{\phi,h}(x_{k,h})$, $(x,a):=(x_{k,h},a_{k,h})$, let $n$ be the visit count of the row $(x,a,h)$ before episode $k$, $\bar n=n+n_0$, and let $\xi_{k,h}$ be as in part~(a) with $V_{k,h+1}$ in place of $\widehat V_{k,h+1}$, now with $|\xi_{k,h}|\le(1+r_0)H$, since $V_{k,h+1}-\overline V^{\pi^k}_{\phi,h+1}$ takes values in $[-H,r_0H]$ on $\mathcal X$. As in part~(a), $\delta_{k,h+1}+\xi_{k,h}\ge-\epsilon_{\text{rep}}(\phi)$. If $n=0$, then
\begin{equation*}
\delta_{k,h}\le r_0H\le(1+1/H)(\delta_{k,h+1}+\xi_{k,h})+r_0H+2\epsilon_{\text{rep}}(\phi),
\end{equation*}
which happens at most once for each of the $XAH$ rows, since a row is visited at most once per episode. If $n\ge1$, greediness and the clamp give, at $(x,a)$,
\begin{equation*}
\delta_{k,h}\le\big(\widehat R^k_\phi-\overline R_\phi\big)+w_{k,h}+\big(\widehat P^k_h-\overline P_\phi\big)V_{k,h+1}+\overline P_\phi\big(V_{k,h+1}-\overline V^{\pi^k}_{\phi,h+1}\big),
\end{equation*}
where $\overline P_\phi$ puts no mass on $x_{\mathrm E}$. The reward term is at most $\eta_R$, and Eq.~\eqref{eq:three-facts} bounds the last term by $\delta_{k,h+1}+\xi_{k,h}+\frac12(1+r_0)H\eta_P$. For the middle term let $g:=\overline V^*_{\phi,h+1}$ on $\mathcal X$, $g(x_{\mathrm E}):=r_0(H-h)=V_{k,h+1}(x_{\mathrm E})$, and $\Delta:=V_{k,h+1}-g+(H-h)\eta$ on $\mathcal X$, which takes values in $[0,2r_0H]$. Then
\begin{equation*}
\big(\widehat P^k_h-\overline P_\phi\big)V_{k,h+1}=\big(\widehat P^k_h-\overline P_\phi\big)g+\tfrac n{\bar n}\widetilde P_n\Delta-\overline P_\phi\Delta+\tfrac{n_0}{\bar n}(H-h)\eta .
\end{equation*}
The first part equals
\begin{equation*}
\frac n{\bar n}(\widetilde P_n-\overline P_\phi)g+\frac{n_0}{\bar n}(g(x_{\mathrm E})-\overline P_\phi g),
\end{equation*}
 which is at most $H\sqrt{\iota/(2\bar n)}+\frac12H\eta_P+n_0r_0H/\bar n$ by Eqs.~\eqref{eq:row-events} and~\eqref{eq:three-facts} and $(n/\bar n)\sqrt{1/n}\le\sqrt{1/\bar n}$. Since $\Delta\ge0$, the middle two terms are at most
\begin{equation*}
\frac n{\bar n}(\widetilde P_n-\bar P_n)\Delta+(\bar P_n-\overline P_\phi)\Delta\le\frac1H\bar P_n\Delta+10r_0XH^2\iota/\bar n+r_0H\eta_P,
\end{equation*}
 by Eqs.~\eqref{eq:second-order} with $B=2r_0H$ and~\eqref{eq:three-facts}. As in part~(a), $\bar P_n\Delta\le\delta_{k,h+1}+\xi_{k,h}+(H-h)\eta+r_0H\eta_P$, and $(n_0/\bar n)(H-h)\eta\le n_0H/\bar n$. Collecting, with $r_0=2$ and $\eta_P\le H\eta_P$,
\begin{equation*}
\delta_{k,h}\le\big(1+\tfrac1H\big)\big(\delta_{k,h+1}+\xi_{k,h}\big)+w_{k,h}+H\sqrt{\iota/(2\bar n)}+\big(4n_0H+20XH^2\iota\big)/\bar n+2\eta_R+6.5H\eta_P .
\end{equation*}
A row visited $N\le K$ times contributes $\sum_{n=1}^{N-1}(n+n_0)^{-1/2}\le2\sqrt N$ and $\sum_{n=1}^{N-1}(n+n_0)^{-1}\le1+\ln K$, and $\sum_{\text{rows}}2\sqrt N\le2\sqrt{XAHT}$ by Cauchy--Schwarz. The biases sum to $K(2H\eta_R+6.5H^2\eta_P)\le13\epsilon_{\text{rep}}(\phi)K$. By the Azuma--Hoeffding inequality, the event $\mathcal E_{\mathrm{mart}}$ on which
\begin{equation*}
\sum_{k,h}(1+1/H)^h\xi_{k,h}\le e(1+r_0)H\sqrt{2T\ln(4/\delta)}
\end{equation*}
has probability at least $1-\delta/4$. Unrolling and summing as in part~(a) on $\mathcal E'\cap\mathcal E_{\mathrm{conc}}\cap\mathcal E_{\mathrm{mart}}$, with $(\mathrm{II})_k\le\delta_{k,1}+\epsilon_{\text{rep}}(\phi)$, gives the claim.
\end{proof}

\begin{proof}[Proof of part~(b)]
Lemma~\ref{lem:bucbvi-opt} with $w_{k,h}=\Delta^k_h$ gives approximate optimism on $\mathcal E_{\mathrm{opt}}$. By Lemma~\ref{lem:tiapkin-quantile} and $\mathds{V}_{\widehat P^k_h}[V_{k,h+1}]\le(r_0H)^2/4$, $\Delta^k_h\le r_0H\sqrt{\ell/\bar n}+2\sqrt2\,r_0H\ell/\bar n$, which yields
\begin{equation*}
\sum_{k,h}\Delta^k_h\le2r_0H\sqrt{\bar\ell XAHT}+2\sqrt2\,r_0H\bar\ell XAH(1+\ln K),
\end{equation*}
 with $\bar\ell:=\max_{k,h,x,a}\ell^k_h(x,a)=\mathcal O(\ln(XAHK/\delta))$. Lemma~\ref{lem:direct} applies on $\mathcal E_{\mathrm{opt}}\cap\mathcal E_{\mathrm{conc}}$, and the three failure probabilities sum to at most $\frac{19}{20}\delta$. This gives part~(b) with the factor $L=\max\{\iota,\bar\ell,n_0,1+\ln K\}$ of Theorem~\ref{thm:ucbvi-fixed}. Parts~(a) and~(b) sum the widths through the range of the value function. For Bayes-UCBVI this costs a factor $\sqrt H$ against the variance-aware analysis behind part~(c), which sums them through the law of total variance. We do not know whether that analysis survives misspecification, since its empirical-variance and total-variance steps would need concentration for data-dependent value functions under non-identical sampling.

\subsubsection*{Part (c), exact abstractions}
If $\eta_R=\eta_P=0$, then $\phi_\#P^*(\cdot\mid s,a)=\overline P_\phi(\cdot\mid x,a)$ and $R^*(s,a)=\overline R_\phi(x,a)$ for every $s\in\phi^{-1}(x)$. Given the history, $x_{k,h+1}$ therefore has law $\overline P_\phi(\cdot\mid x_{k,h},a_{k,h})$, and the reward is $\overline R_\phi(x_{k,h},a_{k,h})$. The abstract data are therefore generated by the episodic MDP $\overline{\mathcal M}_\phi$ on $\mathcal X$, which is time-homogeneous. An agent that observes only these data and is given $\overline R_\phi$ is UCBVI or Bayes-UCBVI run on $\overline{\mathcal M}_\phi$. Hence, Theorems~\ref{thm:ucbvi-ground} and~\ref{thm:bayesucbvi-ground} bound its regret in $\overline{\mathcal M}_\phi$, with $S$ replaced by $X$. Since Lemma~\ref{lem:bias-metric}(b)--(c) with $\epsilon_{\text{rep}}(\phi)=0$ make terms~(I) and~(III) of Eq.~\eqref{eq:fixed-split} vanish, its regret in $\overline{\mathcal M}_\phi$ equals its regret in $\mathcal M$.
\end{proof}

\subsection{The Cost of Acting Through an Abstraction}
\label{app:lb-abstraction}

\begin{proposition}[Acting through an abstraction has a linear cost]
\label{prop:lb-abstraction}
For every $H\ge4$ and every $\eta>0$ with $(H-2)\eta\le2$, some MDP with $S=4$ and $A=2$ has an abstraction $\phi$ with $X=3$, $\eta_R(\phi)=0$ and $\eta_P(\phi)=\eta$ on which every agent that sees the current state only through $\phi$ has regret $\mathfrak R_K\ge\epsilon_{\text{rep}}(\phi)K/32$ deterministically, while UCBVI on its ground states has $\mathfrak R_K=\widetilde{\mathcal O}(H\sqrt T)$.
\end{proposition}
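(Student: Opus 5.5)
We construct an explicit four-state instance, show that every agent acting through $\phi$ loses a constant fraction of $\epsilon_{\text{rep}}(\phi)$ in every episode, and then invoke Theorem~\ref{thm:ucbvi-ground} for the ground-state comparison.

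\emph{The instance.} Take $\mathcal S=\{u,v,g,b\}$, $\mathcal A=\{1,2\}$, and the abstraction $\phi$ that merges $u$ and $v$ into one block $x_0$ and keeps $g$ and $b$ as singletons, so $X=3$.
\begin{itemize}
\item The states $g$ and $b$ are absorbing, with reward $1$ at $g$ and $0$ at $b$ under both actions.
\item At $u$ and $v$ both actions have reward $0$, so $\eta_R(\phi)=0$.
\item From $u$, action $1$ moves to $g$ with probability $\eta/2$, and action $2$ moves to $b$ with probability $\eta/2$.
\item From $v$, action $1$ moves to $b$ with probability $\eta/2$, and action $2$ moves to $g$ with probability $\eta/2$.
\item The remaining mass $1-\eta/2$ is split equally between $u$ and $v$.
\end{itemize}
The pushforwards of $u$ and $v$ under a fixed action differ only on $g$ and $b$, each by $\eta/2$, so $\eta_P(\phi)=\eta$ and $\epsilon_{\text{rep}}=\tfrac12H^2\eta$. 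The start state is $s_1=u$ in every episode.

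\emph{Key step: the agent cannot exploit the hidden state.} While the chain remains in the block, the next ground state is $u$ or $v$ with probability $\tfrac12$ each. This holds independently of the action and of the past abstract observations. Hence, from step $2$ onwards, conditional on everything a $\phi$-measurable agent sees (its abstract history and its own randomness), the current ground state in $x_0$ is uniform. Whatever action the agent then chooses, it moves to $g$ with probability $\eta/4$ and to $b$ with probability $\eta/4$. The optimal policy sees the ground state, so it always moves to $g$ with probability $\eta/2$ and never to $b$. A transition to $g$ at step $h$ is worth $H-h$ future reward.

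\emph{Computing the gap.} Condition on the chain still being in the block at step $h$. The survival probability is at least $(1-\eta/2)^{H}$. By the assumption $(H-2)\eta\le2$ (together with $H\ge4$ to cover the boundary steps), this is bounded below by an absolute constant, for instance $e^{-2}$ after adjusting. The optimal policy then earns, in expectation, at least $\tfrac{\eta}{4}(H-h)$ more per step $h\ge2$ than the agent: its $g$-probability is higher by $\eta/4$, and the agent also wastes mass on $b$. Summing over $h=2,\dots,H-1$ and comparing the two value functions through the performance-difference identity gives a loss of order $\eta H^2/8$ times the survival constant. Because $\pi^k$ is fixed before the episode, this is a deterministic per-episode shortfall. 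With careful constants it is at least $\tfrac{1}{32}\cdot\tfrac12H^2\eta=\epsilon_{\text{rep}}K/32$ summed over $K$ episodes.

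\emph{Ground-state comparison and main obstacle.} On the ground MDP, $S=4$ and $A=2$, so Theorem~\ref{thm:ucbvi-ground} gives $\widetilde{\mathcal O}(H\sqrt T)$ for UCBVI directly. I expect the main obstacle to be bounding the agent's value uniformly over history-dependent (non-Markov) $\phi$-measurable policies. I would handle it by a step-by-step coupling argument based on the conditional uniformity of the ground state given the agent's information, applied to the agent's actual history, rather than by appealing to Lemma~\ref{lem:bias-metric}(c). That lemma covers only Markov $\phi$-measurable policies. The second delicate point is tracking the survival constant so that the final factor $1/32$ holds.
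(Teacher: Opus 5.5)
Your proof is correct in substance and follows the paper's route. You merge two states whose best actions differ and make every transition that stays in the pool land uniformly on it. The current ground state is then independent of everything a $\phi$-measurable agent knows, so the agent picks the right action with probability exactly $\tfrac12$ at each pool step, and you cite Theorem~\ref{thm:ucbvi-ground} for the ground comparison. The paper uses a separate start block and lets the wrong action merely stay in the pool, comparing closed-form values with entry rates $2\delta_0$ and $\delta_0$. Your absorbing trap $b$ and in-pool start work equally well, and you correctly exclude step~$1$ from the gap.

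The step that does not go through as you sketch it is the constant $1/32$. The bound $(1-\eta/2)^H\ge e^{-2}$ is false at $H=4$, $\eta=1$, where the left side is $1/16$. More to the point, no survival constant that is uniform over the steps delivers $1/32$ at $H=4$. There $r:=\eta/2$ may equal $\tfrac12$ and the survival to step~$3$ is $\tfrac14$, so the uniform bound gives $\tfrac14\sum_{h=2}^{3}\tfrac\eta4(H-h)=\tfrac{3\eta}{16}<\tfrac\eta4=\epsilon_{\text{rep}}(\phi)/32$. Keep the step-dependent survival instead.

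In your instance the optimal policy and every $\phi$-measurable agent leave the pool with the same probability $r$ at every step, so you need no performance-difference lemma. For an agent that plays action~$1$ at step~$1$, the gap is exactly
\[
G=\frac r2\sum_{i=1}^{H-2}(1-r)^i\,(H-1-i),
\]
and any other step-$1$ behaviour only increases it. Since $r\le1/(H-2)$, you have $(1-r)^i\ge1-i/(H-2)$. The resulting sum equals $(H-1)(H-3)/3$, so $G\ge\eta(H-1)(H-3)/12$. This is at least $H^2\eta/64=\epsilon_{\text{rep}}(\phi)/32$ because $13H^2-64H+48\ge0$ for $H\ge4$. That last inequality is tight at $H=4$, so the factor $1/32$ holds, but with no slack at the smallest horizon; a cruder survival estimate will not reach it.
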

The instance below merges two states whose best actions differ. An agent that cannot tell them apart acts correctly half of the time, and each wrong choice forfeits probability $\eta/2$ of entering a rewarding absorbing state, $\Theta(\eta H^2)=\Theta(\epsilon_{\text{rep}}(\phi))$ per episode. \citet[Thm.~2]{ortner2014selecting} show the average-reward analogue for the aggregated model's optimal gain, $\Theta(\epsilon D)$ in the diameter $D$.

\begin{proof}[Proof of Proposition~\ref{prop:lb-abstraction}]
\emph{The instance.} Let $\mathcal S=\{s_0,u_A,u_B,g\}$, $\mathcal A=\{a,b\}$, and $\delta_0:=\eta/4$, so that $(H-2)\delta_0\le\frac12$ by the range of $\eta$. Every episode starts at $s_0$. The reward is $R^*(g,\cdot)=1$ and $R^*=0$ elsewhere. From $s_0$ either action leads to $u_A$ or $u_B$ with probability $\frac12$ each. From $u_A$, action $a$ leads to $g$ with probability $2\delta_0$ and otherwise to $u_A$ or $u_B$ with probability $\frac12(1-2\delta_0)$ each, while action $b$ leads to $u_A$ or $u_B$ with probability $\frac12$ each. The state $u_B$ is the mirror image, with the roles of $a$ and $b$ exchanged. The state $g$ is absorbing. The abstraction maps $s_0\mapsto x_0$, $\{u_A,u_B\}\mapsto x_1$ and $g\mapsto x_g$, and thus $X=3$.

\emph{Spreads.} All states of the block $\{u_A,u_B\}$ have reward $0$ under both actions. Hence, $\eta_R(\phi)=0$. Under action $a$, $\phi_\#P^*(\cdot\mid u_A,a)$ puts mass $1-2\delta_0$ on $x_1$ and $2\delta_0$ on $x_g$, whereas $\phi_\#P^*(\cdot\mid u_B,a)$ puts all its mass on $x_1$. The two pushforwards are at $\ell_1$ distance $4\delta_0=\eta$, under $b$ as well. The other blocks are singletons. Hence $\eta_P(\phi)=\eta$ and $\epsilon_{\text{rep}}(\phi)=\frac12H^2\eta=2H^2\delta_0$.

\emph{Values.} Call $\{u_A,u_B\}$ the pool. From the pool, the optimal ground policy, which plays $a$ at $u_A$ and $b$ at $u_B$, enters $g$ with probability $r^*=2\delta_0$ at every step. Consider any agent whose action at a pool step depends on the current state only through $\phi$, possibly at random and on the whole history. Every transition into the pool lands on $u_A$ and $u_B$ with equal probability, whatever the previous state and action, and everything the agent observes (abstract states, actions, rewards) has the same law under both. The current ground state is therefore uniform on $\{u_A,u_B\}$ given the agent's information, its action is the correct one with probability exactly $\frac12$, and it enters $g$ with probability $r=\frac12\cdot2\delta_0=\delta_0$ at every pool step. From the pool with $m$ transitions left, the value under an entry probability $r$ is $v(m;r)=\sum_{j=1}^m\big(1-(1-r)^j\big)$, since $g$ is absorbing and the agent is in $g$ at the $j$-th subsequent step with probability $1-(1-r)^j$. The episode spends step $1$ at $s_0$ and step $2$ in the pool. Hence, $V^*_1(s_0)=v(H-2;2\delta_0)$ and $V^\pi_1(s_0)=v(H-2;\delta_0)$ for every such agent's policy $\pi$, and
\begin{equation}
G\;:=\;V^*_1(s_0)-V^\pi_1(s_0)\;=\;\sum_{j=1}^{H-2}\Big[(1-\delta_0)^j-(1-2\delta_0)^j\Big].
\end{equation}

\emph{Lower bound.} Since $1-2\delta_0\le(1-\delta_0)^2$, each summand is at least $(1-\delta_0)^j\big(1-(1-\delta_0)^j\big)$. For $j\le H-2$ we have $j\delta_0\le\frac12$, hence $(1-\delta_0)^j\ge1-j\delta_0\ge\frac12$ and
\begin{equation*}
1-(1-\delta_0)^j\ge1-e^{-j\delta_0}\ge j\delta_0-\frac12(j\delta_0)^2\ge\frac34j\delta_0.
\end{equation*}
Each summand is thus at least $\frac38j\delta_0$, and
\begin{align}
G\;&\ge\;\frac38\,\delta_0\sum_{j=1}^{H-2}j\;=\;\frac{3}{16}\,\delta_0\,(H-2)(H-1) \nonumber\\
&=\;\frac{3}{32}\,\epsilon_{\text{rep}}(\phi)\,\frac{(H-2)(H-1)}{H^2}\;\ge\;\frac{9}{256}\,\epsilon_{\text{rep}}(\phi)\;\ge\;\frac{\epsilon_{\text{rep}}(\phi)}{32},
\end{align}
using that $(H-2)(H-1)/H^2=1-3/H+2/H^2$ is increasing for $H\ge2$ and equals $\frac38$ at $H=4$. Every episode starts at $s_0$ and every admissible policy has the same value. Hence, $\mathfrak R_K=KG\ge\epsilon_{\text{rep}}(\phi)K/32$ for every agent in the class, deterministically.

\emph{Ground optimism.} Since the ground MDP has $S=4$ and $A=2$, Theorem~\ref{thm:ucbvi-ground} gives
\begin{equation*}
\mathfrak R_K\le30HL\sqrt{8K}+2500\cdot32\,H^2L^2+4H^{3/2}\sqrt{KL}=\widetilde{\mathcal O}(H\sqrt T).
\end{equation*}
\end{proof}

\subsection{The Price of an Unstructured Class and Proof of Corollary~\ref{cor:minimax}}
\label{app:unstructured}

\begin{corollary}[Minimax price of a fixed abstraction]
\label{cor:minimax}
For $X\ge4$ and $0<\bar\varepsilon\le H$, let $\mathcal C(X,\bar\varepsilon)$ contain the episodic MDPs with $A=2$ and an abstraction $\phi$ onto $X$ blocks with $\epsilon_{\text{rep}}(\phi)\le\bar\varepsilon$. For $H\ge4$ and $K\ge X$, every agent that observes the state only through $\phi$ has
\begin{equation*}
\sup_{\mathcal C(X,\bar\varepsilon)}\mathds{E} \mathfrak R_K\;\ge\;\bar\varepsilon K/64+\sqrt{HXT}/600.
\end{equation*}
UCBVI-CH on $\phi$ attains $\mathcal O(\bar\varepsilon K)+\widetilde{\mathcal O}(H\sqrt{XT})$ with probability $1-\delta$ for $T\ge H^4X^3A$.
\end{corollary}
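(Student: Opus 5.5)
The lower bound has two terms. Since $\max\{u,v\}\ge\frac12(u+v)$, it suffices to exhibit, for each agent, one member of $\mathcal C(X,\bar\varepsilon)$ with $\mathds{E}\mathfrak R_K\ge\bar\varepsilon K/32$ and another with $\mathds{E}\mathfrak R_K\ge\sqrt{HXT}/300$. Both instances use $A=2$ and an abstraction onto exactly $X$ blocks.

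\emph{Linear term.} For the linear term I would use Proposition~\ref{prop:lb-abstraction}. I choose $\eta$ so that $\epsilon_{\text{rep}}(\phi)=\frac12H^2\eta=\bar\varepsilon$, that is, $\eta=2\bar\varepsilon/H^2$. The side condition $(H-2)\eta\le2$ then follows from $\bar\varepsilon\le H$, because $(H-2)\cdot 2\bar\varepsilon/H^2\le 2(H-2)/H\le2$. The instance has only three blocks, so I pad the abstraction to $X$ blocks. I add $X-3$ unreachable singleton states, for example absorbing states with reward $0$. Singletons contribute no spread, so $\eta_R$, $\eta_P$ and $\epsilon_{\text{rep}}$ are unchanged. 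The agent's information and the values from $s_0$ are also unaffected. The proposition then gives $\mathfrak R_K\ge\bar\varepsilon K/32$ deterministically for every agent acting through $\phi$.

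\emph{Square-root term.} For the square-root term I take an exact abstraction, $\epsilon_{\text{rep}}=0$, which lies in $\mathcal C$ for every $\bar\varepsilon$. Seeing the state only through an exact $\phi$ loses nothing, because the agent faces the abstract MDP $\overline{\mathcal M}_\phi$ on $X$ states (Theorem~\ref{thm:ucbvi-fixed}(c)). So I can simply take a ground MDP with $X$ states and $\phi$ the identity. The episodic minimax lower bound $\Omega(\sqrt{HSAT})$ for time-homogeneous kernels \citep[App.~C]{domingues2021episodic} then applies with $S=X$ and $A=2$, under their conditions $X\ge4$, $H\ge4$ and $K\ge X$. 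This gives $\mathds{E}\mathfrak R_K\ge c\sqrt{HXT}$.

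\emph{Main obstacle.} The delicate step is the explicit constant $1/600$ in the square-root term. It requires checking that Domingues et al.'s construction fits the hypotheses here. Their construction uses a tree of states plus good and bad absorbing states, and it must have exactly $X$ states and two actions; if the tree uses more actions, I would binarise it and absorb the resulting constant. It must also satisfy $K\ge X$ and $H\ge4$. Finally I need to track their constant, which I expect to be about $1/300$ after the halving, giving $1/600$ overall. If their constant is stated only for larger $A$ or $H$, I would redo their Le Cam/KL averaging argument myself, with $\varepsilon\asymp\sqrt{X/(HK)}$ bias on the good-state transition at each leaf.

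\emph{Upper bound.} The upper bound is Theorem~\ref{thm:ucbvi-fixed}(a) with $A=2$ and $\epsilon_{\text{rep}}(\phi)\le\bar\varepsilon$. For $T\ge H^4X^3A$ it gives $\mathcal O(\bar\varepsilon K)+\widetilde{\mathcal O}(H\sqrt{XT})$ with probability $1-\delta$.
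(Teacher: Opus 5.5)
Your linear term (Proposition~\ref{prop:lb-abstraction} with $\eta=2\bar\varepsilon/H^2$, padded by $X-3$ unreachable singleton blocks), the step $\max\{u,v\}\ge\frac12(u+v)$, and the upper bound from Theorem~\ref{thm:ucbvi-fixed}(a) match the paper exactly. The gap is in the square-root term. There you rely on the time-homogeneous bound of Domingues et al.\ and credit it with conditions ($X\ge4$, $H\ge4$, $K\ge X$) and a constant that you have not checked.

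Their construction reaches its $\Theta(S)$ leaves through a deterministic tree. The horizon must therefore exceed the tree depth, here of order $\log_2 X$, plus a stretch long enough to collect reward. Their theorem carries a horizon condition of this kind, along with its own requirements on $S$ and on the number of episodes. The corollary assumes only $H\ge4$, $X\ge4$ and $K\ge X$. For $X$ large compared with $2^H$, most leaves of such a tree cannot even be reached, so the cited bound does not cover the stated range, and the constant $1/300$ remains a guess.

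Your fallback also uses the wrong scale. A leaf yields one informative sample per visit and is visited about $K/X$ times, so the bias that stays undetectable is $\varepsilon\asymp\sqrt{X/K}$. With your $\varepsilon\asymp\sqrt{X/(HK)}$, the per-episode loss $\Theta(H\varepsilon)$ sums to only $\Theta(\sqrt{XT})$, a factor $\sqrt H$ short.

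The missing idea is to drop the tree; this is the paper's Proposition~\ref{prop:unstructured}, applied with $S=X$ and $\phi$ the identity. From $s_0$, both actions lead to a uniformly random one of $S'=S-3$ states $u_i$. Each $u_i$ is then visited $\mathrm{Bin}(K,1/S')$ times whatever the agent does, and $H\ge3$ suffices. At $u_i$, the action matching an unknown label $\sigma_i$ enters an absorbing rewarding state $g$ with probability $\frac12+\varepsilon$, and the other action with probability $\frac12-\varepsilon$. All remaining mass goes to an absorbing state $z$ without reward.

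Under a uniform prior the labels are independent. After recoding, the outcomes at $u_i$ are i.i.d.\ $\mathrm{Ber}(\frac12+\sigma_i\varepsilon)$ whatever actions are taken. Take $J=\lceil 2K/S'\rceil$ and $\varepsilon=\sqrt{3/(128J)}$. On the event $N_i(k)\le J$, which has probability at least $\frac12$ by Markov's inequality, Pinsker's inequality bounds each state's Bayes error below by $\frac14$.

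Summing the per-episode loss $\frac{2\varepsilon(H-2)}{S'}\sum_i\pi^k(\text{wrong}\mid u_i)$ gives $\mathds{E}\mathfrak R_K\ge(H-2)\sqrt{S'K}/(32\sqrt2)\ge\sqrt{HST}/300$ for $H\ge3$, $S\ge4$ and $K\ge S$. Some labelling attains the prior average, and with $S=X$ this is exactly the $\sqrt{HXT}/300$ you need.
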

The corollary combines two lower bounds. The representation term $\bar\varepsilon K$ comes from Proposition~\ref{prop:lb-abstraction}, whose abstraction merges two states whose best actions differ. An agent that acts through it cannot tell them apart and errs at one of them in every episode. The exploration term $\sqrt{HXT}$ comes from Proposition~\ref{prop:unstructured} below, applied with $S=X$ and the identity as the abstraction. Its MDPs differ only in which action is better at each state, and every agent must test both actions at every state to find out. The proposition is stated for an unknown decoder because it also gives the lower bound of row~14 of Table~\ref{tab:regret} for learned abstractions. The upper bound matches the representation term up to a constant and the exploration term up to $\sqrt H$ and logarithmic factors.

\begin{proposition}[The price of an unstructured abstraction]
\label{prop:unstructured}
For every $H\ge3$, $S\ge4$, and $K\ge S$ there is a family $\mathcal C_{\mathrm u}$ of MDPs with $S$ states and $A=2$ actions, each a realizable block MDP with at most five latent states, latent rewards, and separated latent states, whose decoders range over all assignments of $S-3$ ground states to two latent states, such that every agent has $\max_{\mathcal M\in\mathcal C_{\mathrm u}}\mathds{E}_{\mathcal M}\mathfrak R_K\ge\sqrt{HST}/300$. Furthermore, an agent that knows the decoder in advance has regret $\widetilde{\mathcal O}(H\sqrt T)$, and UCBVI on the ground states $\widetilde{\mathcal O}(\sqrt{HST})$ for $T\ge H^3S^3$ and $S\ge H/2$.
\end{proposition}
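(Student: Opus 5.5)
The plan is a hidden-assignment construction reduced to $\Theta(S)$ independent two-armed tests, combined with the standard episodic lower bound. I first describe the family, then derive the lower bound by a change-of-measure argument, and finally check the two upper bounds.

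\emph{The family.} Take latent states $z_0$ (start), $z_A,z_B$ (the pool), an absorbing rewarding state $g$, and an absorbing null state $z_\emptyset$. Each of these is a singleton block, except $z_A$ and $z_B$, whose blocks are a partition $\sigma$ of the $S-3$ remaining ground states. This makes sense if $z_0,g,z_\emptyset$ are the three singletons, so that at most five latent states occur. From $z_0$, either action emits a uniformly chosen pool ground state; this choice is made by the emission $\mathbf A^*$, so it is uniform on $\phi^{\dagger-1}(z_A)\cup\phi^{\dagger-1}(z_B)$ after weighting. At a pool state in block $z_A$, action $a$ moves to $g$ with probability $\tfrac12+\varepsilon$ and to $z_\emptyset$ otherwise, while action $b$ does so with probability $\tfrac12$. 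Blocks $z_B$ are the mirror image. The reward is $1$ at $g$ and $0$ elsewhere, so rewards are latent. The latent states are separated because their successor laws differ. Horizon $H\ge3$ leaves $H-2$ rewarding steps after the decision, so a wrong action at a pool state costs $\varepsilon(H-2)\ge\varepsilon H/3$.

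\emph{Lower bound.} I put a uniform prior on $\sigma$, i.e.\ on the independent labels of the $S-3$ pool ground states. Each ground state $s$ is visited about $K/(S-3)$ times in expectation. Its label is an independent two-armed Bernoulli test with gap $\varepsilon$. The key point is that observations at other ground states carry no information about the label of $s$, because the labels are independent and the observed successor law at a state depends only on its own label. The standard Bretagnolle--Huber bound, or Assouad's lemma over the hypercube $\{A,B\}^{S-3}$, then gives per-state expected loss
\[
\gtrsim \varepsilon H\,\frac{K}{S}\,\exp\!\Big(-c\,\varepsilon^2\frac{K}{S}\Big).
\]
Choosing $\varepsilon\asymp\sqrt{S/K}\le\tfrac14$, which uses $K\ge S$, and summing over the $\Theta(S)$ pool states gives
\[
\mathds E\,\mathfrak R_K\gtrsim H\sqrt{SK}=\sqrt{HST}.
\]
A maximum over the family dominates the Bayesian average. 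The main obstacle is tracking constants to reach $1/300$. I would follow the hypercube argument of \citet{domingues2021episodic}, adapted so that the random number of visits is handled by the divergence decomposition
\[
\mathrm{KL}=\sum_s\mathds E[N_s]\,\mathrm{kl}\big(\tfrac12,\tfrac12+\varepsilon\big),
\]
and so that the Bernoulli parameter stays in $[\tfrac14,\tfrac34]$, which keeps $\mathrm{kl}\le 3\varepsilon^2$.

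\emph{Upper bounds.} An agent that knows the decoder runs UCBVI on the latent MDP, which has $5$ states and $2$ actions. By Theorem~\ref{thm:ucbvi-fixed}(c), since the decoder is an exact abstraction, and Theorem~\ref{thm:ucbvi-ground}, its regret is $\widetilde{\mathcal O}(H\sqrt{T})$. UCBVI-BF on the ground states has regret $\widetilde{\mathcal O}(\sqrt{HSAT})$ by Theorem~\ref{thm:ucbvi-ground}. The conditions $T\ge H^3S^3A$ and $SA\ge H$ hold with $A=2$ under the stated assumptions $T\ge H^3S^3$ (up to the constant) and $S\ge H/2$.
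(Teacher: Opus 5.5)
Your proposal is correct and follows essentially the paper's route. The ingredients are the same: the hidden-labelling family (a start state, a pool of $S-3$ labelled states, an absorbing rewarding state and a null state), a uniform prior that makes the labels independent, per-label two-point testing with $\varepsilon\asymp\sqrt{S/K}$, and Theorems~\ref{thm:ucbvi-fixed} and~\ref{thm:ucbvi-ground} for the upper bounds.

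The only difference is technical. The paper lets the other action reach $g$ with probability $\tfrac12-\sigma_i\varepsilon$, so that after recoding the outcomes at $u_i$ are i.i.d.\ $\mathrm{Ber}(\tfrac12+\sigma_i\varepsilon)$ whatever the agent plays, and it bounds the Bayes error via Pinsker and Markov's inequality on the binomial visit count. Your choice of $\tfrac12$ for the other action makes the information per sample action-dependent. Your Bretagnolle--Huber argument with the divergence decomposition handles this equally well, since $\mathrm{kl}\le3\varepsilon^2$ in both directions for $\varepsilon\le\tfrac14$; for instance $\varepsilon=\tfrac14\sqrt{(S-3)/K}$ already gives a constant better than $1/300$.
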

Each of the $S-3$ ground states $u_i$ carries an unknown label $\sigma_i$ that decides which action is better there \citep[cf.][]{jaksch2010near}. Under a uniform prior over labellings, the labels are independent. Data observed at $u_i$ therefore carry no information about $\sigma_j$ for $j\ne i$, and every agent must test both actions at every $u_i$. As a block MDP, the family has the class of all labellings as its candidate class, $\ln|\Phi|=(S-3)\ln2$, and Theorem~\ref{thm:enactive} matches the lower bound in $S$ and $T$.

\begin{proof}[Proof of Proposition~\ref{prop:unstructured}]
\emph{The family.} Let $S':=S-3\ge1$, $\mathcal S=\{s_0,u_1,\dots,u_{S'},g,z\}$, and $\mathcal A=\{1,2\}$. Every episode starts at $s_0$. For a labelling $\sigma\in\{-1,+1\}^{S'}$, both actions lead from $s_0$ to a uniformly random $u_i$. At $u_i$, action~1 leads to $g$ with probability $\frac12+\sigma_i\varepsilon$ and to $z$ otherwise, and action~2 leads to $g$ with probability $\frac12-\sigma_i\varepsilon$ and to $z$ otherwise. The states $g$ and $z$ are absorbing, $R^*(g,\cdot)=1$, and $R^*=0$ elsewhere, with $\varepsilon\in(0,\frac14]$ fixed below. The latent states are $s_0$, $g$, $z$, and those of $u^+$ and $u^-$ whose blocks are nonempty, so that $X\le5$. The decoder maps $u_i$ to $u^{\sigma_i}$ and every other state to itself, the emission of $u^\pm$ is uniform over its block, and the latent rows of $u^+$ and $u^-$ differ by $4\varepsilon$ in $\ell_1$ under either action. Every member of the family is therefore a realizable block MDP with $X\le5$, latent rewards, and separated latent states. As $\sigma$ varies, the decoder ranges over all labellings of $u_1,\dots,u_{S'}$.

\emph{Regret.} From step~$3$ on the agent is in $g$ or $z$, where its actions do not matter. An episode's value thus depends only on the action taken at the state $u_i$ visited at step~$2$. The action matching $\sigma_i$ enters $g$ with probability $\frac12+\varepsilon$, the other with probability $\frac12-\varepsilon$, and each of the remaining $H-2$ steps in $g$ yields reward~$1$. Hence
\begin{equation*}
V^*_1(s_0)-V^{\pi^k}_1(s_0)\;=\;\frac{2\varepsilon(H-2)}{S'}\sum_{i\le S'}\pi^k\big(\text{wrong action}\mid u_i\big),
\end{equation*}
where $\pi^k(\text{wrong action}\mid u_i)$ is the probability that the agent's policy for episode $k$ plays, at a step-2 visit of $u_i$, the action that does not match $\sigma_i$.

\emph{A Bayesian lower bound.} Draw $\sigma$ uniformly. The number $N_i(k)$ of visits to $u_i$ before episode $k$ is Binomial$(k-1,1/S')$, independent of $\sigma$ and of the agent, and the labels are independent across $i$. Recode each outcome observed at $u_i$ as $\omega=\mathds{1}\{g\}$ after action~1 and $\omega=\mathds{1}\{z\}$ after action~2. Whatever actions the agent takes, the recoded outcomes are i.i.d.\ Bernoulli$(\frac12+\sigma_i\varepsilon)$ given $\sigma_i$, and the likelihood of the whole history depends on $\sigma_i$ only through them. Given the history before episode $k$, the agent's policy is fixed up to its own randomisation. It therefore plays the wrong action at $u_i$ with conditional probability at least $\min(q_i,1-q_i)$, where $q_i$ is the posterior probability of $\sigma_i=+1$. Given $N_i(k)=j$, the expectation of $\min(q_i,1-q_i)$ is the Bayes error of testing $\sigma_i$ from $j$ recoded outcomes,
\begin{equation*}
\frac12\big(1-\mathrm{TV}(\mathrm{Ber}(\frac12+\varepsilon)^{\otimes j},\mathrm{Ber}(\frac12-\varepsilon)^{\otimes j})\big).
\end{equation*}
By Pinsker's inequality and Eq.~\eqref{eq:kl-chi2},
\begin{equation}
\mathrm{TV}\;\le\;\sqrt{\tfrac j2\,D_{\mathrm{KL}}\big(\mathrm{Ber}(\tfrac12+\varepsilon)\,\big\|\,\mathrm{Ber}(\tfrac12-\varepsilon)\big)}\;\le\;\sqrt{\frac j2\cdot\frac{4\varepsilon^2}{\frac14-\varepsilon^2}}\;\le\;\varepsilon\sqrt{\frac{32j}3}.
\label{eq:tv-bernoulli}
\end{equation}
Let $J:=\lceil2K/S'\rceil$ and $\varepsilon:=\sqrt{3/(128J)}$, which is at most $\frac14$. For $j\le J$ the total variation is then at most $\frac12$ and the Bayes error at least $\frac14$. By Markov's inequality, $\mathds{P}(N_i(k)\le J)\ge\mathds{P}(\mathrm{Bin}(K,1/S')\le2K/S')\ge\frac12$ for every $k\le K$. Summing over $i$ and $k$,
\begin{equation*}
\mathds{E} \mathfrak R_K\;\ge\;\frac{2\varepsilon(H-2)}{S'}\sum_{i\le S'}\sum_{k\le K}\frac14\cdot\frac12\;=\;\frac{\varepsilon(H-2)K}4.
\end{equation*}
Since $K\ge S'$, we have $J\le3K/S'$ and $\varepsilon\ge\sqrt{S'/(128K)}$, which yields $\mathds{E}\mathfrak R_K\ge(H-2)\sqrt{S'K}/(32\sqrt2)$. With $K=T/H$, $(H-2)/\sqrt H\ge\sqrt H/3$ for $H\ge3$, and $S'\ge S/4$ for $S\ge4$, this is at least $\sqrt{HST}/(192\sqrt2)\ge\sqrt{HST}/300$, and some $\sigma$ attains at least the average.

\emph{Upper bounds.} An agent that knows $\sigma$ runs UCBVI-CH on the at most five latent states, an exact abstraction with $X\le5$ and $A=2$. By Theorem~\ref{thm:ucbvi-fixed}(a) with $\epsilon_{\text{rep}}=0$, its regret is $\mathcal O(HL\sqrt T+H^3L^2+H^2L\ln T)=\widetilde{\mathcal O}(H\sqrt T)$ for $T\ge H^4$. UCBVI-BF on the ground states has the regret of Theorem~\ref{thm:ucbvi-ground} with $A=2$, $\widetilde{\mathcal O}(\sqrt{HST})$ for $T\ge H^3S^3$ and $2S\ge H$.
\end{proof}

\begin{proof}[Proof of Corollary~\ref{cor:minimax}]
The class $\mathcal C(X,\bar\varepsilon)$ contains two hard instances. The first is the MDP of Proposition~\ref{prop:lb-abstraction} with $\eta:=2\bar\varepsilon/H^2$, so that $\epsilon_{\text{rep}}(\phi)=\frac12H^2\eta=\bar\varepsilon$ and $(H-2)\eta\le2$ because $\bar\varepsilon\le H$, padded with $X-3$ unreachable singleton blocks. On it every agent that acts through $\phi$ has $\mathfrak R_K\ge\bar\varepsilon K/32$. The second is the family above with $S=X$ and $\phi$ the identity, so that $\epsilon_{\text{rep}}(\phi)=0$. Its proof never uses that the agent cannot see the states, since the labels $\sigma$ remain unknown. Every agent therefore has $\mathds{E}\mathfrak R_K\ge\sqrt{HXT}/300$ on some member. Hence
\begin{equation*}
\sup_{\mathcal C(X,\bar\varepsilon)}\mathds{E}\mathfrak R_K\ge\max\{\bar\varepsilon K/32,\sqrt{HXT}/300\}\ge\bar\varepsilon K/64+\sqrt{HXT}/600.
\end{equation*}
The upper bound is Theorem~\ref{thm:ucbvi-fixed}(a) with $A=2$ and $\epsilon_{\text{rep}}(\phi)\le\bar\varepsilon$, whose lower-order terms are dominated by $HL\sqrt{XAT}$ for $T\ge H^4X^3A$.
\end{proof}

\emph{The crossover horizon.} On every MDP with $SA\ge H$, UCBVI-BF on the ground states has regret at most $30HL\sqrt{SAK}+2500H^2S^2AL^2+4H^{3/2}\sqrt{KL}$ (Theorem~\ref{thm:ucbvi-ground}, with its logarithmic factor $L$). Each of these terms is at most $\bar\varepsilon K/192$ as soon as $T\ge T^*(\phi):=c\,H^3SAL^2\big(\bar\varepsilon^{-2}+S\bar\varepsilon^{-1}\big)$, for an absolute constant $c$, where the third term is handled by $SA\ge H$. Beyond $T^*(\phi)$ the ground bound is therefore below $\bar\varepsilon K/64$, the worst-case regret over $\mathcal C(X,\bar\varepsilon)$ of every agent acting through $\phi$, and acting through $\phi$ is worse in the worst case. For $T\ge H^3S^3AL^2$ the ground bound is at most $c'HL\sqrt{SAK}$. An abstraction can thus be competitive in the worst case only if $\bar\varepsilon\le64c'H^{3/2}L\sqrt{SA/T}$. Below $T^*(\phi)$, acting through the abstraction can be beneficial. The bound of Theorem~\ref{thm:ucbvi-fixed}(a) is smaller than that of UCBVI-CH on the ground states when $\epsilon_{\text{rep}}(\phi)K\lesssim H\sqrt{AT}(\sqrt S-\sqrt X)$. This comparison orders our guarantees rather than the regrets themselves.

\section{Proofs for Section~\ref{sec:fixed-goals}}
\label{app:proofs-fixed}

\subsection{Exact Identity and Bounds for Novelty}
\label{app:novelty-approx}

Let $\rho=\mathrm{Dir}(\alpha)$ over $m$ categories with $\alpha_i>0$, $\bar\alpha=\sum_i\alpha_i$, $\bar p=\alpha/\bar\alpha$, and $B\sim\rho$. For any mixture, the mutual information between the mixing variable and the outcome is the expected divergence of the outcome's conditional law from its mean law. Hence, $\mathrm{Novelty}=\mathds{E}_{B\sim\rho}[D_{\mathrm{KL}}(B\|\bar p)]$. With the digamma function $\psi$, \citet[Lemma~15]{lu2019information} give the closed form
\begin{equation}
\mathrm{Novelty} \;=\; \sum_i \bar p_i\big[\psi(\alpha_i+1)-\ln\alpha_i\big] \;-\; \big[\psi(\bar\alpha+1)-\ln\bar\alpha\big]. \label{eq:novelty-exact}
\end{equation}

\emph{Upper bound.} For probability vectors $p,q$ with $q_i>0$, the inequality $\ln u\le u-1$ at $u=p_i/q_i$ gives the $\chi^2$ bound on the divergence,
\begin{equation}
D_{\mathrm{KL}}(p\|q) \;\le\; \sum_i p_i\Big(\frac{p_i}{q_i}-1\Big) \;=\; \sum_i\frac{(p_i-q_i)^2}{q_i}.
\label{eq:kl-chi2}
\end{equation}
We apply it at $p=B$ and $q=\bar p$ and take expectations. With the Dirichlet variance $\mathrm{Var}(B_i)=\bar p_i(1-\bar p_i)/(\bar\alpha+1)$, we obtain $\mathrm{Novelty}\le\sum_i\mathrm{Var}(B_i)/\bar p_i=(m-1)/(\bar\alpha+1)$. For a row $\rho^k_h(x,a)$, $\bar\alpha=\bar n^k_h(x,a)$, which yields
\begin{equation}
\mathrm{Novelty}(x,a) \;\le\; \frac{m-1}{\bar n^k_h(x,a)+1} \qquad \text{at every count}. \label{eq:novelty-exact-ub}
\end{equation}

\emph{Lower bound.} If every category has $\alpha_i\ge1$, as for posteriors with integer parameters, then
\begin{equation}
\mathrm{Novelty}\;\ge\;\frac{2m-3}{6\bar\alpha}\;\ge\;\frac{m-1}{6\bar\alpha}\qquad(m\ge2).
\label{eq:novelty-lb}
\end{equation}
Indeed, the bounds $\ln(z+\frac12)\le\psi(z+1)\le\ln z+1/(2z)$ for $z>0$ \citep[Lemma~16]{lu2019information} give $\psi(\alpha_i+1)-\ln\alpha_i\ge\ln(1+1/(2\alpha_i))\ge1/(2\alpha_i+1)$ and $\psi(\bar\alpha+1)-\ln\bar\alpha\le1/(2\bar\alpha)$. Eq.~\eqref{eq:novelty-exact} then gives
\begin{equation*}
\mathrm{Novelty}\ge\frac1{\bar\alpha}\sum_i\frac{\alpha_i}{2\alpha_i+1}-\frac1{2\bar\alpha}\ge\frac m{3\bar\alpha}-\frac1{2\bar\alpha},
\end{equation*}
since $\alpha/(2\alpha+1)\ge\frac13$ for $\alpha\ge1$. Up to a factor~$6$, novelty is therefore the number of categories with mass, minus one, per unit of posterior mass.

\subsection{Proof of Lemma~\ref{lem:novelty-order}}
\label{app:lem-novelty-order}

\begin{proof}
Fix $(x,a,h)$ with visit count $n$ and write $\rho^k_h(x,a)=\mathrm{Dir}(\alpha)$, $\bar\alpha=\bar n^k_h(x,a)=\bar n=n+n_0$, $\bar p=\alpha/\bar\alpha$, and $m\le X+1$ for the number of categories with $\alpha_i>0$.

\emph{(i).} This follows from Eq.~\eqref{eq:novelty-exact-ub} (Appendix~\ref{app:novelty-approx}) and $m-1\le X$. The bound involves only $\bar\alpha$. In particular, novelty does not depend on $\overline V^k_{h+1}$, which is what the rest of the proof turns on.

\emph{(ii).} The quantile-gap bound of Lemma~\ref{lem:tiapkin-quantile} is an upper bound of order $H\sqrt\ell\,\bar n^{-1/2}$, which we do not use in the other direction. Part~(ii) replaces the matching lower bound by a statement about the lower tail of the deviation itself, which is what validity constrains. Since the abstraction is exact, in the standard model of the data as a table of independent samples drawn in advance for each row, the successors $x_1,x_2,\dots$ recorded at the row's successive visits are i.i.d.\ draws from $\overline P_\phi(\cdot\mid x,a)$, whatever algorithm chooses when to visit the row. Fix $n$ and set
\begin{equation}
Z_j := \overline V^*_{\phi,h+1}(x_j) - \big(\overline P_\phi\overline V^*_{\phi,h+1}\big)(x,a), \qquad j=1,\dots,n.
\end{equation}
Because $\overline V^*_{\phi,h+1}$ is a \emph{fixed} function that does not depend on the data, the $Z_j$ are i.i.d., mean zero, $|Z_j|\le H$, and $\mathds{E} Z_j^2=\sigma_V^2$. The error of the empirical kernel of the first $n$ samples is $e_n=n^{-1}\sum_{j\le n}Z_j$.

Expanding the fourth moment of a sum of independent centred variables, in which only the terms $\mathds{E} Z_j^4$ and the $3n(n-1)$ terms $\mathds{E} Z_i^2Z_j^2$ survive,
\begin{equation}
\mathds{E} e_n^4 \;=\; \frac{1}{n^4}\Big(n\,\mathds{E} Z_1^4 + 3n(n-1)\sigma_V^4\Big) \;\le\; \frac{H^2\sigma_V^2}{n^3} + \frac{3\sigma_V^4}{n^2}\;\le\;\frac{4\sigma_V^4}{n^2},
\label{eq:fourth-moment}
\end{equation}
using $\mathds{E} Z_1^4\le H^2\mathds{E} Z_1^2=H^2\sigma_V^2$ and, in the last step, $\sigma_V^2\ge\nu H^2$ and $n\nu\ge1$, so that $H^2/(n\sigma_V^2)\le1$. Hölder's inequality gives $\mathds{E} e_n^2\le(\mathds{E}|e_n|)^{2/3}(\mathds{E} e_n^4)^{1/3}$, hence $\mathds{E}|e_n|\ge(\mathds{E} e_n^2)^{3/2}/(\mathds{E} e_n^4)^{1/2}\ge\sigma_V/(2\sqrt n)$. Since $\mathds{E} e_n=0$, the negative part $e_n^-:=\max\{-e_n,0\}$ has $\mathds{E} e_n^-=\tfrac12\mathds{E}|e_n|\ge\sigma_V/(4\sqrt n)$, and $\mathds{E}(e_n^-)^2\le\mathds{E} e_n^2=\sigma_V^2/n$. The Paley--Zygmund inequality $\mathds{P}(e_n^-\ge\tfrac12\mathds{E} e_n^-)\ge\tfrac14(\mathds{E} e_n^-)^2/\mathds{E}(e_n^-)^2$ therefore gives
\begin{equation}
\mathds{P}\Big(e_n\le-\frac{\sigma_V}{8\sqrt n}\Big)\;\ge\;\frac1{64},
\end{equation}
and $\sigma_V\ge H\sqrt\nu$ gives the claim.

Consequently, let $b_n$ be any quantity, possibly computed from the same samples, with $\mathds{P}(b_n\ge-e_n)\ge1-\delta'$. Then
\begin{equation}
\mathds{P}\Big(b_n\ge \frac{H\sqrt\nu}{8\sqrt n}\Big)\;\ge\;\mathds{P}\Big(-e_n\ge \frac{H\sqrt\nu}{8\sqrt n}\Big)-\mathds{P}\big(b_n<-e_n\big)\;\ge\;\tfrac1{64}-\delta' ,
\end{equation}
which is positive when $\delta'<\frac1{64}$. A quantity that is below $H\sqrt\nu/(8\sqrt n)$ almost surely is therefore not valid at count $n$. Comparing with (i), and using $n\le\bar n$,
\begin{equation*}
\mathrm{Novelty}(x,a)\le X/(\bar n+1)<H\sqrt\nu/(8\sqrt{\bar n})\le H\sqrt\nu/(8\sqrt n)
\end{equation*}
as soon as $\bar n\ge64X^2/(\nu H^2)$, and in that regime novelty is of the order of the square of any valid width.

\emph{The posterior mean.} After the first $n$ samples, Bayes-UCBVI's posterior mean is $\widehat P^k_h=(n\widetilde P_n+n_0\delta_{x_{\mathrm E}})/\bar n$. With $\overline V^*_{\phi,h+1}$ extended by its value $r_0(H-h)$ at the pseudo-state, the shortfall of the posterior mean is
\begin{equation*}
\big(\overline P_\phi-\widehat P^k_h\big)\overline V^*_{\phi,h+1}\;=\;-\tfrac n{\bar n}e_n-\tfrac{n_0}{\bar n}\big(r_0(H-h)-\overline P_\phi\overline V^*_{\phi,h+1}\big)\;\ge\;-\tfrac n{\bar n}e_n-\frac{n_0r_0H}{\bar n}.
\end{equation*}
The pseudo-state shift is of novelty's order, $\mathcal O(\ln T/\bar n)$ for $n_0=\mathcal O(\ln T)$. Hence, on the event of (ii) the shortfall is still of order $\bar n^{-1/2}$ once $\bar n$ is large, and a valid width for the posterior mean must have this order as well.

\emph{Fixed versus data-dependent counts.} The statement concerns a count $n$ fixed in advance. At a count chosen by the data, the probability can be smaller. An agent that stops sampling the row as soon as its empirical error vanishes visits it a random number of times at which $e_n=0$. Regret consequences therefore go through Theorem~\ref{thm:exponent}, which handles the agent's adaptivity directly.

\emph{The variance condition.} If $\overline V^k_{h+1}$ is constant on the support of $\widehat P^k_h(\cdot\mid x,a)$ then $\mathds{V}_{\widehat P^k_h}[\overline V^k_{h+1}]=0$, Lemma~\ref{lem:tiapkin-quantile}'s leading term vanishes and $\Delta^k_h(x,a)=\mathcal O(H\ell\bar n^{-1})$, of strictly smaller order than $\bar n^{-1/2}$, while $\mathrm{Novelty}(x,a)$ is unchanged by (i). At such a row both quantities are of order $\bar n^{-1}$. Part~(ii) therefore needs the variance condition.
\end{proof}

\subsection{Proof of Theorem~\ref{thm:exponent}}
\label{app:exponent}

We prove Theorem~\ref{thm:exponent} in the following explicit form.

\begin{proposition}[Explicit form of Theorem~\ref{thm:exponent}]
\label{prop:exponent-explicit}
Let an information-driven agent with $n_0,r_0\ge1$, $q$, and $\lambda$ act on an exact abstraction.
\begin{itemize}
\item[(a)] Let $q\in(\frac12,1]$ and $\lambda\ge0$, and let $N^\star$ be the smallest integer $N$ with $4\lambda N^{-q}+n_0r_0/N\le1/(4\sqrt{2N})$. Then $N^\star \le \max\big\{(32\sqrt2\lambda)^{2/(2q-1)},\,128n_0^2r_0^2\big\}+1$.
On the MDP of the proof below, with probability at least $\frac1{24}$, the agent plays the optimal action at most $N^\star$ times in total, and $\mathds{E} \mathfrak R_K\ge(K-N^\star)/(192\sqrt{2N^\star})$ for every $K$.
\item[(b)] Let $q\in(0,\frac12)$, $0<\lambda\le r_0/2$, and $K\ge16(\max\{n_0,4^{1/q}\}+1)$. On the MDP of the proof below, whose gap depends on $K$, the regret satisfies $\mathfrak R_K \ge \min\big\{K,\,(1-(6/7)^q)\,\lambda K^{1-q}\big\}/32$.
\end{itemize}
\end{proposition}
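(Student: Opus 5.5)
The plan is to build both instances as two-armed, one-decision MDPs in which an information-driven agent compares a stochastic action with an action of known value. Part~(a) then reduces to a \emph{lock-out} event on the pre-drawn sample table of the stochastic row. Part~(b) reduces to a deterministic count of over-exploration.

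\emph{Part (a), the instance and the index.} I would take $H=A=2$ and a start state $s_0$ with two actions. The optimal action $a$ leads to one of two absorbing successors $y_+,y_-$, with rewards $1$ and $0$ at step~$2$, with probabilities $\frac12\pm\varepsilon$. The action $b$ leads deterministically to a fourth state whose step-2 reward $c$ is chosen so that the gap is $\Delta=c_0/\sqrt{N^\star}$, with $c_0\asymp 1/(8\sqrt2)$. Thus $S=4$, and the instance depends on $(n_0,r_0,q,\lambda)$ only through $N^\star$.

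At count $n$ of the row $(s_0,a,1)$, the agent's index is $R^*+\widehat P V+\lambda\,\mathrm{Novelty}^q$. I would bound this from above by the empirical mean of the successor value, plus the pseudo-state shift $n_0r_0/n$ from the posterior-mean computation in the proof of Lemma~\ref{lem:novelty-order}, plus $\lambda(X/(n+1))^q\le4\lambda n^{-q}$ from Lemma~\ref{lem:novelty-order}(i). The definition of $N^\star$ is exactly the requirement that this total excess stays below $1/(4\sqrt{2N})$ at $N=N^\star$. The explicit bound on $N^\star$ follows by splitting the two terms and solving each inequality separately. For $b$, optimism of its own index, namely the known reward plus a nonnegative bonus with the clamp at $r_0$ inactive, gives an index at least its true value at all times.

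\emph{The lock-out event.} In the table model of Lemma~\ref{lem:novelty-order}(ii), the successors of the row are i.i.d.\ and drawn in advance, independently of the algorithm. I would take the event that the first $N^\star$ samples have empirical mean at most $\frac12+\varepsilon-1/(2\sqrt{2N^\star})$. I would pick $\varepsilon\asymp 1/\sqrt{N^\star}$ small enough that this deviation still leaves $a$'s index below $b$'s true value. A binomial anti-concentration bound, such as Paley--Zygmund as in Lemma~\ref{lem:novelty-order}(ii) or a direct normal comparison, gives probability at least $\frac1{24}$.

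On this event, if the count ever reaches $N^\star$, then $a$'s index falls below $b$'s index. Since the row's data then stop changing, greedy play never returns to $a$. If the count never reaches $N^\star$, the claim holds trivially. Hence $a$ is played at most $N^\star$ times, and each remaining episode costs $\Delta$, which gives $\mathds E\mathfrak R_K\ge\frac1{24}(K-N^\star)\Delta$. I expect the main obstacle to be this step: making the lock-out argument rigorous under adaptive sampling. One has to check that freezing happens on a fixed-count event and that $b$'s novelty and clamp never push $b$'s index below its true value. Handling $H\ge2$ general by padding is routine.

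\emph{Part (b).} Here I would reverse the roles. The suboptimal action $b$ has two stochastic successors, so its Dirichlet row keeps at least two categories with mass, and \eqref{eq:novelty-lb} gives $\mathrm{Novelty}\ge 1/(6\bar n)$. The optimal action $a$ is deterministic with gap $\Delta$, and $\Delta$ is chosen as a function of $K$.

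While $\lambda\,\mathrm{Novelty}_b^q$ exceeds $\Delta$ plus $a$'s own bonus, greedy play must choose $b$. I would compare the two bonuses at counts whose ratio is at most $7/6$; this comparison is where the factor $1-(6/7)^q$ enters. Using the upper bound of Lemma~\ref{lem:novelty-order}(i) for $a$, this yields at least $n\asymp K$ plays of $b$ when $\Delta\asymp(1-(6/7)^q)\lambda K^{-q}$. The conditions $\lambda\le r_0/2$ and $K\ge16(\max\{n_0,4^{1/q}\}+1)$ keep the pseudo-state from dominating and keep the counts in range. The regret $n\Delta$, or $K$ when $\Delta$ saturates, is then $\Omega(\min\{K,\lambda K^{1-q}\})$ deterministically, since the novelty bounds hold on every sample path.
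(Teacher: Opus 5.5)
\textbf{Part (a).} Your plan follows the paper's: a risky optimal action with outcomes of reward $1$ and $0$, a safe action worth $\frac12-\Delta$, an upper bound $\hat\mu_N+E_N$ on the risky index, and lock-out on the fixed-count event for the first $N^\star$ pre-drawn outcomes.

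The gap is in that upper bound. The risky index contains $\widehat P^k_1(\cdot\mid s_0,a)^\top V_{k,2}$, and $V_{k,2}(y_\pm)=R^*(y_\pm)+\lambda\max_{a'}\mathrm{Nov}_2(y_\pm,a')^q$ is not the reward. The agent's own step-2 rows at the absorbing successors carry positive novelty once visited. Your bound and your lock-out event see only the rewards, and you spend the whole $4\lambda N^{-q}$ on the step-1 novelty through Lemma~\ref{lem:novelty-order}(i) with $X=4$.

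The inequality $Q_{k,1}(s_0,a)\le\hat\mu_N+4\lambda N^{-q}+n_0r_0/N$ does hold, but for a different reason:
\begin{itemize}
\item The risky row has at most three categories with mass, so its own term is at most $\lambda(2/(N+n_0+1))^q\le2\lambda N^{-q}$.
\item At each successor $y$ the step-2 index is $R^*(y)+\lambda\mathrm{Nov}_2(y,a')^q$, since $V_{k,3}\equiv0$. An untried row has only pseudo-state mass and zero novelty, while a tried one has positive novelty. So the greedy agent keeps playing the first action it tried at $y$.
\item Hence $V_{k,2}(y)\le R^*(y)+\lambda N_y^{-q}$, and the successors add at most $\lambda(N_{y_+}^{1-q}+N_{y_-}^{1-q})/N\le2\lambda N^{-q}$.
\end{itemize}
Without this sticking argument you have no usable decay for the step-2 bonuses. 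The freezing step likewise needs that the step-2 rows at $y_\pm$, not only $(s_0,a,1)$, change only in episodes that play $a$. Finally, $\varepsilon$ plays no role. With $\varepsilon=0$, $\hat\mu_{N^\star}-\frac12$ is symmetric, and Paley--Zygmund on its square, with the fourth moment as in Eq.~\eqref{eq:fourth-moment}, gives $\frac1{12}$, which symmetry halves to $\frac1{24}$.

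\textbf{Part (b).} Here the same omission is harder to repair, because your two actions have different successors. The successor of the optimal action $a$ carries a step-2 bonus of order $\lambda n_a^{-q}$, which for small $q$ is as large as the step-1 bonus of $b$. The pseudo-state shifts $\frac{n_0}{n+n_0}(r_0-\cdot)$ of the two rows also differ. Comparing $\lambda\mathrm{Nov}_b^q$ with $\Delta$ plus $a$'s bonus omits both.

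The paper sends both actions deterministically to one absorbing state $y$, which gives $S=2$ as Theorem~\ref{thm:exponent}(b) claims. The common value $v=V_{k,2}(y)$ then cancels, and $Q_{k,1}(s_0,a')=R^*(s_0,a')+v+\varphi(n_{a'})$ with $\varphi(n)=\frac{n_0}{n+n_0}(r_0-v)+\lambda\mathrm{Nov}(n)^q$.

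Stochastic successors are unnecessary, since the pseudo-state is already a second category: every visited row has $\mathrm{Nov}\in[1/(6\bar n),1/(\bar n+1)]$. The stated constants need this two-category upper bound for $a$. Lemma~\ref{lem:novelty-order}(i) would turn the $6$ into $6X$ and change both $(6/7)^q$ and $1/32$.

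An unvisited row has zero novelty, so the lower bound applies only after the first play of $b$, and the pseudo-state is what forces that play. While $n_b=0$ and $n_a\ge\max\{n_0,4^{1/q}\}$, we have $\varphi(0)-\varphi(n_a)\ge\frac{r_0}2\cdot\frac{n_a}{n_a+n_0}-\lambda n_a^{-q}\ge\frac{r_0}8>\Delta$, using $\lambda\le r_0/2$. These hypotheses therefore let the pseudo-state dominate early, rather than keep it in check as you suggest.

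With $\Delta:=\min\{\frac12,\Delta_0\}$ and $\Delta_0:=\frac12\lambda(3K/4)^{-q}(1-(6/7)^q)$, the count follows by contradiction. Suppose $n_b<K/16$ after $K$ episodes. The last play of $a$, which comes after the first play of $b$, then has $n_a\ge7K/8$ and $1\le n_b<K/16$. The pseudo-state part of $\varphi$ favours $b$, and $n_0\le K/16$ gives $\varphi(n_b)-\varphi(n_a)\ge\lambda[(3K/4)^{-q}-(7K/8)^{-q}]=2\Delta_0\ge2\Delta$. So $a$ could not have been played, and this is where the ratio $6/7$ enters.
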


The MDP of part~(a) depends on $n_0$, $r_0$, $q$, and $\lambda$ through $N^\star$, and on nothing else. If $n_0$ grows with $K$, as Bayes-UCBVI's $n_0=\lceil c_{n_0}+\log_{17/16}K\rceil$ does, then $N^\star=\mathcal O(\lambda^{2/(2q-1)}+\ln^2K)$. Part~(a) then gives $\mathds{E}\mathfrak R_K=\Omega(K/\ln K)$ at constant $\lambda$ and $r_0$, on an MDP that depends on $K$.

\emph{Conventions.} Throughout, the agent is the information-driven agent of Section~\ref{sec:exponent} on the exact abstraction. Its rows are therefore the ground rows and it uses $R=R^*$. Its pseudo-state prior itself acts as a bonus of exponent one, since it inflates the posterior mean of the next-step value by at most $n_0r_0H/\bar n$, which is why $n_0r_0$ enters $N^\star$. We write $\mathrm{Nov}_h(x,a)$ for the novelty of the row $\rho^k_h(x,a)$, whose counts include the $n_0$ pseudo-counts on $x_{\mathrm E}$. A row whose only category with positive mass is $x_{\mathrm E}$ has novelty zero, and a row with $m$ such categories has novelty at most $(m-1)/(\bar n+1)$ by Eq.~\eqref{eq:novelty-exact-ub}.

\begin{proof}[Proof of part~(a)]
\emph{The instance.} Let $\mathcal S=\{s_0,g,b,c\}$, $\mathcal A=\{1,2\}$ and $H=2$, and let every episode start at $s_0$. Action~1 at $s_0$ (\emph{risky}) leads to $g$ or $b$ with probability $\frac12$ each, action~2 (\emph{safe}) leads to $c$, and $g,b,c$ are absorbing under both actions. Rewards are $R^*(s_0,\cdot)=0$, $R^*(g,\cdot)=1$, $R^*(b,\cdot)=0$ and $R^*(c,\cdot)=\frac12-\Delta$, with $\Delta:=1/(8\sqrt{2N^\star})$. The risky action is optimal with $V^*_1(s_0)=\frac12$, and every episode that plays safe costs $\Delta$.

\emph{The safe index never falls below $\frac12-\Delta$.} Since the epistemic term is nonnegative and $V_{k,3}\equiv0$, $V_{k,2}(c)\ge R^*(c,\cdot)=\frac12-\Delta$, and the pseudo-state has value $V_{k,2}(x_{\mathrm E})=r_0\ge1$. The posterior mean of the safe row averages these two values. Hence, $Q_{k,1}(s_0,2)\ge\frac12-\Delta$, with or without a clamp at $2r_0$.

\emph{The risky index after $N$ risky episodes.} Let $N_g+N_b=N$ count the risky episodes that reached $g$ and $b$, and let $\widehat\mu_N=N_g/N$. At step $2=H$ the next-step value vanishes, at the pseudo-state as well. For $y\in\{g,b\}$, the value $V_{k,2}(y)$ is therefore at most $R^*(y,\cdot)+\lambda\max_a\mathrm{Nov}_2(y,a)^q$. The row $(y,a,2)$ records the self-loop at $y$. While unvisited, its novelty is zero, and once visited $n\ge1$ times, it has two categories and novelty in $(0,1/(n+n_0+1)]$. For $\lambda>0$, the agent maximises $\lambda\,\mathrm{Nov}_2(y,\cdot)^q$ at $y$ and therefore keeps playing the action it tried first there. (With a clamp and $r_0=1$, the two indices at $g$ may tie, but then $V_{k,2}(g)=1=R^*(g,\cdot)$.) Hence
\begin{equation*}
V_{k,2}(y)\;\le\;R^*(y,\cdot)+\lambda\,(N_y+n_0+1)^{-q}\;\le\;R^*(y,\cdot)+\lambda N_y^{-q}\qquad(N_y\ge1).
\end{equation*}
For $\lambda=0$, the bound holds trivially. The risky row has at most three categories. Its own epistemic term is thus at most $\lambda(2/(N+n_0+1))^q\le2\lambda N^{-q}$. Its posterior mean puts mass $n_0/(N+n_0)$ on the pseudo-state, whose value is $r_0$. Therefore
\begin{align}
Q_{k,1}(s_0,1)\;&\le\;\frac{N_g\big(1+\lambda N_g^{-q}\big)+N_b\lambda N_b^{-q}+n_0r_0}{N+n_0}+2\lambda N^{-q} \nonumber\\
&\le\;\widehat\mu_N+\lambda\,\frac{N_g^{1-q}+N_b^{1-q}}{N}+\frac{n_0r_0}{N}+2\lambda N^{-q}\;\le\;\widehat\mu_N+E_N,
\label{eq:risky-index}
\end{align}
with $E_N:=4\lambda N^{-q}+n_0r_0/N$, using $N_y^{1-q}\le N^{1-q}$, valid for $q\le1$, where a term with $N_y=0$ is absent. The risky row and the rows at $g$ and $b$ receive data only in risky episodes. After the $N$-th risky episode, the bound therefore holds at every later episode until the agent plays risky again.

\emph{Lock-out.} Draw $\omega_1,\omega_2,\dots$ i.i.d.\ Bernoulli$(\frac12)$ in advance, and let the $j$-th risky episode reach $g$ if and only if $\omega_j=1$. This does not change the law of the interaction. Let $\widehat\mu_N:=N^{-1}\sum_{j\le N}\omega_j$, which equals $N_g/N$ after $N$ risky episodes, and let $Z:=(\widehat\mu_{N^\star}-\frac12)^2$. The centred outcomes $\omega_j-\frac12=\pm\frac12$ give $\mathds{E}Z=1/(4N^\star)$ and $\mathds{E}Z^2\le3/(16(N^\star)^2)$, as in the fourth-moment computation of Eq.~\eqref{eq:fourth-moment} in Appendix~\ref{app:lem-novelty-order}. The Paley--Zygmund inequality therefore gives $\mathds{P}\big(Z\ge\mathds{E}Z/2\big)\ge\tfrac14(\mathds{E}Z)^2/\mathds{E}Z^2\ge\tfrac1{12}$. Since $\widehat\mu_{N^\star}-\frac12$ is symmetric about zero, the event $G:=\{\widehat\mu_{N^\star}\le\frac12-1/(2\sqrt{2N^\star})\}$ has probability at least $\frac1{24}$. On $G$, by the definition of $N^\star$,
\begin{equation*}
\widehat\mu_{N^\star}+E_{N^\star}\;\le\;\frac12-\frac1{2\sqrt{2N^\star}}+\frac1{4\sqrt{2N^\star}}\;=\;\frac12-2\Delta\;<\;\frac12-\Delta.
\end{equation*}
If the agent ever plays risky for the $N^\star$-th time, Eq.~\eqref{eq:risky-index} and the bound on the safe index therefore show that it never plays risky again. On $G$, the agent plays risky at most $N^\star$ times and safe at least $K-N^\star$ times. Hence,
\begin{equation*}
\mathds{E}\mathfrak R_K\;\ge\;\mathds{P}(G)(K-N^\star)\Delta\;\ge\;(K-N^\star)/(192\sqrt{2N^\star}).
\end{equation*}

\emph{The bound on $N^\star$.} The inequality $4\lambda N^{-q}\le1/(8\sqrt{2N})$ holds once $N^{q-1/2}\ge32\sqrt2\lambda$, and $n_0r_0/N\le1/(8\sqrt{2N})$ holds once $\sqrt N\ge8\sqrt2\,n_0r_0$. Together they give $E_N\le1/(4\sqrt{2N})$. Every integer $N\ge\max\{(32\sqrt2\lambda)^{2/(2q-1)},128n_0^2r_0^2\}$ therefore satisfies the defining inequality, and $N^\star$ obeys the stated bound.
\end{proof}

\begin{proof}[Proof of part~(b)]
Fix $K$. Let $\mathcal S=\{s_0,y\}$ and $H=2$, let both actions lead from $s_0$ to the absorbing state $y$, and let $R^*(s_0,1)=\frac12$, $R^*(s_0,2)=\frac12-\Delta$ and $R^*(y,\cdot)=0$, with
\begin{equation*}
\Delta:=\min\{\tfrac12,\Delta_0\},\qquad \Delta_0:=\tfrac12\lambda(3K/4)^{-q}(1-(6/7)^q),
\end{equation*}
so that $R^*(s_0,2)\in[0,1]$. Every episode that plays action~2 costs $\Delta$. Both actions lead to $y$. They therefore share the value $v:=V_{k,2}(y)=\lambda\max_a\mathrm{Nov}_2(y,a)^q\le\lambda2^{-q}\le r_0/2$. With $n_a$ the count of the row $(s_0,a,1)$,
\begin{equation*}
Q_{k,1}(s_0,a)\;=\;R^*(s_0,a)+v+\varphi(n_a),\qquad \varphi(n):=\frac{n_0}{n+n_0}(r_0-v)+\lambda\,\mathrm{Nov}(n)^q,
\end{equation*}
where $\mathrm{Nov}(n)$ is the novelty of the two-category row with counts $(n,n_0)$. It is zero for $n=0$, and for $n\ge1$ it lies between $1/(6(n+n_0))$ and $1/(n+n_0+1)$, by Eqs.~\eqref{eq:novelty-lb} and~\eqref{eq:novelty-exact-ub}. A clamp at $2r_0$ never binds, since $Q_{k,1}<\frac12+r_0+\lambda\le2r_0$. The agent plays action~2 only if $\varphi(n_2)-\varphi(n_1)\ge\Delta$, and action~1 only if $\varphi(n_2)-\varphi(n_1)\le\Delta$.

Let $K_1:=\max\{n_0,4^{1/q}\}+1$. Action~2 is played within the first $\lceil K_1\rceil$ episodes. Indeed, while $n_2=0$ and $n_1\ge K_1-1$, the bounds $r_0-v\ge r_0/2$, $n_1\ge n_0$, $\lambda\le r_0/2$ and $n_1^{-q}\le\frac14$ give
\begin{equation*}
\varphi(0)-\varphi(n_1)\;\ge\;\frac{r_0}2\cdot\frac{n_1}{n_1+n_0}-\lambda n_1^{-q}\;\ge\;\frac{r_0}4-\frac{r_0}8\;>\;\Delta,
\end{equation*}
since $\Delta\le\lambda(1-(6/7)^q)/2<r_0/8$. Now let $K\ge16K_1$, and suppose that $n_2<K/16$ after $K$ episodes. Then action~1 was played more than $15K/16$ times. At its last play, which comes after the first play of action~2, the counts satisfy $n_1\ge7K/8$ and $1\le n_2<K/16$. Since the first part of $\varphi$ is nonincreasing and $n_0\le K/16$, we attain
\begin{align*}
\varphi(n_2)-\varphi(n_1)\;&\ge\;\lambda\Big[\big(6(n_2+n_0)\big)^{-q}-(n_1+n_0+1)^{-q}\Big]\\
&\ge\;\lambda\Big[\big(\tfrac{3K}4\big)^{-q}-\big(\tfrac{7K}8\big)^{-q}\Big]\;=\;2\Delta_0\;\ge\;2\Delta,
\end{align*}
and action~2 had the strictly larger index, a contradiction. Hence $n_2\ge K/16$, and $\mathfrak R_K\ge K\Delta/16\ge\min\big\{K,\,(1-(6/7)^q)\lambda K^{1-q}\big\}/32$. \end{proof}

\emph{Exponent one half at a constant weight.} The same mechanism locks out a bonus of exponent $\frac12$, with a smaller but still $K$-independent probability. Take the instance of part~(a) with $R^*(c,\cdot)=\frac14$, $n_0=1$, $r_0=2$, $q=\frac12$, and $\lambda=1$. If the first $64$ risky episodes all reach $b$, which has probability $2^{-64}$, the risky row has counts $(64,1)$ on $b$ and the pseudo-state, and the row at $b$ has the same counts. The closed form of Eq.~\eqref{eq:novelty-exact} then gives a risky index of $0.191<\frac14$. The index stays there, since only risky episodes change it, whereas the safe index never falls below $\frac14$. Hence $\mathds{E}\mathfrak R_K\ge2^{-64}(K-64)/4$. A weight that grows with a confidence factor, as in calibrated AIF, makes the probability of such a run negligible.

\emph{A posterior-variance bonus.} Replace $\mathrm{Novelty}$ by $\mathds{V}_{B\sim\rho}[B^\top V_{k,h+1}]$ in the bonus $\lambda\,\mathrm{Novelty}^q$ of Section~\ref{sec:exponent}. At step $2=H$ the next-step value vanishes, at the pseudo-state as well. This bonus is thus zero on every step-2 row and $V_{k,2}(y)=R^*(y,\cdot)$ for $y\in\{g,b\}$. On the risky row, $V_{k,2}$ takes values in $[0,r_0]$, and the Dirichlet covariance gives $\mathds{V}_{B\sim\rho}[B^\top V_{k,2}]=\mathds{V}_{\bar p}[V_{k,2}]/(N+n_0+1)\le r_0^2/(4N)$. The safe index is still at least $\frac12-\Delta$, since the bonus is nonnegative. The proof of part~(a) therefore goes through with $E_N:=\lambda(r_0^2/4)^qN^{-q}+n_0r_0/N$ in place of $4\lambda N^{-q}+n_0r_0/N$, which replaces $32\sqrt2\lambda$ by $8\sqrt2\lambda(r_0^2/4)^q$ in the bound on $N^\star$.

\emph{Extension to every $H\ge2$.} Prepend to either instance a deterministic chain of $H-2$ zero-reward states, through which both actions lead to the next chain state and finally to $s_0$, so that the instance occupies steps $H-1$ and $H$. Every chain row has a single real successor. The agent's choices on the chain therefore do not affect the trajectory. At steps $H-1$ and $H$ the pseudo-state values $r_0(H-h+1)$ and the clamps equal those at steps $1$ and $2$ of the instance with $H=2$, and the chain collects no reward. The arguments above use only the rows at steps $H-1$ and $H$. They therefore apply unchanged, with the same regret per episode.

\subsection{Proof of Proposition~\ref{prop:aif-cai-regret}}
\label{app:softmax}

\begin{proof}
AIF and CaI sample $a^k_h\sim\mathcal Q(\cdot\mid x^k_h)\propto\exp\big(\beta Q_{k,h}(x^k_h,\cdot)\big)$ (Algorithm~\ref{alg:aif}). Their planning values lie in $[0,r_0(H-h+1)]$. AIF caps the values $Q_{k,h}$ that enter its softmax at $r_0(H-h+1)$, and its rewards and epistemic terms are nonnegative. CaI's values satisfy $0\le Q_{k,h}\le1+\max V_{k,h+1}\le r_0(H-h+1)$ by backward induction. Indeed, the soft maximum $\beta^{-1}\ln\big(A^{-1}\sum_a e^{\beta Q_{k,h}(x,a)}\big)$ lies between the minimum and the maximum of its arguments, and $r_0\ge1$. Every action therefore has probability at least $1/\big(1+(A-1)e^{\beta r_0H}\big)\ge e^{-\beta r_0H}/A$. The floor depends on the prior only through $r_0$. It therefore holds for every $n_0$, including Bayes-UCBVI's $n_0=\Theta(\ln K)$. Let $s_1$ be a state with an action $a$ of gap $\Delta:=V^*_1(s_1)-Q^*_1(s_1,a)>0$. By the performance-difference lemma \citep{kakade2002approximately}, for every possibly randomised policy $\pi$, with $V^\pi_h(s)=\sum_a\pi_h(a\mid s)Q^\pi_h(s,a)$,
\begin{equation*}
V^*_1(s_1)-V^\pi_1(s_1)\;=\;\mathds{E}^\pi\Big[\sum_{h=1}^H\big(V^*_h(s_h)-Q^*_h(s_h,a_h)\big)\Big]\;\ge\;\pi_1(a\mid s_1)\,\Delta,
\end{equation*}
since every summand is nonnegative. The policy of episode $k$ is fixed by the planning values computed before the episode and plays every action at $s^k_1$ with probability at least $e^{-\beta r_0H}/A$. Each episode whose start state $s^k_1$ has an action with gap at least $\Delta$ therefore costs at least $\Delta e^{-\beta r_0H}/A$, and $\mathds{E}\mathfrak R_K\ge K\Delta e^{-\beta r_0H}/A$ when every start state has one. The argument uses the reward only through $0\le\widehat R^k_\phi\le1$. It therefore also holds when a goal $R_k:\mathcal X\times\mathcal A\to[0,1]$ replaces $\widehat R^k_\phi$ in the backup, with the gaps taken under $R_k$, whatever the adversary does.
\end{proof}

\subsection{AIF's State Term}
\label{app:noisy-tv}

\begin{proposition}[AIF's state term is a bias]
\label{prop:noisy-tv}
Under exact perception, AIF's state term is $\mathds{H}[\mathcal Q(x_\tau\mid\pi)]\le\ln(X+1)$. For an exact abstraction, one step ahead, it converges to the entropy of the true next-state law as counts grow and does not vanish unless that law is deterministic. For every $H\ge2$, $\Delta\in(0,1]$, $J\ge2$ and $\beta$ with $\beta\Delta<\ln J$, there is an MDP with $S=J+2$ and $A=2$ on which the following holds for AIF that is given $P^*$ and $R^*$. AIF prefers a branch that loses $(H-1)\Delta$ per episode. Its policy posterior over plans under a uniform plan prior and its per-step softmax put mass above $\frac12$ on that branch, and its greedy rule plays it. Hence $\mathds{E}\mathfrak R_K\ge K(H-1)\Delta/2$.
\end{proposition}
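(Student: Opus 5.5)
The first two claims come straight from the structure of the state term. Under exact perception, the state term at step $\tau$ is the entropy of the predicted abstract state. This is a distribution on $\mathcal X'=\mathcal X\cup\{x_{\mathrm E}\}$, so its entropy is at most $\ln(X+1)$. One step ahead from $(x,a)$, Eq.~\eqref{eq:novelty-def} gives the decomposition $\mathds H[\widehat P^k_h(\cdot\mid x,a)]=\mathds E_B\mathds H[B]+\mathrm{Novelty}(x,a)$. For an exact abstraction the visited rows are i.i.d.\ samples from $\overline P_\phi(\cdot\mid x,a)$. As the count grows, the posterior mean converges almost surely to $\overline P_\phi(\cdot\mid x,a)$, because the pseudo-mass $n_0/\bar n$ vanishes, and Lemma~\ref{lem:novelty-order}(i) gives $\mathrm{Novelty}\le X/(\bar n+1)\to0$. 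Entropy is continuous on the simplex, so the state term converges to $\mathds H[\overline P_\phi(\cdot\mid x,a)]$, which is zero only if that law is a point mass.

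For the lower bound I would build a noisy-TV instance with two branches that separate at the first step. Take $\mathcal S=\{s_0,c,y_1,\dots,y_J\}$, so that $S=J+2$, and $A=2$. Every episode starts at $s_0$. Action~1 at $s_0$ leads deterministically to the absorbing state $c$, which pays reward $\Delta$ under both actions. Action~2 leads to one of $y_1,\dots,y_J$ uniformly. From each $y_j$ both actions lead uniformly to $\{y_1,\dots,y_J\}$, and every $y_j$ pays $0$. Set $R^*(s_0,\cdot)=0$. The $c$-branch then earns $(H-1)\Delta$ per episode and the noisy branch earns $0$, so each episode on the noisy branch loses exactly $(H-1)\Delta$.

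AIF is given $P^*$ and $R^*$, so there is no parameter uncertainty and novelty vanishes. What remains is $\beta$ times the expected reward plus the entropy of the predicted state. I would evaluate each branch's $Q$ recursively as in Algorithm~\ref{alg:aif}, dividing by $\beta$ so that everything is in units of reward:
\[
Q_1(s_0,1)=(H-1)\Delta,\qquad Q_1(s_0,2)=(H-1)\ln J/\beta .
\]
The first expression holds because each step in $c$ has zero entropy. The second holds because each of the $H-1$ transitions inside the noisy branch has entropy $\ln J$. The condition $\beta\Delta<\ln J$ makes $Q_1(s_0,2)>Q_1(s_0,1)$. I should check that the clamp at $r_0(H-h+1)$ does not reverse this. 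If it binds, I would rescale by taking the $y$-rewards so that the entropies dominate within the cap. Failing that, I would take $r_0$ large enough that the cap sits above $\ln J/\beta$; since AIF is given $P^*$, its prior can be chosen.

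With the inequality in hand, the greedy rule plays action~2. The per-step softmax puts probability $e^{\beta Q_2}/(e^{\beta Q_1}+e^{\beta Q_2})>\tfrac12$ on action~2, and subsequent steps do not matter because the branch is already committed. The plan posterior $\propto e^{-\mathcal G(\pi)}$ under a uniform plan prior factors in the same way. Plans that start with action~2 have strictly smaller $\mathcal G$, by $(H-1)(\ln J-\beta\Delta)$, and they number as many as the plans that start with action~1, because the continuation choices at $c$ and at $y_j$ do not change $\mathcal G$. Their total mass therefore exceeds $\tfrac12$.

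Each episode then costs at least $\tfrac12(H-1)\Delta$ in expectation, which gives $\mathds E\mathfrak R_K\ge K(H-1)\Delta/2$. I expect the main obstacle to be counting plans correctly in the plan posterior for $H>2$, and making sure the clamp does not bind.
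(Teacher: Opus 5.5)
This is essentially the paper's proof. It uses the same noisy-TV instance; the paper pays $1$ at $c$ and $1-\Delta$ on the noisy states instead of your $\Delta$ and $0$, which changes nothing. It makes the same observation that $\mathcal G$ depends on a plan only through its first action, so both branches contain $2^{H-1}$ plans. It then uses the same backward-induction comparison for the per-step rules.

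Two loose ends need fixing.

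First, the noisy branch carries $H$ entropy terms, not $H-1$. The entering transition from $s_0$ at step~$1$ also has entropy $\ln J$. So the unclamped index is $H\beta^{-1}\ln J$, and with predicted states $x_2,\dots,x_{H+1}$ the gap in $\mathcal G$ is $H\ln J-(H-1)\beta\Delta$. This only strengthens your inequality.

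Second, you leave the clamp unchecked, and your fallback of choosing $r_0$ is not admissible. The claim concerns AIF with its own prior, for any $r_0\ge1$. No fallback is needed. The cap $r_0(H-h+1)\ge H-h+1$ lies above every value $(H-h+1)\Delta$ of the safe branch. Backward induction therefore gives $V_{k,2}(y_j)\ge\min\{r_0(H-1),(H-1)\beta^{-1}\ln J\}\ge(H-1)\Delta$. Hence $Q_{k,1}(s_0,2)=\min\{r_0H,\,V_{k,2}(y_j)+\beta^{-1}\ln J\}>(H-1)\Delta=Q_{k,1}(s_0,1)$ for every $r_0\ge1$. This is how the paper closes this step.
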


\begin{proof}[Proof of Proposition~\ref{prop:noisy-tv}]
\emph{The closed form.} Under exact perception the emission $\mathbf A_\phi$ has disjoint supports, which makes $\mathcal Q(x_\tau\mid s_\tau,\pi)$ a point mass and $\mathds{H}[\mathcal Q(x_\tau\mid s_\tau,\pi)]=0$. Hence
\begin{equation*}
I_{\mathcal Q}(x_\tau;s_\tau\mid\pi)\;=\;\mathds{H}[\mathcal Q(x_\tau\mid\pi)]-\mathds{E}_{\mathcal Q(s_\tau\mid\pi)}\mathds{H}[\mathcal Q(x_\tau\mid s_\tau,\pi)]\;=\;\mathds{H}[\mathcal Q(x_\tau\mid\pi)]\;\le\;\ln(X+1),
\end{equation*}
the $X$ abstract states and the pseudo-state being the only categories. At a single step from $(x,a)$ the predicted law is the posterior mean $\widehat P^k_h(\cdot\mid x,a)$, which, for an exact abstraction, converges almost surely to $\overline P_\phi(\cdot\mid x,a)$ as the row's count grows. Its entropy, $\mathds{E}_B\mathds{H}[B]+\mathrm{Novelty}$ by Eq.~\eqref{eq:novelty-def}, converges to that of the true transition law. This limit is a property of the MDP that does not depend on the counts, and it is positive unless that law is deterministic.

\emph{The instance.} Let $\mathcal S=\{s_0,c,n_1,\dots,n_J\}$, with $S=J+2$, and $\mathcal A=\{a_c,a_n\}$, with $\phi$ the identity (an exact abstraction). From $s_0$, action $a_c$ leads to $c$ and action $a_n$ to a uniformly random state of $\{n_1,\dots,n_J\}$. The state $c$ is absorbing, and every $n_i$ leads to a uniformly random state of $\{n_1,\dots,n_J\}$ under both actions. Rewards are $R^*(s_0,\cdot)=0$, $R^*(c,\cdot)=1$ and $R^*(n_i,\cdot)=1-\Delta$. Every episode starts at $s_0$. Every plan that plays $a_c$ first is optimal and collects $V^*_1(s_0)=H-1$. Any plan that plays $a_n$ first collects $(H-1)(1-\Delta)$ and loses $(H-1)\Delta$.

\emph{AIF's policy posterior.} Give AIF the true kernel and rewards, so that its $B$-posterior is a point mass and novelty vanishes. Actions after the first affect neither the predicted states nor the predicted rewards. The expected free energy of Eq.~\eqref{eq:efe-canonical}, summed over the predicted rewards $r_2,\dots,r_H$ and the predicted states $x_2,\dots,x_{H+1}$, therefore depends on a plan only through its first action. Up to a common constant,
\begin{align}
\mathcal G(\text{plan starting with }a_c) &= -(H-1)\beta, \nonumber\\
\mathcal G(\text{plan starting with }a_n) &= -(H-1)\beta(1-\Delta)-H\ln J, \nonumber
\end{align}
since the predicted state is $c$ with certainty on the first branch and uniform on $\{n_1,\dots,n_J\}$ on the second. The second is smaller whenever $(H-1)\beta\Delta<H\ln J$, in particular whenever $\beta\Delta<\ln J$. Since the two branches contain equally many plans, under a uniform prior over plans the policy posterior $\mathcal Q(\pi)\propto P(\pi)e^{-\mathcal G(\pi)}$ places mass $\big(1+e^{-(H\ln J-(H-1)\beta\Delta)}\big)^{-1}>\frac12$ on the second branch.

\emph{AIF's per-step rules.} With the recursion of Eq.~\eqref{eq:psi-slot}, $\Psi_{k,h}(x,a)=\beta^{-1}\mathds{H}[P^*(\cdot\mid x,a)]$, and the clamp at $r_0(H-h+1)$, backward induction over $h$ gives, since $r_0\ge1$, $V_{k,h}(c)=H-h+1$ and $V_{k,h}(n_i)\ge\min\{r_0(H-h+1),(H-h+1)(1-\Delta+\beta^{-1}\ln J)\}$. Hence, whenever $\beta\Delta<\ln J$,
\begin{equation*}
Q_{k,1}(s_0,a_n)\;\ge\;\min\{r_0H,\,V_{k,2}(n_1)+\beta^{-1}\ln J\}\;>\;H-1\;=\;Q_{k,1}(s_0,a_c),
\end{equation*}
since the second argument of the minimum exceeds $H-1$ by at least $\min\{\beta^{-1}\ln J,\;H\beta^{-1}\ln J-(H-1)\Delta\}>0$. Therefore, the per-step softmax puts mass above $\frac12$ on $a_n$, and the greedy rule plays it. After the first step the actions do not matter. Under each rule, each episode therefore loses $(H-1)\Delta$ with probability above $\frac12$, and $\mathds{E}\mathfrak R_K\ge K(H-1)\Delta/2$. We do not analyse the learning agent here. With its softmax, it still incurs linear regret by Proposition~\ref{prop:aif-cai-regret}. CaI, which has no state term, prefers $a_c$ at every $\beta>0$.
\end{proof}

The lower bounds for novelty (Theorem~\ref{thm:exponent}(a)) and for the state term (Proposition~\ref{prop:noisy-tv}) use different instances, since on the instance of Theorem~\ref{thm:exponent}(a) the state term favours the risky action, which is optimal there.

\subsection{The Novelty Budget}
\label{app:prop-novelty-budget}

\begin{proposition}[Novelty budget is poly-logarithmic]
\label{prop:novelty-budget}
Let the rows $\rho^k_h(x,a)$ carry Bayes-UCBVI's prior (Definition~\ref{def:dirichlet-ground}) on $\mathcal X$, as in Theorem~\ref{thm:calibrated}. On any trajectory, whatever policy generated it, every row $(x,a,h)$ satisfies $\sum_{k\le K:\,(x_{k,h},a_{k,h})=(x,a)}\mathrm{Novelty}^k_h(x,a)\le\Gamma_\phi(1+\ln K)\le X(1+\ln K)$, with the branching number $\Gamma_\phi$ of Theorem~\ref{thm:calibrated}. Hence the novelty budget of Theorem~\ref{thm:calibrated}(iii) satisfies
\begin{equation*}
\Gamma_T\;=\;\sum_{k\le K,h\le H}\mathrm{Novelty}^k_h(x_{k,h},a_{k,h})\;\le\;\Gamma_\phi XAH(1+\ln K)\;=\;\mathcal O\big(X^2AH\ln T\big).
\end{equation*}
\end{proposition}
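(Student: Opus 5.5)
The plan is to bound the novelty of a single row by a term of order $\Gamma_\phi/\bar n$, which holds pathwise at every count, and then sum the harmonic series over the row's visits. Fix a row $(x,a,h)$ and write its posterior before its $(j+1)$-th visit as $\mathrm{Dir}(\alpha)$. Here $\alpha$ consists of the $n_0$ pseudo-counts on $x_{\mathrm E}$ plus the counts of the $j$ recorded successors, so $\bar\alpha=j+n_0$. Every recorded successor of the row is drawn from some $\phi_\#P^*(\cdot\mid s,a)$ with $s\in\phi^{-1}(x)$. It therefore lies in $\bigcup_{s\in\phi^{-1}(x)}\mathrm{supp}\,\phi_\#P^*(\cdot\mid s,a)$, a set of at most $\Gamma_\phi$ abstract states, on every trajectory and whatever the policy. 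The number $m$ of categories with positive mass is thus at most $\Gamma_\phi+1$, counting the pseudo-state. Categories with $\alpha_i=0$ are absent from the Dirichlet and contribute nothing, so the $\chi^2$ bound of Eq.~\eqref{eq:novelty-exact-ub} applies on the positive-mass categories and gives
\begin{equation*}
\mathrm{Novelty}^k_h(x,a)\;\le\;\frac{m-1}{\bar\alpha+1}\;\le\;\frac{\Gamma_\phi}{j+n_0+1}.
\end{equation*}

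Next comes the summation. A row is indexed by the step $h$, and each episode visits step $h$ once, so the row's visits occur in distinct episodes. Its $(j+1)$-th visit therefore sees exactly $j$ prior samples. Summing over the at most $K$ visits gives
\begin{equation*}
\sum_{j=0}^{K-1}\frac{\Gamma_\phi}{j+n_0+1}\;\le\;\Gamma_\phi\sum_{i=1}^{K}\frac1i\;\le\;\Gamma_\phi(1+\ln K),
\end{equation*}
using $n_0\ge0$. The first visit, with $j=0$, has only the pseudo-state category, so its novelty is zero; this is consistent with the bound, which does not need it. Finally, summing over the $XAH$ rows gives $\Gamma_T\le\Gamma_\phi XAH(1+\ln K)$. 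Then $\Gamma_\phi\le X$ and $K\le T$ yield $\mathcal O(X^2AH\ln T)$.

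The only delicate point is the support argument. The bound of Eq.~\eqref{eq:novelty-exact-ub} was stated with $m$ the number of categories of the Dirichlet. I need to check that zero-parameter categories can be dropped, which holds because the Dirichlet lives on the face spanned by the positive categories. I also need to check that the counting is pathwise rather than in expectation. It is, because the support of the successor laws constrains every realisation, so no probabilistic event is needed.
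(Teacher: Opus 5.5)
Your proposal is correct and follows essentially the same route as the paper. The paper's proof also counts the positive-mass categories as the pseudo-state plus at most $\Gamma_\phi$ observed successors, applies Eq.~\eqref{eq:novelty-exact-ub} at total mass $j-1+n_0$ on the $j$-th visit of the step-indexed row, sums the resulting harmonic series over at most $K$ visits, and then sums over the $XAH$ rows.
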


\begin{proof}
Fix a posterior row $(x,a,h)$, and let $m$ be the number of its categories with positive mass, as in Eq.~\eqref{eq:novelty-exact-ub}. These are the pseudo-state and the successors observed so far. Each observed successor lies in the support of some $\phi_\#P^*(\cdot\mid s,a)$ with $s\in\phi^{-1}(x)$. Hence, $m-1\le\Gamma_\phi$. The row is visited at most once per episode, at step $h$, hence at most $K$ times. If its $j$-th visit falls in episode $k$, its $j-1$ earlier visits fall in episodes $1,\dots,k-1$. The posterior $\rho^k_h(x,a)$ therefore has total mass $\bar n^k_h(x,a)=j-1+n_0$, and by Eq.~\eqref{eq:novelty-exact-ub} the novelty accrued at that visit is at most $(m-1)/(j+n_0)\le\Gamma_\phi/j$. Hence
\begin{equation*}
\sum_{k\le K:\,(x_{k,h},a_{k,h})=(x,a)}\mathrm{Novelty}^k_h(x,a)\;\le\;\Gamma_\phi\sum_{j=1}^{K} \frac1{j}\;\le\;\Gamma_\phi\Big(1+\int_1^K\frac{dz}{z}\Big)\;=\;\Gamma_\phi(1+\ln K).
\end{equation*}
Summing over the $XAH$ rows gives the claim.
\end{proof}

The budget thus grows only logarithmically in $T$. The term $H\sqrt{L\,T\,\Gamma_T}$ of Theorem~\ref{thm:calibrated}(iii) is therefore at most $H^{3/2}\sqrt{\Gamma_\phi XAT\,L(1+\ln K)}=\widetilde{\mathcal O}\big(H^{3/2}\sqrt{\Gamma_\phi XAT}\big)$, a factor $\sqrt{\Gamma_\phi(1+\ln K)}$ above the leading term of Theorem~\ref{thm:ucbvi-fixed}(b). With rows pooled over $h$, as in the time-invariant $B$ of \citet{friston2017active}, whose parameters are updated at the end of each trial, a row can be visited up to $H$ times under one posterior, and the argument gives $\Gamma_\phi XAH(1+\ln T)$. If the pooled posterior is instead updated after every transition, the same argument gives $\Gamma_\phi XA(1+\ln T)$.

\subsection{Proofs of Lemma~\ref{lem:transport} and Theorem~\ref{thm:calibrated}}
\label{app:calibrated}

\begin{lemma}[A transportation inequality for novelty]
\label{lem:transport}
For every Dirichlet posterior $\rho$ with mean $\bar p$ and every $f$,
\begin{equation*}
\mathds{V}_{B\sim\rho}\big[B^\top f\big]\;\le\;\tfrac12\,\mathrm{span}(f)^2\,\mathrm{Novelty}.
\end{equation*}
Consequently, let $\rho^k_h(x,a)$ be a row, whose Dirichlet parameters are integers, and let $0\le f\le f(x_{\mathrm E})\le r_0H$. Then Bayes-UCBVI's quantile gap at $f$, $\mathds{Q}_{p\sim\rho^k_h(x,a)}\big(p^\top f,\kappa^k_h(x,a)\big)-\widehat P^k_h(\cdot\mid x,a)^\top f$ (Lemma~\ref{lem:tiapkin-quantile} with $f$ in place of $\overline V^k_{h+1}$), is at most
\begin{equation*}
2\,\mathrm{span}(f)\sqrt{\ell^k_h(x,a)\,\mathrm{Novelty}(x,a)}+2\sqrt2\,r_0H\ell^k_h(x,a)/\bar n^k_h(x,a).
\end{equation*}
\end{lemma}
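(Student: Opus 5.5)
The plan has two steps: prove the variance inequality by a Pinsker-type argument, then substitute it into Lemma~\ref{lem:tiapkin-quantile}.

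\emph{The variance bound.} Let $c$ be the midpoint of the range of $f$ over all categories, the pseudo-state included. Since $B$ and $\bar p$ are probability vectors, $B^\top f-\bar p^\top f=(B-\bar p)^\top(f-c\mathds 1)$, and $\|f-c\mathds 1\|_\infty=\frac12\mathrm{span}(f)$. Hölder's inequality therefore gives $|B^\top f-\bar p^\top f|\le\frac12\mathrm{span}(f)\|B-\bar p\|_1$. Pinsker's inequality, $\|B-\bar p\|_1^2\le2D_{\mathrm{KL}}(B\|\bar p)$, then yields
\begin{equation*}
(B^\top f-\bar p^\top f)^2\;\le\;\tfrac12\,\mathrm{span}(f)^2\,D_{\mathrm{KL}}(B\|\bar p).
\end{equation*}
Under $\rho$, $\mathds{E}_\rho[B]=\bar p$. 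Taking expectations, the left side becomes $\mathds{V}_{B\sim\rho}[B^\top f]$ and the right side becomes $\frac12\mathrm{span}(f)^2\,\mathds{E}_\rho D_{\mathrm{KL}}(B\|\bar p)$. This expectation is exactly $\mathrm{Novelty}$ by the mixture identity of Appendix~\ref{app:novelty-approx}. No Dirichlet-specific computation is needed here. One must only note that categories with $\alpha_i=0$ have $B_i=0$ almost surely, so the divergence is finite and the identity holds on the support.

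\emph{The quantile-gap bound.} Lemma~\ref{lem:tiapkin-quantile} is stated for $\overline V^k_{h+1}$. As remarked after it, it comes from their Lemma~C.8 and holds for every $f$ with $0\le f\le f(x_{\mathrm E})$ and integer Dirichlet parameters, which are exactly our hypotheses. It gives the gap bound $2\sqrt{\mathds{V}_{\widehat P^k_h}[f]\,\ell/\bar n}+2\sqrt2\,r_0H\ell/\bar n$. What remains is to relate the empirical (mean-law) variance to the posterior variance. For $\rho=\mathrm{Dir}(\alpha)$ with $\bar\alpha=\bar n$, the Dirichlet covariance $\mathrm{Cov}(B)=(\mathrm{diag}(\bar p)-\bar p\bar p^\top)/(\bar n+1)$ gives
\begin{equation*}
\mathds{V}_{B\sim\rho}[B^\top f]\;=\;\mathds{V}_{\bar p}[f]/(\bar n+1).
\end{equation*}
Hence
\begin{equation*}
\mathds{V}_{\widehat P^k_h}[f]/\bar n\;=\;\tfrac{\bar n+1}{\bar n}\,\mathds{V}_\rho[B^\top f]\;\le\;2\,\mathds{V}_\rho[B^\top f]\;\le\;\mathrm{span}(f)^2\,\mathrm{Novelty},
\end{equation*}
using $\bar n\ge1$ and the first part. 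Substituting, the leading term is at most $2\,\mathrm{span}(f)\sqrt{\ell\,\mathrm{Novelty}}$, and the second-order term is carried over unchanged.

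\emph{Main obstacle.} The delicate point is that the cited Bernstein-type bound really applies to an arbitrary $f$ in the stated range, not only to the algorithm's optimistic value. Otherwise the constants must be rechecked against their Lemma~C.8. The factor $(\bar n+1)/\bar n\le2$ is tight only at $\bar n=1$, so the constant $2$ has slack.
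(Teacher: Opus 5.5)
Your proposal is correct and follows the paper's proof essentially step for step: you shift $f$ by the midpoint of its range and apply H\"older and Pinsker, average using $\mathrm{Novelty}=\mathds{E}_\rho D_{\mathrm{KL}}(B\|\bar p)$, and then substitute the Dirichlet covariance identity, with $(\bar n+1)/\bar n\le2$ (valid since $\bar n\ge n_0\ge1$ for every row), into Lemma~C.8 of \citet{tiapkin2022dirichlet}. The paper centres $f$ on the support of $\bar p$ rather than over all categories, which makes no difference for the stated bound.
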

\begin{proof}[Proof of Lemma~\ref{lem:transport}]
Let $B\sim\rho$, and let $c$ be the midpoint of the range of $f$ on the support of $\bar p$, where $\mathrm{span}(f):=\max f-\min f$ is taken over all categories, the pseudo-state included. For every row $\rho^k_h(x,a)$ this support contains $x_{\mathrm E}$. Hence, in Eq.~\eqref{eq:calibrated}, the span of $V^+_{k,h+1}$ includes $V^+_{k,h+1}(x_{\mathrm E})$. Since $B-\bar p$ sums to zero, $B^\top f-\bar p^\top f=(B-\bar p)^\top(f-c)$, and $|f-c|\le\frac12\mathrm{span}(f)$ on the support, which yields
\begin{equation*}
\big|B^\top f-\bar p^\top f\big|\;\le\;\tfrac12\,\mathrm{span}(f)\,\|B-\bar p\|_1\;\le\;\mathrm{span}(f)\sqrt{D_{\mathrm{KL}}(B\|\bar p)/2},
\end{equation*}
the second step being Pinsker's inequality $\frac12\|B-\bar p\|_1\le\sqrt{D_{\mathrm{KL}}(B\|\bar p)/2}$ \citep[Thm.~4.19]{boucheron2013concentration}. Squaring and taking the expectation over $\rho$, whose mean is $\bar p$,
\begin{equation*}
\mathds{V}_{B\sim\rho}\big[B^\top f\big]\;\le\;\tfrac12\,\mathrm{span}(f)^2\;\mathds{E}_{B\sim\rho}\,D_{\mathrm{KL}}(B\|\bar p)\;=\;\tfrac12\,\mathrm{span}(f)^2\,\mathrm{Novelty},
\end{equation*}
by the identity $\mathrm{Novelty}=\mathds{E}_{B\sim\rho}[D_{\mathrm{KL}}(B\|\bar p)]$ (Appendix~\ref{app:novelty-approx}). For a Dirichlet posterior of total mass $\bar n$, the covariance of $B$ is $(\mathrm{diag}(\bar p)-\bar p\bar p^\top)/(\bar n+1)$, which yields $\mathds{V}_{B\sim\rho}[B^\top f]=\mathds{V}_{\bar p}[f]/(\bar n+1)$ and, for $\bar n\ge1$,
\begin{equation}
\frac{\mathds{V}_{\bar p}[f]}{\bar n}\;=\;\frac{\bar n+1}{\bar n}\;\mathds{V}_{B\sim\rho}\big[B^\top f\big]\;\le\;\mathrm{span}(f)^2\,\mathrm{Novelty}.
\label{eq:var-nov}
\end{equation}
Lemma~C.8 of \citet{tiapkin2022dirichlet} bounds, for a Dirichlet posterior with integer parameters and every $f$ with $0\le f\le f(x_{\mathrm E})=b$, the order-$\kappa$ quantile of $B^\top f$ by
\begin{equation*}
\bar p^\top f+2\sqrt{\mathds{V}_{\bar p}[f]\,\ell/\bar n}+2\sqrt2\,b\,\ell/\bar n,\qquad \ell=\ln(1/(1-\kappa))
\end{equation*}
(Appendix~\ref{app:baselines}). Every row has $\bar n\ge n_0\ge1$, unvisited rows included. Substituting Eq.~\eqref{eq:var-nov} and $b\le r_0H$ gives the second claim.
\end{proof}

\begin{proof}[Proof of Theorem~\ref{thm:calibrated}(i)--(ii)]
Calibrated AIF clamps its values at $r_0(H-h+1)$, the value of the pseudo-state. Hence, $0\le V^+_{k,h+1}\le V^+_{k,h+1}(x_{\mathrm E})=r_0(H-h)$, and its rows have integer parameters. Lemma~\ref{lem:transport} therefore applies at $f=V^+_{k,h+1}$, and $\Psi^+_{k,h}$ exceeds Bayes-UCBVI's quantile gap at calibrated AIF's own next-step values. Calibrated AIF is thus an agent of the form Eq.~\eqref{eq:dirichlet-agent} to which Lemma~\ref{lem:bucbvi-opt} applies. On $\mathcal E_{\mathrm{opt}}$ it gives $V^+_{k,h}\ge\overline V^*_{\phi,h}-(H-h+1)\eta$, which is optimism for an exact abstraction and $V^+_{k,1}\ge\overline V^*_{\phi,1}-\epsilon_{\text{rep}}(\phi)$ in general. For (ii), a row's categories with positive mass are the pseudo-state and at most $\Gamma_\phi$ observed successors (proof of Proposition~\ref{prop:novelty-budget}). Hence, Eq.~\eqref{eq:novelty-exact-ub} gives $\mathrm{Novelty}\le\Gamma_\phi/(\bar n+1)$, and with $\mathrm{span}(V^+_{k,h+1})\le r_0H$,
\begin{equation*}
\Psi^+_{k,h}\;\le\;2r_0H\sqrt{\Gamma_\phi\ell/(\bar n+1)}+2\sqrt2\,r_0H\ell/\bar n\;=\;\mathcal O\big(H\sqrt{\Gamma_\phi\ell/\bar n}+H\ell/\bar n\big),
\end{equation*}
which is $\mathcal O(H\sqrt{\Gamma_\phi\ell/\bar n})$ for $\bar n\ge\ell$.
\end{proof}

\begin{proof}[Proof of Theorem~\ref{thm:calibrated}(iii)]
Lemma~\ref{lem:direct} applies with $w_{k,h}=\Psi^+_{k,h}$ on $\mathcal E_{\mathrm{opt}}\cap\mathcal E_{\mathrm{conc}}$. By $\mathrm{span}(V^+_{k,h+1})\le r_0H$ and Cauchy--Schwarz,
\begin{equation*}
\sum_{k,h}2\,\mathrm{span}(V^+_{k,h+1})\sqrt{\ell^k_h\,\mathrm{Novelty}^k_h}\;\le\;2r_0H\sqrt{\bar\ell}\sqrt{T\sum_{k,h}\mathrm{Novelty}^k_h}\;=\;2r_0H\sqrt{\bar\ell\,T\,\Gamma_T},
\end{equation*}
with $\ell^k_h$ and novelty evaluated at $(x_{k,h},a_{k,h})$ and $\bar\ell:=\max_{k,h,x,a}\ell^k_h(x,a)$ as in the proof of Theorem~\ref{thm:ucbvi-fixed}(b). Moreover,
\begin{equation*}
\sum_{k,h}2\sqrt2\,r_0H\ell^k_h/\bar n^k_h\;\le\;2\sqrt2\,r_0H\bar\ell XAH(1+\ln K).
\end{equation*} Proposition~\ref{prop:novelty-budget} gives $\Gamma_T\le\Gamma_\phi XAH(1+\ln K)$. The other terms of Lemma~\ref{lem:direct} are those of Theorem~\ref{thm:ucbvi-fixed}(b), and the failure probabilities sum to at most $\frac{19}{20}\delta$. With the logarithmic factor $L=\max\{\iota,\bar\ell,n_0,1+\ln K\}$ of Theorem~\ref{thm:ucbvi-fixed}, this is the stated bound. For $T\ge H^3X^3A$ we have $H^3X^2A\le H^{3/2}\sqrt{XAT}$, and $\Gamma_T\le\Gamma_\phi XAH(1+\ln K)$ gives the $\widetilde{\mathcal O}$ form.
\end{proof}

\emph{Why calibrated AIF's bound exceeds Bayes-UCBVI's.} Our bound for calibrated AIF exceeds that of Theorem~\ref{thm:ucbvi-fixed}(b) for Bayes-UCBVI on $\phi$ by $\sqrt{\Gamma_\phi}$, and the variance-aware bound of Theorem~\ref{thm:ucbvi-fixed}(c) for exact abstractions by $\sqrt{H\Gamma_\phi}$. The factor $\sqrt H$ arises because the range of the next-step value $\overline V$ (here $\overline V^*_{\phi,h+1}$ or $V^+_{k,h+1}$) stands in for its standard deviation. The variance-aware analysis sums its widths through the law of total variance, $\sum_{k,h}\mathds{V}_{\overline P_\phi}[\overline V^*_{\phi,h+1}]\lesssim H^2K=HT$, against $\sum_{k,h}H^2=H^2T$ for the range. A width proportional to the standard deviation of the value can exploit this. A width proportional to its range, or a width bounded through its range as in our direct analysis of Bayes-UCBVI (Appendix~\ref{app:thm-fixed}), cannot. The factor $\sqrt{\Gamma_\phi}$ arises because novelty is information about the whole row $B$, whereas only the projection $B^\top\overline V$ enters the value, whose deviation is of order $\sqrt{\mathds{V}_{\bar p}[\overline V]/\bar n}$. Replacing $\mathrm{span}(V^+_{k,h+1})^2\,\mathrm{Novelty}$ by $\mathds{V}_{\bar p}[V^+_{k,h+1}]/\bar n$ in Eq.~\eqref{eq:calibrated}, that is, undoing Eq.~\eqref{eq:var-nov}, removes this factor and returns the leading term of Lemma~\ref{lem:tiapkin-quantile}. Information does not improve on counts here. By Eqs.~\eqref{eq:novelty-exact-ub} and~\eqref{eq:novelty-lb}, $(m-1)/(6\bar n)\le\mathrm{Novelty}\le(m-1)/(\bar n+1)$ for a row with $m$ categories of positive mass, the pseudo-state and $m-1$ observed successors. Hence the first term of $\Psi^+_{k,h}$ is within constant factors of the count bonus $\mathrm{span}(V^+_{k,h+1})\sqrt{(m-1)\ell/\bar n}$, which counts the observed successors. That novelty concerns the whole row $B$ rather than one projection also makes it uniform over value functions, which is what Section~\ref{sec:goal-uniform} exploits.

\section{Proofs for Section~\ref{sec:goal-uniform}}
\label{app:goal-uniform}

We prove the results of Section~\ref{sec:goal-uniform} in the order of their dependencies. Appendix~\ref{app:adaptive-upper} shows that calibrated AIF with a uniform prior is optimistic for every goal at once and bounds its regret, the first claim of Theorem~\ref{thm:price}(c). Appendix~\ref{app:adaptive-lockout} proves that Bayes-UCBVI's value-specific quantiles lock out under adaptive goals (Proposition~\ref{prop:lockout}), and that every index that is optimistic for all goals at once overestimates some goal (Lemma~\ref{lem:goal-uniform-width}). Its $\ell_1$ argument underlies the lower bound for goal-memoryless agents, Theorem~\ref{thm:price}(b), in Appendix~\ref{app:adaptive-b}, which also shows which agents are goal-memoryless. Appendix~\ref{app:concentration} collects the concentration tools of the lower bound for every agent, Theorem~\ref{thm:price}(a), which Appendix~\ref{app:all-agents} proves. Appendix~\ref{app:adaptive-reading} gives agents that use past goals as data and attain this rate, Proposition~\ref{prop:goal-reading} and the second claim of Theorem~\ref{thm:price}(c). Appendix~\ref{app:adaptive-oblivious} treats goals fixed in advance, Theorem~\ref{thm:price}(d) and Proposition~\ref{prop:oblivious}.

\emph{The Gaussian family.} In the Gaussian family of Section~\ref{sec:goal-uniform}, at start state $i$, action $a$ reaches $y_{2j-1}$ with probability $(1+\varepsilon\tilde\theta^{(i)}_j)/(2m)$ and $y_{2j}$ with probability $(1-\varepsilon\tilde\theta^{(i)}_j)/(2m)$. Here $\theta^{(1)},\dots,\theta^{(M)}$ are independent standard Gaussian vectors in $\mathds R^m$, $\varepsilon^2=mM/(64K)$, and $\tilde\theta$ clips $\theta$ so that $\varepsilon|\tilde\theta_j|\le\frac12$.

\emph{Proof overview.} In both lower bounds, the goal asks the agent to compare $P_ig$ with a value $v$. A goal-memoryless agent decides from its records and the current goal, and its records are functions of the samples at the current start state and of randomness independent of $P_i$ (Lemma~\ref{lem:reconstruction}). Its decisions for all goals together therefore amount to an estimate of $P_i$ in $\ell_1$. This estimate errs by order $\sqrt{Y/n}$ after $n$ samples \citep{paninski2008coincidence,han2015minimax}. An adversary that knows $P_i$ picks the goal on which the estimate errs, which gives part~(b) (Lemma~\ref{lem:gauss-l1}). Goals aligned with the agent's errors, however, reveal $P_i$, and an agent that uses past goals as data learns from them. Against such an agent, the adversary of part~(a) proceeds in about $\sqrt Y$ phases at each start state (Appendix~\ref{app:all-agents}). In each phase, it draws a goal direction uniformly from the directions along which the unknown $\theta^{(i)}$ has a projection of order $Y^{1/4}$, restricted to the complement of all earlier directions and of the agent's data. The reflection that flips the sign of this projection leaves the prior, the earlier goals, the probability of the adversary's choice and the linear part of the data likelihood exactly unchanged. The sign therefore stays unknown until about $K/\sqrt Y$ samples have resolved it, and every episode until then costs every agent a constant fraction of the projection's value.

Throughout, $\phi$ is exact. We work on the abstract MDP with kernel $P:=\overline P_\phi$, and on a row $(x,a,h)$ the first $n$ successors are i.i.d.\ draws from $P(\cdot\mid x,a)$, whatever the agent does, as in Appendix~\ref{app:lem-novelty-order}.

\subsection{Optimism for Every Goal and the First Claim of Theorem~\ref{thm:price}(c)}
\label{app:adaptive-upper}

\emph{Novelty bounds every value function at once.} For a Dirichlet posterior $\rho$ with mean $\bar p$, Pinsker's inequality gives
\begin{equation*}
\sup_{f:\,\mathrm{span}(f)\le1}(B-\bar p)^\top f=\frac12\|B-\bar p\|_1\le\sqrt{D_{\mathrm{KL}}(B\|\bar p)/2},
\end{equation*}
and Jensen's inequality gives
\begin{equation}
\mathds{E}_{B\sim\rho}\sup_{f:\,\mathrm{span}(f)\le1}(B-\bar p)^\top f\;\le\;\sqrt{\mathrm{Novelty}/2}.
\label{eq:goal-uniform-posterior}
\end{equation}
The quantile gap of Lemma~\ref{lem:tiapkin-quantile}, in contrast, concerns one function $p^\top f$ at a time.

\begin{proof}[Proof of the first claim of Theorem~\ref{thm:price}(c)]
\emph{The agent.} Calibrated AIF with a uniform prior keeps, for each row, the posterior $\mathrm{Dir}(\mathds 1+N^k_h(\cdot\mid x,a))$ on $\mathcal X$, whose mean is $\widehat p=(n\widetilde P_n+X\,\mathcal U)/\bar n$ with $\bar n=n+X$ and $\mathcal U$ the uniform law on $\mathcal X$. It computes $V_{k,H+1}\equiv0$ and
\begin{equation*}
Q_{k,h}(x,a)=\min\{H-h+1,\ R_k(x,a)+\widehat p^\top V_{k,h+1}+\Psi^{\mathrm u}_{k,h}(x,a)\}
\end{equation*}
 with $\Psi^{\mathrm u}_{k,h}=2\,\mathrm{span}(V_{k,h+1})\sqrt{\ell'\,\mathrm{Novelty}}$, $\ell'=9+3\iota_{\mathrm u}$ and $\iota_{\mathrm u}=\ln(2XAHK/\delta)$, and acts greedily. Let $X\ge2$. For $X=1$ there is nothing to learn.

\emph{A confidence set uniform over value functions.} For a row and a count $n\ge1$, and a sign vector $v\in\{-1,1\}^{\mathcal X}$, $\langle v,\widetilde P_n-P\rangle$ is an average of $n$ independent terms that lie in intervals of length $2$. Hence, $\mathds{P}(\langle v,\widetilde P_n-P\rangle\ge\varepsilon)\le e^{-n\varepsilon^2/2}$. Since $\|\mu\|_1=\max_v\langle v,\mu\rangle$, a union bound over the $2^X$ sign vectors, the $XAH$ rows and the counts $n\le K$ gives an event $\mathcal E_{\ell_1}$ of probability at least $1-\delta/2$ on which $\|\widetilde P_n-P(\cdot\mid x,a)\|_1\le\sqrt{2(X\ln2+\iota_{\mathrm u})/n}$ for all of them. On $\mathcal E_{\ell_1}$, for every row, every count and every $f:\mathcal X\to\mathds R$,
\begin{align*}
\big|(\widehat p-P)f\big|&\;\le\;\tfrac n{\bar n}\big|(\widetilde P_n-P)f\big|+\tfrac X{\bar n}\big|(\mathcal U-P)f\big|\;\le\;\mathrm{span}(f)\Big(\tfrac{\sqrt n}{\bar n}\sqrt{\tfrac{X\ln2+\iota_{\mathrm u}}2}+\tfrac X{\bar n}\Big)\\
&\;\le\;\mathrm{span}(f)\Big(\sqrt{\tfrac{X\ln2+\iota_{\mathrm u}}{2\bar n}}+\sqrt{\tfrac X{\bar n}}\Big)\;\le\;\mathrm{span}(f)\sqrt{\frac{3X+\iota_{\mathrm u}}{\bar n}},
\end{align*}
using $|(P-P')f|\le\frac12\mathrm{span}(f)\|P-P'\|_1$, $X\le\bar n$, and $\sqrt a+\sqrt b\le\sqrt{2(a+b)}$. The first term is absent for $n=0$. Since every category carries mass at least one, Eq.~\eqref{eq:novelty-lb} with $m=X$ gives $\mathrm{Novelty}\ge(2X-3)/(6\bar n)\ge X/(12\bar n)$. Hence
\begin{equation}
\Psi^{\mathrm u}(f):=2\,\mathrm{span}(f)\sqrt{\ell'\,\mathrm{Novelty}}\;\ge\;\mathrm{span}(f)\sqrt{\frac{\ell'X}{3\bar n}}\;\ge\;\big|(\widehat p-P)f\big|\qquad\text{for every }f,
\label{eq:goal-uniform-valid}
\end{equation}
since $\ell'X/3\ge3X+\iota_{\mathrm u}$.

\emph{Optimism for every goal.} Fix $k$ and the goal $R_k$. If $V_{k,h+1}\ge V^*_{h+1}(\cdot\,;R_k)$, then Eq.~\eqref{eq:goal-uniform-valid} at $f=V_{k,h+1}$, which is allowed because it holds for every function simultaneously, gives
\begin{equation*}
R_k+\widehat p^\top V_{k,h+1}+\Psi^{\mathrm u}_{k,h}\ge R_k+PV_{k,h+1}\ge R_k+PV^*_{h+1}(\cdot\,;R_k)=Q^*_h(\cdot\,;R_k),
\end{equation*}
and the clamp $H-h+1$ preserves this. By induction, $V_{k,h}\ge V^*_h(\cdot\,;R_k)$ on $\mathcal E_{\ell_1}$, for every $k$ and every goal sequence, because the event does not involve the goals.

\emph{Regret.} Let $\delta_{k,h}:=V_{k,h}(x_{k,h})-V^{\pi^k}_h(x_{k,h};R_k)$ and $\xi_{k,h}:=P(V_{k,h+1}-V^{\pi^k}_{h+1})(x_{k,h},a_{k,h})-(V_{k,h+1}-V^{\pi^k}_{h+1})(x_{k,h+1})$, a martingale difference with $|\xi_{k,h}|\le2H$, since $R_k$ and $\pi^k$ are fixed before episode $k$. Since the agent is greedy and the clamp only lowers $Q_{k,h}$, Eq.~\eqref{eq:goal-uniform-valid} gives
\begin{align*}
\delta_{k,h}\;&\le\;(\widehat p-P)V_{k,h+1}+\Psi^{\mathrm u}_{k,h}+P(V_{k,h+1}-V^{\pi^k}_{h+1})\\
&\le\;2\Psi^{\mathrm u}_{k,h}+\delta_{k,h+1}+\xi_{k,h}.
\end{align*}
By optimism, $\mathfrak R^{\mathrm{goal}}_K\le\sum_k\delta_{k,1}$, which is at most $\sum_{k,h}(2\Psi^{\mathrm u}_{k,h}+\xi_{k,h})$. With $\mathrm{Novelty}\le(X-1)/(\bar n+1)$ (Eq.~\eqref{eq:novelty-exact-ub} with $m=X$) and $\mathrm{span}(V_{k,h+1})\le H$, a row visited $N$ times contributes $\sum_{j<N}2H\sqrt{\ell'X/(j+X+1)}\le4H\sqrt{\ell'XN}$, and Cauchy--Schwarz over the $XAH$ rows gives $\sum_{k,h}\Psi^{\mathrm u}_{k,h}\le4H^{3/2}X\sqrt{A\ell'T}$. With probability at least $1-\delta/2$, the Azuma--Hoeffding inequality gives $\sum_{k,h}\xi_{k,h}\le2H\sqrt{2T\ln(2/\delta)}$. This event depends on the adversary. The regret bound therefore holds for every adversary separately, whereas optimism holds on $\mathcal E_{\ell_1}$ for all goal sequences at once. For $H=2$, the step-$2$ values are exact, $\xi_{k,h}=0$, and the regret bound also holds on $\mathcal E_{\ell_1}$ for all goal sequences at once.
\end{proof}

\subsection{Lock-Out of Value-Specific Quantiles and the Price of Simultaneous Optimism}
\label{app:adaptive-lockout}

Value-specific widths are not even optimistic for all goals at once.

\begin{proposition}[Value-specific quantiles lock out]
\label{prop:lockout}
Let $\phi$ be exact. There is $X_0=\mathcal O(\ln(n_0/\delta))$ such that for every $X\ge X_0$ there are an MDP with $X+2$ states and $A=H=2$, and a goal sequence computed from the agent's history alone, on which the following holds with probability at least $1-e^{-(X-1)/32}$. Bayes-UCBVI, with each goal in its backup, plays a suboptimal action in every episode $k>K_0$, where $K_0$ depends on $K$ only through $n_0$. In each such episode it loses at least $\sqrt{\ell_1/(4n_1)}$, where the count $n_1=\mathcal O(X+n_0^2)$ is defined in the proof and $\ell_1\ge1$ is the logarithmic factor $\ell$ of Lemma~\ref{lem:tiapkin-quantile} of the row of the suboptimal action at count $n_1$.
\end{proposition}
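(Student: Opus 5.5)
We plan to build the lock-out instance from the two-action comparison that also underlies the Gaussian family, and to let the adversary align each goal with the sampling error of Bayes-UCBVI's own data. Take a start state $s_0$ with $A=H=2$. Action $b$ leads to an absorbing state $z$, and action $a$ leads to one of $X$ absorbing successors $y_1,\dots,y_X$, uniformly. This gives $X+2$ states, and $\phi$ is the identity. A goal pays $g\in[0,1]^X$ at the successors and $v$ at $z$ at step $2$. Step $2$ is the last step, so the step-$2$ values are the goal's rewards and are exact. Bayes-UCBVI's index for $a$ at $s_0$ is then the $\kappa$-quantile of $p^\top g$ under the row's Dirichlet posterior, whose pseudo-state value $V_2(x_{\mathrm E})=r_0$ enters through the prior. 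Its index for $b$ is $v$, up to the posterior of $b$'s row.

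The goal sequence runs in two phases. \emph{Phase~1} lasts for episodes $k\le K_0$. The adversary sets $v=0$ and $g\equiv1$, so that $a$ is strictly preferred, and it repeats this until $a$'s row has $n_1$ visits. Here $n_1=\lceil C(X+n_0^2)\rceil$, so $K_0$ depends on $K$ only through $n_0$. Since $g\equiv1$, this phase is also optimal and costs nothing.

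\emph{Phase~2} starts once the count is $n_1$. The adversary reads the empirical law $\widetilde P_{n_1}$ from the history. It sets $g:=\mathds 1\{y:\widetilde P_{n_1}(y)<1/X\}$, the successors that are under-represented. It then chooses $v$ strictly between $a$'s Bayes-UCBVI index and the true value $Pg=|\{g=1\}|/X$. This rule uses only the history and the fixed, known uniform law, so it qualifies as a goal sequence computed from the agent's history alone.

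If the index is below $v$, Bayes-UCBVI plays $b$. The row of $a$ then receives no new data, its posterior and its quantile are frozen, and the adversary can repeat the same goal forever. Every episode $k>K_0$ therefore loses $Pg-v$, which we make at least $\sqrt{\ell_1/(4n_1)}$.

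The quantitative comparison goes as follows. By Lemma~\ref{lem:tiapkin-quantile} with the indicator $g$, whose variance is at most $\frac14$, and with the pseudo-state term, the index is at most
\begin{equation*}
\widehat P^\top g+\sqrt{\ell_1/\bar n_1}+\mathcal O(r_0\ell_1/\bar n_1)+\mathcal O(n_0r_0/\bar n_1).
\end{equation*}
Here the last term is the shift of the posterior mean from $\widetilde P_{n_1}g$. The true value exceeds the empirical one by the aligned deficit, that is,
\begin{equation*}
Pg-\widetilde P_{n_1}g=\tfrac12\|\widetilde P_{n_1}-P\|_1 .
\end{equation*}
It suffices to show that this deficit is at least $c\sqrt{X/n_1}$. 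Then taking $X\ge X_0$ with $c\sqrt X\ge 2\sqrt{\ell_1}+\text{const}$ makes room for a margin $\sqrt{\ell_1/(4n_1)}$, because $\ell_1=\mathcal O(\ln(n_0 n_1/\delta))$ and hence $X_0=\mathcal O(\ln(n_0/\delta))$. The choice $n_1\gtrsim n_0^2$ makes the pseudo-state shift $n_0r_0/n_1$ smaller than $\sqrt{1/n_1}$. The choice $n_1\gtrsim X$ makes the $\ell_1/\bar n_1$ term lower order.

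The main obstacle is this deficit lower bound with failure probability $e^{-(X-1)/32}$. We plan to prove it in two steps. First, for $n_1\ge X$, each coordinate $n_1\widetilde P(y)$ is Binomial$(n_1,1/X)$, and it falls below its mean by at least $c'\sqrt{n_1/X}$ with constant probability $p_0$. This follows from binomial anti-concentration, by a Paley--Zygmund argument as in the proof of Lemma~\ref{lem:novelty-order}(ii). Second, the multinomial counts are negatively associated, so the number of such coordinates concentrates above $p_0X/2$ by a Chernoff bound for negatively associated indicators, which gives $e^{-\Omega(X)}$. An alternative is McDiarmid's inequality applied directly to $\|\widetilde P-P\|_1$, whose bounded differences are $2/n_1$. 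Each such coordinate contributes $c'/\sqrt{n_1X}$ to the deficit, and summing over at least $p_0X/2$ of them gives $\Omega(\sqrt{X/n_1})$. We will track the constants so that the exponent comes out as $(X-1)/32$, or we will adjust $X_0$ to absorb any slack.
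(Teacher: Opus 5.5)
Your construction is the paper's. It uses the same $(X+2)$-state instance and the same two-phase adversary: first $g\equiv1$, $v=0$, then $g=\mathds 1_U$ on the under-represented successors with $v$ just above the frozen index. It rests on the same identity $|U|/X-\widetilde P_{n_1}(U)=\frac12\|\widetilde P_{n_1}-\mathcal U\|_1$, the same freezing of the row of $a$ once $b$ is played, and McDiarmid's inequality for the deficit.

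\textbf{The gap is in phase~1.} Your claim that action $a$ is strictly preferred there is false. The row $(s_0,b,1)$ starts unvisited, so its posterior is the point mass at the pseudo-state. Its index is therefore $\overline V^k_2(x_{\mathrm E})=r_0=2$, which ties or exceeds the index of $a$. Bayes-UCBVI consequently plays $b$ in some phase-1 episodes, so phase~1 is also not costless. More importantly, your argument that phase~1 ends after $n_1$ episodes, and hence that $K_0$ depends on $K$ only through $n_0$, no longer holds.

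\textbf{The repair is the paper's.}
\begin{itemize}
\item Under $v=0$, the index of $a$ is at least $1$, since $p^\top\overline V^k_2\ge1$ on the whole support of its posterior.
\item Lemma~\ref{lem:tiapkin-quantile}, with $\mathds V\le1$, $r_0=2$ and $H=2$, bounds the index of $b$ by $2n_0/(n_b+n_0)+2\sqrt{\ell_b/(n_b+n_0)}+8\sqrt2\,\ell_b/(n_b+n_0)$.
\item This bound falls below $1$ for all $n_b\ge N_b$, where $N_b$ depends only on $n_0$, $\delta$ and $X$. The quantile order involves $K$ only through $n_0$.
\item Hence $b$ is played at most $N_b$ times in phase~1, and $K_0:=n_1+N_b$ works.
\end{itemize}

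\textbf{The variance constant.} The variance in Lemma~\ref{lem:tiapkin-quantile} is taken under the posterior mean, which puts mass $n_0/\bar n_1$ on the pseudo-state with value $2$. So $\frac14$ is not a valid bound. Use $\mathds V\le1$ instead. This doubles the leading term to $2\sqrt{\ell_1/n_1}$, which the constant in $X_0$ absorbs; the paper takes $X-1\ge1024\,\ell_1$.

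\textbf{The exponent $(X-1)/32$.} This is a fixed constant, and enlarging $X_0$ cannot improve it. Your negatively-associated Chernoff route gives $e^{-cX}$ with an unspecified $c$, so it does not obviously deliver the stated probability. Your McDiarmid route does, with the sharp expectation bound:
\begin{itemize}
\item The fourth-moment H\"older argument gives $\mathds E|N_i-n_1/X|\ge\sigma/2$ once $\sigma^2=(n_1/X)(1-1/X)\ge1$, for example when $n_1\ge2X$.
\item This gives $\mathds E\|\widetilde P_{n_1}-\mathcal U\|_1\ge\frac12\sqrt{(X-1)/n_1}$.
\item With bounded differences $2/n_1$, the probability of falling below half this mean is at most $e^{-(X-1)/32}$.
\end{itemize}
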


\begin{proof}[Proof of Proposition~\ref{prop:lockout}]
\emph{The instance.} Let $\mathcal S=\{s_0,z,y_1,\dots,y_X\}$, $\mathcal A=\{a,b\}$, $H=2$, and let every episode start at $s_0$. Action $a$ leads from $s_0$ to a uniformly random $y_i$, action $b$ leads to $z$, and $z$ and the $y_i$ are absorbing. A goal sets $R(s_0,\cdot)=0$, $R(y_i,\cdot)=g(i)\in[0,1]$ and $R(z,\cdot)=v\in[0,1]$. Action $a$ is thus worth $X^{-1}\sum_ig(i)$ and action $b$ is worth $v$. Bayes-UCBVI runs with the goal $R_k$ in place of $R^*$. Its step-$2$ values are the goal's rewards, $\overline V^k_2(y_i)=g(i)$ and $\overline V^k_2(z)=v$, since $\overline V^k_3\equiv0$ also at the pseudo-state, and $\overline V^k_2(x_{\mathrm E})=r_0=2$. Its step-$1$ indices are the $\kappa$-quantiles of $p^\top\overline V^k_2$ under the posteriors of the rows $(s_0,a,1)$ and $(s_0,b,1)$. Write $n$ and $n_b$ for their counts, $\ell(n)$ for the logarithmic factor $\ell$ of Lemma~\ref{lem:tiapkin-quantile} of the first row at count $n$, and $\widetilde P_n$ for its empirical law.

\emph{The adversary.} \emph{Phase~1.} The goal is $g\equiv1$, $v=0$. The index of $a$ is at least $1$, since $p^\top\overline V^k_2\ge1$ on the support of the posterior, and by Lemma~\ref{lem:tiapkin-quantile} the index of $b$ is at most
\begin{equation*}
2n_0/(n_b+n_0)+2\sqrt{\ell_b/(n_b+n_0)}+8\sqrt2\,\ell_b/(n_b+n_0),
\end{equation*}
with $\ell_b$ the logarithmic factor $\ell$ of Lemma~\ref{lem:tiapkin-quantile} of the second row, which is below $1$ for all $n_b\ge N_b$, for a number $N_b$ that depends only on $n_0$, $\delta$ and $X$. Hence Bayes-UCBVI plays $b$ in at most $N_b$ episodes of phase~1 and $a$ in all others. Phase~1 ends when the first row has $n_1$ samples, after at most $K_0:=n_1+N_b$ episodes, where $n_1$ is the smallest integer $n\ge2X$ with $n\ge(2n_0+12\ell(n))^2/\ell(n)$, so that $n_1=\mathcal O(X+n_0^2+\ln(1/\delta))$, which is $\mathcal O(X+n_0^2)$ for $X\ge X_0$ below. \emph{Phase~2.} Let $\ell_1:=\ell(n_1)$, let $U:=\{i:\widetilde P_{n_1}(y_i)<1/X\}$ be the under-represented successors, and fix from now on the goal $g:=\mathds 1_U$ and $v:=\min\{1,\overline Q_a+\frac12\sqrt{\ell_1/n_1}\}$, where $\overline Q_a$ is Bayes-UCBVI's index of $a$ under $g$, which does not depend on $v$. The quantities $n_1$, $\ell_1$, $U$ and $\overline Q_a$, and the time at which phase~2 starts, are functions of the agent's history and of the fixed parameters $X$, $n_0$, $K$ and $\delta$. The goal sequence therefore uses only the agent's history.

\emph{The deficit.} The $n_1$ samples are i.i.d.\ from the uniform law $\mathcal U$ on $\{y_1,\dots,y_X\}$, which is the meaning of $\mathcal U$ in the rest of this proof. Each count $N_i$ is $\mathrm{Bin}(n_1,1/X)$ with variance $\sigma^2=(n_1/X)(1-1/X)\ge1$ for $n_1\ge2X$ and $X\ge2$, and central fourth moment at most $\sigma^2+3\sigma^4$. Hölder's inequality, as in Appendix~\ref{app:lem-novelty-order}, gives $\mathds{E}|N_i-n_1/X|\ge\sigma^3/\sqrt{\sigma^2+3\sigma^4}\ge\sigma/2$, which yields $\mathds{E}\|\widetilde P_{n_1}-\mathcal U\|_1\ge\tfrac12\sqrt{(X-1)/n_1}$. Changing one sample changes $\|\widetilde P_{n_1}-\mathcal U\|_1$ by at most $2/n_1$. Hence, the bounded-difference inequality \citep{mcdiarmid1989method} gives, with probability at least $1-e^{-(X-1)/32}$, $\|\widetilde P_{n_1}-\mathcal U\|_1\ge\tfrac14\sqrt{(X-1)/n_1}$. On this event the true value of $a$ under the phase-2 goal exceeds its empirical value by
\begin{equation*}
\tfrac{|U|}X-\widetilde P_{n_1}(U)\;=\;\tfrac12\|\widetilde P_{n_1}-\mathcal U\|_1\;\ge\;\tfrac18\sqrt{(X-1)/n_1}.
\end{equation*}
By Lemma~\ref{lem:tiapkin-quantile} with $\mathds{V}\le1$, $r_0=2$ and $H=2$, the index is
\begin{equation*}
\overline Q_a\le\widetilde P_{n_1}(U)+2n_0/n_1+2\sqrt{\ell_1/n_1}+8\sqrt2\,\ell_1/n_1.
\end{equation*}
If $X-1\ge1024\,\ell_1$, the deficit is at least $4\sqrt{\ell_1/n_1}$, and the choice of $n_1$ gives $(2n_0+12\ell_1)/n_1\le\sqrt{\ell_1/n_1}$. Hence,
\begin{equation*}
\tfrac{|U|}X-\overline Q_a\;\ge\;4\sqrt{\ell_1/n_1}-2\sqrt{\ell_1/n_1}-\sqrt{\ell_1/n_1}\;=\;\sqrt{\ell_1/n_1}.
\end{equation*}
Since $\ell_1=\mathcal O(\ln((X+n_0)/\delta))$, the condition $X-1\ge1024\,\ell_1$ holds for all $X\ge X_0=\mathcal O(\ln(n_0/\delta))$.

\emph{Lock-out.} Under the phase-2 goal, $\overline Q_a+\frac12\sqrt{\ell_1/n_1}\le|U|/X-\frac12\sqrt{\ell_1/n_1}<1$, so that the minimum in the definition of $v$ is not active and $v$ lies strictly between $\overline Q_a$ and the value $|U|/X$ of $a$. The index of $b$ is at least $v$, since $p^\top\overline V^k_2\ge v$ on the support $\{z,x_{\mathrm E}\}$ of its posterior. Bayes-UCBVI therefore plays $b$. The first row then receives no data. Hence, $\overline Q_a$ does not change, and Bayes-UCBVI plays $b$ in every later episode. Each such episode loses $|U|/X-v=|U|/X-\overline Q_a-\frac12\sqrt{\ell_1/n_1}\ge\sqrt{\ell_1/(4n_1)}$, which gives the claim.
\end{proof}

\emph{Scope of Proposition~\ref{prop:lockout}.} Its construction uses that Bayes-UCBVI adds no width at the last step, where its next-step values vanish. UCBVI adds its bonus at every step, including the last, where the successors' own counts, of order $n_1/X$, keep it large on this instance. We therefore do not claim Proposition~\ref{prop:lockout} for UCBVI.

No index avoids the price of simultaneous optimism at a fixed count. Every index that is optimistic for all goals at once overestimates some goal by order $(H-1)\sqrt{X/n}$ after $n$ samples, the order of calibrated AIF's width under a uniform prior.

\begin{lemma}[Indices that are optimistic for all goals overestimate]
\label{lem:goal-uniform-width}
Let $X,H\ge2$ and $n\ge X$, and let $U_n(g)$ be an index of the action $a$ of the MDP of Proposition~\ref{prop:lockout}, with horizon $H$, under the goal that pays $g\in\{0,1\}^X$ at the successors, computed from the first $n$ successors of $a$ and independent randomness. Then some successor law $P$ of $a$ has the following property. If $U_n(g)\ge(H-1)Pg$ for all $g$ at once with probability at least $1-\delta>\frac{15}{16}$, then $\max_g\big(U_n(g)-(H-1)Pg\big)\ge(H-1)\sqrt{X/n}/256$ with probability at least $\frac1{16}-\delta$.
\end{lemma}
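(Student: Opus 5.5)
The plan is a two-point/Assouad argument on a hypercube of successor laws. I will show that an index which is optimistic for all goals at once must lift the value of goals that look good under the data, and that such goals are, with constant probability, worth about $\sqrt{X/n}$ less than the index claims. Because the statement only asks for \emph{some} $P$, I will draw $P$ from a prior, prove the claim in expectation over the prior, and then fix a $P$ that does at least as well as the average.

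\emph{The family.} Assume $X=2m$ is even; for odd $X$, pad one successor with mass zero in every law, which changes only constants. For $\sigma\in\{-1,1\}^m$, let $P_\sigma$ put mass $(1+\varepsilon\sigma_j)/X$ on $y_{2j-1}$ and $(1-\varepsilon\sigma_j)/X$ on $y_{2j}$. Take $\varepsilon=c\sqrt{X/n}$ with a small absolute constant $c$; then $\varepsilon\le c\le\frac12$ because $n\ge X$. For $\tau\in\{-1,1\}^m$, let $g_\tau$ be the indicator of the successor favoured by $\tau$ in each pair. Then
\begin{equation*}
P_\sigma g_\tau=\tfrac12+\tfrac{\varepsilon}{X}\big(m-2d(\sigma,\tau)\big),
\end{equation*}
where $d$ is the Hamming distance. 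At step~$1$ of the MDP of Proposition~\ref{prop:lockout}, the value of $a$ under the goal $g$ is $(H-1)P_\sigma g$, because the successors are absorbing and pay $g$ at every remaining step.

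\emph{Indistinguishability.} Draw $\sigma$ uniformly. Consider a coordinate $j$ and the neighbour $\sigma^{(j)}$ that flips it. I will bound the total variation between the laws of the $n$ successors under $P_\sigma$ and $P_{\sigma^{(j)}}$ by Pinsker together with the $\chi^2$ bound Eq.~\eqref{eq:kl-chi2}, as in Eq.~\eqref{eq:tv-bernoulli}. The divergence per sample is $\mathcal O(\varepsilon^2/X)$, so the total is $\mathcal O(n\varepsilon^2/X)=\mathcal O(c^2)$ and the total variation is at most a small constant. Assouad's lemma then gives that any $\sigma$-estimator built from the data and independent randomness has expected Hamming error at least $m/4$.

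\emph{From optimism to an estimator.} On the optimism event, $U_n(g_\sigma)\ge(H-1)\big(\tfrac12+\tfrac{\varepsilon}{2}\big)$, since $d(\sigma,\sigma)=0$. Let $\hat\tau$ maximise $U_n(g_\tau)$ over $\tau$; it is computable from the data and the randomness. Then
\begin{equation*}
U_n(g_{\hat\tau})-(H-1)P_\sigma g_{\hat\tau}\;\ge\;(H-1)\,\frac{2\varepsilon}{X}\,d(\sigma,\hat\tau),
\end{equation*}
because $U_n(g_{\hat\tau})\ge U_n(g_\sigma)$ and, by the identity above, the true value of $g_{\hat\tau}$ falls short of that of $g_\sigma$ by exactly $(H-1)(2\varepsilon/X)d(\sigma,\hat\tau)$. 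Since $g_{\hat\tau}$ is one of the goals in $\{0,1\}^X$, the left side is at most $\max_g\big(U_n(g)-(H-1)P_\sigma g\big)$.

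\emph{Main obstacle: a probability bound on the Hamming error.} Assouad controls only $\mathds{E}\,d(\sigma,\hat\tau)$, but the statement needs $d(\sigma,\hat\tau)\ge m/8$ with probability at least about $\frac1{16}$. I will get this from the coordinatewise form of Assouad: each coordinate is wrong with probability at least $\frac38$, so $\mathds{E}\,d\ge\frac38 m$. Because $d\le m$, Markov's inequality applied to $m-d$ gives $\mathds{P}(d\ge m/8)\ge\frac{2}{7}\ge\frac1{16}$. I will then subtract the failure probability $\delta$ of the optimism event. On the intersection of the two events,
\begin{equation*}
\max_g\big(U_n(g)-(H-1)P_\sigma g\big)\;\ge\;(H-1)\,\frac{2\varepsilon}{X}\cdot\frac{m}{8}\;=\;\frac{(H-1)\,\varepsilon}{8}\;=\;\frac{(H-1)\,c\,\sqrt{X/n}}{8}.
\end{equation*}
Choosing $c$ small enough for the total variation bound and still large enough that $c/8\ge 1/256$ gives the stated constant, provided the two requirements on $c$ are compatible.

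\emph{Fixing $P$.} The probability above is an average over $\sigma$ of the corresponding probability under each $P_\sigma$. Some $\sigma$ attains at least the average, and that $P_\sigma$ is the law of the statement. This last step is delicate. The hypothesis of the lemma is optimism under the chosen $P$ only, whereas the argument above used optimism under the random $P_\sigma$. I expect to handle this as the proof of Proposition~\ref{prop:unstructured} does: assume optimism for every member of the family with probability at least $1-\delta$. If that fails for some $P_\sigma$, then that $P_\sigma$ already violates the hypothesis and the implication holds vacuously for it. Otherwise optimism holds under every $P_\sigma$, the averaged bound applies, and an averaging argument picks a good $\sigma$.
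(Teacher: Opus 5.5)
Your proof is correct, but it takes a different route from the paper.

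\emph{The paper's route.} It first turns the goal-uniform index into an $\ell_1$ estimator. It picks any law $\widehat Q$ in the set $C$ of laws $Q$ with $Qg\le U_n(g)/(H-1)$ for all $g$. This set contains $P$ on the optimism event, so $\frac12\|\widehat Q-P\|_1\le W$. It then reads signs off $\widehat Q$ and lower-bounds $\|\widehat Q-P_\sigma\|_1$ by a pairwise Bayes-error argument that conditions on the pair counts.

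\emph{Your route.} You bypass the estimator. You restrict to the $2^m$ goals $g_\tau$ and take $\hat\tau=\arg\max_\tau U_n(g_\tau)$. The exact identity $P_\sigma g_\tau=\frac12+\frac{\varepsilon}{X}(m-2d(\sigma,\tau))$ then turns Hamming error directly into overshoot. Assouad with a direct product-law total-variation bound does the rest.

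\emph{What your route buys.} It is more elementary and gives better constants. For even $X$, $c=1/9$ gives $\mathrm{TV}\le 2c/\sqrt{1-c}<\frac14$, which settles your compatibility caveat and yields the constant $1/72$ in place of $1/256$. It also shows that you only use optimism at the single data-independent goal $g_\sigma$. The simultaneous-optimism hypothesis is therefore stronger than the conclusion needs.

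\emph{What the paper's route buys.} Its index-to-$\ell_1$-estimate reduction is reused in the proof of Theorem~\ref{thm:price}(b). There the thresholds $\tau(g)$ are turned into an estimate $\widehat Q_k$ in the same way.

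\emph{The last step.} Your case split there is unnecessary. Neither $\hat\tau$ nor the event $\{d(\sigma,\hat\tau)\ge m/8\}$ involves optimism. You can therefore fix $\sigma$ with $\mathds{P}_\sigma\big(d(\sigma,\hat\tau)\ge m/8\big)\ge\frac27$ by averaging first, and only then invoke the hypothesis under that $P_\sigma$. This is what the paper does with its $\ell_1$ event. It gives a single $P$ that works for every $\delta$ at once, whereas your case split picks $P$ depending on $\delta$.
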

\begin{proof}
Let $u(g):=U_n(g)/(H-1)$, so that the hypothesis reads $u(g)\ge Pg$ for all $g\in\{0,1\}^X$ on an event $\mathcal E$ of probability at least $1-\delta$, and let $W:=\max_g\big(u(g)-Pg\big)$. The construction is the $\ell_1$ lower bound for discrete distributions \citep{paninski2008coincidence,han2015minimax}, which we reproduce because the index enters it through a confidence set.

\emph{From a goal-uniform index to an estimate of $P$.} The set $C:=\{Q\in\Delta(\{y_1,\dots,y_X\}):Qg\le u(g)\text{ for all }g\}$ is computed from the data and contains $P$ on $\mathcal E$. Let $\widehat Q$ be a point of $C$ if $C$ is nonempty, and the uniform law on $\{y_1,\dots,y_X\}$ otherwise. On $\mathcal E$, $(\widehat Q-P)g\le u(g)-Pg\le W$ for every $g$. The maximum of $(\widehat Q-P)g=\widehat Q(J)-P(J)$ over the sets $J=\{i:g(i)=1\}$ is $\frac12\|\widehat Q-P\|_1$, which yields $\|\widehat Q-P\|_1\le2W$ on $\mathcal E$. An index with a small overestimate would therefore estimate $P$ in $\ell_1$ faster than any estimator can.

\emph{The family.} Let $m:=\lfloor X/2\rfloor\ge X/3$ and $\varepsilon:=\sqrt{3m/(64n)}$, which is at most $\frac12$ since $n\ge X\ge 2m$. For $\sigma\in\{-1,1\}^m$, let $P_\sigma(y_{2i-1})=(1+\sigma_i\varepsilon)/(2m)$ and $P_\sigma(y_{2i})=(1-\sigma_i\varepsilon)/(2m)$ for $i\le m$, and $P_\sigma(y_X)=0$ if $X$ is odd.

\emph{Testing the signs.} Draw $\sigma$ uniformly, and let $\widehat\sigma_i:=1$ if $\widehat Q(y_{2i-1})>\widehat Q(y_{2i})$ and $\widehat\sigma_i:=-1$ otherwise. If $\widehat\sigma_i\ne\sigma_i$, the differences $\widehat Q(y_{2i-1})-\widehat Q(y_{2i})$ and $P_\sigma(y_{2i-1})-P_\sigma(y_{2i})=\sigma_i\varepsilon/m$ do not have the same strict sign. Hence, the triangle inequality gives $|\widehat Q(y_{2i-1})-P_\sigma(y_{2i-1})|+|\widehat Q(y_{2i})-P_\sigma(y_{2i})|\ge\varepsilon/m$. The number $N_i$ of samples in the pair $\{y_{2i-1},y_{2i}\}$ is $\mathrm{Bin}(n,1/m)$ and does not depend on $\sigma$. Given the pairs of all samples, the samples in pair~$i$ equal $y_{2i-1}$ with probability $\frac12+\sigma_i\varepsilon/2$, independently of each other and of the other pairs. Given $N_i=j$, $\sigma_i$ and the samples of pair~$i$ are therefore independent of all other data and of the agent's randomness, and $\widehat\sigma_i$ errs with probability at least the Bayes error
\begin{equation*}
\frac12\big(1-\mathrm{TV}(\mathrm{Ber}(\frac12+\frac\varepsilon2)^{\otimes j},\mathrm{Ber}(\frac12-\frac\varepsilon2)^{\otimes j})\big).
\end{equation*}
 The bound of Eq.~\eqref{eq:tv-bernoulli} in Appendix~\ref{app:unstructured}, with $\varepsilon/2\le\frac14$ in place of $\varepsilon$, bounds this total variation by $\varepsilon\sqrt{8j/3}$, which is at most $\frac12$ for $j\le2n/m$. By Markov's inequality $\mathds{P}(N_i\le2n/m)\ge\frac12$, which yields $\mathds{P}(\widehat\sigma_i\ne\sigma_i)\ge\frac18$. The number $N_{\mathrm{err}}\le m$ of wrong signs thus has $\mathds{E}N_{\mathrm{err}}\ge m/8$, hence $\mathds{P}(N_{\mathrm{err}}\ge m/16)\ge(\mathds{E}N_{\mathrm{err}}-m/16)/m\ge\tfrac1{16}$, and on this event $\|\widehat Q-P_\sigma\|_1\ge\varepsilon/16$.

\emph{Conclusion.} Some $\sigma$ does at least as well as the average, which yields $\mathds{P}_\sigma(\|\widehat Q-P_\sigma\|_1\ge\varepsilon/16)\ge\tfrac1{16}$ for $P=P_\sigma$. On the intersection of this event with $\mathcal E$, which has probability at least $\frac1{16}-\delta$, we obtain $W\ge\varepsilon/32=\sqrt{3m/n}/256\ge\sqrt{X/n}/256$. Multiplying by $H-1$ gives the lemma. On the MDP of Proposition~\ref{prop:lockout}, all randomness of the history lies in the successors of $a$. Calibrated AIF's index of $a$ at count $n$ is a function of the first $n$ successors of $a$ and of the adversary's independent randomness, and it is optimistic for all goals on the event of Theorem~\ref{thm:price}(c). The lemma therefore applies to it at every fixed count. Bayes-UCBVI's index is not optimistic for all goals at once. Under the uniform successor law, Proposition~\ref{prop:lockout} exhibits a goal chosen after the data that it underestimates.
\end{proof}

\subsection{Proof of Theorem~\ref{thm:price}(b)}
\label{app:adaptive-b}

\begin{lemma}[$\ell_1$ error under a Gaussian prior]
\label{lem:gauss-l1}
Let $m\ge2$ and $K\ge m$, let $\theta\sim N(0,I_m)$, and let $P_\theta$ be the law on $y_1,\dots,y_{2m}$ with $P_\theta(y_{2j-1})=(1+\varepsilon\tilde\theta_j)/(2m)$ and $P_\theta(y_{2j})=(1-\varepsilon\tilde\theta_j)/(2m)$, where $\varepsilon:=\sqrt{m/(64K)}$, $B:=1/(2\varepsilon)$ and $\tilde\theta_j:=\max\{-B,\min\{B,\theta_j\}\}$. Let $\omega_1,\omega_2,\dots$ be i.i.d.\ draws from $P_\theta$ given $\theta$, and for $k\le K$ and $n<K$ let $F_{k,n}$ map $\omega_{1:n}$ and randomness independent of $\theta$ and of the draws to a law on $\{y_1,\dots,y_{2m}\}$. If $m\ge128\ln(8K^2)$, then with probability at least $\frac34$, $\|F_{k,n}(\omega_{1:n})-P_\theta\|_1\ge\varepsilon/64$ for all $k\le K$ and $n<K$.
\end{lemma}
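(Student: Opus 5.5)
The plan is to reduce the $\ell_1$ error of each $F_{k,n}$ to $m$ separate one-dimensional estimation errors, one per pair $\{y_{2j-1},y_{2j}\}$. These errors are conditionally independent given the data, so a concentration bound over $j$ combined with a union bound over the $K^2$ index pairs $(k,n)$ gives the claim. For any law $Q$, write $d_j:=Q(y_{2j-1})-Q(y_{2j})$. Since $P_\theta(y_{2j-1})-P_\theta(y_{2j})=\varepsilon\tilde\theta_j/m$, the triangle inequality gives
\begin{equation*}
\|Q-P_\theta\|_1\;\ge\;\sum_{j\le m}\big|d_j-\varepsilon\tilde\theta_j/m\big|.
\end{equation*}
It therefore suffices to show that, for each fixed $(k,n)$, with probability at least $1-1/(4K^2)$ at least a constant fraction of the $m$ terms exceed a constant multiple of $\varepsilon/m$.

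\emph{Conditional independence.} Fix $(k,n)$ and condition on the pair memberships of $\omega_{1:n}$, which have law $\mathrm{Bin}(n,1/m)$ per pair and do not depend on $\theta$, and on the independent randomness. The likelihood factorises over pairs. Pair $j$ contributes $(1+\varepsilon\tilde\theta_j)^{A_j}(1-\varepsilon\tilde\theta_j)^{N_j-A_j}$, where $A_j$ counts the draws equal to $y_{2j-1}$. Because the prior is a product, the $\theta_j$ are conditionally independent given $\omega_{1:n}$, and each $\theta_j$ depends only on $(N_j,A_j)$. The estimator's $d_j$ is fixed given the data and the randomness. The variables $Z_j:=\min\{1,\ m|d_j-\varepsilon\tilde\theta_j/m|/\varepsilon\}$ are therefore conditionally independent and take values in $[0,1]$.

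\emph{Anti-concentration of each posterior.} This is the step I expect to be the main obstacle. I need that, whenever $N_j\le 4K/m$, the conditional law of $\tilde\theta_j$ given $(N_j,A_j)$ places probability at most some $c<1$ on every interval of length $1/4$. Then $\mathds P(Z_j\ge1/8\mid\text{data})\ge 1-c$ whatever the value of $d_j$. The information argument is as follows. The log-likelihood ratio between $\theta_j=t$ and $\theta_j=t'$ is at most about $N_j\varepsilon^2(t-t')^2$ plus a centred fluctuation, and $N_j\varepsilon^2\le 4K\varepsilon^2/m=1/16$. The posterior is thus a bounded tilt of $N(0,1)$, whose density is within constant factors of the prior on $[-1,1]$. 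I would make this concrete by bounding the posterior density ratio on $[-2,2]$ using $1\pm\varepsilon t\in[1/2,3/2]$ and a second-order expansion of $\ln(1\pm\varepsilon t)$. Extreme values of $A_j$ are rare by Chernoff, and can be absorbed into the failure probability or into the few bad pairs. The clipping at $\pm B$ has no effect on $[-2,2]$, since $B=4\sqrt{K/m}\ge4$.

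\emph{Concentration and union bound.} By a Chernoff bound, $N_j\le4K/m$ for all but at most $m/8$ pairs, except with probability $e^{-\Omega(m)}$. On the remaining pairs, $\sum_jZ_j$ dominates a sum of at least $7m/8$ conditionally independent variables, each at least $1/8$ with conditional probability at least $1-c$. Hoeffding's inequality then gives $\sum_j Z_j\ge c'm$ except with probability $e^{-c''m}$. On this event $\|F_{k,n}-P_\theta\|_1\ge(\varepsilon/m)\sum_jZ_j\ge c'\varepsilon$, and I would tune the constants so that $c'\ge1/64$. With $m\ge128\ln(8K^2)$, each failure probability is at most $1/(8K^2)$. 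A union bound over the at most $K^2$ pairs $(k,n)$ leaves total probability at least $3/4$.
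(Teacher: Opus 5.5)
Your proposal is correct. It shares the paper's skeleton: the per-pair triangle inequality, conditional independence of the $\theta_j$ given the data, and a union bound over the at most $K^2$ pairs $(k,n)$. The per-pair step, however, is genuinely different.

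The paper never analyses the posterior of a particular data set. It lower-bounds the Bayes risk of every estimator of $\tilde\theta_j$ from $c\le2K/m$ signs by comparing $\vartheta$ with $\vartheta+1$ on $[-3,2]$, where $B\ge4$ rules out clipping, using Pinsker's inequality for the two sign laws. This gives only the averaged bound $\mathds{E}[e_j\mid c_j]\ge\frac14$ on the posterior error $e_j$. The price is a Hoeffding step on $\sum_je_j$ before the Chernoff step on the number of large errors, each with exponent exactly $m/128$. The paper bounds the large counts deterministically from $\sum_jc_j=n<K$, and you can use that in place of your Chernoff step on the counts.

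Your pointwise route instead needs posterior anti-concentration uniformly over data sets, including the atoms of $\tilde\theta_j$ at $\pm B$. Your density-ratio sketch on $[-2,2]$ only covers a bounded tilt $(2A_j-N_j)\varepsilon$, so atypical pairs must be counted among the bad pairs. They cannot be absorbed into the failure probability, since each pair is atypical with a probability that does not shrink with $m$.

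A cleaner argument needs no bad pairs. For $N_j\le4K/m$ you have $N_j\varepsilon^2\le\frac1{16}$, so the posterior log-density of $\theta_j$ has second derivative in $[-\frac54,-1]$ on $|t|<B$. Since $N_j\varepsilon\le B/8$, its derivative is negative at $B/2$ and positive at $-B/2$, so the mode lies in $(-B/2,B/2)$. Hence the density of $\theta_j$ is at most about $0.46$ and each atom of $\tilde\theta_j$ is at most about $0.02$, uniformly in $A_j$. Every interval of length $\frac14$ then has posterior mass at most $0.14$, and your final Chernoff bound has exponent about $0.2m$, well inside the hypothesis $m\ge128\ln(8K^2)$.

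So your route buys a single concentration step with a stronger exponent, at the price of controlling every individual posterior. The paper's route needs only a total-variation bound between two product laws and no control of individual posteriors.
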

\begin{proof}
Fix $k$, $n$ and $f:=F_{k,n}(\omega_{1:n})$, and let $t_j:=m\big(f(y_{2j-1})-f(y_{2j})\big)/\varepsilon$. By the triangle inequality, pair $j$ contributes at least
\begin{equation*}
\big|(f(y_{2j-1})-f(y_{2j}))-(P_\theta(y_{2j-1})-P_\theta(y_{2j}))\big|=\frac\varepsilon m|t_j-\tilde\theta_j|
\end{equation*}
to $\|f-P_\theta\|_1$, which yields
\begin{equation*}
\|f-P_\theta\|_1\;\ge\;\frac{\varepsilon}{2m}\,N,\qquad N:=\big|\{j:|t_j-\tilde\theta_j|\ge\tfrac12\}\big|.
\end{equation*}
\emph{Posterior structure.} The likelihood of $\omega_{1:n}$ is $\prod_{s\le n}(1+\varepsilon y_s\tilde\theta_{j_s})/(2m)$, in the notation of the tape of Appendix~\ref{app:concentration}, and the prior is a product. Given $\omega_{1:n}$, the coordinates $\theta_j$ are therefore independent, and the posterior $\pi_j$ of $\theta_j$ depends only on the signs of the $c_j$ samples in pair $j$. Let $e_j:=\inf_{t\in\mathds Q}\pi_j(|\tilde\theta_j-t|\ge\tfrac12)$. The map $t\mapsto\pi_j(|\tilde\theta_j-t|<\frac12)$ is lower semicontinuous. Hence, the infimum over $\mathds Q$ equals that over $\mathds R$. Given $\omega_{1:n}$ and the randomness of $F_{k,n}$, the events $\{|t_j-\tilde\theta_j|\ge\frac12\}$ are thus independent across $j$, with probabilities at least $e_j$.

\emph{A two-point bound for one pair.} We show $\mathds{E}[e_j\mid c_j=c]\ge\frac14$ for $c\le2K/m$. Let $P^c_\vartheta$ be the law of $c$ signs that are i.i.d.\ with mean $\varepsilon\tilde\vartheta$, and let $t(\cdot)$ be any estimator from these signs, with $A_\vartheta:=\{|t-\tilde\vartheta|<\frac12\}$. Its risk is
\begin{equation*}
r:=\int\varphi(\vartheta)P^c_\vartheta(A_\vartheta^c)\,d\vartheta=\int\varphi(\vartheta+1)P^c_{\vartheta+1}(A_{\vartheta+1}^c)\,d\vartheta,
\end{equation*}
with $\varphi$ the standard normal density. Since $B=\sqrt{16K/m}\ge4$, neither $\vartheta$ nor $\vartheta+1$ is clipped for $\vartheta\in[-3,2]$. Therefore, $A_\vartheta$ and $A_{\vartheta+1}$ are disjoint there and $A^c_{\vartheta+1}\supseteq A_\vartheta$. Hence
\begin{align*}
2r\;&\ge\;\int_{-3}^2\min\{\varphi(\vartheta),\varphi(\vartheta+1)\}\big(P^c_\vartheta(A_\vartheta^c)+P^c_{\vartheta+1}(A_\vartheta)\big)\,d\vartheta\\
&\ge\;\int_{-3}^2\min\{\varphi(\vartheta),\varphi(\vartheta+1)\}\big(1-\mathrm{TV}(P^c_\vartheta,P^c_{\vartheta+1})\big)\,d\vartheta.
\end{align*}
A sign has success probability in $[\frac14,\frac34]$ under both parameters, which differ by $\varepsilon/2$ in this probability. Its KL divergence is therefore at most $(\varepsilon/2)^2/(3/16)=\frac43\varepsilon^2$. For $c\le2K/m$, the total is at most $\frac43\cdot(2K/m)\cdot m/(64K)=\frac1{24}$, and Pinsker's inequality gives $\mathrm{TV}\le(1/48)^{1/2}\le0.145$. The integral of the minimum equals $2(\Phi_{\mathrm N}(3)-\Phi_{\mathrm N}(\frac12))\ge0.614$, with $\Phi_{\mathrm N}$ the standard normal distribution function. Hence $r\ge\frac12\cdot0.614\cdot0.855\ge0.26$ for every estimator. For every $\eta'>0$, the estimator that picks the first rational, in a fixed enumeration, whose posterior risk is within $\eta'$ of $e_j$ is measurable, which yields $\mathds{E}[e_j\mid c_j=c]\ge0.26-\eta'$, which gives the claim.

\emph{Many pairs.} Since $\sum_jc_j=n<K$, fewer than $m/2$ pairs have $c_j>2K/m$. Given the counts, the $e_j\in[0,1]$ are independent, because the pairs have independent parameters and their signs are independent given the counts, and their means sum to at least $m/8$. Hoeffding's inequality gives $\sum_je_j\ge m/16$ except with probability $e^{-m/128}$. On this event, $N$ dominates a sum of independent Bernoulli variables with total mean at least $m/16$, and the Chernoff bound gives $N\ge m/32$ except with probability $e^{-m/128}$. Then $\|f-P_\theta\|_1\ge\varepsilon/64$. A union bound over the at most $K^2$ pairs $(k,n)$ gives failure probability at most $2K^2e^{-m/128}\le\frac14$ for $m\ge128\ln(8K^2)$.
\end{proof}

The next lemma shows that a goal-memoryless agent on the MDPs of Theorem~\ref{thm:price} acts on the samples of the current start state and on randomness that carries no information about them. Write $s_1,\dots,s_M$ for the start states, $y_1,\dots,y_Y$ for the successors and $z$ for the state reached by action $b$, let $K':=K/M$, and let the $i$-th block of episodes be $\mathcal B_i:=\{(i-1)K'+1,\dots,iK'\}$.

\begin{lemma}[Records of a goal-memoryless agent]
\label{lem:reconstruction}
Let a goal-memoryless agent act on an MDP of Theorem~\ref{thm:price} whose laws $P_1,\dots,P_M$ of action $a$ at the start states are independent, and let the adversary start every episode of $\mathcal B_i$ at $s_i$ and, during $\mathcal B_i$, use the transition law only through $P_i$. Let $\mathcal F_{i-1}$ collect $P_1,\dots,P_{i-1}$, everything observed before $\mathcal B_i$, the variables $U_{x,j}$ of Definition~\ref{def:goal-memoryless} and the randomness of the adversary. For $k\in\mathcal B_i$, let $D_k$ be the list of successors of $a$ at $s_i$ observed in $\mathcal B_i$ before episode $k$.
\begin{itemize}
\item[(a)] The records of all triples before episode $k$ are a fixed function of $(\mathcal F_{i-1},D_k,k)$.
\item[(b)] $P_i$ and the successors of $a$ at $s_i$ are independent of $\mathcal F_{i-1}$. Given $P_i$, these successors are i.i.d.\ draws from $P_i$, whatever the agent and the adversary do.
\item[(c)] Given the history before episode $k$ and its goal $(g,v)$, the agent plays $a$ in episode $k$ with probability $p_k(\mathcal F_{i-1},D_k,g,v)$, for a fixed function $p_k$.
\end{itemize}
\end{lemma}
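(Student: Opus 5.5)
I would prove the three parts in the order (b), (a), (c), because (a) uses the independence structure from (b) and (c) follows directly from (a).

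\emph{Part (b).} I would use the tape model already used in Appendix~\ref{app:lem-novelty-order}. For each row there is a table of successors, drawn in advance and i.i.d.\ given the kernel. On the Gaussian family the rows at different start states carry independent parameters, and the successor tape of $(s_i,a)$ is drawn from $P_i$ alone. Everything in $\mathcal F_{i-1}$ is a measurable function of four things: $P_1,\dots,P_{i-1}$, the tapes of the other rows, the variables $U_{x,j}$, and the adversary's randomness. This holds because the adversary uses the transition law during earlier blocks only through $P_1,\dots,P_{i-1}$. None of these involves the tape of $(s_i,a)$ or $P_i$, since $s_i$ was never a start state before $\mathcal B_i$ and the start states are not reachable from other states. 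Independence of $P_i$ and its tape from $\mathcal F_{i-1}$ therefore follows. The successors observed at $s_i$ are the first entries of that tape, so given $P_i$ they are i.i.d.\ regardless of when the agent chooses to play $a$.

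\emph{Part (a).} I would argue by induction over $k\in\mathcal B_i$. At the start of $\mathcal B_i$ the records are a function of the history before $\mathcal B_i$, which lies in $\mathcal F_{i-1}$. Inside $\mathcal B_i$, every episode starts at $s_i$, so the only new first-step records are successors of $(s_i,1,a)$ or $(s_i,1,b)$.
\begin{itemize}
\item Successors of $a$ are the entries of $D_k$.
\item Successors of $b$ are always $z$, and their number is $k-(i-1)K'-1-|D_k|$.
\end{itemize}
The last-step records at a state $y$ or $z$ are the subtle part, because they depend on the agent's last-step actions. I would use condition~(ii) of Definition~\ref{def:goal-memoryless} here. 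On the family, the goal pays the same reward for every action at every absorbing state. At the last step the agent is therefore at a state where all actions have equal reward, and its action is a fixed rule applied to the records of $(H,x,\cdot)$ and to $U_{x,j}$. The last-step successor is the absorbing state itself, so the records of $(H,x,\cdot)$ are determined by the sequence of last-step actions at $x$. That sequence can be regenerated recursively from $j$ and the $U_{x,j}$ alone, with no reference to goals or episode order. The visit number $j$ at $y$ equals the count of $y$ in the data before $\mathcal B_i$ plus its count in $D_k$. The visit number at $z$ likewise follows from the $b$-count. So all last-step records are functions of $(\mathcal F_{i-1},D_k,k)$. This step is the main obstacle. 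The care needed is to show that no dependence on the order of goal-dependent choices slips in, and this is exactly why condition~(ii) excludes the episode index and the goal from the last-step rule.

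\emph{Part (c).} By condition~(i), the first-step policy in episode $k$ is a function of three inputs: the records, the current goal $(g,v)$, and fresh randomness independent of everything else. Substituting part~(a) for the records and averaging over the fresh randomness gives the probability of playing $a$ as a fixed function $p_k(\mathcal F_{i-1},D_k,g,v)$. The index $k$ is absorbed into the subscript.
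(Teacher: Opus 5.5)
Your proposal is correct and follows the paper's proof essentially step for step: the tape model for (b), the count $k-(i-1)K'-1-|D_k|$ for the records of $(1,s_i,b)$, condition~(ii) to regenerate the last-step records at each absorbing state from its visit count and the $U_{x,j}$, and condition~(i) for (c). One small slip: in (b), your list of the ingredients of $\mathcal F_{i-1}$ should also include the agent's own per-episode randomness from earlier blocks, since its earlier actions depend on it; this randomness is independent of the transition law, so the conclusion is unchanged.
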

\begin{proof}
\emph{(a), the start states.} Among the start states, the agent has visited only $s_1,\dots,s_{i-1}$ before $\mathcal B_i$, and only $s_i$ during $\mathcal B_i$. The records of the triples at $s_{i'}$, $i'\ne i$, are therefore those at the start of $\mathcal B_i$, which are functions of $\mathcal F_{i-1}$. At $s_i$, the records of $(1,s_i,a)$ are $D_k$, and those of $(1,s_i,b)$ repeat $z$ exactly $k-k_i-|D_k|$ times, with $k_i:=(i-1)K'+1$.

\emph{(a), the last step.} The successors and $z$ are absorbing. For $x\in\{y_1,\dots,y_Y,z\}$, the records of $(2,x,a')$ therefore repeat $x$ exactly $N_x(a')$ times, where $N_x(a')$ counts the earlier last-step plays of $a'$ at $x$. Since every goal pays at $x$ the same reward under all actions, condition~(ii) of Definition~\ref{def:goal-memoryless} applies at every last-step visit to $x$. At the $j$-th visit, the agent plays $\nu_x(N_x,U_{x,j})$ for a fixed rule $\nu_x$, with $N_x$ the counts before the visit. By induction over $j$, the counts after the $j$-th visit are a fixed function $c_x(j;U_{x,1},\dots,U_{x,j})$, since all counts vanish before the first visit. The number $J_x$ of last-step visits to $x$ before episode $k$ is the number of times $x$ was reached at step~$1$. Hence $J_{y_l}$ is the number of occurrences of $y_l$ in the records at step~$1$ before $\mathcal B_i$ and in $D_k$, and $J_z$ is the number of plays of $b$ at step~$1$ before $\mathcal B_i$ plus $k-k_i-|D_k|$. The records at the last step before episode $k$ are therefore the functions $c_x(J_x;U_{x,1},\dots,U_{x,J_x})$ of $(\mathcal F_{i-1},D_k,k)$.

\emph{(b).} Draw, for each start state, a table of successors of $a$ in advance, i.i.d.\ from $P_{i'}$ given $P_{i'}$, and let the $n$-th play of $a$ at $s_{i'}$ reveal the $n$-th entry, as in Appendix~\ref{app:lem-novelty-order}. This does not change the law of the interaction. All other transitions are deterministic, and the randomness of the agent and of the adversary is independent of the transition law. During $\mathcal B_{i'}$, the adversary uses only $P_{i'}$, and the agent observes only the table of $s_{i'}$. By induction over the blocks, $\mathcal F_{i-1}$ is a function of $P_1,\dots,P_{i-1}$, of the tables of $s_1,\dots,s_{i-1}$ and of randomness independent of the transition law. Since the laws are independent across start states, $\mathcal F_{i-1}$ is independent of $P_i$ and of the table of $s_i$, whose first entries are the successors observed at $s_i$. In particular, the goals of the earlier blocks carry no information about $P_i$.

\emph{(c).} By condition~(i) of Definition~\ref{def:goal-memoryless}, the policy of episode $k$ is a function of the records, of the goal and of randomness drawn afresh for the episode. Given the history and the goal, the probability of playing $a$ at $s_i$ is therefore a fixed function of the records and the goal, and (a) gives the claim. The records are part of the history, and an adversary that knows the agent can therefore compute $p_k$.
\end{proof}

Condition~(ii) of Definition~\ref{def:goal-memoryless} is what makes part~(a) hold. Suppose that the agent only had to choose among actions with equal rewards independently of the goal, and let it play, at the last step, its first action if it has played $b$ at step~$1$ an even number of times and its second action otherwise. With $M=1$, the histories whose first three episodes play $b,a,a$ and $a,b,a$ at step~$1$ can share the same list $D_k$ and lead to different records at the last step. These records then encode the order of the agent's earlier choices, which depend on the goals and thereby on $P_1$. The definition concerns the last step only, which suffices for horizon~$2$. At intermediate steps of a longer horizon, an agent could record past goals through its choices among equivalent actions, and a factor $H$ in Theorem~\ref{thm:price}(b) would need a finer definition of the class.

\begin{lemma}[Members of the goal-memoryless class]
\label{lem:membership}
Run with the goal $R_k$ in place of the reward, the following agents are goal-memoryless: Bayes-UCBVI and calibrated AIF with a uniform prior, both breaking ties at the last step by a fixed order of the actions; CaI; AIF, if at the last step it samples from the softmax of its index before the cap at $r_0$; and calibrated AIF with Bayes-UCBVI's prior, if at the last step it maximises its index before the cap, breaking ties by a fixed order of the actions. UCBVI is not goal-memoryless.
\end{lemma}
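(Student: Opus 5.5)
For each agent I will check the two conditions of Definition~\ref{def:goal-memoryless} directly from its specification in Appendix~\ref{app:algs}. The common observation is that every listed agent plans once per episode by backward induction (Eq.~\eqref{eq:psi-slot}, or Eq.~\eqref{eq:dirichlet-agent} for Bayes-UCBVI). The ingredients of that planning are the Dirichlet posteriors $\rho^k_h$, which are functions of the counts $N^h_k(x,a,\cdot)$ and hence of the records of the triples; the fixed parameters $n_0$, $r_0$, $K$, $\delta$ and the quantile orders $\kappa^k_h$, which depend only on the counts of the row; and the current goal $R_k$. The agent then acts by a per-step rule (greedy or softmax) whose randomness is drawn afresh.

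Condition~(i) follows at once: the policy of episode $k$ is a function of the records, of $R_k$ and of fresh randomness. The one possible exception is the reward estimate $\widehat R^k_\phi$, but with the goal $R_k$ in its place no past goal enters. CaI's soft maximum and softmax, and AIF's state term and novelty (Eq.~\eqref{eq:novelty-def}), are likewise functions of the posterior alone.

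Condition~(ii) concerns the last step $h=H$. There $V_{k,H+1}\equiv0$, also at the pseudo-state, so $Q_{k,H}(x,a)=R_k(x,a)+\Psi_{k,H}(x,a)$. For Bayes-UCBVI and for CaI, $\Psi_{k,H}=0$: the quantile gap vanishes because $p^\top 0=0$. When all actions earn the same reward, the greedy agent therefore ties and picks by the fixed action order, which is a fixed rule. CaI's softmax becomes uniform, which can be realised from $U_{x,j}$.

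For calibrated AIF, with either prior, the width at $h=H$ involves $\mathrm{span}(V_{k,H+1})=0$. For the uniform prior this leaves $\Psi^{\mathrm u}=0$ and a tie. For Bayes-UCBVI's prior the remaining term $2\sqrt2\,r_0H\ell^k_H(x,a)/\bar n^k_H(x,a)$ depends only on the counts of the triple $(H,x,a)$, which are its records. Maximising this index before the cap, with ties broken by the fixed order, is therefore a fixed rule of the records of $(H,x,\cdot)$. The cap is the reason for the stated convention: after the cap several actions could tie at $r_0$, and the tie-break would then still be fixed, but stating the rule before the cap keeps it a function of the local records only.

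AIF's last-step index before the cap is $R_k+\beta^{-1}(\mathds H[\widehat P^k_H(\cdot\mid x,a)]+\mathrm{Novelty})$. When the rewards are equal, the reward cancels in the softmax. The resulting law depends only on the records of $(H,x,\cdot)$, and sampling from it by inverse CDF with $U_{x,j}$ is a fixed rule. I expect this to be the most delicate step. The point to verify is that the cancellation really removes $R_k$; this holds because the softmax is invariant to a common shift and the index is taken before the nonlinear cap.

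UCBVI fails condition~(i). Its recursion Eq.~\eqref{eq:ucbvi-recursion} clamps $\widehat Q_{k,h}$ at $\widehat Q_{k-1,h}$, which was computed under $R_{k-1}$. Its policy therefore depends on past goals, and not only on the records and the current goal. To make this rigorous I will exhibit two histories with identical records and identical current goal but different past goals that yield different clamped values and different greedy actions. A simple way is to take a large earlier goal followed by a small one, so that the clamp binds in one history and not in the other.
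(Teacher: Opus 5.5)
Your proposal is correct and takes essentially the paper's route. Condition~(i) follows from backward induction on the step-indexed counts and the current goal. Condition~(ii) follows because $V_{k,H+1}\equiv0$ reduces the last-step index to the common reward $c$ plus a term that depends only on the records of $(H,x,a)$, using the softmax's invariance to shifts and a realisation through $U_{x,j}$. UCBVI fails~(i) through its clamp at the previous episode's values, and your planned explicit pair of histories is a more careful version of the paper's one-line remark.

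One small correction to your motivation for the pre-cap convention: capped ties are not the problem. The issue is that $\min\{r_0,c+\Psi_{k,H}(x,a)\}$ depends on $c$ once the cap binds, so the capped rule itself would violate condition~(ii).
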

\begin{proof}
\emph{Condition~(i).} Before episode $k$, each of the five agents computes its values by backward induction from the counts of its step-indexed rows and from the goal $R_k$. Its quantile orders $\kappa^k_h(x,a)$ and logarithmic factors depend only on the counts of the row and on the fixed $X$, $A$, $H$, $K$ and $\delta$ (Definition~\ref{def:bayesq-ground}). The greedy agents are deterministic given these values, and AIF and CaI sample their actions from softmax laws with randomness drawn afresh.

\emph{Condition~(ii).} Let all actions at $x$ have the reward $c$ under $R_k$. Since $V_{k,H+1}\equiv0$, also at the pseudo-state, the index of $(x,a)$ at the last step is $c$ plus a term that depends only on the records of $(H,x,a)$ and on fixed parameters. For Bayes-UCBVI, the quantile of $p^\top V_{k,H+1}$ vanishes, and its index is $c$. For calibrated AIF with a uniform prior, $\Psi^{\mathrm u}_{k,H}=0$, since $\mathrm{span}(V_{k,H+1})=0$, and its index is $\min\{1,c\}$. For CaI, $\Psi=0$ and there is no cap. For AIF, the term is $\beta^{-1}\big(\mathds H[\widehat P^k_H(\cdot\mid x,a)]+\mathrm{Novelty}^k_H(x,a)\big)$. For calibrated AIF with Bayes-UCBVI's prior, $\mathrm{span}(V_{k,H+1})=0$, and the term is $2\sqrt2\,r_0H\ell^k_H(x,a)/\bar n^k_H(x,a)$, a function of the counts of $(H,x,a)$. The greedy agents maximise these indices with a fixed order for ties, and the softmax laws of AIF and CaI depend only on the differences of the indices, from which $c$ cancels. Each agent thus draws its action from a fixed law given the records of $(H,x,\cdot)$, with fresh randomness. Drawing it instead by the quantile transform of $U_{x,j}$, with $j$ the number of earlier last-step visits to $x$, which the records determine, does not change the law of the interaction.

\emph{The cap and UCBVI.} With the cap, AIF and calibrated AIF with Bayes-UCBVI's prior act on $\min\{r_0,c+\Psi_{k,H}(x,a)\}$, which depends on $c$ once the cap binds and thereby violates condition~(ii). The convention leaves the other results unchanged. For $H\ge2$, the floor of Proposition~\ref{prop:aif-cai-regret} concerns the first step. A maximiser of the index before the cap also maximises the capped index. Calibrated AIF therefore stays greedy for its capped values, which is all that Theorem~\ref{thm:calibrated} uses. UCBVI clamps its values at those of the previous episode, which depend on the previous goals, and violates condition~(i).
\end{proof}

\begin{proof}[Proof of Theorem~\ref{thm:price}(b)]
\emph{The instance and the reduction to one start state.} In this proof, $m=Y/2$ is the number of pairs of successors. The number of start states is $M$, and $K'=K/M$. The MDP is the Gaussian family of Appendix~\ref{app:goal-uniform}, with start states $s_1,\dots,s_M$, successors $y_1,\dots,y_Y$ and the state $z$. At $s_i$, action $a$ leads to $y_j$ with probability $P_i(y_j)$ and action $b$ leads to $z$, and the $y_j$ and $z$ are absorbing. The horizon is $H=2$. Hence $H-1=1$ in the formulas below. A goal pays $g\in\{0,1\}^{Y}$ at the successors, $v\in[0,1]$ at $z$ and $0$ at the start states. From $s_i$, action $a$ is therefore worth $P_ig$ and action $b$ is worth $v$. The adversary starts every episode of the block $\mathcal B_i$ at $s_i$ and, during $\mathcal B_i$, uses the transition law only through $P_i$. By Lemma~\ref{lem:reconstruction}, given $\mathcal F_{i-1}$, the successors of $a$ at $s_i$ are i.i.d.\ draws from $P_i$, the parameter $\theta^{(i)}$ of $P_i=P_{\theta^{(i)}}$ is still $N(0,I_m)$, and the agent plays $a$ in episode $k\in\mathcal B_i$ with probability $p_k(\mathcal F_{i-1},D_k,g_k,v_k)$. With $\varepsilon=\sqrt{m/(64K')}$, the single-start-state argument below, applied to the $K'$ episodes of $\mathcal B_i$ conditionally on $\mathcal F_{i-1}$, bounds the expected regret of $\mathcal B_i$, averaged over $\theta^{(i)}$, from below by $(H-1)\sqrt{YK'}/16000$. It needs $K'\ge m$ and $m\ge128\ln(8K'^2)$, which the hypotheses give. Indeed, $K'\ge Cm^2\ln^3(mK)\ge m$, and $m\ge300\ln(3K)$ implies $m/128\ge2.34\ln(3K)\ge\ln(8K^2)$. Summing over the $M$ blocks, the expected regret under the prior is at least $M(H-1)\sqrt{YK/M}/16000=(H-1)\sqrt{MYK}/16000$. For $M=1$ the start state is fixed and the argument below is the whole proof. In the rest of the proof, $K$ stands for $K'$, the start state is $s_0:=s_i$, $\theta:=\theta^{(i)}$, $P:=P_\theta$ and $D:=D_k$, and we suppress the fixed $\mathcal F_{i-1}$.

\emph{One start state.} The goal of episode $k$ pays $g_k\in\{0,1\}^{Y}$ at the successors $y_j$ and $v_k\in[0,1]$ at $z$, and the adversary chooses $(g_k,v_k)$ from $P$ and the history. All states after the first step are absorbing and their rewards do not depend on the action. The regret of episode $k$ therefore depends only on the action at $s_0$, where the agent plays $a$ with probability $p_k(D,g_k,v_k)$ by Lemma~\ref{lem:reconstruction}(c). The functions $p_k$ do not depend on $P$, and the list $D$ consists of the first entries of an i.i.d.\ sequence drawn from $P$.

\emph{A threshold for every goal.} Fix $k$ and $D$, write $p(g,v):=p_k(D,g,v)$, let $\tau(g):=\sup\{v\in[0,1]:p(g,v)\ge\frac12\}$ with $\sup\emptyset:=0$, and let $\Delta_k:=\max_g|\tau(g)-Pg|$, attained at some $g$. If $\tau(g)>Pg$, the adversary picks some $v\ge(\tau(g)+Pg)/2$ with $p(g,v)\ge\frac12$, so that $b$ is optimal while the agent plays $a$ with probability at least $\frac12$. If $\tau(g)<Pg$, it picks $v=(\tau(g)+Pg)/2$, for which $p(g,v)<\frac12$, so that $a$ is optimal while the agent plays $b$ with probability above $\frac12$. In both cases the expected regret of episode $k$, given the history, is at least $(H-1)\Delta_k/4$. The adversary thus depends on the agent through $p_k$, which is why Theorem~\ref{thm:price}(b) states that every goal-memoryless agent has the bound against some adversary. This is the standard form of a minimax lower bound, $\inf_{\text{agent}}\sup_{\text{adversary}}\mathds{E}\mathfrak R^{\mathrm{goal}}_K\ge\sqrt{MYK}/16000$ over goal-memoryless agents, and it is the form that the upper bound of part~(c), which holds for one agent against every adversary, matches.

\emph{From thresholds to an estimate of $P$.} Let $\widehat Q_k$ minimise $\max_g|Qg-\tau(g)|$ over the laws $Q$ on the successors. Since $P$ is a candidate, $\max_g|\widehat Q_kg-\tau(g)|\le\Delta_k$, which yields $(\widehat Q_k-P)g\le2\Delta_k$ for every $g$, and $\frac12\|\widehat Q_k-P\|_1\le2\Delta_k$ as in the proof of Lemma~\ref{lem:goal-uniform-width}. Hence
\begin{equation*}
\mathds{E}\mathfrak R^{\mathrm{goal}}_K\ge\frac{H-1}{16}\sum_k\mathds{E}\|\widehat Q_k-P\|_1.
\end{equation*}
Let $\omega_1,\omega_2,\dots$ be the successors of $a$ in the order in which the agent observes them, which are i.i.d.\ draws from $P$ whatever the agent and the adversary do. By Lemma~\ref{lem:reconstruction}(c), $\tau$ and hence $\widehat Q_k$ are functions of $D=\omega_{1:n}$ at $n=n_k$. Thus $\widehat Q_k=F_{k,n}(\omega_{1:n})$ for functions $F_{k,n}$ of the samples and of randomness independent of $\theta$ and of the samples, namely $\mathcal F_{i-1}$. The count $n_k\le k-1$ is random and may depend on $P$ through the goals. We therefore bound the error at all pairs $(k,n)$ at once, with Lemma~\ref{lem:gauss-l1}. These steps use only that the successors are i.i.d.\ and that indicator goals detect $\ell_1$ errors. The prior enters only through the lemma.

\emph{Conclusion.} On the event $G$ of Lemma~\ref{lem:gauss-l1}, $\|\widehat Q_k-P\|_1\ge\varepsilon/64$ for every $k$, whatever the count $n_k$, which yields
\begin{equation*}
\mathds{E}\mathfrak R^{\mathrm{goal}}_K\ge\frac{H-1}{16}\cdot\frac34\cdot\frac{K\varepsilon}{64}=\frac{3(H-1)K\varepsilon}{4096},
\end{equation*}
in expectation over the prior. Since $K\varepsilon=\sqrt{mK}/8$ and $m=Y/2$, this is at least $(H-1)\sqrt{YK}/16000$. An adversary that knows $P$ reveals information about $P$ through its goals, and an agent that uses past goals as data could learn from them, which is why part~(b) restricts the agents.
\end{proof}

\emph{The hypercube family.} Proposition~\ref{prop:goal-reading} uses the same argument, in the notation of the proof of Theorem~\ref{thm:price}(b), on the family $P_\sigma$ of the proof of Lemma~\ref{lem:goal-uniform-width}, with $\sigma$ uniform on $\{-1,1\}^m$, $m=\lfloor Y/2\rfloor$ and $\varepsilon:=\sqrt{3m/(64K)}\le\frac12$, drawn independently at each start state. Lemma~\ref{lem:reconstruction} holds for every prior under which the laws at the start states are independent, and hence on this family. The reduction to one start state is therefore unchanged, and in the rest of this paragraph $K$ stands for $K'=K/M$. Only the error bound changes. Let $\widehat\sigma_i:=1$ if $f(y_{2i-1})>f(y_{2i})$ and $\widehat\sigma_i:=-1$ otherwise. As in that proof, each wrong sign adds at least $\varepsilon/m$ to $\|f-P_\sigma\|_1$. Given $\omega_{1:n}$, the signs $\sigma_i$ are independent, and the posterior probability $q_i$ of $\sigma_i=1$ depends only on the samples in pair~$i$. Given $\omega_{1:n}$, the number of wrong signs is therefore a sum of independent Bernoulli variables whose means are at least $e_i:=\min(q_i,1-q_i)$. For $c_i\le2K/m$, the bound of Eq.~\eqref{eq:tv-bernoulli} in Appendix~\ref{app:unstructured} with $\varepsilon/2$ in place of $\varepsilon$ bounds the total variation between the laws of the pair's samples under $\sigma_i=\pm1$ by $\varepsilon\sqrt{8c_i/3}\le\frac12$, which yields $\mathds{E}[e_i\mid c_1,\dots,c_m]\ge\frac14$. Hoeffding's inequality and the Chernoff bound, as above, give $\|f-P_\sigma\|_1\ge\varepsilon/32$ except with probability $2e^{-m/128}$. For $Y\ge600\ln(3K)$, $m/128\ge\ln(8K^2)$. The bound therefore holds for all $k\le K$ and $n<K$ with probability at least $\frac34$, and the conclusion above gives
\begin{equation*}
\mathds{E}\mathfrak R^{\mathrm{goal}}_K\ge\frac{3(H-1)K\varepsilon}{2048}\ge(H-1)\sqrt{YK}/6000
\end{equation*}
under the uniform prior, for $K\ge Y$. Summed over the $M$ blocks, this is $(H-1)\sqrt{MYK}/6000$ for $K'\ge Y$, the lower bound of Proposition~\ref{prop:goal-reading}. Some $\sigma$ attains at least this average. The bound therefore also holds on a fixed MDP.

\subsection{Concentration Tools}
\label{app:concentration}

This subsection collects, with proofs, the concentration inequalities of Appendix~\ref{app:all-agents} and the tape notation that Appendix~\ref{app:adaptive-b} also uses. A real random variable $\chi$ is \emph{sub-Gaussian with proxy} $\sigma^2$ if $\mathds{E}e^{\lambda(\chi-\mathds{E}\chi)}\le e^{\lambda^2\sigma^2/2}$ for all $\lambda\in\mathds R$. We use five standard facts \citep{boucheron2013concentration}. A variable with values in an interval of length $\ell$ is sub-Gaussian with proxy $\ell^2/4$ (Hoeffding's lemma). The proxies of independent variables add, and so do those of martingale differences whose conditional proxies are fixed in advance. A sub-Gaussian variable with proxy $\sigma^2$ satisfies $\mathds{P}(|\chi-\mathds{E}\chi|\ge t)\le2e^{-t^2/(2\sigma^2)}$. Bernstein's inequality states that independent $\chi_1,\dots,\chi_n$ with $|\chi_i-\mathds{E}\chi_i|\le b$ and $\sum_i\mathds{V}\chi_i\le v$ satisfy
\begin{equation*}
\mathds{P}\big(|\sum_i(\chi_i-\mathds{E}\chi_i)|\ge\sqrt{2vx}+bx/3\big)\le2e^{-x}
\end{equation*}
for $x>0$. Finally, a $\varrho$-net of the unit sphere of a $d$-dimensional space, that is, a subset within distance $\varrho$ of every point, exists with at most $(1+2/\varrho)^d$ points.

\emph{The tape.} Fix $\theta\in\mathds R^m$ and $\varepsilon>0$, and let the clipped coordinates of the Gaussian family be $\tilde\theta_j:=\max\{-B,\min\{B,\theta_j\}\}$ and $B:=1/(2\varepsilon)$, which gives $\varepsilon|\tilde\theta_j|\le\frac12$. The \emph{tape} is an i.i.d.\ sequence $((j_s,y_s))_{s\ge1}$ in which $j_s$ is uniform on $\{1,\dots,m\}$ and $y_s\in\{-1,1\}$ has $\mathds{P}(y_s=1\mid j_s)=(1+\varepsilon\tilde\theta_{j_s})/2$. We call $j_s$ the \emph{label} of entry $s$. Label $j$ belongs to the pair of successors $y_{2j-1},y_{2j}$. The $s$-th sample of action $a$ at the start state with parameter $\theta$ is the successor $y_{2j_s-1}$ if the sign $y_s$ equals $1$ and the successor $y_{2j_s}$ otherwise. The subscript of a sign is a tape position, and that of a successor is a state index. The agent and the adversary only decide when the next entry is read. Every statement about fixed entries of the tape therefore holds whatever they do. For an interval $I$ of positions, let $n_j(I):=|\{s\in I:j_s=j\}|$, and let $n_j(n):=n_j(\{1,\dots,n\})$ and $d_j(n):=\sum_{s\le n:\,j_s=j}y_s$.

\begin{lemma}[Gaussian norm and maximum]
\label{lem:gauss-norm}
Let $\theta\sim N(0,I_m)$ with $m\ge2$, and let $x>0$. Then $\mathds{P}\big(\|\theta\|^2\ge m+2\sqrt{mx}+2x\big)\le e^{-x}$, $\mathds{P}\big(\|\theta\|^2\le m-2\sqrt{mx}\big)\le e^{-x}$, and $\mathds{P}\big(\max_j|\theta_j|>\sqrt{6\ln m}\big)\le m^{-2}$. In particular, $0.95m\le\|\theta\|^2\le1.05m$ except with probability $2e^{-m/1700}$.
\end{lemma}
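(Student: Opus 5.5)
The three claims follow from Laurent--Massart chi-square concentration, a Gaussian tail bound with a union bound over coordinates, and a final numerical specialisation.

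\emph{Upper and lower tails of the norm.} The variable $\|\theta\|^2=\sum_j\theta_j^2$ is chi-square with $m$ degrees of freedom. For the upper tail, bound the moment generating function $\mathds{E}e^{\lambda(\theta_j^2-1)}=e^{-\lambda}(1-2\lambda)^{-1/2}$ for $\lambda<\frac12$. Use $-\frac12\ln(1-2\lambda)-\lambda\le\lambda^2/(1-2\lambda)$, so that the centred sum is sub-gamma with variance factor $2m$ and scale $2$. The Cramér--Chernoff bound for sub-gamma variables then gives $\mathds{P}(\|\theta\|^2-m\ge2\sqrt{mx}+2x)\le e^{-x}$. For the lower tail, take $\lambda<0$ and use $-\frac12\ln(1+2u)+u\le u^2$ for $u>0$. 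This makes the left tail sub-Gaussian with variance factor $2m$, which gives $\mathds{P}(\|\theta\|^2\le m-2\sqrt{mx})\le e^{-x}$.

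\emph{Maximum coordinate.} Apply the Gaussian tail $\mathds{P}(|\theta_j|>t)\le2e^{-t^2/2}/(t\sqrt{2\pi})$ at $t=\sqrt{6\ln m}$. This gives at most $2m^{-3}/\sqrt{12\pi\ln m}$ per coordinate. For $m\ge2$ we have $12\pi\ln m\ge4$, so each coordinate contributes at most $m^{-3}$. A union bound over the $m$ coordinates then gives $m^{-2}$.

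\emph{The specialisation.} Take $x=m/1700$ in both norm bounds. For the upper tail, $2\sqrt{mx}+2x=m(2/\sqrt{1700}+2/1700)\le m(0.0486+0.0012)\le0.05m$. For the lower tail, $2\sqrt{mx}\le0.0486m\le0.05m$. Each deviation of size $0.05m$ therefore has probability at most $e^{-m/1700}$, and a union bound gives the two-sided statement.

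The only delicate step is the sub-gamma bound on the log moment generating function, which is where the constants must be checked. Everything else is arithmetic.
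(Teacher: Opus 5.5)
Your proof is correct and follows the paper's route: the chi-square tails for $\|\theta\|^2$, the Gaussian tail at $t=\sqrt{6\ln m}$ with a union bound over the $m$ coordinates, and the arithmetic at $x=m/1700$. The paper cites the chi-square tails, whereas you derive them from the sub-gamma bound on the log moment generating function. The paper also simplifies the Mills-ratio factor to $e^{-t^2/2}$ for $t\ge1$, whereas you check $\sqrt{12\pi\ln m}\ge2$ directly; neither change alters the argument.
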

\begin{proof}
The first two bounds are the tail bounds for the $\chi^2$ distribution with $m$ degrees of freedom \citep{boucheron2013concentration}. For a standard normal $g$ and $t\ge1$, $\mathds{P}(|g|>t)\le\sqrt{2/\pi}\,e^{-t^2/2}/t\le e^{-t^2/2}$, which is $m^{-3}$ at $t=\sqrt{6\ln m}$, and a union bound over the $m$ coordinates gives the third bound. For $x=m/1700$, $2\sqrt{mx}+2x\le0.0486m+0.0012m<0.05m$.
\end{proof}

\begin{lemma}[Counts and signs on the tape]
\label{lem:tape}
Fix $\theta$ and let $x>0$.
\begin{itemize}
\item[(i)] Let $I$ be an interval of positions and $a\in\mathds R^m$, both fixed given the tape entries before $I$ and randomness independent of the tape. Let $\bar a:=\sum_ja_j/m$ and $b:=\max_j|a_j-\bar a|$. With probability at least $1-2e^{-x}$, $\Big|\sum_{s\in I}a_{j_s}-|I|\bar a\Big| \le \sqrt{2|I|\,\|a\|^2x/m}+bx/3$.
In particular, $\big|n_j(I)-|I|/m\big|\le\sqrt{2|I|x/m}+x/3$.
\item[(ii)] For each label $j$, with probability at least $1-4Ke^{-x}$, $\big|d_j(n)-n_j(n)\varepsilon\tilde\theta_j\big|\le\sqrt{2n_j(n)x}$ for all $n\le2K$.
\end{itemize}
\end{lemma}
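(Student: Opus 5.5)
Both parts reduce to Bernstein's inequality, applied to independent summands for (i) and to a martingale with a union bound over counts for (ii). In each part the key step is to condition on the information that fixes $I$ (or the counts) so that the summands are independent or form a martingale with known increments.

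For part~(i), condition on the tape entries before $I$ and on the independent randomness. Under this conditioning, $I$ and $a$ are fixed. Because the tape is i.i.d., the labels $j_s$ with $s\in I$ are still i.i.d.\ uniform on $\{1,\dots,m\}$ and independent of the conditioning. The summands $\chi_s:=a_{j_s}$ then satisfy the following:
\begin{itemize}
\item $\mathds{E}\chi_s=\bar a$;
\item $|\chi_s-\bar a|\le b$;
\item $\mathds{V}\chi_s=\frac1m\sum_j(a_j-\bar a)^2\le\|a\|^2/m$.
\end{itemize}
Bernstein's inequality with $v=|I|\,\|a\|^2/m$ gives the claim conditionally, and integrating over the conditioning preserves the probability bound. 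For the count statement, take $a=e_j$. Then $\|a\|^2=1$ and $b\le1$, which gives $|n_j(I)-|I|/m|\le\sqrt{2|I|x/m}+x/3$.

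For part~(ii), fix the label $j$. The entries carrying label $j$, read in tape order, have i.i.d.\ signs with mean $\varepsilon\tilde\theta_j$, since $y_s$ given $j_s=j$ is $\pm1$ with this mean. This holds independently of the label process, because the sign of each entry is drawn given its label alone. Write $\zeta_r$ for the centred sign of the $r$-th label-$j$ entry. Then $d_j(n)-n_j(n)\varepsilon\tilde\theta_j=\sum_{r\le n_j(n)}\zeta_r$. So it suffices to control the partial sums $S_c:=\sum_{r\le c}\zeta_r$ for every $c\le 2K$. Each $\zeta_r$ lies in an interval of length $2$, so it is sub-Gaussian with proxy $1$, and $S_c$ is sub-Gaussian with proxy $c$. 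The tail bound then gives $\mathds{P}(|S_c|\ge\sqrt{2cx})\le2e^{-x}$. A union bound over $c\in\{1,\dots,2K\}$ yields failure probability at most $4Ke^{-x}$. The case $c=0$ is trivial. Since $n_j(n)\le n\le2K$, the bound for $c=n_j(n)$ gives the claim simultaneously for all $n\le2K$.

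The main subtlety lies in part~(ii): independence of the random count $n_j(n)$ and the signs. I handle it by indexing signs by their rank within label $j$ (a sub-tape) rather than by tape position. The union bound over fixed ranks then makes any dependence through $n_j(n)$ irrelevant. In part~(i), the corresponding subtlety is that $I$ and $a$ may depend on the past. The i.i.d.\ structure of the tape and the conditioning on the past resolve this.
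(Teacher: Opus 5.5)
Your proposal is correct and follows the paper's proof essentially step for step. Part~(i) is a conditional Bernstein bound on the i.i.d.\ uniform labels of $I$, with the same mean, deviation and variance bounds. Part~(ii) reindexes the label-$j$ signs by their rank within the label, applies a sub-Gaussian tail with proxy $c$ at each fixed count $c$, and takes a union bound over the $2K$ counts, which is why the random count $n_j(n)$ does no harm — the paper makes the same point when it says the sign sequence of a pair is indexed by its own visit number.
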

\begin{proof}
\emph{(i)}~Given the entries before $I$ and the independent randomness, $a$ is fixed and the labels $j_s$, $s\in I$, are i.i.d.\ uniform. The variables $a_{j_s}$ have mean $\bar a$, variance at most $\sum_ja_j^2/m$, and deviation at most $b$ from their mean. Hence, Bernstein's inequality gives the bound. For $a$ the $j$-th standard basis vector, we have $b\le1$. \emph{(ii)}~Let $y^{(j)}_i$ be the sign of the $i$-th entry with label $j$. Given $\theta$, the labels and signs are independent across entries, which makes $y^{(j)}_1,y^{(j)}_2,\dots$ i.i.d.\ with mean $\varepsilon\tilde\theta_j$ and values in an interval of length $2$, and $d_j(n)=\sum_{i\le n_j(n)}y^{(j)}_i$. Hoeffding's lemma, the additivity of proxies and the sub-Gaussian tail give $\mathds{P}\big(|\sum_{i\le r}(y^{(j)}_i-\varepsilon\tilde\theta_j)|\ge\sqrt{2rx}\big)\le2e^{-x}$ for each $r$, and a union bound over $r\le2K$ gives the claim. The sign sequence of a pair is indexed by its own visit number. The claim therefore holds whatever rule decides how many entries are read.
\end{proof}

\begin{lemma}[Overshoot on a spherical cap]
\label{lem:cap}
Let $V$ be a subspace of $\mathds R^m$ of dimension $d\ge4$, let $w\in V$ with $\|w\|=r$, let $0<z_0<r$, and let $u$ be uniform on the unit sphere of $V$ and $s:=\langle u,w\rangle$. Then $s$ has density proportional to $(1-s^2/r^2)^{(d-3)/2}$ on $[-r,r]$. Conditionally on $|s|\ge z_0$, the overshoot $|s|-z_0$ is stochastically dominated by the exponential law with mean $\mu:=r^2/((d-3)z_0)$. Hence, $\mathds{E}[s^2\mid|s|\ge z_0]\le z_0^2+2z_0\mu+2\mu^2$ and $\mathds{P}(|s|>z_0+t\mu\mid|s|\ge z_0)\le e^{-t}$ for $t\ge0$. Conditionally on $s$, the component of $u$ orthogonal to $w$ is uniform on the sphere of radius $\sqrt{1-s^2/r^2}$ in $V\cap w^\perp$.
\end{lemma}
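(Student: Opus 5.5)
The plan is to reduce everything to the classical spherical-coordinate representation of a uniform point on a sphere. Fix an orthonormal basis of $V$ with first vector $w/r$, and write $u=(s/r)\,w/r+\sqrt{1-s^2/r^2}\,u_\perp$, where $u_\perp$ is a unit vector in $V\cap w^\perp$, a space of dimension $d-1$. For $u$ uniform on $\mathbb S^{d-1}$ in $V$, the first coordinate $t=s/r$ has density proportional to $(1-t^2)^{(d-3)/2}$ on $[-1,1]$. One sees this by representing $u=G/\|G\|$ with $G$ standard Gaussian in $V$, or from the surface-area formula, where the slice at height $t$ is a sphere of radius $\sqrt{1-t^2}$ in dimension $d-2$ and the arc-length factor is $(1-t^2)^{-1/2}$. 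The density of $s$ follows by the change of variables $s=rt$. The same representation gives the last claim directly. By rotational invariance of the uniform law under the orthogonal group of $V\cap w^\perp$, which fixes $w$, the vector $u_\perp$ is uniform and independent of $t$. Hence, given $s$, the orthogonal component is uniform on the sphere of radius $\sqrt{1-s^2/r^2}$.

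For the overshoot, I would work with the symmetric density $f(s)\propto(1-s^2/r^2)^{(d-3)/2}$ and condition on $s\ge z_0$; the case $s\le-z_0$ is identical by symmetry. The key step is to show that $\log f$ has derivative at most $-1/\mu$ on $[z_0,r)$. Indeed,
\[
(\log f)'(s)=-(d-3)\,\frac{s/r^2}{1-s^2/r^2}\le-(d-3)\,\frac{s}{r^2}\le-\frac{(d-3)z_0}{r^2}=-\frac1\mu .
\]
This step uses $d\ge4$, so that $d-3>0$, and $s\ge z_0$. It follows that $f(z_0+x)/f(z_0)\le e^{-x/\mu}$ for $x\ge0$. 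Therefore the conditional tail satisfies
\[
\mathbb P(s-z_0>x\mid s\ge z_0)=\frac{\int_x^\infty f(z_0+y)\,dy}{\int_0^\infty f(z_0+y)\,dy}\le e^{-x/\mu}.
\]
The inequality comes from writing $f(z_0+y)=f(z_0)\exp(-\int_0^y h)$, where $h:=-(\log f)'(z_0+\cdot)\ge1/\mu$ is nondecreasing on the support. This is the standard fact that a density with hazard at least $1/\mu$ has tail at most $e^{-x/\mu}$. I expect this to be the only step needing care. The cleanest route is to observe that the conditional law of the overshoot has a log-concave density whose log-derivative is at most $-1/\mu$ everywhere. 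Then, for every $x$, the ratio $\int_x^\infty f/\int_0^\infty f$ is bounded by comparing $f(z_0+x+\cdot)\le e^{-x/\mu}f(z_0+\cdot)$ pointwise, which gives the tail directly without any hazard argument.

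The two consequences then follow by integration against the dominating exponential law. The tail bound at $x=t\mu$ is exactly the stated $e^{-t}$. For the second moment, write $|s|=z_0+E'$ with $E'$ stochastically dominated by $\mathrm{Exp}(\mu)$, and use monotonicity of $y\mapsto(z_0+y)^2$ on $y\ge0$:
\[
\mathbb E[s^2\mid |s|\ge z_0]\le\mathbb E(z_0+E)^2=z_0^2+2z_0\mu+2\mu^2 ,
\]
with $E\sim\mathrm{Exp}(\mu)$. Since $z_0<r$, all conditionings are on events of positive probability.
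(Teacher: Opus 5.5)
Your proposal is correct and follows essentially the same route as the paper: you bound $-(\ln f)'(s)\ge(d-3)z_0/r^2=1/\mu$ on $[z_0,r)$, integrate the pointwise shift comparison $f(s+x)\le e^{-x/\mu}f(s)$ for $s\ge z_0$ to get the stochastic domination, and then bound the moments by those of the dominating exponential, using monotonicity of $y\mapsto(z_0+y)^2$. Your justification of the density and of the conditional uniformity, which the paper simply cites as standard, is sound, and the ``cleanest route'' you settle on is exactly the paper's, so the hazard-rate detour can be dropped.
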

\begin{proof}
The density of the coordinate of a uniform point on the unit sphere of $\mathds R^d$ along a fixed unit vector is proportional to $(1-t^2)^{(d-3)/2}$, and given this coordinate the remaining component is uniform on the sphere of the orthogonal complement with the matching radius. Scaling by $r$ gives the first and last claims. By symmetry it suffices to consider $s\ge z_0$. The density $g(s)=(1-s^2/r^2)^{(d-3)/2}$ has $-(\ln g)'(s)=(d-3)s/(r^2-s^2)\ge(d-3)z_0/r^2=1/\mu$ for $s\in[z_0,r)$, which yields $g(s+t)\le e^{-t/\mu}g(s)$ for $s\ge z_0$ and $t\ge0$. Hence $\int_{z_0+t}^rg=\int_{z_0}^{r-t}g(v+t)\,dv\le e^{-t/\mu}\int_{z_0}^rg$, which is the stochastic domination. The overshoot $\zeta:=s-z_0$ thus has $\mathds E\zeta\le\mu$ and $\mathds E\zeta^2\le2\mu^2$, which yields $\mathds{E}[(z_0+\zeta)^2]\le z_0^2+2z_0\mu+2\mu^2$.
\end{proof}

\begin{lemma}[Projections of a uniform point on a sphere]
\label{lem:sphere}
Let $V$ be a subspace of $\mathds R^m$ of dimension $d'\ge4$, let $P_V$ be the orthogonal projection onto $V$, and let $u$ be uniform on the unit sphere of $V$. For every fixed $x\in\mathds R^m$ and $t\ge0$,
\begin{equation*}
\mathds{P}\big(|\langle u,x\rangle|>t\|P_Vx\|\big)\le2e^{-(d'-3)t^2/2}.
\end{equation*}
\end{lemma}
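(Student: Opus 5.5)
I will prove Lemma~\ref{lem:sphere} by the same reduction to one-dimensional marginals that gave Lemma~\ref{lem:cap}. The idea is that a uniform point on a sphere is nearly Gaussian along any fixed direction. First I will reduce to a statement inside $V$. Since $u\in V$, we have $\langle u,x\rangle=\langle u,P_Vx\rangle$. If $P_Vx=0$, both sides of the event vanish and the claim holds trivially. Otherwise set $e:=P_Vx/\|P_Vx\|$, a fixed unit vector of $V$, so that the event becomes $|\langle u,e\rangle|>t$. By Lemma~\ref{lem:cap} with $w=e$ and $r=1$, the coordinate $s:=\langle u,e\rangle$ has density proportional to $g(s)=(1-s^2)^{(d'-3)/2}$ on $[-1,1]$.

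The core step is the tail bound $\mathds{P}(|s|>t)\le2e^{-(d'-3)t^2/2}$, which I will prove by comparing $g$ with a Gaussian. Write $\alpha:=(d'-3)/2\ge\frac12$. I use $1-s^2\le e^{-s^2}$, which gives $g(s)\le e^{-\alpha s^2}$. For $t\ge1$ the probability is zero, so I may take $0\le t<1$. Substituting $s=t+v$, I get $\int_t^1 g(s)\,ds\le\int_0^{1-t} g(t+v)\,dv$, and I need to bound this by $e^{-\alpha t^2}\int_0^1 g$. The cleanest route mirrors the log-concavity argument of Lemma~\ref{lem:cap}. Since $-(\ln g)'(s)=2\alpha s/(1-s^2)\ge2\alpha s$, integrating from $v$ to $v+t$ gives $g(v+t)\le g(v)\,e^{-\alpha((v+t)^2-v^2)}\le g(v)\,e^{-\alpha t^2}$ for $v\ge0$, using $(v+t)^2-v^2\ge t^2$. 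Integrating over $v\in[0,1-t]$ then yields $\int_t^1 g\le e^{-\alpha t^2}\int_0^1 g$. Dividing by the normalisation, which is $2\int_0^1 g$ by symmetry, gives $\mathds{P}(s>t)\le\frac12e^{-\alpha t^2}$. Doubling for $|s|$ gives $\mathds{P}(|s|>t)\le e^{-(d'-3)t^2/2}$. This is even better than the stated bound by a factor of $2$.

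The main point to check is the inequality $g(v+t)\le e^{-\alpha t^2}g(v)$. It comes from integrating the lower bound on $-(\ln g)'$ over $[v,v+t]$, which lies inside $[0,1)$, so I expect no obstacle there. The remaining concern is the edge case $t\ge1$ and whether $\alpha\ge\frac12$ is actually needed. Both are harmless: the bound is trivial for $t\ge1$, and $d'\ge4$ is used only so that the density formula of Lemma~\ref{lem:cap} applies with a nonnegative exponent.
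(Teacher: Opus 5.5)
Your proof is correct, and it takes a different route from the paper's in how the normalising constant is handled. Both proofs make the same reduction to $s=\langle u,P_Vx\rangle/\|P_Vx\|$, whose density is proportional to $(1-s^2)^k$ with $k=(d'-3)/2$.

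\emph{The paper's route.} It bounds the tail integral against a Gaussian, $\int_t^1(1-s^2)^k\,ds\le e^{-kt^2}\int_0^\infty e^{-kv^2}\,dv=\frac12\sqrt{\pi/k}\,e^{-kt^2}$. It then separately lower-bounds the normaliser, $\sqrt\pi\,\Gamma(k+1)/\Gamma(k+\frac32)\ge\sqrt{\pi/(k+1)}$, using log-convexity of $\Gamma$. The resulting factor $\frac12\sqrt{(k+1)/k}$ is at most $1$ only for $k\ge\frac13$. That is where the hypothesis $d'\ge4$ genuinely enters. The one-sided bound $e^{-kt^2}$ is then doubled to the stated $2e^{-kt^2}$.

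\emph{Your route.} You use the shift comparison $g(v+t)\le e^{-\alpha t^2}g(v)$ for $v,t\ge0$. It can also be read off directly from $1-(v+t)^2\le(1-v^2)(1-t^2)$. This bounds the tail by the half-mass $\int_0^1 g$, so the normaliser cancels by symmetry and no Gamma-function estimate is needed. The argument works for every $\alpha\ge0$, that is, $d'\ge3$. It also gives the sharper two-sided bound $\mathds{P}(|s|>t)\le e^{-(d'-3)t^2/2}$. It is the same device as the overshoot argument in the proof of Lemma~\ref{lem:cap}, with the quadratic lower bound $2\alpha s$ on $-(\ln g)'$ in place of the constant $1/\mu$.

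\emph{What each buys.} The paper's route makes the Gaussian comparison explicit, at the price of a normaliser estimate and a dimension condition. Yours is shorter, normaliser-free, and sharper for the statement as posed.

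Two cosmetic points. The bound $g(s)\le e^{-\alpha s^2}$ that you announce first is never used in your final argument. The substitution $s=t+v$ gives an equality, not an inequality.
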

\begin{proof}
Since $\langle u,x\rangle=\langle u,P_Vx\rangle$, the event is empty if $P_Vx=0$, and otherwise we may take $x\in V$ with $\|x\|=1$. By Lemma~\ref{lem:cap} with $r=1$, $s=\langle u,x\rangle$ has density proportional to $(1-s^2)^k$ with $k:=(d'-3)/2\ge\frac12$. Using $1-s^2\le e^{-s^2}$ and $(t+v)^2\ge t^2+v^2$,
\begin{equation*}
\int_t^1(1-s^2)^k\,ds\;\le\;e^{-kt^2}\int_0^\infty e^{-kv^2}\,dv\;=\;\tfrac12\sqrt{\pi/k}\,e^{-kt^2}.
\end{equation*}
The normaliser is $\int_{-1}^1(1-s^2)^k\,ds=\sqrt\pi\,\Gamma(k+1)/\Gamma(k+\tfrac32)\ge\sqrt{\pi/(k+1)}$, since the log-convexity of $\Gamma$ gives $\Gamma(k+\tfrac32)\le\Gamma(k+1)^{1/2}\Gamma(k+2)^{1/2}=\Gamma(k+1)\sqrt{k+1}$. Hence $\mathds{P}(s\ge t)\le\tfrac12\sqrt{(k+1)/k}\,e^{-kt^2}\le e^{-kt^2}$ for $k\ge\frac13$, and symmetry gives the claim.
\end{proof}

\emph{Increments.} For Lemmas~\ref{lem:increments} and~\ref{lem:cap-data}, fix positions $0=N_0<N_1<\dots<N_F$ and let $I_k:=\{N_{k-1}+1,\dots,N_k\}$ be the \emph{increments}. Define $\xi_k,\mu_k\in\mathds R^m$ by
\begin{equation*}
\xi_{k,j}:=\varepsilon\textstyle\sum_{s\in I_k:\,j_s=j}\big(y_s-\varepsilon\tilde\theta_j\big),\qquad\mu_{k,j}:=\varepsilon^2n_j(I_k)\tilde\theta_j,
\end{equation*}
so that $\mu_k+\xi_k$ is the vector with coordinates $\varepsilon\sum_{s\in I_k:\,j_s=j}y_s$. Let
\begin{equation*}
e_k^2:=\mathds{E}\big[\|\xi_k\|^2\mid j_1,j_2,\dots\big]=\varepsilon^2\sum_jn_j(I_k)(1-\varepsilon^2\tilde\theta_j^2),
\end{equation*}
which lies in $[\frac34,1]\cdot\varepsilon^2|I_k|$, let $\Gamma:=\mathrm{diag}(e_1,\dots,e_F)$, and let $\xi(\gamma):=\sum_k\gamma_k\xi_k$ for $\gamma\in\mathds R^F$.

\begin{lemma}[Increment noise]
\label{lem:increments}
Fix $\theta$ and the labels $j_1,j_2,\dots$, and assume $n_j(I_k)\le2|I_k|/m$ for all $k$ and $j$. Let $\mathcal S$ be a subspace of dimension at most $F+1$ that is fixed given $\theta$ and the labels. Let $\hat c_1,\dots,\hat c_F$ be random vectors such that each $\hat c_f$ is a function of the first $N_f$ entries of the tape and of randomness independent of the tape, and such that the nonzero ones are orthonormal. Let $\delta\in(0,1)$, let $\eta:=18\sqrt{(F\ln(12m)+\ln(8/\delta))/m}$, and assume $\eta\le\frac14$. With probability at least $1-\delta$ over the signs, the following bounds hold for all $\gamma,\beta\in\mathds R^F$.
\begin{itemize}
\item[(B1)] $\|\xi(\gamma)\|\ge(1-\eta)\|\Gamma\gamma\|$.
\item[(B2)] $\|P_{\mathcal S}\,\xi(\gamma)\|\le\eta\|\Gamma\gamma\|$. In particular, $|\langle\xi(\gamma),\theta\rangle|\le\eta\|\theta\|\|\Gamma\gamma\|$ if $\theta\in\mathcal S$.
\item[(B3)] $\big|\sum_k\gamma_k\big\langle\sum_{f<k}\beta_f\hat c_f,\xi_k\big\rangle\big|\le\eta\|\beta\|\|\Gamma\gamma\|$.
\end{itemize}
\end{lemma}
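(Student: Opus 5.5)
Fix $\theta$ and the labels. Given these, the increment vectors $\xi_1,\dots,\xi_F$ are independent and centred, and their coordinates are independent as well. Indeed, $\xi_{k,j}=\varepsilon\sum_{s\in I_k:\,j_s=j}(y_s-\varepsilon\tilde\theta_j)$ is a sum of $n_j(I_k)\le2|I_k|/m$ independent centred terms, each lying in an interval of length $2\varepsilon$. Each coordinate is therefore sub-Gaussian with proxy at most $2\varepsilon^2|I_k|/m\le\frac83e_k^2/m$, using $e_k^2\ge\frac34\varepsilon^2|I_k|$.

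The plan is to prove all three bounds by fixing the direction, applying a sub-Gaussian tail bound, and then taking a union bound over nets. Throughout I normalise by $\Gamma$, writing $\gamma=\Gamma^{-1}\alpha$ with $\|\alpha\|=1$. Then $\xi(\gamma)=\sum_k\alpha_k\xi_k/e_k$ is a vector with independent sub-Gaussian coordinates, whose total proxy sums to $\mathcal O(1/m)$ per coordinate in the normalised sense.

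\emph{(B2).} Fix a unit vector $w\in\mathcal S$ and a unit $\alpha$. Then $\langle w,\xi(\gamma)\rangle=\sum_{k,j}\alpha_kw_j\xi_{k,j}/e_k$ is sub-Gaussian with proxy $\sum_k\alpha_k^2\sum_jw_j^2\cdot\frac83/m\le\frac83/m$. A $\frac12$-net of the unit sphere of $\mathcal S$ has at most $5^{F+1}$ points, and a $\frac12$-net of the $\alpha$-sphere has at most $5^F$ points. The union bound then gives $\sup|\langle w,\xi(\gamma)\rangle|\lesssim\sqrt{(F\ln5+\ln(1/\delta))/m}$ over the nets, and the usual factor-$4$ loss for bilinear forms extends this to all unit $w$ and $\alpha$. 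This bounds $\|P_{\mathcal S}\xi(\gamma)\|$, and the special case for $\theta\in\mathcal S$ follows by Cauchy--Schwarz.

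\emph{(B1).} By the same normalisation, this is a lower bound on the smallest singular value of the $m\times F$ matrix with columns $\xi_k/e_k$. First, for each $k$, $\|\xi_k\|^2$ concentrates around $e_k^2$. It is a sum over $j$ of independent squares, and Bernstein's inequality applies with the variance bounded using $n_j(I_k)\le2|I_k|/m$, so the relative deviation is $\mathcal O(\sqrt{x/m})$. Second, for $k\ne k'$, the cross terms $\langle\xi_k,\xi_{k'}\rangle/(e_ke_{k'})$ are products of independent vectors and are small. For a fixed $\alpha$, the whole quadratic form $\|\sum_k\alpha_k\xi_k/e_k\|^2-1$ is a chaos in independent bounded variables, and the Hanson--Wright inequality, or Bernstein's inequality conditionally on one factor, gives deviation $\mathcal O(\sqrt{x/m}+x/m)$. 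A union over a $\frac14$-net in $\alpha$ of size $9^F$, with $x=F\ln(12m)+\ln(8/\delta)$, yields $\|\xi(\gamma)\|^2\ge(1-\eta)^2\|\Gamma\gamma\|^2$, since $\eta\le\frac14$.

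\emph{(B3), the main obstacle.} Here the directions $\hat c_f$ are data-dependent, but each $\hat c_f$ is $\mathcal F_{N_f}$-measurable and is used only against the later increments $\xi_k$ with $k>f$. I would write the sum as $\sum_f\beta_f\langle\hat c_f,\sum_{k>f}\gamma_k\xi_k\rangle$. For fixed unit $\alpha$ and $\beta$, the summands $Z_k:=\gamma_k\langle c_{<k},\xi_k\rangle$ with $c_{<k}=\sum_{f<k}\beta_f\hat c_f$ form a martingale difference sequence in $k$. The vector $c_{<k}$ is fixed before $I_k$ is read, and $\|c_{<k}\|\le\|\beta\|$ by orthonormality. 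Conditionally, $\langle c_{<k},\xi_k\rangle$ is sub-Gaussian with proxy at most $\frac83e_k^2\|\beta\|^2/m$. These conditional proxies are bounded by quantities fixed in advance, so the martingale version of proxy additivity gives a tail of $\exp(-t^2m/(\frac{16}3\|\Gamma\gamma\|^2\|\beta\|^2))$.

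The difficulty is the union over $\beta$ and $\gamma$. Because the sum is bilinear in $(\beta,\Gamma\gamma)$ and both live in $\mathds R^F$, the nets have size $9^{2F}$, which is absorbed by the $F\ln(12m)$ in $\eta$. I must also check that measurability survives taking a fixed net point $\beta$: it does, since $\beta$ is deterministic. The extension from the nets to all $\beta$ and $\gamma$ uses bilinearity with the standard $\sup\le(1-\frac14)^{-2}\sup_{\mathrm{net}}$ argument. Finally, a union bound over the three events, each allotted failure probability $\delta/3$ and absorbed in the constant $18$ and the term $\ln(8/\delta)$, completes the proof.
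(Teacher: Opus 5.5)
Your treatment of (B2) and (B3) is essentially the paper's. You use the same coordinatewise proxy $\varepsilon^2n_j(I_k)\le\frac{8}{3m}e_k^2$ and the same martingale argument for (B3), which rests on $\sum_{f<k}\beta_f\hat c_f$ being fixed before $I_k$ is read. You also take a union bound over nets for a bilinear form; your $\frac12$-nets with the factor $4$ work as well as the paper's $\frac14$-nets with the factor $2$.

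The genuine difference is in (B1). The paper proves only a one-sided bound in each fixed direction. It applies the lower-tail inequality for sums of independent nonnegative variables to $\|\xi(\gamma)\|^2=\sum_j\chi_j^2$, using $\mathds{E}\chi_j^4\le16\sigma_j^4$, and adds only the crude upper event $\|\xi(\Gamma^{-1}\gamma')\|^2\le4$. Lacking an upper tail at scale $u$, it must move the lower bound from the net to the sphere by Lipschitz continuity, $\|\xi(\Gamma^{-1}\gamma')\|\ge\|\xi(\Gamma^{-1}\gamma'_0)\|-\frac83\|\gamma'-\gamma'_0\|$. That forces a net of mesh $u/4$ with $(9m)^F$ points, which is where the $\ln m$ inside $F\ln(12m)$ comes from.

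Your two-sided concentration of the quadratic form is what lets a $\frac14$-net of size $9^F$ suffice. This works only if you make the transfer through the symmetric matrix $A:=M^\top M-I$, where $M$ has columns $\xi_k/e_k$, using $\|A\|\le(1-2\cdot\frac14)^{-1}\max_{\mathrm{net}}|\alpha^\top A\alpha|$. You should state this step explicitly. A Lipschitz transfer at mesh $\frac14$ loses $\frac14\|M\|$, a constant, and proves nothing. Done this way, your route even removes the $\ln m$ from $\eta$.

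Its cost is the heavier tool. Hanson--Wright has unspecified absolute constants, so certifying the specific constant $18$ needs an explicit Bernstein bound for the centred variables $\chi_j^2-\mathds{E}\chi_j^2$. Bounding the moment generating function of $\chi_j^2$ through the Hoeffding proxies, as the paper does for its crude upper event, would not suffice here. The proxies $\sigma_j^2$ can sum to $\frac43$ while $\mathds{E}\|\xi(\gamma)\|^2=1$, so that bound centres at the wrong value. The paper's one-sided route needs only fourth moments and keeps every constant explicit. It pays a factor $\sqrt{\ln m}$ that is harmless where the lemma is used.
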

\begin{proof}
\emph{A variance proxy.} For fixed $a\in\mathds R^m$ and $\gamma\in\mathds R^F$, $\langle a,\xi(\gamma)\rangle=\sum_k\gamma_k\sum_{s\in I_k}\varepsilon a_{j_s}(y_s-\varepsilon\tilde\theta_{j_s})$ is a sum of independent centred terms, the term of entry $s$ taking values in an interval of length $2\varepsilon|\gamma_k||a_{j_s}|$. Its proxy is therefore
\begin{equation*}
\varepsilon^2\sum_k\gamma_k^2\sum_jn_j(I_k)a_j^2\le\frac2m\|a\|^2\sum_k\gamma_k^2\varepsilon^2|I_k|\le \sigma_\star^2\|a\|^2\|\Gamma\gamma\|^2,
\end{equation*}
with $\sigma_\star^2:=8/(3m)$.

\emph{Nets for bilinear forms.} Let $\mathfrak b(\cdot,\cdot)$ be bilinear on two spaces, and let $\mathcal N_1,\mathcal N_2$ be $\frac14$-nets of their unit spheres. For unit $x,y$ with nearest net points $x_0,y_0$,
\begin{equation*}
\mathfrak b(x,y)=\mathfrak b(x_0,y_0)+\mathfrak b(x-x_0,y)+\mathfrak b(x_0,y-y_0),
\end{equation*}
which yields $\sup|\mathfrak b|\le\max_{\mathcal N_1\times\mathcal N_2}|\mathfrak b|+\tfrac12\sup|\mathfrak b|$ and $\sup|\mathfrak b|\le2\max_{\mathcal N_1\times\mathcal N_2}|\mathfrak b|$, the suprema over unit vectors.

\emph{(B2).} Apply this to $\mathfrak b(a,\gamma'):=\langle a,\xi(\Gamma^{-1}\gamma')\rangle$ on the unit spheres of $\mathcal S$ and $\mathds R^F$, with nets of sizes at most $9^{F+1}$ and $9^F$. By the proxy, each value at a pair of net points exceeds $t$ in absolute value with probability at most $2e^{-t^2/(2\sigma_\star^2)}$. With $t^2:=2\sigma_\star^2\big((2F+1)\ln9+\ln(8/\delta)\big)$ and a union bound, $\sup|\mathfrak b|\le2t$ except with probability $\delta/4$, and $2t=\sqrt{(64/(3m))((2F+1)\ln9+\ln(8/\delta))}\le\eta$.

\emph{(B3).} For fixed unit $\beta$ and $\gamma'$, let $u_k:=\sum_{f<k}\beta_f\hat c_f$, for which $\|u_k\|\le1$, and $S:=\sum_k(\gamma'_k/e_k)\langle u_k,\xi_k\rangle$. Given the first $N_{k-1}$ entries and the independent randomness, $u_k$ is fixed, because $\hat c_f$ with $f<k$ depends only on the first $N_f\le N_{k-1}$ entries, and the signs of $I_k$ are still independent with the same laws. By the proxy, the $k$-th term is conditionally sub-Gaussian with proxy $\sigma_\star^2(\gamma'_k)^2$, making $S$ sub-Gaussian with proxy $\sigma_\star^2$. The map $(\beta,\gamma')\mapsto S$ is bilinear, and the argument of (B2) with nets of size $9^F$ bounds it by $\eta$ except with probability $\delta/4$.

\emph{(B1).} Fix a unit $\gamma'$ and let $\gamma:=\Gamma^{-1}\gamma'$ and $\chi_j:=\xi(\gamma)_j$. Since the $\chi_j$ use disjoint entries of the tape, they are independent and centred, with $\sum_j\mathds{E}\chi_j^2=\|\Gamma\gamma\|^2=1$. By the proxy with $a$ the $j$-th standard basis vector, $\chi_j$ is sub-Gaussian with proxy $\sigma_j^2:=\varepsilon^2\sum_k\gamma_k^2n_j(I_k)$, where $\sigma_j^2\le \sigma_\star^2$ and $\sum_j\sigma_j^2=\varepsilon^2\sum_k\gamma_k^2|I_k|\le\tfrac43$. Integrating the tail bound gives $\mathds{E}\chi_j^4\le16\sigma_j^4$. For independent nonnegative $W_j$,
\begin{equation*}
\mathds{P}\big(\sum_j(W_j-\mathds{E}W_j)\le-u\big)\le\exp\big(-u^2/(2\sum_j\mathds{E}W_j^2)\big)
\end{equation*}
 \citep{boucheron2013concentration}. With $W_j=\chi_j^2$ and
\begin{equation*}
\sum_j\mathds{E}\chi_j^4\le16\sigma_\star^2\cdot\frac43=512/(9m),
\end{equation*}
 this gives $\mathds{P}(\|\xi(\gamma)\|^2\le1-u)\le e^{-9mu^2/1024}$. For the upper tail, let $g$ be a standard normal variable independent of $\chi_j$ and $\lambda:=1/(4\sigma_\star^2)$. Then $\mathds{E}e^{\lambda \chi_j^2}=\mathds{E}e^{\sqrt{2\lambda}g\chi_j}\le\mathds{E}e^{\lambda\sigma_j^2g^2}=(1-2\lambda\sigma_j^2)^{-1/2}\le e^{\sigma_j^2/(2\sigma_\star^2)}$, since $2\lambda\sigma_j^2\le\frac12$. Markov's inequality gives $\mathds{P}(\|\xi(\gamma)\|^2\ge4)\le e^{2/(3\sigma_\star^2)-1/\sigma_\star^2}=e^{-m/8}$. Let $u:=\frac{32}3\sqrt{(F\ln(9m)+\ln(4/\delta))/m}$, which lies in $[1/m,\frac14]$, since $u\ge\frac{32}3\sqrt{\ln9/m}\ge1/m$ and $\frac53u\le\eta\le\frac14$, and let $\mathcal N'$ be a $u/4$-net of the unit sphere of $\mathds R^F$, of size at most $(9/u)^F\le(9m)^F$. A union bound shows that $1-u\le\|\xi(\Gamma^{-1}\gamma')\|^2\le4$ for all $\gamma'\in\mathcal N'$, except with probability $(9m)^F(e^{-9mu^2/1024}+e^{-m/8})\le\delta/2$, where $m/8\ge F\ln(9m)+\ln(4/\delta)$ because $\eta\le\frac14$. On this event the linear map $\gamma'\mapsto\xi(\Gamma^{-1}\gamma')$ has norm at most $2/(1-u/4)\le\frac83$, and every unit $\gamma'$ has
\begin{equation*}
\|\xi(\Gamma^{-1}\gamma')\|\ge\sqrt{1-u}-u/4\cdot\frac83\ge1-\frac53u\ge1-\eta.
\end{equation*}
\end{proof}

\begin{lemma}[Caps against data means]
\label{lem:cap-data}
Fix $\theta$ with $\|\theta\|_\infty\le\sqrt{6\ln m}\le B$, let $|I_k|\ge b$ for all $k$, and let $\hat c_1,\dots,\hat c_F$ be as in Lemma~\ref{lem:increments}, with $Z_f:=\langle\hat c_f,\theta\rangle$. Let $\delta\in(0,1)$ and $x:=\ln(2F^2/\delta)$. With probability at least $1-\delta$ over the tape, the $F\times F$ matrix with entries $\mathds 1\{k>f\}\langle\hat c_f,\mu_k\rangle/e_k$ has Frobenius norm at most
\begin{equation*}
\Big(\tfrac43\,\varepsilon^2N_F\textstyle\sum_fZ_f^2\big/m^2\Big)^{1/2}+4F\varepsilon\sqrt{\ln m}\,\big(\sqrt{x/m}+x/\sqrt b\big).
\end{equation*}
\end{lemma}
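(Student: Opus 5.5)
We need to bound the Frobenius norm of the strictly lower-triangular matrix $M_{kf}:=\mathds 1\{k>f\}\langle\hat c_f,\mu_k\rangle/e_k$. The plan is to split $\mu_k$ into its mean over the labels and a fluctuation. Since $\mu_{k,j}=\varepsilon^2 n_j(I_k)\tilde\theta_j$, write $n_j(I_k)=|I_k|/m+(n_j(I_k)-|I_k|/m)$. This gives $\mu_k=\varepsilon^2(|I_k|/m)\tilde\theta+\varepsilon^2\zeta_k$, where $\zeta_{k,j}:=(n_j(I_k)-|I_k|/m)\tilde\theta_j$. Because $\|\theta\|_\infty\le B$, no coordinate is clipped, so $\tilde\theta=\theta$. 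The mean part therefore contributes $\varepsilon^2(|I_k|/m)Z_f/e_k$ to entry $(k,f)$. The two parts will give the two terms of the claimed bound, and the triangle inequality for the Frobenius norm combines them.

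\emph{Mean part.} Use $e_k^2\ge\frac34\varepsilon^2|I_k|$. Each mean entry is then at most $\varepsilon^2|I_k||Z_f|/(m\,e_k)$, and its square is at most $\frac43\varepsilon^2|I_k|Z_f^2/m^2$. Summing over $k>f$ and over $f$, with $\sum_k|I_k|=N_F$, gives at most $\frac43\varepsilon^2N_F\sum_fZ_f^2/m^2$. This is the first term, and it needs no probability.

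\emph{Fluctuation part.} Fix $f<k$. The vector $\hat c_f$ depends only on the first $N_f\le N_{k-1}$ entries and on independent randomness, so it is fixed given the entries before $I_k$. Apply Lemma~\ref{lem:tape}(i) with interval $I_k$ and $a_j:=\hat c_{f,j}\theta_j$. This gives $\langle\hat c_f,\zeta_k\rangle=\sum_{s\in I_k}a_{j_s}-|I_k|\bar a$. Its deviation is at most $\sqrt{2|I_k|\|a\|^2x/m}+b_ax/3$, with probability $1-2e^{-x}$.

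The sup-norm bound on $\theta$ gives $\|a\|\le\|\hat c_f\|\sqrt{6\ln m}\le\sqrt{6\ln m}$. For the other term, $b_a\le2\max_j|a_j|\le2\sqrt{6\ln m}$, using $|\hat c_{f,j}|\le1$. The entry is then at most
\begin{equation*}
\varepsilon^2\big(\sqrt{12|I_k|x\ln m/m}+\tfrac23\sqrt{6\ln m}\,x\big)/e_k.
\end{equation*}
Divide by $e_k\ge\frac{\sqrt3}2\varepsilon\sqrt{|I_k|}$ and use $|I_k|\ge b$. Each entry is then at most $c\,\varepsilon\sqrt{\ln m}(\sqrt{x/m}+x/\sqrt b)$, and the constants can be checked to give $c\le4$. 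The choice $x=\ln(2F^2/\delta)$ lets a union bound over the fewer than $F^2$ pairs cost at most $\delta$. Bounding the Frobenius norm of at most $F^2$ entries by $F$ times the maximal entry gives the second term.

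The main obstacle is measurability. Lemma~\ref{lem:tape}(i) needs the coefficient vector $a$ to be fixed given the entries before $I_k$, and this is exactly why only $f<k$ enters, matching the indicator $\mathds 1\{k>f\}$. A second point to watch is that $e_k$ depends on the labels inside $I_k$. I would avoid conditioning on it by using only its deterministic lower bound $\frac34\varepsilon^2|I_k|$.
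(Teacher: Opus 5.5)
Your proposal is correct and follows the paper's proof essentially step for step. Both split each entry into the main part $\varepsilon^2|I_k|Z_f/(m e_k)$ and a deviation controlled by Lemma~\ref{lem:tape}(i) with $a_j=\hat c_{f,j}\theta_j$, which is measurable for $f<k$, and then use the deterministic bound $e_k\ge(\sqrt3/2)\varepsilon\sqrt{|I_k|}$, a union bound over the pairs at level $x=\ln(2F^2/\delta)$, and the triangle inequality for the Frobenius norm.
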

\begin{proof}
Since no coordinate of $\theta$ is clipped, $\langle\hat c_f,\mu_k\rangle=\varepsilon^2\sum_{s\in I_k}a_{j_s}$ with $a_j:=\hat c_{f,j}\theta_j$, $\bar a=Z_f/m$, $\|a\|^2\le6\ln m$ and $\max_j|a_j-\bar a|\le2\sqrt{6\ln m}$. For $k>f$, $a$ is fixed given the first $N_f\le N_{k-1}$ entries and the independent randomness. Lemma~\ref{lem:tape}(i) and a union bound over the at most $F^2$ pairs $(f,k)$ give, except with probability $\delta$,
\begin{equation*}
\big|\langle\hat c_f,\mu_k\rangle-\varepsilon^2|I_k|Z_f/m\big|\;\le\;\varepsilon^2\Big(\sqrt{12|I_k|x\ln m/m}+\tfrac23\sqrt{6\ln m}\,x\Big).
\end{equation*}
Split each entry into its main part $\mathds 1\{k>f\}\varepsilon^2|I_k|Z_f/(me_k)$ and a deviation. Divided by $e_k\ge(\sqrt3/2)\varepsilon\sqrt{|I_k|}$, the right-hand side bounds each deviation by $4\varepsilon\sqrt{\ln m}(\sqrt{x/m}+x/\sqrt b)$. The deviations therefore have Frobenius norm at most $F$ times this. The main parts have entries at most $2\varepsilon\sqrt{|I_k|}\,|Z_f|/(\sqrt3\,m)$, and
\begin{equation*}
\sum_{f,k}\frac43\varepsilon^2|I_k|Z_f^2/m^2=\frac43\varepsilon^2N_F\sum_fZ_f^2/m^2.
\end{equation*}
The triangle inequality for the Frobenius norm gives the claim.
\end{proof}

\subsection{Proof of Theorem~\ref{thm:price}(a)}
\label{app:all-agents}

\begin{proof}[Proof of Theorem~\ref{thm:price}(a)]
\emph{Reduction to one start state.} The MDP has start states $s_1,\dots,s_M$, successors $y_1,\dots,y_Y$ with $Y=2m$, and a state $z$. At $s_i$, action $a$ leads to $y_{2j-1}$ with probability $(1+\varepsilon\tilde\theta^{(i)}_j)/(2m)$ and to $y_{2j}$ with probability $(1-\varepsilon\tilde\theta^{(i)}_j)/(2m)$, and action $b$ leads to $z$. All successors and $z$ are absorbing. The parameters $\theta^{(1)},\dots,\theta^{(M)}$ are independent draws from $N(0,I_m)$, with $\tilde\theta_j:=\max\{-B,\min\{B,\theta_j\}\}$, $\varepsilon:=\sqrt{m/(64K')}$, $K':=K/M$, and $B:=1/(2\varepsilon)$. This is the Gaussian family of Section~\ref{sec:goal-uniform}.

A goal pays $g\in[0,1]^Y$ at the successors and $v$ at $z$. Action $a$ at $s_i$ is thus worth
\begin{equation*}
P_ig\;=\;\alpha_g+\frac{\varepsilon}{2m}\langle c_g,\tilde\theta^{(i)}\rangle,\qquad \alpha_g:=\frac1{2m}\sum_{j\le m}\big(g(y_{2j-1})+g(y_{2j})\big),
\end{equation*}
with $c_{g,j}:=g(y_{2j-1})-g(y_{2j})\in[-1,1]$. The offset $\alpha_g$ is the payoff of the goal under the uniform law on the successors, which the agent can compute. As in the proof of Theorem~\ref{thm:price}(b), the adversary starts the episodes of block $i$ at $s_i$. Since samples at $s_i$ depend on $\theta^{(i)}$ alone, $\theta^{(i)}$ is still $N(0,I_m)$ given everything observed before block $i$. The single-start-state bound below therefore applies to each block with $K'$ in place of $K$, and summing $c\,m^{1/4}\sqrt{K'/\ln m}$ over the $M$ blocks gives part~(a).

From now on $M=1$, and we write $K$ for $K'$ and $\theta$ for $\theta^{(1)}$. The samples of action $a$ are the entries of the tape of Appendix~\ref{app:concentration}, the free samples below included. The agent is arbitrary. It may know the adversary's strategy and use the goals as data. We also give it side information, namely the directions $\hat c_f$ below and free samples of action $a$ at the start of each phase. The side information is generated from the true transition law and is therefore correct. An agent that receives it can therefore simulate every agent that does not, by ignoring it, and the smallest regret over agents with side information is at most the smallest regret over agents without it. An adversary that cannot hand out samples can simulate the free samples privately from the transition law and compute $r^{(f)}$ and $\hat c_f$ from them and the history. The resulting game has the law of the augmented one with an agent that ignores its side information. Therefore, the lower bound also holds for adversaries of Theorem~\ref{thm:price}.

\emph{The adversary.} Let $\rho:=\sqrt{m/(36\ln m)}$, $z_0:=(m/16)^{1/4}$, $L:=\lfloor K/(4z_0^2)\rfloor$, $F:=\lfloor K/L\rfloor$, $b:=\lceil m\ln^2(mK)\rceil$ and $D_0:=3\varepsilon\rho z_0/(8m)$. The episodes form phases $f=1,\dots,F$ of $L$ episodes each.
\begin{itemize}
\item At the start of phase $f$, the agent receives free samples until it has observed exactly $N_f:=(f-1)L+fb$ samples of action $a$. Since it plays $a$ at most $L$ times in a phase, this takes at least $b$ free samples. The samples observed before phase $f$ are thus the first $N_f$ entries of the tape, a number that does not depend on the agent.
\item Let $r^{(f)}_j:=\varepsilon(\#y_{2j-1}-\#y_{2j})$ count these $N_f$ samples.
\item Let $W_f:=\mathrm{span}\{r^{(1)},\dots,r^{(f)},\hat c_1,\dots,\hat c_{f-1}\}$ and $w_f:=P_{W_f^\perp}\theta$, and let $\sigma_f$ be the uniform law on the unit sphere $\mathbb S_f$ of $W_f^\perp$.
\item If $\|w_f\|\le z_0$, the phase is \emph{neutral}. Then $\hat c_f:=0$, and every goal of the phase pays $g(y)=0$ at every successor $y$ and $v=0$. Otherwise the adversary draws $\hat c_f\sim\sigma_f(\cdot\mid\mathrm{Cap}_f(\theta))$, where $\mathrm{Cap}_f(\theta):=\{u\in\mathbb S_f:|\langle u,\theta\rangle|\ge z_0\}$. \item If the phase is not neutral, the goal direction is fixed throughout the phase, and every goal of the phase pays $g(y_{2j-1})=\frac12(1+c_{f,j})$ and $g(y_{2j})=\frac12(1-c_{f,j})$, with $c_f:=\mathrm{clip}_{[-1,1]}(\rho\hat c_f)$ coordinatewise. Before every episode of the phase, the adversary draws a fresh value $v$ uniformly from $[\frac12-D_0,\frac12+D_0]$, independently of everything else.
\item The at most $L$ episodes after phase $F$ have $g(y)=0$ at every successor and $v=0$. They cost nothing.
\end{itemize}
Write $Z_f:=\langle\hat c_f,\theta\rangle$, $\tau_f:=\mathrm{sign}\,Z_f$ and $R_f:=I-2\hat c_f\hat c_f^\top$. Let $\mathcal H_f$ be the agent's information at the start of phase $f$, including $\hat c_1,\dots,\hat c_f$, and $\pi_f$ the posterior density of $\theta$ given $\mathcal H_f$. The space $W_f$ and the direction $\hat c_f$ are functions of the first $N_f$ entries of the tape and of the adversary's randomness, and do not depend on the agent's actions. The nonzero directions are orthonormal, since $\hat c_f\perp\hat c_{f'}$ for $f'<f$.

\emph{Exact invariances.} Given $\theta$ and the history, the agent's actions, the number of free samples and the values $v$ have probabilities that do not depend on $\theta$. Each $W_{f'}$ is a function of the history, and $\hat c_{f'}$ has density $\mathds 1_{\mathrm{Cap}_{f'}(\theta)}/h_{f'}(\theta)$ with respect to $\sigma_{f'}$, where $h_{f'}(\theta):=\sigma_{f'}(\mathrm{Cap}_{f'}(\theta))$. Hence
\begin{equation*}
\pi_f(\theta)\;\propto\;\varphi(\theta)\,\Lambda_f(\theta)\textstyle\prod_{f'\le f}\mathds 1\{\|w_{f'}\|>z_0,\ |\langle\hat c_{f'},\theta\rangle|\ge z_0\}/h_{f'}(\theta),
\end{equation*}
where $\varphi$ is the Gaussian density and $\Lambda_f$ the likelihood of the first $N_f$ samples. Neutral phases contribute the indicators $\mathds 1\{\|w_{f'}\|\le z_0\}$ instead. Since $W_{f'}\subseteq W_f$, the direction $\hat c_f$ is orthogonal to every $r^{(f')}$ with $f'\le f$, to every $\hat c_{f'}$ with $f'<f$, and to $W_{f'}$. Consequently $R_f$ preserves $\varphi$, every $\langle\hat c_{f'},\theta\rangle$ with $f'<f$, every $\langle r^{(f')},\theta\rangle$, and every $\|P_{W_{f'}^\perp}\theta\|$. By rotation invariance of $\sigma_{f'}$, the normaliser $h_{f'}(\theta)$ depends only on $\|P_{W_{f'}^\perp}\theta\|$. Finally, $R_f$ maps $\{\tau_f=+1\}$ onto $\{\tau_f=-1\}$. Therefore
\begin{equation}
\pi_f(\theta)/\pi_f(R_f\theta)=\Lambda_f(\theta)/\Lambda_f(R_f\theta).
\label{eq:aa-ratio}
\end{equation}

\emph{The likelihood ratio.} Let sample $s$ fall in pair $j_s$ with sign $y_s=\pm1$. Let $n_j$ be the counts of the first $N_f$ samples, $\bar n:=N_f/m$ and $d_j:=\sum_{s\le N_f:\,j_s=j}y_s$. Assume that no coordinate of $\theta$ or of $\vartheta:=R_f\theta=\theta-2Z_f\hat c_f$ is clipped. Expand $\ln(1+x)$ to third order, with a remainder whose derivative is at most $2|x|^3$ for $|x|\le\frac12$.
\begin{itemize}
\item The linear term is $\varepsilon\langle d,\theta-\vartheta\rangle=2Z_f\langle\hat c_f,r^{(f)}\rangle=0$. \item The quadratic term is
\begin{equation*}
-\frac{\varepsilon^2}2\sum_jn_j(\theta_j^2-\vartheta_j^2)=-2\varepsilon^2Z_f\langle\hat c_f,\Delta(\theta-Z_f\hat c_f)\rangle,
\end{equation*}
with $\Delta:=\mathrm{diag}(n_j-\bar n)$, because $\langle\hat c_f,\theta-Z_f\hat c_f\rangle=0$.
\end{itemize}
Hence
\begin{multline}
\big|\ln\tfrac{\Lambda_f(\theta)}{\Lambda_f(\vartheta)}\big|\le2\varepsilon^2|Z_f|\big(|\langle\hat c_f,\Delta\theta\rangle|+|Z_f|\max_j|n_j-\bar n|\big)\\+\tfrac{\varepsilon^3}3\big|\textstyle\sum_jd_j(\theta_j^3-\vartheta_j^3)\big|+2\sum_{s\le N_f}\max(|x_s|,|x'_s|)^3|x_s-x'_s|,
\label{eq:aa-llr}
\end{multline}
where $x_s:=\varepsilon y_s\theta_{j_s}$ and $x'_s:=\varepsilon y_s\vartheta_{j_s}$. Since $FL\le K$ and $Fb\le K$ for $K\ge Cm^{3/2}\ln^2(mK)$, every phase start has $N_f\le N_F\le2K$, which yields $\varepsilon^2\bar n\le\varepsilon^2\cdot2K/m=\frac1{32}$.

\emph{Typical event.} Let $x_0:=\ln(mK)$, $I_k:=\{N_{k-1}+1,\dots,N_k\}$ with $N_0:=0$, and let $\mathcal T$ be the event on which the following hold for every phase $f$.
\begin{enumerate}
\item[(T1)] $0.95m\le\|\theta\|^2\le1.05m$ and $\|\theta\|_\infty\le\sqrt{6\ln m}$.
\item[(T2)] For every $j$, $|n_j(N_f)-N_f/m|\le3\sqrt{(N_f/m)x_0}+x_0$, $n_j(I_f)\le2|I_f|/m$, $\sum_jn_j(I_f)\theta_j^2\le1.1|I_f|$, and $|d_j(n)-n_j(n)\varepsilon\tilde\theta_j|\le3\sqrt{n_j(n)x_0}$ for all $n\le2K$.
\item[(T3)] If phase $f$ is not neutral, $|Z_f|\le z_0+8\ln m/z_0$.
\item[(T4)] If phase $f$ is not neutral, write $\hat c_f=(Z_f/\|w_f\|)\,\hat w_f+\sqrt{1-Z_f^2/\|w_f\|^2}\,u_f$, where $\hat w_f:=w_f/\|w_f\|$ and $u_f$ is a unit vector in $V_f:=W_f^\perp\cap w_f^\perp$. Then $\|u_f\|_\infty\le\sqrt{12\ln m/m}$ and $|\langle u_f,\Delta\theta\rangle|\le\|\Delta\theta\|\sqrt{12\ln m/m}$, with $\Delta$ the matrix of the likelihood ratio at phase $f$.
\item[(T5)] The bounds (B1)--(B3) of Lemma~\ref{lem:increments} hold for the increments $I_1,\dots,I_F$, the subspace $\mathcal S:=\mathrm{span}\{\mu_1,\dots,\mu_F,\theta\}$, the directions $\hat c_1,\dots,\hat c_F$ and $\delta=1/m$, and the bound of Lemma~\ref{lem:cap-data} holds with $\delta=1/m$.
\end{enumerate}
Then $\mathds{P}(\mathcal T^c)\le C/m$, as the following sum of failure probabilities shows. Here $F\le4z_0^2+1=\sqrt m+1$, because $L\ge K/(4z_0^2)-1$ and $K\ge2m$, and $\dim W_f\le2F$.
\begin{itemize}
\item (T1) fails with probability at most $2e^{-m/1700}+m^{-2}\le2/m$ for $m\ge C$ (Lemma~\ref{lem:gauss-norm}).
\item (T2) fails with probability at most $1/m$. Apply Lemma~\ref{lem:tape} with $x:=\ln(8m^2K)\le3x_0$, part~(i) to the $2F$ intervals $\{1,\dots,N_f\}$ and $I_f$, which are fixed in advance, with the $m+1$ vectors $e_j$ and $(\theta_j^2)_j$, and part~(ii) to the $m$ pairs. The total failure probability is at most $(4F(m+1)+4mK)e^{-x}\le1/m$. Part~(i) gives $\sqrt{6(N_f/m)x_0}+x_0$ for the first bound. For the second, $|I_f|/m\ge b/m\ge x_0^2$ gives $\sqrt{6(|I_f|/m)x_0}+x_0\le|I_f|/m$ once $x_0\ge9$. For the third, the mean is $|I_f|\,\|\theta\|^2/m\le1.05|I_f|$ on (T1), and the deviation $\sqrt{2|I_f|\cdot6.3\ln m\cdot3x_0}+6x_0\ln m$ is at most $0.05|I_f|$ for $m\ge C$, because $|I_f|\ge m x_0^2$. Part~(ii) gives $\sqrt{6n_jx_0}$ for the last bound.
\item (T3) fails with probability at most $(\sqrt m+1)m^{-6}$. Given $\theta$ and the history, $\hat c_f$ is uniform on the cap $\mathrm{Cap}_f(\theta)$ of the construction of the adversary, and $Z_f=\langle\hat c_f,w_f\rangle$. Lemma~\ref{lem:cap} applies with $d=\dim W_f^\perp\ge m-2F$ and $r=\|w_f\|\le\|\theta\|$. On (T1), $\mu\le1.05m/((m-2F-3)z_0)\le1.2/z_0$ for $m\ge C$. The overshoot $|Z_f|-z_0$ beyond the threshold of the cap therefore exceeds $8\ln m/z_0$ with probability at most $m^{-20/3}$.
\item (T4) fails with probability at most $m^{-2}$. By Lemma~\ref{lem:cap}, given $\theta$, the history and $Z_f$, the vector $u_f$ is uniform on the unit sphere of $V_f$, of dimension $d'\ge m-2F-1$. Lemma~\ref{lem:sphere} with $t=\sqrt{12\ln m/m}$ and $d'-3\ge0.9m$ bounds each of the $m+1$ fixed directions $e_j$ and $\Delta\theta$ by $2m^{-5.4}$, and a union bound over the $F$ phases gives the claim.
\item (T5) fails with probability at most $2/m$. On (T2), the counts satisfy the hypothesis of Lemma~\ref{lem:increments}, and $\mathcal S$ is fixed given $\theta$ and the labels. On (T1), $\theta$ satisfies that of Lemma~\ref{lem:cap-data}. The directions satisfy both, as noted above. Finally, $\eta=18\sqrt{(F\ln(12m)+\ln(8m))/m}\le Cm^{-1/4}\sqrt{\ln m}\le\tfrac14$ for $m\ge C$.
\end{itemize}

\emph{Energy captured by $W_f$.} We show that, on $\mathcal T$, $\|w_f\|^2\ge m/2$ for every $f$. In particular, $\|w_f\|>z_0$ for every $f$, and the adversary declares no phase neutral. Let $\mathcal C:=\mathrm{span}\{\hat c_{f'}\}_{f'<f}$ and $\mathcal D:=\mathrm{span}\{r^{(f')}\}_{f'\le f}$. Then $W_f=\mathcal C\oplus P_{\mathcal C^\perp}\mathcal D$, which yields $\|P_{W_f}\theta\|^2=\|P_{\mathcal C}\theta\|^2+\|P_{P_{\mathcal C^\perp}\mathcal D}\theta\|^2$.

\emph{Caps.} Since the nonzero caps are orthonormal, (T3) gives
\begin{align*}
\|P_{\mathcal C}\theta\|^2&=\sum_{f'<f}Z_{f'}^2\le F(z_0+8\ln m/z_0)^2\\
&\le(\sqrt m+1)(\sqrt m/4+16\ln m+256\ln^2m/\sqrt m)=\frac m4+o(m).
\end{align*}

\emph{Data.} The space $\mathcal D$ is spanned by the increments $\Delta r^{(k)}:=r^{(k)}-r^{(k-1)}$, $k\le f$, with $r^{(0)}:=0$. Since the increment $\Delta r^{(k)}$ counts the samples of $I_k$, $\Delta r^{(k)}=\mu_k+\xi_k$ in the notation of Lemma~\ref{lem:increments}, with $\mu_{k,j}=\varepsilon^2n_j(I_k)\theta_j$ on (T1), since no coordinate is clipped. Every increment has $|I_k|\ge b$. Fix $\gamma\in\mathds R^F$ with $\gamma_k=0$ for $k>f$ and $\|\Gamma\gamma\|=1$, and let $x(\gamma):=\sum_k\gamma_k\Delta r^{(k)}=\mu(\gamma)+\xi(\gamma)$, where $\mu(\gamma):=\sum_k\gamma_k\mu_k\in\mathcal S$.

\emph{Interaction with the caps.} For $k\le f'$, the cap $\hat c_{f'}$ is orthogonal to the increment $\Delta r^{(k)}$, since it is orthogonal to $r^{(k-1)}$ and to $r^{(k)}$ by the construction of $W_{f'}$. Hence the coordinates of $P_{\mathcal C}x(\gamma)$ in the basis of nonzero caps are $\sum_{k>f'}\gamma_k\langle\hat c_{f'},\mu_k+\xi_k\rangle$. By Lemma~\ref{lem:cap-data}, the part with $\mu_k$ has norm at most the Frobenius norm of that lemma, which, with $\varepsilon^2N_F\le m/32$ and $\sum_{f'}Z_{f'}^2\le m/4+o(m)$, is at most $\big(\tfrac43\cdot\tfrac1{32}\cdot\tfrac14\big)^{1/2}+o(1)\le0.11$ for $m\ge C$ and $K\ge Cm^2\ln^3(mK)$. The second term of the lemma is at most $Cm\ln^{3/2}(mK)/\sqrt K$. By (B3) with $\beta$ the unit vector along these coordinates, the part with $\xi_k$ has norm at most $\eta$. Hence $\|P_{\mathcal C}x(\gamma)\|\le0.11+\eta$.

\emph{Lower bound on the norm.} Since $\mu(\gamma)\in\mathcal S$, (B1) and (B2) give
\begin{equation*}
\|x(\gamma)\|\ge\|P_{\mathcal S^\perp}x(\gamma)\|=\|P_{\mathcal S^\perp}\xi(\gamma)\|\ge1-2\eta.
\end{equation*}
Hence $\|P_{\mathcal C^\perp}x(\gamma)\|\ge0.89-3\eta$.

\emph{Upper bound on the projection.} The numerator satisfies
\begin{equation*}
\langle P_{\mathcal C^\perp}x(\gamma),\theta\rangle=\langle\mu(\gamma),\theta\rangle+\langle\xi(\gamma),\theta\rangle-\langle P_{\mathcal C}x(\gamma),P_{\mathcal C}\theta\rangle.
\end{equation*}
\begin{itemize}
\item By (T2), $\langle\mu_k,\theta\rangle=\varepsilon^2\sum_jn_j(I_k)\theta_j^2\le1.1\varepsilon^2|I_k|$. By Cauchy--Schwarz and $e_k^2\ge\frac34\varepsilon^2|I_k|$,
\begin{equation*}
|\langle\mu(\gamma),\theta\rangle|\le1.1\varepsilon^2(\sum_k|I_k|)^{1/2}(\sum_k\gamma_k^2|I_k|)^{1/2}\le1.1\cdot\frac2{\sqrt3}\,\varepsilon\sqrt{N_F}\le0.23\sqrt m,
\end{equation*}
using $\varepsilon^2N_F\le m/32$.
\item By (B2), $|\langle\xi(\gamma),\theta\rangle|\le\eta\|\theta\|\le1.03\eta\sqrt m$. \item The last term is at most
\begin{equation*}
(0.11+\eta)\big(\frac m4+o(m)\big)^{1/2}\le(0.056+\eta)\sqrt m
\end{equation*}
for $m\ge C$.
\end{itemize}
Every unit vector of $P_{\mathcal C^\perp}\mathcal D$ has the form $P_{\mathcal C^\perp}x(\gamma)/\|P_{\mathcal C^\perp}x(\gamma)\|$ for such a $\gamma$, up to sign. Its correlation with $\theta$ is therefore at most $(0.29+3\eta)\sqrt m/(0.89-3\eta)\le0.34\sqrt m$ for $m\ge C$. Hence $\|P_{P_{\mathcal C^\perp}\mathcal D}\theta\|^2\le0.12m$, $\|P_{W_f}\theta\|^2\le0.37m+o(m)$, and $\|w_f\|^2\ge0.58m-o(m)\ge m/2$ by (T1).

\emph{Consequences on $\mathcal T$.} No phase is neutral, $|Z_f|\le2z_0$ and $|Z_f|/\|w_f\|\le3z_0/\sqrt m$.
\begin{itemize}
\item \emph{No clipping.} By (T4), $|\hat c_{f,j}|\le4z_0|w_{f,j}|/m+\sqrt{12\ln m/m}$, which yields
\begin{equation*}
|\vartheta_j-\theta_j|=2|Z_f||\hat c_{f,j}|\le4|w_{f,j}|/\sqrt m+7\sqrt{\ln m}\,m^{-1/4}\le5,
\end{equation*}
using $|w_{f,j}|\le\|\theta\|$. Hence every coordinate of $\theta$ and $\vartheta$ has square at most $(\sqrt{6\ln m}+5)^2\le12\ln m<B^2$ for $m\ge C$, and none is clipped.
\item \emph{Quadratic term.} By (T4),
\begin{equation*}
|\langle\hat c_f,\Delta\theta\rangle|\le\big(3z_0/\sqrt m+\sqrt{12\ln m/m}\big)\|\Delta\theta\|\le Cz_0\max_j|n_j-\bar n|,
\end{equation*}
 using $\|\Delta\theta\|\le\max_j|n_j-\bar n|\,\|\theta\|$. With (T2), $\varepsilon^2\bar n\le\frac1{32}$ and $z_0^2=\sqrt m/4$, the quadratic term of Eq.~\eqref{eq:aa-llr} is at most $C\varepsilon^2z_0^2\sqrt{\bar nx_0}\le C\varepsilon\sqrt m\,\sqrt{x_0}$. \item \emph{Cubic term.} By (T2), $|d_j|\le n_j\varepsilon\sqrt{6\ln m}+3\sqrt{n_jx_0}\le C\sqrt{\bar nx_0}$, since $n_j\le2\bar n$ for $\bar n\ge x_0^2$ and $x_0\ge11$, and $\varepsilon\sqrt{\bar n}\le1$. With $|\theta_j^3-\vartheta_j^3|\le36\ln m\cdot2|Z_f||\hat c_{f,j}|$ and $\|\hat c_f\|_1\le\sqrt m$, the cubic term is at most $C\varepsilon^3z_0\sqrt m\sqrt{\bar n}\,x_0^{3/2}\le C\varepsilon^2z_0\sqrt m\,x_0^{3/2}$. The remainder is at most $C\varepsilon^4\ln^{3/2}m\cdot z_0\bar n\sqrt m$, which is of the same order.
\item \emph{Total.} Since $\varepsilon=\sqrt{m/(64K)}$, the log-ratio in Eq.~\eqref{eq:aa-ratio} is at most
\begin{equation*}
\delta_K:=C'm\ln^{3/2}(mK)/\sqrt K\;\le\;\tfrac1{100}\qquad\text{for }K\ge Cm^2\ln^3(mK).
\end{equation*}

\item \emph{Clipping of the goal.} Let $Q$ be the set of clipped coordinates of $\rho\hat c_f$. Since $\rho\sqrt{12\ln m/m}=1/\sqrt3<0.6$, a coordinate is clipped only if $\rho|Z_f||\hat w_{f,j}|/\|w_f\|>0.4$. Since $\hat w_f$ is a unit vector,
\begin{equation*}
|Q|\le(3\rho z_0/\sqrt m)^2/0.16\le57\rho^2z_0^2/m=57\sqrt m/(144\ln m).
\end{equation*}
 Then
\begin{equation*}
\sum_{j\in Q}\hat c_{f,j}^2\le2(Z_f/\|w_f\|)^2+2|Q|\cdot12\ln m/m\le15/\sqrt m
\end{equation*}
 and $\sum_{j\in Q}\vartheta_j'^2\le12|Q|\ln m\le4.8\sqrt m$ for $\vartheta'\in\{\theta,\vartheta\}$. Since $|c_{f,j}-\rho\hat c_{f,j}|\le\rho|\hat c_{f,j}|$, Cauchy--Schwarz gives $|\langle c_f-\rho\hat c_f,\vartheta'\rangle|\le\rho\sqrt{72}\le\rho z_0/4$ for $m\ge C$. Since $\langle\hat c_f,\theta\rangle=Z_f=-\langle\hat c_f,\vartheta\rangle$ and $|Z_f|\ge z_0$, action $a$ is worth $\frac12+\tau D$ with $D\ge3\varepsilon\rho z_0/(8m)=D_0$ under both $\theta$ and $\vartheta$, with opposite signs $\tau$.
\end{itemize}

\emph{Near-symmetric posterior.} Whether phase $f$ is neutral is revealed by its goals and is therefore determined by $\mathcal H_f$. On a neutral phase, $\hat c_f=0$, the sign $\tau_f$ is not defined, and we bound the loss of its episodes from below by $0$. Since neutral phases occur only off $\mathcal T$, the histories with a neutral phase have probability at most $C/m$. In the rest of the argument for phase $f$, we fix a history $\mathcal H_f$ on which phase $f$ is not neutral. Let $G_f$ be the set of $\theta$ for which the consequences above hold for the pair $(\theta,R_f\theta)$. It is determined by $\mathcal H_f$, and $R_fG_f=G_f$. On $\mathcal T$, $\theta\in G_f$. By the tower property, therefore, $\mathds{E}[\pi_f(G_f^c)]=\mathds{P}(\theta\notin G_f)\le C/m$. On $G_f$, Eq.~\eqref{eq:aa-ratio} gives $\pi_f(R_f\theta)\le e^{\delta_K}\pi_f(\theta)$. Write $q_f:=\mathds{P}(\tau_f=+1\mid\mathcal H_f)$ and $a_f:=\pi_f(G_f^c)$. The bounds below that involve $\min(q_f,1-q_f)$ hold trivially if $q_f\in\{0,1\}$. We therefore assume $q_f\in(0,1)$. Integrating over $\{\tau_f=-1\}$, whose image under $R_f$ is $\{\tau_f=+1\}$, gives $q_f\le e^{\delta_K}(1-q_f)+a_f$, and symmetrically $1-q_f\le e^{\delta_K}q_f+a_f$. Hence $q_f\le(e^{\delta_K}+a_f)/(1+e^{\delta_K})$, which yields $2q_f-1\le(e^{\delta_K}-1)/2+a_f\le\delta_K+a_f$, and symmetrically for $1-q_f$. Hence $|2q_f-1|\le\delta_K+a_f$ and $\min(q_f,1-q_f)\ge\frac12-(\delta_K+a_f)/2$.

\emph{One phase.} Fix an episode of phase $f$, let $q$ be the posterior probability of $\tau_f=+1$ when the agent acts, and let $a_t$ be the posterior mass of $G_f^c$ at that time. The value of this episode is $v=\frac12+u$, with $u$ uniform on $[-D_0,D_0]$. It is drawn afresh before the episode, independently of $\theta$ and of everything observed before, and is thus independent of $q$ and $a_t$. On $G_f$, playing $b$ loses at least $D_0-u$ if $\tau_f=+1$, and playing $a$ loses at least $D_0+u$ if $\tau_f=-1$. Given the agent's information, the expected loss is therefore at least $\min\{q_+(D_0-u),q_-(D_0+u)\}$, where $q_\pm$ are the posterior masses of $\{\tau_f=\pm1\}\cap G_f$. Averaging over $u$ gives at least
\begin{equation*}
D_0q_+q_-/(q_++q_-)\ge D_0q_+q_-\ge D_0\big(q(1-q)-a_t\big)\ge D_0\big(\frac12\min(q,1-q)-a_t\big),
\end{equation*}
and $\mathds{E}a_t\le C/m$ by the tower property.

Let $\mu_\pm$ be $\pi_f$ restricted to $\{\tau_f=\pm1\}$ and normalised, and let $P^\pm$ be the corresponding laws of the observations of the phase before the episode. Then
\[
\mathds{E}[\min(q,1-q)\mid\mathcal H_f]\ge\min(q_f,1-q_f)\,\big(1-\mathrm{TV}(P^+,P^-)\big).
\]
Assume $q_f\ge\frac12$. Otherwise exchange the signs. Let $R_\#P^+$ be the law of the observations when $\theta$ is drawn from $R_\#\mu_+$, the image of $\mu_+$ under $R:=R_f$. Then
\begin{equation*}
\mathrm{TV}(P^+,P^-)\le\mathrm{TV}(P^+,R_\#P^+)+\mathrm{TV}(R_\#\mu_+,\mu_-).
\end{equation*}
\begin{itemize}
\item \emph{The coupling.} Couple $\theta\sim\mu_+$ with $R\theta$. The pair of a sample is uniform, $\varepsilon B=\frac12$, and clipping is $1$-Lipschitz. Hence, one sample has KL divergence at most
\begin{equation*}
\frac{(\varepsilon/2)^2}{3/16}\cdot\frac1m\|\theta-R\theta\|^2=\frac{16}3\varepsilon^2Z(\theta)^2/m,
\end{equation*}
 with $Z(\theta):=\langle\hat c_f,\theta\rangle$. Since the agent observes at most $L$ samples in the phase, the chain rule and convexity of KL give
\begin{equation*}
\mathrm{KL}(P^+\|R_\#P^+)\le\frac{16}3L\varepsilon^2\mathds{E}_{\mu_+}[Z^2]/m\le\frac{32}3L\varepsilon^2\mathds{E}_{\pi_f}[Z^2]/m,
\end{equation*}
 since $\mu_+\le2\pi_f$. By the tower property, $\mathds{E}[\mathds{E}_{\pi_f}Z^2]=\mathds{E}Z_f^2$. Lemma~\ref{lem:cap} with $\mu\le1.2/z_0$ gives $\mathds{E}[Z_f^2\mid\text{past}]\le z_0^2+3$ when $\|\theta\|^2\le1.05m$, and $Z_f^2\le\|\theta\|^2$ always, which yields $\mathds{E}Z_f^2\le z_0^2+3+(\mathds{E}\|\theta\|^4\cdot e^{-m/1700})^{1/2}\le z_0^2+4$ for $m\ge C$. With $L\le K/(4z_0^2)$ and $\varepsilon^2=m/(64K)$, $\mathds{E}\,\mathrm{KL}\le(z_0^2+4)/(24z_0^2)\le\frac1{12}$ for $z_0^2\ge4$. Pinsker's and Jensen's inequalities give $\mathds{E}\,\mathrm{TV}(P^+,R_\#P^+)\le(1/24)^{1/2}\le0.21$.
\item \emph{The symmetry defect.} Let $A:=\{\tau_f=-1\}$. The densities of $R_\#\mu_+$ and $\mu_-$ on $A$ are $\pi_f(R\theta)/q_f$ and $\pi_f(\theta)/(1-q_f)$, and $\int_A\pi_f(R\theta)\,d\theta=q_f$. On $A\cap G_f$, $|\pi_f(R\theta)-\pi_f(\theta)|\le2\delta_K\pi_f(\theta)$, and $A\setminus G_f$ carries mass at most $a_f$ under both $\pi_f$ and $\pi_f\circ R$. With $q_f\ge\frac12$ and the identity
\begin{equation*}
\big|1/q_f-1/(1-q_f)\big|\,(1-q_f)=|2q_f-1|/q_f,
\end{equation*}
we attain
\begin{equation*}
\mathrm{TV}(R_\#\mu_+,\mu_-)\le\frac12\big((2\delta_K+2a_f)+(2\delta_K+a_f)\big)/q_f\le4\delta_K+3a_f.
\end{equation*}
\end{itemize}

\emph{Summing.} With $\min(q_f,1-q_f)\le\frac12$, the bounds above give
\begin{equation*}
\mathds{E}\min(q,1-q)\ge\frac12-\frac{\delta_K}2-\frac Cm-\frac12\big(0.21+4\delta_K+\frac Cm\big),
\end{equation*}
which is at least $0.39-3\delta_K-C/m\ge0.35$ for $m\ge C$. The term $C/m$ also absorbs the histories with a neutral phase, on which we bound $\min(q,1-q)$ from below by $0$. Each episode of a phase therefore costs at least $D_0(0.175-C/m)\ge D_0/6$ in expectation, and the phases contain $FL\ge K-L\ge K/2$ episodes. Hence
\[
\mathds{E}\mathfrak R^{\mathrm{goal}}_K\;\ge\;\frac{D_0K}{12}\;=\;\frac{m^{1/4}\sqrt K}{3072\sqrt{\ln m}},\qquad\text{since }D_0=\frac{3\varepsilon\rho z_0}{8m}=\frac{m^{1/4}}{256\sqrt{K\ln m}}.
\]
Summed over the $M$ blocks, this gives $m^{1/4}\sqrt{MK/\ln m}/3072$. Since $m=Y/2$, this is at least $Y^{1/4}\sqrt{MK/\ln Y}/3700$, which proves part~(a) with $c=1/3700$.
\end{proof}

\subsection{Agents That Use Past Goals as Data}
\label{app:adaptive-reading}

\begin{proposition}[Using past goals as data lowers the price]
\label{prop:goal-reading}
On the MDPs of Theorem~\ref{thm:price}(b), draw the law at each start state independently from the uniform prior over the family $P_\sigma$ of the proof of Lemma~\ref{lem:goal-uniform-width}, with $m=\lfloor Y/2\rfloor$ and $\varepsilon:=\sqrt{3mM/(64K)}$ in place of $\sqrt{3m/(64n)}$, and let $Y\ge4$. Some agent that uses past goals as data has $\mathds{E}\mathfrak R^{\mathrm{goal}}_K\le2Y^{1/4}\sqrt{MK\ln Y}+1$ against every adversary that chooses goals and start states from the transition law and the history. If $Y\ge600\ln(3K)$ and $K\ge MY$, every goal-memoryless agent has $\mathds{E}\mathfrak R^{\mathrm{goal}}_K\ge\sqrt{MYK}/6000$ against some such adversary.
\end{proposition}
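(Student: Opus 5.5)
The lower bound for goal-memoryless agents is already done in the paragraph \emph{The hypercube family} at the end of Appendix~\ref{app:adaptive-b}, so the remaining work is the upper bound. I would prove it against a single agent and uniformly over adversaries. The idea is that on this family a goal chosen by an adversary that knows $P_\sigma$ is a noisy readout of the sign vector $\sigma$.

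First, reduce to one start state at a time. By Lemma~\ref{lem:reconstruction}(b), the laws at different start states are independent under the prior, so the agent can keep a separate posterior over $\sigma^{(i)}$ for each start state. Let $K_i$ be the number of episodes spent at $s_i$. If I prove a bound of the form $2m^{1/4}\sqrt{K_i\ln Y}$ per start state with the local $\varepsilon$, then Cauchy--Schwarz over the $M$ start states and $\sum_iK_i=K$ give the factor $\sqrt{MK}$. This is the step where $\varepsilon^2=3mM/(64K)$ is used, because the per-episode gap scales with $\varepsilon$ and the count is global. The additive $+1$ absorbs rounding.

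Second, the single-state agent. With horizon $2$, the value of action $a$ at $s_i$ under a goal $(g,v)$ is
\begin{equation*}
P_\sigma g=\alpha_g+\tfrac{\varepsilon}{2m}\langle c_g,\sigma\rangle,
\end{equation*}
with $c_{g,j}\in[-1,1]$, exactly as in the proof of Theorem~\ref{thm:price}(a). The agent plays $a$ if and only if its posterior mean of $P_\sigma g$ exceeds $v$. This posterior is computed from the samples together with the goals, treated as observations under a worst-case (minimax) adversary model. The per-episode regret is at most $|\mathds{E}[P_\sigma g\mid\mathcal H]-P_\sigma g|$, and only when this error exceeds $|P_\sigma g-v|$. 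I would then split the regret by the size of $\langle c_g,\sigma\rangle$.

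\emph{Small projections.} If $|\langle c_g,\sigma-\hat\sigma\rangle|\lesssim\sqrt{m}\,m^{1/4}$, the loss is at most $\varepsilon m^{-1/4}/\sqrt{m}\cdot O(1)$. Summed over $K_i$ episodes this is $O(m^{1/4}\sqrt{K_i})$ after substituting $\varepsilon$.

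\emph{Large projections.} A goal along which the agent's posterior errs by more than this threshold, once answered and followed by the observed outcome, provides $\Omega(\varepsilon^2\,\text{projection}^2/m)$ nats about $\sigma$. Since $\sigma$ has only $m\ln2$ nats of entropy, such goals occur at most $O(\sqrt{m}\ln Y)$ times before their sign is resolved. Bounding the total loss from them gives the $\sqrt{\ln Y}$ factor.

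I expect the main obstacle to be this information-accounting step. An adversary may place a goal in a direction that is uninformative about $\sigma$ yet still costly. To handle it I would run the argument in the Bayesian regret form of the information ratio of \citet{russo2018learning}: show that each episode's expected loss squared is at most $O(\sqrt m\,\varepsilon^2/m)$ times the expected information gain about $\sigma$ from the goal and the outcome, then sum. The delicate point is to use the fact that a goal revealing little information must have a small regret gap, which follows from the threshold choice above.

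Finally, the statement of Theorem~\ref{thm:price}(c) about the agent that knows the adversary's strategy follows from the same computation with the exact Bayesian posterior.
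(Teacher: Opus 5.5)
Deferring the lower bound to the hypercube paragraph of Appendix~\ref{app:adaptive-b} matches the paper, and the information-ratio route you name is the right one. Your upper bound, however, has a genuine gap. Your agent is greedy on the posterior mean, and no information-ratio bound holds for such an agent: when it plays $b$ it observes nothing, yet its posterior expected regret can stay of order $\varepsilon$.

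Here is a concrete failure, even for an agent that knows the adversary and so holds the exact posterior. In episode~$1$ the adversary reveals the goal direction $u:=\sigma_1\sigma$, so the agent learns $\sigma\in\{u,-u\}$ with posterior $\frac12$ on each. In every later episode it repeats this direction, for which $P_\sigma g=\frac12\pm\frac\varepsilon2$, with $v$ slightly above the agent's posterior mean $\frac12$. These later goals are functions of the history and carry no information. The greedy agent therefore plays $b$ forever and loses about $\varepsilon/4$ per episode. That is $\Omega(K\varepsilon)=\Omega(\sqrt{YMK})$ in total, which contradicts the $Y^{1/4}$ bound.

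This is exactly where your large-projection step breaks. In every episode $|\langle c_g,\sigma-\widehat\sigma\rangle|=m$, but the goal is never followed by an observed outcome. Even when $a$ is played, one successor carries at most $\mathcal O(\varepsilon^2)$ nats about $\sigma$, not $\Omega(\varepsilon^2\cdot\text{projection}^2/m)$. Counting against $m\ln2$ nats therefore does not bound the number of costly goals.

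The paper's agent is Thompson sampling on the posterior that conditions on the goals. Its probability $p_k$ of playing $a$ then equals the posterior probability that $a$ is better, and Cantelli's and the Cauchy--Schwarz inequalities give $r_k\le4\sqrt{p_k}\,s_k$, with $s_k$ the posterior standard deviation of $P_ig_k$. The factor $Y^{1/4}$ comes from bounding $s_k^2$ through the posterior covariance $C$, namely $c^\top Cc\le\sqrt m(\Xi+m\eta)$ with $\Xi=\sum_{j,l}C_{jl}^2/(C_{ll}+\eta)$. This is measured against the potential $\sum_{i,j}\ln(\mathds V[\sigma^{(i)}_j]+\eta)$. The agent's action lowers this potential by at least $p_k\varepsilon^2\Xi/m$ in expectation, and no goal can raise it in expectation, since it is concave in the posterior.

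This joint potential over all start states also replaces your per-state reduction. Lemma~\ref{lem:reconstruction}(b) is not available there: it assumes a goal-memoryless agent and a block adversary, whereas a general adversary's goals at $s_i$ may reveal $\sigma^{(i')}$ for $i'\ne i$.

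Finally, a posterior under a ``worst-case adversary model'' does not give one agent for all adversaries. Against any other adversary it is not the true conditional law, so the Bayesian argument no longer applies. The paper instead rounds goals to a grid so the game is finite. It bounds the maximin value by the Thompson-sampling agent that knows each fixed adversary, and obtains a single minimax agent from von Neumann's theorem. The rounding is what the additive $1$ pays for.
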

The agent is Thompson sampling on the posterior that also conditions on the goals, and the minimax theorem turns it into one agent for all adversaries, which Theorem~\ref{thm:price}(c) does not provide. With the constants above, which we have not optimised, the separation takes effect only for very large $Y$. On this hypercube family, agents that use past goals as data thus incur at most a factor $Y^{1/4}$, and whether they incur any factor is open.

\begin{proof}[Proof of Proposition~\ref{prop:goal-reading}]
The lower bound is the argument of Theorem~\ref{thm:price}(b) on the hypercube family (Appendix~\ref{app:adaptive-b}). For the upper bound, write $Y$ for the number of successors, $m=\lfloor Y/2\rfloor\ge2$ for the number of pairs, $K'=K/M$, and $\varepsilon=\sqrt{3m/(64K')}$. Start state $i$ has the law $P_i=P_{\sigma^{(i)}}$ of the family of the proof of Lemma~\ref{lem:goal-uniform-width}, and the signs $\sigma^{(1)},\dots,\sigma^{(M)}\in\{-1,1\}^m$ are independent and uniform. A goal that pays $g\in[0,1]^Y$ at the successors gives action $a$ at start state $i$ the value
\begin{equation*}
P_ig\;=\;\alpha(g)+\tfrac{\varepsilon}{2m}\langle c(g),\sigma^{(i)}\rangle,\qquad c_j(g):=g(y_{2j-1})-g(y_{2j})\in[-1,1],
\end{equation*}
with $\alpha(g)$ known. Rewards at the start states shift $v$ by a known amount, and Eq.~\eqref{eq:ts-ratio} holds for every real $v$. Action $b$ has the value $v$, and the regret of an episode depends only on the action at the start state.

\emph{A Thompson-sampling agent for each adversary.} Fix an adversary, possibly randomised, and let $\mathcal F_k$ collect everything the agent has seen before acting in episode $k$, the current goal and start state included. The agent knows the adversary's strategy and can therefore compute the posterior of $\sigma$ given $\mathcal F_k$. It draws $\tilde\sigma$ from this posterior and plays $a$ if and only if $P_{\tilde\sigma^{(i)}}g_k\ge v_k$. Write $W:=P_ig_k$, and let $\mu_k$, $s_k^2$ be its posterior mean and variance and $p_k:=\mathds{P}(W\ge v_k\mid\mathcal F_k)$, the probability that the agent plays $a$. The posterior expected regret of episode $k$ is
\begin{equation*}
r_k=(1-p_k)\mathds{E}[(W-v_k)^+\mid\mathcal F_k]+p_k\mathds{E}[(v_k-W)^+\mid\mathcal F_k],
\end{equation*}
and we claim
\begin{equation}
r_k\;\le\;4\sqrt{p_k}\,s_k.
\label{eq:ts-ratio}
\end{equation}
The claim concerns a single episode, and we drop the index $k$ in its proof. Cantelli's inequality states that $\mathds{P}(W-\mu\ge t)\le s^2/(s^2+t^2)$ and $\mathds{P}(\mu-W\ge t)\le s^2/(s^2+t^2)$ for $t\ge0$. If $v\ge\mu$, then
\begin{equation*}
(1-p)\mathds{E}(W-v)^+\le\mathds{E}[(W-\mu)\mathds 1\{W\ge v\}]\le s\sqrt p
\end{equation*}
 by Cauchy--Schwarz. Cantelli's inequality with $t=v-\mu$ gives $p\le s^2/(s^2+(v-\mu)^2)\le s^2/(v-\mu)^2$, hence $p(v-\mu)\le\sqrt p\,s$. With $\mathds{E}(v-W)^+\le(v-\mu)+\mathds{E}(\mu-W)^+\le(v-\mu)+s$, we attain $p\,\mathds{E}(v-W)^+\le p(v-\mu)+p\,s\le2\sqrt p\,s$. If $v<\mu$ and $\mu-v\ge s$, using Cantelli's inequality once again, we attain $1-p\le s^2/(s^2+(\mu-v)^2)\le\frac12$, which yields
\begin{equation*}
(1-p)\mathds{E}(W-v)^+\le(1-p)(\mu-v)+s\le\frac{s^2(\mu-v)}{s^2+(\mu-v)^2}+s\le\frac32s.
\end{equation*}
Moreover, $p\,\mathds{E}(v-W)^+\le\mathds{E}(\mu-W)^+\le s$. The sum is at most $\frac52s\le4\sqrt p\,s$, since $p\ge\frac12$. If $v<\mu$ and $\mu-v<s$, then, by Cauchy--Schwarz,
\begin{equation*}
(1-p)\mathds{E}(W-v)^+\le\mathds{E}[(W-v)\mathds 1\{W\ge v\}]\le\sqrt p\sqrt{s^2+(\mu-v)^2}\le\sqrt{2p}\,s
\end{equation*}
and $p\,\mathds{E}(v-W)^+\le p\,s\le\sqrt p\,s$.

\emph{Variance against correlations.} Let $C$ be the posterior covariance of $\sigma^{(i)}$ given $\mathcal F_k$, fix $\eta>0$, and let $\Xi:=\sum_{j,l}C_{jl}^2/(C_{ll}+\eta)$. Since $|c_j|\le1$, the Cauchy--Schwarz inequality gives
\begin{equation*}
c^\top Cc\;\le\;\sum_{j,l}|C_{jl}|\;\le\;\sqrt{\Xi}\,\sqrt{m\textstyle\sum_l(C_{ll}+\eta)}\;\le\;\sqrt{m\Xi(\Xi+2m\eta)}\;\le\;\sqrt m\,(\Xi+m\eta),
\end{equation*}
where the third step uses $C_{ll}\le C_{ll}^2/(C_{ll}+\eta)+\eta$. Hence $s_k^2=(\varepsilon/(2m))^2c^\top Cc\le\varepsilon^2(\Xi+m\eta)/(4m^{3/2})$.

\emph{A potential.} Let $\Lambda:=\sum_{i\le M}\sum_{j\le m}\ln\big(\mathds{V}[\sigma^{(i)}_j\mid\cdot\,]+\eta\big)$ be evaluated at the current posterior. The variance of a coordinate is concave in the law, and $\ln$ is concave and increasing. Hence, $\Lambda$ is concave in the posterior and does not increase in expectation under any further conditioning, such as a goal. If the agent plays $a$ at start state $i$, it observes one successor $\omega$, which falls in pair $j$ with probability $1/m$, whatever $\sigma$, and is its first element with probability $(1+\varepsilon\sigma^{(i)}_j)/2$. A direct computation gives, for every $l$,
\begin{equation*}
\mathds{V}\big(\mathds{E}[\sigma^{(i)}_l\mid\mathcal F_k,\omega]\big)=\varepsilon^2C_{jl}^2/(1-\varepsilon^2\mu_j^2)\ge\varepsilon^2C_{jl}^2
\end{equation*}
 given that $\omega$ falls in pair $j$, with the variance taken over $\omega$ given $\mathcal F_k$ and $\mu_j$ the posterior mean of $\sigma^{(i)}_j$. By the law of total variance, Jensen's inequality and $\ln(x-d)\le\ln x-d/x$, the term $l$ of $\Lambda$ decreases in expectation by at least $\varepsilon^2C_{jl}^2/(C_{ll}+\eta)$. Averaging over $j$, $\mathds{E}[\Lambda_k-\Lambda_{k+1}\mid\mathcal F_k]\ge p_k\varepsilon^2\Xi/m$. Since $\Lambda\le Mm\ln(1+\eta)$ and $\Lambda\ge Mm\ln\eta$, the expected decreases $\mathds{E}[\Lambda_k-\Lambda_{k+1}]$ of the potential over all episodes sum to at most $Mm\ln(1+1/\eta)$.

\emph{Summing.} By Eq.~\eqref{eq:ts-ratio} and the two bounds above,
\begin{equation*}
\mathds{E}r_k^2\le16\,\mathds{E}[p_ks_k^2]\le(4/\sqrt m)\,\mathds{E}[p_k\varepsilon^2\Xi/m]+4\varepsilon^2\eta/\sqrt m.
\end{equation*}
Summing over $k$, with $\eta=1/m$ and $K\varepsilon^2=3mM/64$,
\begin{equation*}
\textstyle\sum_k\mathds{E}r_k^2\;\le\;4M\sqrt m\ln(m+1)+\frac{3M}{16\sqrt m}\;\le\;5M\sqrt m\ln(m+1),
\end{equation*}
and the Cauchy--Schwarz inequality gives
\begin{equation*}
\mathds{E}\mathfrak R^{\mathrm{goal}}_K=\sum_k\mathds{E}r_k\le\sqrt{5Mm^{1/2}K\ln(m+1)}\le2Y^{1/4}\sqrt{MK\ln Y}.
\end{equation*}

\emph{One agent for all adversaries.} The agent above depends on the adversary, and we now remove this dependence with the minimax theorem. Let every agent round each revealed reward down to the grid $\{j/K^2:0\le j\le K^2\}$ and act on the rounded goals. Since $H=2$, rounding changes the value of each action by at most $1/K^2$, and hence the regret of every agent against every adversary by at most $2/K$ in total. Consider the game in which the adversary chooses rounded goals and start states, the agent chooses actions, chance draws $\sigma$ from the prior and the successors, and the agent pays its expected regret against the rounded goals. Each player chooses among finitely many options at each of finitely many decision points. Both players thus have finitely many pure strategies, and the expected regret is bilinear in their mixed strategies. Von Neumann's minimax theorem therefore gives
\begin{equation*}
\min_{\text{agent}}\max_{\text{adversary}}\mathds{E}\mathfrak R^{\mathrm{goal}}_K\;=\;\max_{\text{adversary}}\min_{\text{agent}}\mathds{E}\mathfrak R^{\mathrm{goal}}_K
\end{equation*}
over mixed strategies. A mixed adversary draws a pure strategy at the start and is a randomised adversary, against which the Thompson-sampling agent that knows it attains the bound above. The right-hand side is therefore at most this bound, and a minimising mixed agent on the left attains it against every pure adversary, and hence against every mixture of them. Every adversary of the original interaction, composed with the rounding, is such a mixture, with its own randomness and its unrounded choices as private randomness. This agent therefore has regret at most the bound above plus $2/K\le1$ against every adversary.
\end{proof}

\begin{proof}[Proof of the second claim of Theorem~\ref{thm:price}(c)]
The proof is that of Proposition~\ref{prop:goal-reading} with $\sigma^{(i)}$ replaced by $\tilde\theta^{(i)}\in[-B,B]^m$, for an agent that knows the adversary. Neither the variance identity for a sample, nor the inequality $c^\top Cc\le\sqrt m(\Xi+m\eta)$, nor the Thompson-sampling bound depends on the prior. The potential now ranges over at most $Mm\ln(1+B^2/\eta)=Mm\ln(1+16K')$ for $\eta=1/m$, since $mB^2=16K'$. Hence $\sum_k\mathds{E}r_k^2\le4M\sqrt m\ln(1+16K')+M$, and the Cauchy--Schwarz inequality gives the bound.
\end{proof}

\subsection{Goals Fixed in Advance}
\label{app:adaptive-oblivious}

\emph{Theorem~\ref{thm:price}(d).} A member of the Gaussian family is an MDP with an exact abstraction, the identity, with $X=S$, $A=2$ and $H=2$, so that $T=2K$. Proposition~\ref{prop:oblivious} below therefore bounds the regret of UCBVI-CH against an oblivious adversary by $\mathcal O(L\sqrt{SK})+\mathcal O(S^2L\ln T)$, which is $\widetilde{\mathcal O}(\sqrt{SK})$ for $K\ge S^3$. A fixed goal is the special case in which the adversary fixes the same goal in every episode.

\emph{Scope of Theorem~\ref{thm:price}.} The scope is narrower than the rates suggest. The horizon is $2$, and the bounds are expectations over the prior. The price over a fixed goal needs the adversary to choose the start states, and the agent of part~(c) that uses past goals knows the adversary's strategy. Since the constants $c$ and $C$ are far from optimised, part~(a) is an asymptotic statement.

\begin{proposition}[Goals fixed in advance cost no extra regret]
\label{prop:oblivious}
Let $\phi$ be exact, and let an oblivious adversary fix the goals and start states before the first episode, possibly as functions of $\overline P_\phi$. With probability at least $1-\delta$, UCBVI-CH run with the goal $R_k$ in episode $k$ and without its clamp at the previous values has
\begin{equation*}
\mathfrak R^{\mathrm{goal}}_K\;\le\;\mathcal O(HL\sqrt{XAT})+\mathcal O(H^2X^2AL(H+\ln T)),\qquad L=\mathcal O(\ln(XAHT/\delta)).
\end{equation*}
\end{proposition}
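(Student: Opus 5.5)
The plan is to rerun the proof of Theorem~\ref{thm:ucbvi-fixed}(a) with $\epsilon_{\text{rep}}(\phi)=0$, replacing $R^*$ by the known goal $R_k$ in the backup, and to check that every probabilistic statement it uses still holds when the reward function changes across episodes. The mechanism that makes this possible is obliviousness. The goals $R_1,\dots,R_K$ and the start states are fixed before the first episode, possibly as functions of $\overline P_\phi$, and are therefore deterministic given the transition law. So for each $k$ and $h$ the optimal value $V^*_{h+1}(\cdot\,;R_k)$ is a fixed function of $\overline P_\phi$, independent of the samples. Since $\phi$ is exact, the successors of a row are i.i.d.\ from $\overline P_\phi(\cdot\mid x,a)$, as in Appendix~\ref{app:lem-novelty-order}. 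Lemma~\ref{lem:row-conc}(i) therefore applies to each $g=V^*_{h+1}(\cdot\,;R_k)$ with $b=H$.

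\emph{Step 1: concentration event.} Take a union bound over the at most $XA$ pooled rows, the counts $n\le T$, the steps $h\le H$, and now also the $K$ goals. This gives $\big|(\widetilde P_n-P)V^*_{h+1}(\cdot\,;R_k)\big|\le H\sqrt{\iota'/(2n)}$ with $\iota'=\ln(16X^2AHKT^2/\delta)$. That is still $\mathcal O(\ln(XAHT/\delta))$ because $K\le T$, so the Hoeffding bonus $7HL/\sqrt n$ with a slightly enlarged $L$ dominates it. The coordinatewise Bernstein bound of Eq.~\eqref{eq:row-events} and its consequence, Eq.~\eqref{eq:second-order}, do not involve the goals, so they carry over unchanged.

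\emph{Step 2: optimism.} Induct backward over $h$ within each episode: $\widehat Q_{k,h}\ge Q^*_h(\cdot\,;R_k)$, with the clamp at $H$ preserving the inequality. This is exactly where the clamp at the previous episode's values has to go. Under a changing goal, $\widehat Q_{k-1,h}$ upper-bounds $Q^*_h(\cdot\,;R_{k-1})$ rather than $Q^*_h(\cdot\,;R_k)$, so the outer induction over $k$ in part~(a) breaks. Removing that clamp restores optimism using the inner induction alone.

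\emph{Step 3: regret recursion and summation.} Repeat the recursion of Eq.~\eqref{eq:ch-recursion} with $\eta_R=\eta_P=0$, $g:=V^*_{h+1}(\cdot\,;R_k)$, and $\Delta:=\widehat V_{k,h+1}-g\in[0,H]$. The martingale term $\xi_{k,h}$ remains a martingale difference because $R_k$ and $\pi^k$ are fixed before episode $k$. The count sums, the unvisited-pair cost and the Azuma step are then identical to part~(a), which yields $\mathcal O(HL\sqrt{XAT})+\mathcal O(H^2X^2AL(H+\ln T))$ with probability $1-\delta$.

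The main obstacle is Step 2: I must verify that nothing in the argument secretly relied on the previous-values clamp. I expect it does not, since that clamp only lowers $\widehat Q$ and is used nowhere else. The secondary check is that the union over goals costs only a $\ln K$ inside $L$.
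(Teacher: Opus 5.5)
Your proposal is correct and follows the paper's proof essentially step for step. You rerun Theorem~\ref{thm:ucbvi-fixed}(a) with $\epsilon_{\text{rep}}(\phi)=0$ and the known goal $R_k$, and you extend the union bound of Lemma~\ref{lem:row-conc}(i) to the $KH$ fixed functions $V^*_{h+1}(\cdot\,;R_k)$ per row, which obliviousness legitimises and which keeps $\iota$ of the same order. You then recover optimism from the backward induction over $h$ alone once the clamp at the previous values is dropped, and you rely, as the paper does, on the goal-free bound of Eq.~\eqref{eq:second-order} and on the remaining clamps only lowering $\widehat Q_{k,h}$, so the summation of Appendix~\ref{app:thm-fixed} goes through unchanged.
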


\begin{proof}[Proof of Proposition~\ref{prop:oblivious}]
Write $P:=\overline P_\phi$. The proof is that of Theorem~\ref{thm:ucbvi-fixed}(a) with $\epsilon_{\text{rep}}(\phi)=0$ and the known goal $R_k$ in place of $\widehat R^k_\phi$, with two changes. First, the fixed functions of Lemma~\ref{lem:row-conc}(i) become $V^*_{h+1}(\cdot\,;R_k)$ for all $h\le H$ and $k\le K$. Since the goals and start states are fixed before the first episode, given $P$ these functions are independent of the observed transitions, and Lemma~\ref{lem:row-conc}(i) applies to each of them. The union bound now runs over $KH$ functions per row instead of $H$. This multiplies the number of events by $K$ and leaves $\iota$ of the same order. Second, without the clamp at the previous values, optimism follows from the induction over $h$ alone, because the remaining clamp at $H$ preserves $\widehat Q_{k,h}\ge Q^*_h(\cdot\,;R_k)$. The second-order bound of Eq.~\eqref{eq:second-order} holds for all coordinates at once and needs no change, and the regret recursion uses only that the clamps lower $\widehat Q_{k,h}$. Summing as in Appendix~\ref{app:thm-fixed} gives the claim. Goals that depend on the observed transitions, as in Theorem~\ref{thm:price}(b), make $V^*_{h+1}(\cdot\,;R_k)$ depend on the sampling error, and Lemma~\ref{lem:row-conc}(i) no longer applies.
\end{proof}

\section{Proofs for Section~\ref{sec:learning-phi}}
\label{app:proofs-learned}

\subsection{AIF's Selection by Observations}
\label{app:aif-selection}
\label{sec:jepa}

This subsection states Propositions~\ref{prop:lure} and~\ref{prop:self-confirming} and describes their instances. Their proofs are in Appendix~\ref{app:enactive}.

\emph{Predicting observations.} AIF predicts the next ground state by \emph{posterior averaging}, as it averages over policies by Bayesian model averaging \citep{friston2017active}. Each candidate composes a latent transition with an emission, and AIF weights the candidates' Bayesian predictions by their posterior probabilities (Appendix~\ref{app:models}). All candidates predict the same observations, which makes this score consistent. On every trajectory, posterior averaging predicts the next ground state as well as the Bayesian prediction of $\phi^\dagger$, up to $\ln|\Phi|$ nats of log-loss \citep[\S9.2]{cesabianchi2006prediction} (Proposition~\ref{prop:selection}). The price is that AIF's information gain about the emission directs its exploration.

\begin{proposition}[Curiosity about the emission]
\label{prop:lure}
Let an AIF variant learn its emission matrix under uniform Dirichlet priors, one row per step, as in standard AIF \citep{friston2017active,schwartenbeck2019computational}, so that its parameter information gain contains novelty about the emission. For every $S\ge12$, $\Delta\in(0,\frac12]$, and $\beta<0.09/\Delta$ there is a block MDP with latent rewards, $S$ states, $A=2$, and $H=2$, on which AIF, greedy or softmax, has $\mathds{E}\mathfrak R_K\ge(\Delta/2)\min\{K,\lfloor(S-2)/2\rfloor\}$, when it is given the decoder onto $X=3$ latent states, the latent kernel, and the rewards. This regret is linear in $K$ whenever $S\ge2K+2$. On the same MDP, the enactive agent (GOLF on the class $\mathcal F$ of Section~\ref{sec:learning-phi}) has regret $\widetilde{\mathcal O}(\sqrt T)$ for every $S$. For CaI and calibrated AIF, which have no novelty about the emission, the law of the sequence of actions, and hence $\mathds{E}\mathfrak R_K$, is the same for every $S$.
\end{proposition}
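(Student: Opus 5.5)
I would build an explicit block MDP with $H=2$, $A=2$ and three latent states: a start latent $x_0$ whose block is the single state $s_0$, a latent $x_{\mathrm{safe}}$ whose block is one ground state $c$, and a latent $x_{\mathrm{lure}}$ whose block consists of the remaining $J:=S-2$ ground states $n_1,\dots,n_J$ with uniform emission. At $s_0$, action $a_c$ leads deterministically to $x_{\mathrm{safe}}$ and action $a_n$ to $x_{\mathrm{lure}}$. Both latents at step~2 are absorbing and carry latent rewards $r(x_{\mathrm{safe}})=\frac12$ and $r(x_{\mathrm{lure}})=\frac12-\Delta$, with $R^*(s_0,\cdot)=0$. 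The optimal first action is therefore $a_c$, and each play of $a_n$ costs $\Delta$. AIF is given the decoder, $B^*$ and the rewards. The only unknowns are the emission rows $\mathbf A_{h}(\cdot\mid x)$, one per step, each with a uniform Dirichlet prior on its block.

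The key computation is the emission novelty $I_{\mathcal Q}(\mathbf A;s_2\mid x_2)$ at step~2 for each branch. For $x_{\mathrm{safe}}$ the block is a singleton, so the novelty is identically zero. For $x_{\mathrm{lure}}$ the row is $\mathrm{Dir}(\mathds 1+\text{counts})$ over $J$ categories with total mass $J+n$ after $n$ plays of $a_n$. Eq.~\eqref{eq:novelty-lb} then gives novelty at least $(2J-3)/(6(J+n))$, which is at least $(2J-3)/(12J)\ge0.18$ whenever $n\le J$ and $J\ge10$, i.e.\ $S\ge12$. The precise constant $0.09$ in the hypothesis suggests bounding the novelty below by roughly $0.09\cdot2$ and using half of it. In this model the step-2 emission novelty enters AIF's step-1 value of $a_n$ with weight $\beta^{-1}$. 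The state term is the same for both branches, since each branch leads to a deterministic latent, and novelty about $B$ vanishes because $B$ is known. Hence $Q_1(s_0,a_n)-Q_1(s_0,a_c)\ge -\Delta+\beta^{-1}\cdot0.18>0$ as long as $\beta\Delta<0.09$ and $n\le\lfloor J/2\rfloor$. I would track constants so that the threshold reads $n\le\lfloor(S-2)/2\rfloor$, using $(2J-3)/(6(J+n))$ with $n\le J/2$, which gives about $2/9$.

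For the greedy rule, this means AIF plays $a_n$ in each of the first $\min\{K,\lfloor(S-2)/2\rfloor\}$ episodes, losing $\Delta$ each time. For the per-step softmax, a positive index gap gives probability above $\frac12$ for $a_n$. Each such episode then costs at least $\Delta/2$ in expectation, and the count of $a_n$-plays only increases, so the condition $n\le\lfloor J/2\rfloor$ holds throughout. The clamp at $r_0(H-h+1)$ must be checked not to bind: $\frac12+\beta^{-1}\cdot\text{novelty}$ might exceed the cap for small $\beta$. In that case I would either note that the capped value still ties or exceeds $a_c$'s value of $\frac12$, or set the cap high enough. This is the step I expect to need care.

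For the remaining claims, GOLF's $\widetilde{\mathcal O}(\sqrt T)$ bound follows from Theorem~\ref{thm:enactive}(a) with $X=3$ and $\Phi=\{\phi^\dagger\}$, so that $C_{\mathrm e}$ does not depend on $S$. For CaI and calibrated AIF, the planning quantities depend only on latent counts and rewards, and the latent process has the same law for every $J$. The law of the action sequence is therefore independent of $S$.
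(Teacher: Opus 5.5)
Your proposal is correct and follows the paper's proof. It uses the same three-block instance with a large, uniformly emitting lure block. It makes the same observation that only the lure block's emission novelty survives: the state term and the $B$-novelty vanish, and the clamp cannot reverse the preference because its cap exceeds $\frac12$. It also makes the same reductions for GOLF, CaI and calibrated AIF. The one difference is how you bound the novelty from below. You use the generic Eq.~\eqref{eq:novelty-lb}, which gives $(2J-3)/(9J)\ge0.18$ for $n\le J/2$ and $J\ge10$. The paper instead applies $\psi(2)=1-\gamma_{\mathrm E}$ to the at least $J-n$ still-unobserved categories and obtains $0.09$.

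One slip: your intermediate claim $(2J-3)/(12J)\ge0.18$ for $n\le J$ is false, since that quantity is below $\frac16$. It is harmless, because it is still at least $0.14>0.09$.
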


\emph{Predicting the abstraction's own state.} Proposition~\ref{prop:lure} suggests removing the emission model. Joint-embedding predictive architectures \citep{lecun2022path,assran2023self} and self-predictive representations in RL \citep{gelada2019deepmdp,schwarzer2021data,ni2024bridging} predict the next abstract state rather than the next observation. The score $\Lambda^{\mathrm{lat}}_\phi$ of Section~\ref{sec:learning-phi} is AIF's model of Appendix~\ref{app:models} without its emission factor. It has no emission, and hence no information gain about an emission. Removing the emission, however, also changes the target. Each candidate abstraction is now scored by how well it predicts its own labels, a target that differs from one candidate abstraction to the next. We now show that this self-referential target can confirm a coarse abstraction, one that merges two states whose best actions differ, whatever rule the agent uses to explore.

The decoder-free counterpart of AIF scores a candidate abstraction $\phi$ by the Bayesian likelihood $\Lambda^{\mathrm{lat}}_\phi$ of its own labels $\phi(s^j_{h+1})$ given $(\phi(s^j_h),a^j_h)$ in all past transitions, under uniform Dirichlet priors on rows over a common label set $\mathcal X$, one row per step. We call an agent \emph{latent-predictive} if, before each episode $k$, it selects a candidate abstraction $\phi_k$ that maximises $\Lambda^{\mathrm{lat}}_\phi$ and acts through it. Its action at each step depends on the current state only through $\phi_k$, and on the history arbitrarily. It may explore by any rule, calibrated novelty included.

The hard instance has four states. From the start state, both actions lead to $u_A$ or $u_B$ with equal probability. At $u_A$ action $a$, and at $u_B$ action $b$, reaches an absorbing rewarding state $g$ with a small probability. Every other move returns to $u_A$ or $u_B$ at random. The true abstraction $\phi^\dagger$ keeps $u_A$ and $u_B$ apart, and the coarse candidate abstraction $\psi$ merges them.

\begin{proposition}[Latent prediction confirms a coarse abstraction]
\label{prop:self-confirming}
For every $H\ge4$, $K\ge1$ and $\delta\in(0,1)$ there are a realizable block MDP with $S=X=4$, $A=2$ and latent rewards, and a candidate set $\Phi=\{\phi^\dagger,\psi\}$, with the following properties.
\begin{itemize}
\item[(i)] Whatever the policy, with probability at least $1-\delta$, before every episode $k\le K$, $\ln\Lambda^{\mathrm{lat}}_\psi-\ln\Lambda^{\mathrm{lat}}_{\phi^\dagger} \ge 0.44(k-1)-58H\ln(8HK/\delta)$.
\item[(ii)] Under every policy that acts through $\psi$, the labels and actions $(\psi(s_h),a_h)_{h\le H}$ follow an episodic MDP on the labels of $\psi$.
\item[(iii)] With probability at least $1-\delta$, every latent-predictive agent has $\mathfrak R_K\ge H(K-k_0)/128$, with $k_0:=1+132H\ln(8HK/\delta)$. This holds even if the agent also predicts the rewards, which both candidate abstractions predict exactly. GOLF on $\Phi$ has $\mathfrak R_K=\widetilde{\mathcal O}(H^2\sqrt T)$.
\end{itemize}
\end{proposition}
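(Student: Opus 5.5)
I would build the instance described just before the statement and make its parameters explicit. Take $\mathcal S=\{s_0,u_A,u_B,g\}$, $\mathcal A=\{a,b\}$. From $s_0$ both actions lead to $u_A$ or $u_B$ with probability $\frac12$. At $u_A$ action $a$, and at $u_B$ action $b$, reaches $g$ with probability $p$, and otherwise go to $u_A$ or $u_B$ uniformly. The wrong action always goes to $u_A$ or $u_B$ uniformly. The state $g$ is absorbing with reward $1$, and all other rewards are $0$. The candidates are $\phi^\dagger$, the identity with $X=4$ labels, and $\psi$, which merges $u_A,u_B$ into one label and sends one label of the common set to no state. Both are realizable in the sense that rewards are latent. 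I will fix $p$ as a constant, say $p=\frac14$, so that the regret per episode is of order $H$.

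Part~(ii) is the analogue of the argument in Proposition~\ref{prop:lb-abstraction}. Every transition into the pool $\{u_A,u_B\}$ lands on $u_A$ and $u_B$ with probability $\frac12$ each, independently of the past and of the action. So given the $\psi$-labels and actions, the ground state in the pool is uniform. A pool step under either action reaches $g$ with probability $p/2$ and stays in the pool otherwise. This is a Markov kernel on the labels of $\psi$ that does not depend on the action. Part~(iii)'s regret then follows: any agent acting through $\psi$ enters $g$ at rate $p/2$ instead of $p$. The value gap computed as in Proposition~\ref{prop:lb-abstraction} is of order $H$, and I would aim for at least $H/128$ with $H\ge4$.

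Part~(i) is the core and the main obstacle. I would write $\ln\Lambda^{\mathrm{lat}}_\psi-\ln\Lambda^{\mathrm{lat}}_{\phi^\dagger}$ as a difference of Dirichlet--multinomial marginal likelihoods, one sum per step-indexed row. Under $\phi^\dagger$, the rows at $u_A$ and $u_B$ must predict which of $u_A,u_B$ comes next. That is a fresh fair coin, costing $\ln2$ nats per pool transition. Under $\psi$ the pool label is predicted almost deterministically, except for the entry into $g$. The $s_0$ transition also costs $\ln2$ under $\phi^\dagger$ and nothing under $\psi$.

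Each episode has at least one pool transition from $s_0$ at step~1, whatever the policy. So the per-episode advantage of $\psi$ is at least $\ln 2$ minus entropy terms shared by both candidates. The $g$-entry events are encoded identically by both partitions, so their cost cancels up to the split of counts across rows.

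The Bayesian regret of a Dirichlet predictor against the empirical code is $\mathcal O(X\ln n)$ per row, which gives the $-58H\ln(8HK/\delta)$ correction over the $\mathcal O(H)$ step-rows. The linear term $0.44(k-1)\le(\ln2-\text{slack})(k-1)$ comes from a martingale lower bound on the realized code-length gap. The increments are bounded, with conditional mean at least $\ln 2$ per episode, and Azuma gives the $\ln(1/\delta)$ term. The delicate part is that the policy may correlate actions with the ground state under $\phi^\dagger$. But the pool successor is uniform regardless of action, so the coin-flip cost holds under every policy.

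For (iii), once $k>k_0$ the gap in (i) is positive. A latent-predictive agent then selects $\psi$ and acts through it, so (ii) gives loss at least $H/128$ per episode on the event of (i). Predicting rewards adds identical terms to both candidates, since both are reward-exact. GOLF follows from Theorem~\ref{thm:enactive}(a) with $|\Phi|=2$ and $X=4$.
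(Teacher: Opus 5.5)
\paragraph{Constant entry probability breaks part~(iii).}
Fixing $p=\frac14$ is a genuine error. Through $\psi$ the agent enters $g$ at rate $p/2$ per pool step, and the optimal policy enters at rate $p$. The loss per episode is therefore
\[
\sum_{j=1}^{H-2}\Big[\big(1-\tfrac p2\big)^j-(1-p)^j\Big]\;\le\;\Big(\frac2p-1\Big)-\Big(\frac1p-1\Big)\;=\;\frac1p,
\]
which is at most $4$ for every $H$. Both policies reach $g$ within $\mathcal O(1)$ steps and then collect the same reward. So for $H>512$ and $K$ large compared with $k_0$, the claimed $\mathfrak R_K\ge H(K-k_0)/128$ is false on your MDP, whereas the statement must hold for every $H\ge4$.

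The appeal to Proposition~\ref{prop:lb-abstraction} does not rescue this. Its hypothesis $(H-2)\eta\le2$ with $\eta=2p$ forces $p\le1/(H-2)$, because its proof needs $j\delta_0\le\frac12$ for all $j\le H-2$. The fix is to let the entry probability scale as $1/H$. The paper takes $p=2\delta_0=1/(4(H-2))$, so that $(H-2)\eta=\frac12$ and $\epsilon_{\text{rep}}(\psi)=\frac12H^2\eta=H^2/(4(H-2))\ge H/4$. Every episode played through $\psi$ then costs at least $\epsilon_{\text{rep}}(\psi)/32\ge H/128$. With this change, your arguments for (ii) and the rest of (iii) coincide with the paper's.

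\paragraph{Part~(i): the probability is in the wrong place.}
Your outline is right, but the linear term is deterministic and needs no martingale.
\begin{itemize}
\item Lower-bound $\ln\Lambda^{\mathrm{lat}}_\psi$ by the log-likelihood of a fixed label law $\theta$, minus $3\ln(K+1)$ for each of the at most $6H$ rows with data. Here $\theta$ sends label $1\to2$; label $2\to4$ with probability $p$ and $\to2$ otherwise, under both actions; and label $4\to4$.
\item Upper-bound $\ln\Lambda^{\mathrm{lat}}_{\phi^\dagger}$ by its maximum likelihood. This equals the true log-likelihood plus $\sum_rn_rD_{\mathrm{KL}}(\widehat p_r\|P^*_r)$.
\end{itemize}
On every trajectory, the per-transition terms $\ln\theta-\ln P^*$ are then $\ln2$ from $s_0$, $0$ or $\ln2$ after a correct pool action, and $\ln(2(1-p))>0$ after a wrong one. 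This gives at least $\ln2$ per episode pathwise.

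The $g$-entry costs also do not cancel as you claim. $\psi$ pays an extra $\ln(1/(1-p))$ on wrong-action pool transitions, which is harmless only because $\phi^\dagger$ pays $\ln2$ on the same transition.

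What actually needs concentration is how much $\phi^\dagger$'s fitted code undercuts the true code by overfitting its fair coins, that is, the KL sum above. Your sketch never isolates this term. The paper bounds it by $32H\ln(K+1)+8H\ln(8HK/\delta)$, using the method of types with a union bound over the $8H$ rows and the counts. Ville's inequality for the nonnegative supermartingale $\Lambda^{\mathrm{lat}}_{\phi^\dagger}/P^*$ would also work. Azuma on bounded increments, as you propose, gives a deviation of order $\sqrt{k\ln(1/\delta)}$, not the additive logarithmic correction in the statement.
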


In this instance, the candidate $\psi$ merges $u_A$ and $u_B$, whose best actions differ, as the abstraction of Proposition~\ref{prop:lb-abstraction} does. Under $\psi$, the merged state leads to itself except for rare entries into $g$, and the labels of $\psi$ are nearly deterministic. Under $\phi^\dagger$, the label of every transition into $\{u_A,u_B\}$ records which of the two states is reached, and each is reached with probability $\frac12$. Each such label costs $\phi^\dagger$ about $\ln2$ nats of likelihood that $\psi$ does not pay, and part~(i) shows that $\psi$ gains at least $0.44$ nats per episode, whatever the agent does.

Part~(ii) makes precise how a poor abstraction steers the agent away from the data that would reveal its error, the failure mode of the introduction. An agent that acts through $\psi$ chooses its action at $u_A$ and $u_B$ from the same information. Every transition into $\{u_A,u_B\}$ lands on $u_A$ and $u_B$ with probability $\frac12$ each, whatever the previous state and action, and the two states are therefore equally likely given the agent's history. The labels that the agent generates thus follow an MDP on the labels of $\psi$ exactly, and they fit $\psi$ perfectly. No test that uses the labels of $\psi$ alone, and no exploration rule that acts through $\psi$, can reveal the error. Only the ground states reveal it, and the emission model predicts them. Since $\psi$ uses three labels, the regularisers that keep JEPA from collapsing to a constant map do not exclude it.

Let $\widehat I_\phi$ be the empirical mutual information between $\phi(s_{h+1})$ and $(\phi(s_h),a_h)$ over the observed transitions. A score that discounts the entropy of the labels in this way never strictly prefers a coarsening of $\phi^\dagger$. Indeed, if $\psi=f\circ\phi^\dagger$ for a map $f$ on labels, then $\psi(s_{h+1})=f(\phi^\dagger(s_{h+1}))$ and $(\psi(s_h),a_h)$ is a function of $(\phi^\dagger(s_h),a_h)$, and the data-processing inequality under the empirical joint law gives $\widehat I_\psi\le\widehat I_{\phi^\dagger}$ on every data set. The target of $\widehat I_\phi$, however, still depends on the candidate, and we know of no regret guarantee for selecting by it.

\subsection{Proof of Proposition~\ref{prop:selection}}
\label{app:selection}

\begin{proposition}[The selection cost of posterior averaging]
\label{prop:selection}
\begin{itemize}
\item[(a)] On every trajectory, $-\sum_{\tau\le T}\ln\mathcal Q(s_{\tau+1}\mid\mathcal H_\tau)\le-\sum_{\tau\le T}\ln P_{\phi^\dagger}(s_{\tau+1}\mid\mathcal H_\tau)+\ln|\Phi|$.
\item[(b)] The cumulative predictive divergence of an oracle that knows $\phi^\dagger$, $\mathfrak D_T:=\sum_{\tau}\mathds{E}\,D_{\mathrm{KL}}\big(P^*(\cdot\mid s_\tau,a_\tau)\,\|\,P_{\phi^\dagger}(\cdot\mid\mathcal H_\tau)\big)$,
is $\mathcal O\big((X^2A+S)H\ln T\big)$, and $\sum_{\tau\le T}\mathds{E}\,\|P^*(\cdot\mid s_\tau,a_\tau)-\mathcal Q(\cdot\mid\mathcal H_\tau)\|_1 \le \sqrt{2T\,\mathfrak D_T}+\sqrt{2T\ln|\Phi|}$.
\end{itemize}
\end{proposition}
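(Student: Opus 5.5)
For part~(a) I would use the standard mixture-of-experts bound. The posterior-averaged predictor is a Bayesian mixture over candidates. Under the uniform prior $P(\phi)=1/|\Phi|$, its sequential predictions multiply to the mixture marginal likelihood, $\prod_{\tau\le T}\mathcal Q(s_{\tau+1}\mid\mathcal H_\tau)=|\Phi|^{-1}\sum_\phi\Lambda_\phi$. The key modelling fact here is that $\mathcal Q(\phi\mid\mathcal H_\tau)\propto\Lambda_\phi(\mathcal H_\tau)$. Consequently the one-step predictive is $\sum_\phi\mathcal Q(\phi\mid\mathcal H_\tau)P_\phi(\cdot\mid\mathcal H_\tau)$, and the product telescopes, since every candidate predicts the same ground observations. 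Actions cancel from the product because they are chosen by the agent from the history and enter no candidate's likelihood. Dropping all terms except $\phi^\dagger$ gives $\sum_\phi\Lambda_\phi\ge\Lambda_{\phi^\dagger}$, and taking $-\ln$ yields (a) deterministically on every trajectory. This is \citet[\S9.2]{cesabianchi2006prediction}.

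For the first claim of part~(b), I would treat the oracle $P_{\phi^\dagger}$ as a product of Dirichlet-categorical (Laplace-type) predictors. There is one predictor for each latent row $(h,x,a)$ on $\mathcal X$, giving $XAH$ rows of size $X$, and one for each emission row $(h,x)$ on its block. The emission rows number $XH$, and their sizes sum to $S$. The predictive log-probability of the realised $s'$ splits as $\ln\widehat B(\phi^\dagger(s'))+\ln\widehat{\mathbf A}(s'\mid\phi^\dagger(s'))$. This split is valid because the block structure makes the latent component of $s'$ determined by $s'$. The true law factorises the same way, so the expected KL splits into a transition part and an emission part.

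For each Dirichlet row with $d$ categories and $n$ i.i.d.\ samples, the cumulative expected KL of the add-one predictor equals the expected redundancy of the KT/Laplace mixture. This redundancy is at most $(d-1)\ln(n+1)+\mathcal O(d)$, bounded via the marginal-likelihood versus maximum-likelihood comparison. The samples within a row are i.i.d.\ whatever the policy, by the table argument of Appendix~\ref{app:lem-novelty-order}. Since $n\le K\le T$, summing gives $XAH\cdot X\ln T$ from the transition rows and $H\sum_x|\phi^{\dagger-1}(x)|\ln T=SH\ln T$ from the emission rows. Together these give $\mathcal O((X^2A+S)H\ln T)$.

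For the $\ell_1$ claim, I would use the triangle inequality through $P_{\phi^\dagger}$. Pinsker's inequality and Cauchy--Schwarz give
\[
\sum\mathds E\|P^*-P_{\phi^\dagger}\|_1\le\sum\sqrt{2\,\mathds E D_{\mathrm{KL}}}\le\sqrt{2T\mathfrak D_T}.
\]
For $\|P_{\phi^\dagger}-\mathcal Q\|_1$, I would combine (a) with the identity $\mathds E[-\ln\mathcal Q+\ln P_{\phi^\dagger}]=\sum\mathds E[D_{\mathrm{KL}}(P^*\|\mathcal Q)-D_{\mathrm{KL}}(P^*\|P_{\phi^\dagger})]$. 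This identity bounds $\sum\mathds E D_{\mathrm{KL}}(P^*\|\mathcal Q)\le\mathfrak D_T+\ln|\Phi|$.

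Alternatively, I could aim directly at $\sum\mathds E\|P^*-\mathcal Q\|_1\le\sqrt{2T(\mathfrak D_T+\ln|\Phi|)}$, which is at most the stated sum by $\sqrt{a+b}\le\sqrt a+\sqrt b$. This route is cleaner and is the one I would take. The main obstacle is the redundancy bound per row, which needs care with the random, policy-dependent visit counts. I would handle it by bounding each row's cumulative KL pathwise through the Dirichlet marginal likelihood, so that the bound holds at every stopping count.
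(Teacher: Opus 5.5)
Your proposal is correct and follows the paper's proof. The steps match one for one:
\begin{itemize}
\item For (a), you use the same mixture telescoping.
\item For the $\ell_1$ claim, you use the same identity $\sum_\tau\mathds E\,D_{\mathrm{KL}}\big(P^*(\cdot\mid s_\tau,a_\tau)\,\|\,\mathcal Q(\cdot\mid\mathcal H_\tau)\big)=\mathfrak D_T+\mathds E\sum_\tau\ln\big(P_{\phi^\dagger}(s_{\tau+1}\mid\mathcal H_\tau)/\mathcal Q(s_{\tau+1}\mid\mathcal H_\tau)\big)\le\mathfrak D_T+\ln|\Phi|$, followed by Pinsker, Cauchy--Schwarz and $\sqrt{a+b}\le\sqrt a+\sqrt b$.
\item For $\mathfrak D_T$, you use the same pathwise comparison of each row's uniform-Dirichlet marginal likelihood with its maximum likelihood. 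The ratio is at most $\binom{n+m-1}{m-1}\le(n+1)^{m-1}$ (Lemma~\ref{lem:redundancy}), so no $\mathcal O(d)$ slack is needed.
\end{itemize}
The triangle-inequality route you sketch first would not work as written. The identity controls $D_{\mathrm{KL}}(P^*\|\mathcal Q)$, not any discrepancy between $P_{\phi^\dagger}$ and $\mathcal Q$, so you were right to take the direct route.
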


\begin{proof}
Let $\mathcal H_\tau=(s_1,a_1,\dots,s_\tau,a_\tau)$ be the history before the prediction of $s_{\tau+1}$, and for each $\phi\in\Phi$ let $P_\phi(\cdot\mid\mathcal H_\tau)$ be the prediction of $s_{\tau+1}$ that Section~\ref{sec:aif-model}'s model assigns once $\phi$ is fixed, with its Dirichlet posterior integrated out under a proper prior. Actions are chosen by the agent and enter every candidate's likelihood only as conditioning variables. Hence, the posterior over abstractions after $\tau$ rounds is $\mathcal Q(\phi\mid\mathcal H_{\tau+1})\propto P(\phi)\prod_{\tau'\le\tau}P_\phi(s_{\tau'+1}\mid\mathcal H_{\tau'})$, and the averaged prediction $\mathcal Q(\cdot\mid\mathcal H_\tau)=\sum_\phi\mathcal Q(\phi\mid\mathcal H_\tau)P_\phi(\cdot\mid\mathcal H_\tau)$ telescopes,
\begin{equation}
\prod_{\tau=1}^T\mathcal Q(s_{\tau+1}\mid\mathcal H_\tau)\;=\;\sum_{\phi\in\Phi}P(\phi)\prod_{\tau=1}^TP_\phi(s_{\tau+1}\mid\mathcal H_\tau)\;\ge\;P(\phi^\dagger)\prod_{\tau=1}^TP_{\phi^\dagger}(s_{\tau+1}\mid\mathcal H_\tau).
\end{equation}

\emph{Part (a).} Taking $-\ln$ of the inequality gives, on every trajectory,
\begin{equation}
\textstyle\sum_{\tau=1}^{T}\ln\big(P_{\phi^\dagger}(s_{\tau+1}\mid\mathcal H_\tau)/\mathcal Q(s_{\tau+1}\mid\mathcal H_\tau)\big)\;\le\;\ln\big(1/P(\phi^\dagger)\big)\;=\;\ln|\Phi|
\label{eq:kl-cost}
\end{equation}
for the uniform prior. For the class of all surjective maps, $\ln|\Phi|\le S\ln X$. This is the mixture bound for the logarithmic loss \citep[\S9.2]{cesabianchi2006prediction}.

\emph{Part (b).} Write $P^*_\tau:=P^*(\cdot\mid s_\tau,a_\tau)$, the law of $s_{\tau+1}$ given $\mathcal H_\tau$, and $\mathcal Q_\tau$, $P^\dagger_\tau$ for the two predictions. Conditioning on $\mathcal H_\tau$,
\begin{equation}
\mathds{E}\Big[\ln\frac{P^*_\tau(s_{\tau+1})}{\mathcal Q_\tau(s_{\tau+1})}\,\Big|\,\mathcal H_\tau\Big]=D_{\mathrm{KL}}\big(P^*_\tau\|\mathcal Q_\tau\big),\qquad
\mathds{E}\Big[\ln\frac{P^*_\tau(s_{\tau+1})}{P^\dagger_\tau(s_{\tau+1})}\,\Big|\,\mathcal H_\tau\Big]=D_{\mathrm{KL}}\big(P^*_\tau\|P^\dagger_\tau\big).
\end{equation}
Subtracting, summing over $\tau$ and using part (a) inside the expectation,
\begin{equation}
\sum_{\tau=1}^T\mathds{E}\,D_{\mathrm{KL}}\big(P^*_\tau\|\mathcal Q_\tau\big)\;=\;\mathfrak D_T+\mathds{E}\sum_{\tau=1}^T\ln\frac{P^\dagger_\tau(s_{\tau+1})}{\mathcal Q_\tau(s_{\tau+1})}\;\le\;\mathfrak D_T+\ln|\Phi| .
\end{equation}
Pinsker's inequality gives $\|P^*_\tau-\mathcal Q_\tau\|_1\le\sqrt{2D_{\mathrm{KL}}(P^*_\tau\|\mathcal Q_\tau)}$ pointwise on every history. Hence
\begin{align}
\sum_{\tau=1}^{T}\mathds{E}\,\big\|P^*_\tau-\mathcal Q_\tau\big\|_1
\;&\le\; \mathds{E}\sum_{\tau=1}^{T}\sqrt{2D_{\mathrm{KL}}(P^*_\tau\|\mathcal Q_\tau)}
\;\le\; \sqrt{T\,\mathds{E}\sum_{\tau=1}^T2D_{\mathrm{KL}}(P^*_\tau\|\mathcal Q_\tau)} \nonumber\\
&\le\; \sqrt{2T\big(\mathfrak D_T+\ln|\Phi|\big)},
\label{eq:selection-chain}
\end{align}
where the second step is Cauchy--Schwarz, $\sum_{\tau\le T}u_\tau\le\sqrt{T\sum_{\tau\le T}u_\tau^2}$, applied pathwise and followed by Jensen's inequality for the concave map $z\mapsto\sqrt z$. Since $\sqrt{a+b}\le\sqrt a+\sqrt b$, this is part~(b). For any $\mathcal H_\tau$-measurable $V_\tau$ with range $H$, $|(P^*_\tau-\mathcal Q_\tau)V_\tau|\le(H/2)\|P^*_\tau-\mathcal Q_\tau\|_1$, and part~(b) therefore also bounds the cumulative error of the averaged prediction in every such value function, up to the factor $H/2$.

\emph{The oracle's divergence.} Under Assumption~\ref{assum:block}, the ground prediction of $\phi^\dagger$ factors into an abstract prediction and the emission. The sequential predictions telescope, which yields
\begin{equation*}
\mathfrak D_T\;=\;\mathds{E}\ln\frac{P^*(\mathcal D_K)}{\Lambda_{\phi^\dagger}(\mathcal D_K)}\;\le\;H(X^2A+S)\ln(K+1)
\end{equation*}
by Lemma~\ref{lem:redundancy}, which holds on every trajectory with $(X-1)\ln(K+1)$ for each of the $XAH$ transition rows and $\mathcal O(SH\ln K)$ for the emission rows in total. Hence $\mathfrak D_T=\mathcal O((X^2A+S)H\ln T)$.
\end{proof}

The oracle's divergence rests on the following bound, whose proof also enters that of Proposition~\ref{prop:self-confirming}.

\begin{lemma}[Redundancy of the true decoder]
\label{lem:redundancy}
Let $\mathcal D_K$ be the transitions of the first $K$ episodes, collected by an arbitrary policy, $\Lambda_\phi(\mathcal D_K)$ the marginal likelihood that candidate $\phi$ assigns to them under \texttt{model(L)} of Appendix~\ref{app:models}, given the start states and actions, and $P^*(\mathcal D_K)$ their probability under $P^*$. Under Assumption~\ref{assum:block}, on every trajectory,
\begin{equation*}
\ln\big(P^*(\mathcal D_K)/\Lambda_{\phi^\dagger}(\mathcal D_K)\big)\;\le\;H(X^2A+S)\ln(K+1).
\end{equation*}
\end{lemma}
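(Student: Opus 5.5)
Under Assumption~\ref{assum:block}, $P^*(\mathcal D_K)$ factorises over transitions into a latent-kernel factor $B^*(\phi^\dagger(s')\mid\phi^\dagger(s),a)$ and an emission factor $\mathbf A^*(s'\mid\phi^\dagger(s'))$. Under \texttt{model(L)} with the decoder fixed at $\phi^\dagger$, the marginal likelihood $\Lambda_{\phi^\dagger}(\mathcal D_K)$ factorises the same way, as a product of independent Dirichlet--multinomial marginal likelihoods. There is one for each transition row $(h,x,a)$ over $\mathcal X$, and one for each emission row $(h,x)$ over the block $\phi^{\dagger-1}(x)$. The actions enter only as conditioning variables, so the agent's policy does not enter the ratio. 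The log-ratio $\ln(P^*/\Lambda_{\phi^\dagger})$ is therefore a sum over rows. Each row contributes $\ln(\prod_i q_i^{c_i}/\mathrm{DM}(c))$, where $q$ is the true row, $c$ the vector of counts it received, and $\mathrm{DM}$ the uniform-prior Dirichlet--multinomial probability of the observed sequence.

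For a single row with $d$ categories and $n$ samples, I would bound this term pathwise. First, $\prod_iq_i^{c_i}\le\prod_i(c_i/n)^{c_i}$, since the empirical law maximises the likelihood. Second, the uniform Dirichlet marginal of a fixed sequence is $\Gamma(d)\prod_i c_i!/\Gamma(n+d)$, that is, $1/\big(\binom{n+d-1}{d-1}\binom{n}{c}\big)$. Third, the multinomial coefficient satisfies $\binom{n}{c}\prod_i(c_i/n)^{c_i}\le1$, because it is one term of the multinomial expansion of $(\sum_i c_i/n)^n=1$. Together these give a redundancy of at most $\ln\binom{n+d-1}{d-1}\le(d-1)\ln(n+1)$. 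This is the standard Krichevsky--Trofimov/Laplace regret bound, and it holds on every sequence, with no probabilistic step.

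It remains to sum. Rows are indexed by step, so each row is visited at most once per episode and $n\le K$. The $XAH$ transition rows have $d=X$ each, giving $XAH(X-1)\ln(K+1)\le HX^2A\ln(K+1)$. For each $h$, the emission rows $(h,x)$ over $x\in\mathcal X$ have $d_x=|\phi^{\dagger-1}(x)|$ with $\sum_x(d_x-1)\le S$, giving at most $HS\ln(K+1)$. Adding the two yields $H(X^2A+S)\ln(K+1)$.

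The main point needing care is the exact factorisation. The emission at step $h$ must be indexed as in \texttt{model(L)}, with one row per step for the state reached. The start state is conditioned on and is not emitted, so it adds no factor. I also need the true emission to be supported inside the block, so that no true probability lies outside the support of the Dirichlet row. Assumption~\ref{assum:block} guarantees this. Once these are checked, the rest is bookkeeping.
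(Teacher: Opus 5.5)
Your proposal is correct and follows essentially the same route as the paper: factorise over the $XAH$ transition rows and $XH$ emission rows, bound the true likelihood by the rowwise maximum likelihood, use that the multinomial coefficient times the maximum likelihood is at most one, and conclude with $\binom{n+m-1}{m-1}\le(n+1)^{m-1}$ summed over the rows. The paper uses the exact count $\sum_x(d_x-1)=S-X$ per step where you use the bound $\le S$, which gives the same final inequality.
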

\begin{proof}
Under $\phi^\dagger$ the likelihood factorises over $XAH$ transition rows, with $X$ categories each, and $XH$ emission rows, with $|\phi^{\dagger-1}(x)|$ categories each, and $P^*(\mathcal D_K)$ is the likelihood of one parameter value of this model and is thus at most the product of the rowwise maximum likelihoods. On a row with $m$ categories and counts $n_1,\dots,n_m$ summing to $n\le K$, the uniform-Dirichlet marginal likelihood is $(m-1)!\prod_in_i!/(n+m-1)!$, and the maximum likelihood is at most the reciprocal of the multinomial coefficient $n!/\prod_in_i!$, because the type class has probability at most one. The ratio of the two is therefore at most 
\begin{equation*}
\binom{n+m-1}{m-1}=\prod_{j=1}^{m-1}\frac{n+j}{j}\le(n+1)^{m-1}.
\end{equation*}
 Summing $(m-1)\ln(K+1)$ over the rows, with $\sum(m-1)=H[XA(X-1)+S-X]\le H(X^2A+S)$, gives the claim.
\end{proof}

\subsection{Proofs of Theorem~\ref{thm:enactive} and Propositions~\ref{prop:lure} and~\ref{prop:self-confirming}}
\label{app:enactive}

Throughout, $\mathcal T_h$ denotes the Bellman operator of step $h$, given by $[\mathcal T_hf](s,a)=R^*(s,a)+\mathds{E}_{s'\sim P^*(\cdot\mid s,a)}\big[\max_{a'}f(s',a')\big]$, $\mathcal T^R_h$ is the same operator with a reward $R$ in place of $R^*$, and $d^\pi_h(s,a)$ is the probability that policy $\pi$ visits $(s,a)$ at step $h$.

\begin{lemma}[Completeness of the class $\mathcal F$]
\label{lem:enactive-complete}
Under Assumption~\ref{assum:block}, every $f_{h+1}\in\mathcal F_{h+1}$ and every latent reward $R(s,a)=r(\phi^\dagger(s),a)$ with $r:\mathcal X\times\mathcal A\to[0,1]$ satisfy $\mathcal T^R_hf_{h+1}\in\mathcal F_h$. In particular, $\mathcal T_hf_{h+1}\in\mathcal F_h$ under latent rewards.
\end{lemma}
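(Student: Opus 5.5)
The plan is to verify membership directly from the definition of $\mathcal F_h$. Fix $f_{h+1}\in\mathcal F_{h+1}$. By definition there are $\phi\in\Phi$ and $g:\mathcal X\times\mathcal A\to[0,H-h]$ with $f_{h+1}(s,a)=g(\phi(s),a)$. Define $G(x):=\max_{a'}g(x,a')$, so that $\max_{a'}f_{h+1}(s',a')=G(\phi(s'))$ is a function of $\phi(s')$ alone. The proof then computes $\mathcal T^R_hf_{h+1}$ and shows that it factors through the true decoder $\phi^\dagger$. That is enough, because $\phi^\dagger\in\Phi$ by Assumption~\ref{assum:block}, so any function of the form $(s,a)\mapsto\tilde g(\phi^\dagger(s),a)$ with the right range lies in $\mathcal F_h$. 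The candidate $\phi$ of $f_{h+1}$ and the decoder used for the output may differ, and that is allowed, since $\mathcal F_h$ ranges over all candidates.

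The key computation uses the block structure. For $s$ with $\phi^\dagger(s)=x$,
\begin{equation*}
\mathds{E}_{s'\sim P^*(\cdot\mid s,a)}\big[G(\phi(s'))\big]=\sum_{x'}B^*(x'\mid x,a)\sum_{s'}\mathbf A^*(s'\mid x')\,G(\phi(s')),
\end{equation*}
and the right-hand side depends on $s$ only through $x=\phi^\dagger(s)$. Writing $\bar G(x'):=\sum_{s'}\mathbf A^*(s'\mid x')G(\phi(s'))$, we get $[\mathcal T^R_hf_{h+1}](s,a)=r(\phi^\dagger(s),a)+\sum_{x'}B^*(x'\mid\phi^\dagger(s),a)\bar G(x')=:\tilde g(\phi^\dagger(s),a)$. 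The emission supports make $\bar G$ an average of $G\circ\phi$ over the block $\phi^{\dagger-1}(x')$, even though $\phi$ may cut across blocks. This is the step where the latent rewards and the block factorisation of Assumption~\ref{assum:block} are both essential. Without latent rewards, $R^*(s,a)$ would vary within a block of $\phi^\dagger$, and the factorisation would fail.

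It remains to check the range. Since $r\in[0,1]$ and $\bar G\in[0,H-h]$, $\tilde g$ takes values in $[0,H-h+1]$, the range required by $\mathcal F_h$. The special case follows by taking $r=r^*$, so that $R=R^*$ and $\mathcal T^R_h=\mathcal T_h$. I expect no real obstacle here. The only subtle point is recognising that the output decoder must be switched from $\phi$ to $\phi^\dagger$, which is exactly why realizability ($\phi^\dagger\in\Phi$) is needed.
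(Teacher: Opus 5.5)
Your proposal is correct and matches the paper's proof essentially step for step. Both write $f_{h+1}=g(\phi(\cdot),\cdot)$ and use the block factorisation to show that the expected next-step maximum depends on $(s,a)$ only through $(\phi^\dagger(s),a)$. Both then add the latent reward and check the range $[0,H-h+1]$, which makes the result a table over the labels of $\phi^\dagger\in\Phi$.
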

\begin{proof}
Let $f_{h+1}(s,a)=g(\phi(s),a)$ for some $\phi\in\Phi$ and some table $g$ with values in $[0,H-h]$, and let $\psi(s'):=\max_{a'}g(\phi(s'),a')$. By Assumption~\ref{assum:block},
\begin{equation*}
\mathds{E}_{s'\sim P^*(\cdot\mid s,a)}[\psi(s')]\;=\;\sum_{x'\in\mathcal X}B^*\big(x'\mid\phi^\dagger(s),a\big)\sum_{s'\in\phi^{\dagger-1}(x')}\mathbf A^*(s'\mid x')\,\psi(s')\;=:\;u\big(\phi^\dagger(s),a\big),
\end{equation*}
a function of $(\phi^\dagger(s),a)$ with values in $[0,H-h]$, even though $\psi$ depends on the labels of an arbitrary candidate $\phi$. Hence $\mathcal T^R_hf_{h+1}(s,a)=r(\phi^\dagger(s),a)+u(\phi^\dagger(s),a)$ is a table over the labels of $\phi^\dagger\in\Phi$ with values in $[0,H-h+1]$, hence an element of $\mathcal F_h$.
\end{proof}

\begin{lemma}[Latent coverability]
\label{lem:coverability}
Under Assumption~\ref{assum:block} with a fixed start state, the coverability coefficient of \citet[Def.~2]{xie2023role} (cf.\ their Prop.~10), $C_{\mathrm{cov}}:=\inf_{\mu}\sup_{\pi,h}\|d^\pi_h/\mu_h\|_\infty$, satisfies $C_{\mathrm{cov}}\le XA$ for every $S$.
\end{lemma}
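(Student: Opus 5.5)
The plan is to exhibit an explicit reference distribution $\mu=(\mu_h)_h$ on $\mathcal S\times\mathcal A$ and to bound $d^\pi_h(s,a)/\mu_h(s,a)$ uniformly over $\pi$ and $h$. The natural choice is a product of a latent-level cover and the true emission. Let $\bar d_h(x):=\sup_\pi\mathds{P}^\pi(\phi^\dagger(s_h)=x)$, the largest probability with which any policy reaches latent state $x$ at step $h$. With $\bar D_h:=\sum_x\bar d_h(x)\le X$, set
\begin{equation*}
\mu_h(s,a):=\frac{\bar d_h(\phi^\dagger(s))}{\bar D_h}\,\mathbf A^*\big(s\mid\phi^\dagger(s)\big)\,\frac1A .
\end{equation*}
For $h=1$, the fixed start state gives $d^\pi_1(s,a)\le\mathds 1\{s=s_1\}$. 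We then take $\mu_1(s_1,\cdot)=1/A$, which yields a ratio of at most $A\le XA$, so the start state needs no emission argument.

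The key step, for $h\ge2$, is the factorisation of the occupancy through the emission. By Assumption~\ref{assum:block}, the state $s_h$ is drawn as $x_h\sim B^*(\cdot\mid\phi^\dagger(s_{h-1}),a_{h-1})$ and then $s_h\sim\mathbf A^*(\cdot\mid x_h)$, independently of the history given $x_h$. Hence, for every (possibly history-dependent) policy,
\begin{equation*}
d^\pi_h(s,a)\le\mathds{P}^\pi(s_h=s)=\mathds{P}^\pi(x_h=\phi^\dagger(s))\,\mathbf A^*(s\mid\phi^\dagger(s))\le\bar d_h(\phi^\dagger(s))\,\mathbf A^*(s\mid\phi^\dagger(s)).
\end{equation*}
Dividing by $\mu_h$, the emission factor and $\bar d_h$ cancel, and we are left with $d^\pi_h/\mu_h\le A\bar D_h\le XA$. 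Both $\mathbf A^*$ and $\bar d_h$ are independent of $S$, which gives the claim for every $S$. The points where $\mu_h$ vanishes are exactly those unreachable by every policy, so $d^\pi_h$ vanishes there as well and the ratio is $0/0:=0$.

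I expect the only delicate point to be the measurability and conditioning in the factorisation. We must check that $\mathds{P}^\pi(s_h=s\mid x_h)=\mathbf A^*(s\mid x_h)$ even for nonstationary, history-dependent $\pi$. This follows because the emission draw at step $h$ happens after the policy's last action and depends only on $x_h$. We must also check that taking $\sup_\pi$ inside $\bar d_h$ is legitimate. It is, since the bound holds for each $\pi$ separately and $\bar d_h$ is just a deterministic vector in $[0,1]^X$. With these two checks, the inequality $C_{\mathrm{cov}}\le XA$ follows directly, matching the latent version of Prop.~10 of \citet{xie2023role}.
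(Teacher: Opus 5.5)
Your proof is correct and follows the paper's argument. You factor the step-$h$ occupancy through the emission, $\mathds{P}^\pi(s_h=s)=\mathds{P}^\pi(\phi^\dagger(s_h)=\phi^\dagger(s))\,\mathbf A^*(s\mid\phi^\dagger(s))$, and you treat the fixed start state separately with $\mu_1(s,a)=\mathds 1\{s=s_1\}/A$. The only difference is in the latent weights: the paper uses uniform weights, $\mu_h(s,a)=\mathbf A^*(s\mid\phi^\dagger(s))/(XA)$, which needs no supremum over policies, whereas your reachability weights give the slightly sharper bound $A\sum_x\sup_\pi\mathds{P}^\pi(\phi^\dagger(s_h)=x)\le XA$.
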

\begin{proof}
For $h\ge2$, Assumption~\ref{assum:block} gives
\begin{equation*}
\mathds{P}^\pi(s_h=s)=\mathds{P}^\pi\big(\phi^\dagger(s_h)=\phi^\dagger(s)\big)\,\mathbf A^*(s\mid\phi^\dagger(s))\le \mathbf A^*(s\mid\phi^\dagger(s))
\end{equation*}
for every policy $\pi$, which yields $d^\pi_h(s,a)\le \mathbf A^*(s\mid\phi^\dagger(s))$. The distribution $\mu_h(s,a):=\mathbf A^*(s\mid\phi^\dagger(s))/(XA)$ sums to one, since the emissions of the $X$ latent states each sum to one, and it gives $d^\pi_h/\mu_h\le XA$. At $h=1$, $d^\pi_1(s,a)\le\mathds{1}\{s=s_1\}$ and $\mu_1(s,a):=\mathds{1}\{s=s_1\}/A$ gives the ratio $A\le XA$.
\end{proof}

\emph{The class of all maps.} GOLF needs no Bayesian model. Its class may therefore contain every map from $\mathcal S$ to $\mathcal X$, surjective or not, since a table $g$ need not use every latent state. For the class of all maps, $\ln|\Phi|=S\ln X$, and Theorem~\ref{thm:enactive} gives $\widetilde{\mathcal O}(H^2\sqrt{XAST\ln X})$, which matches the lower bound $\Omega(\sqrt{HST})$ of Proposition~\ref{prop:unstructured} in $S$ and $T$.

\emph{The agent and its radius.} The enactive agent runs on the problem normalised to values in $[0,1]$, that is, with rewards and value functions divided by $H$. For a reward $R$, let
\begin{equation*}
\mathcal L^{R}_h(g,f_{h+1}):=\sum_{j<k}\big(g(s^j_h,a^j_h)-R(s^j_h,a^j_h)-\max_{a'}f_{h+1}(s^j_{h+1},a')\big)^2
\end{equation*}
be the squared Bellman residual of $g$ on the transitions of the episodes before $k$. Before episode $k$, the agent keeps the version space of the functions $f\in\mathcal F$ with $\mathcal L^{R}_h(f_h,f_{h+1})-\min_{g\in\mathcal F_h}\mathcal L^{R}_h(g,f_{h+1})\le\omega$ for every $h$, with $R=R^*$ under a fixed reward and $R=R_k$ under goals, and it acts greedily for the most optimistic element $f^{(k)}$ of this set. It uses the radius
\begin{equation}
\omega\;=\;cH^2\ln(\mathcal N HK/\delta)
\label{eq:golf-radius}
\end{equation}%
for the raw squared residuals, that is, $c\ln(\mathcal N HK/\delta)$ in normalised units, with $\mathcal N$ the covering number of $\mathcal F$ at scale $H/K$ in the supremum norm (scale $1/K$ in normalised units) and $c$ an absolute constant.

\begin{proof}[Proof of Theorem~\ref{thm:enactive}(a)]
Divide rewards and value functions by $H$, which puts the problem in the normalisation of \citet{xie2023role} (total reward in $[0,1]$, a fixed start state, and $T$ there counting episodes) and divides the regret by $H$. Lemma~\ref{lem:enactive-complete} is their completeness assumption (Assumption~1), and Lemma~\ref{lem:coverability} bounds their coverability coefficient by $XA$. Their Theorem~1 states that GOLF with radius $\omega=c\ln(KH|\mathcal F|/\delta)$ has, with probability at least $1-\delta$ for a finite class, $\mathrm{Reg}=\mathcal O\Big(H\sqrt{C_{\mathrm{cov}}\,K\ln(KH|\mathcal F|/\delta)\,\ln K}\Big)$. The factor $\ln K$ stands inside the square root, as the last step of their proof shows. We replace it by $1+\ln K$, an upper bound that is also valid at $K=1$. For the infinite class $\mathcal F$, the concentration lemmas of GOLF hold with $|\mathcal F|$ replaced by the covering number $\mathcal N$ at scale $1/K$ in the normalised units \citep[Lemmas~39--40]{jin2021bellman}, which is the standard extension \citet{xie2023role} invoke. Their argument uses $\mathcal F$ only through these lemmas and through completeness. For each $\phi\in\Phi$, a grid of mesh $H/K$ on $[0,H]^{\mathcal X\times\mathcal A}$ covers the tables of $\mathcal F_h$ at scale $H/K$ in the supremum norm with at most $(K+2)^{XA}$ points. Hence, $\ln\mathcal N\le H\big(\ln|\Phi|+XA\ln(K+2)\big)$ for the product class. Covering each $\mathcal F_h$ separately would save a factor $\sqrt H$. Undoing the normalisation multiplies the regret by $H$. With $K=T/H$, we attain
\begin{align*}
\mathfrak R_K\;&=\;\mathcal O\Big(H^2\sqrt{XA\,K\,\big(H\ln|\Phi|+HXA\ln(K+2)+\ln(HK/\delta)\big)(1+\ln K)}\Big)\\
&=\;\mathcal O\Big(H^2\sqrt{XA\,T\,\big(\ln|\Phi|+XA\ln(HK/\delta)\big)(1+\ln K)}\Big).
\end{align*}
\end{proof}

\begin{proof}[Proof of Theorem~\ref{thm:enactive}(b)]
Normalise as in part~(a), and let $\mathcal G:=\{(s,a)\mapsto r(\phi^\dagger(s),a):r\in[0,1]^{\mathcal X\times\mathcal A}\}$ be the latent goals. Before episode $k$, the relabelling agent keeps the version space defined above with $R=R_k$ and acts greedily for its most optimistic element $f^{(k)}$. The proof of Theorem~1 of \citet{xie2023role} uses three facts, which we verify for the goal $R_k$ of every episode.

\emph{(1) Concentration.} For a fixed goal $R$ and fixed functions $g,g'\in\mathcal F_h$ and $f_{h+1}\in\mathcal F_{h+1}$, let $y:=R(s_h,a_h)+\max_{a'}f_{h+1}(s_{h+1},a')$ be the target of a transition. Given the history before the transition and $(s_h,a_h)$, the target has mean $[\mathcal T^R_hf_{h+1}](s_h,a_h)$. The difference of the two squared losses therefore decomposes into a predictable part and a centred martingale difference,
\begin{equation*}
(g-y)^2-(g'-y)^2\;=\;\big(g-\mathcal T^R_hf_{h+1}\big)^2-\big(g'-\mathcal T^R_hf_{h+1}\big)^2+\zeta,\qquad \mathds{E}[\zeta\mid\text{history},s_h,a_h]=0,
\end{equation*}
with all functions evaluated at $(s_h,a_h)$, since the conditional variance of the target cancels. The predictable part is a difference of squared Bellman residuals, and concentration applies to the martingale part. \citet[Lemmas~39--40]{jin2021bellman} turn it, by Freedman's inequality and a union bound over a cover of $\mathcal F$, into two events. On the first, $Q^*(\cdot\,;R)$ lies in the version space. On the second, every $f$ in the version space has $\sum_{j<k}\mathds{E}_{d^{\pi_j}_h}\big[(f_h-\mathcal T^R_hf_{h+1})^2\big]=\mathcal O(\omega)$ for every $h$. Both steps use completeness, $\mathcal T^R_h\mathcal F_{h+1}\subseteq\mathcal F_h$, which Lemma~\ref{lem:enactive-complete} gives for every $R\in\mathcal G$. We extend the union bound to a cover $\mathcal G_K$ of $\mathcal G$ at scale $1/K$ in the supremum norm, with $|\mathcal G_K|\le(K+1)^{XA}$. Replacing $R_k$ by its nearest point in $\mathcal G_K$ changes each squared term by at most $4/K$ in normalised units, hence each loss by at most $4$, which a constant in the radius absorbs. The events hold for every goal of the cover at once. Therefore, they also hold for $R_k$, although $R_k$ is chosen from the history.

\emph{(2) Optimism.} By~(1), $\max_af^{(k)}_1(s_1,a)\ge V^*_1(s_1;R_k)$. Since $\pi^k$ is greedy for $f^{(k)}$, the policy-loss decomposition \citep[Lemma~1]{jiang2017contextual} gives $V^*_1(s_1;R_k)-V^{\pi^k}_1(s_1;R_k)\le\sum_h\mathds{E}_{d^{\pi^k}_h}\big[\delta^{(k)}_h\big]$ with $\delta^{(k)}_h:=f^{(k)}_h-\mathcal T^{R_k}_hf^{(k)}_{h+1}$.

\emph{(3) Coverability.} The argument of \citet{xie2023role} that bounds $\sum_k\mathds{E}_{d^{\pi^k}_h}[\delta^{(k)}_h]$ by the in-sample bounds $\sum_{j<k}\mathds{E}_{d^{\pi_j}_h}[(\delta^{(k)}_h)^2]=\mathcal O(\omega)$ uses only $|\delta^{(k)}_h|\le1$ and the coverability coefficient. It therefore holds for every sequence of such functions, here one that changes with the goal. Lemma~\ref{lem:coverability} bounds the coefficient by $XA$ for every goal, since the coefficient depends only on the dynamics.

The bound of part~(a) therefore holds for $\mathfrak R^{\mathrm{goal}}_K$, with $\ln\mathcal N$ increased by at most $XA\ln(K+1)$, which changes $C_{\mathrm e}$ by a constant factor.
\end{proof}

\begin{proof}[Proof of Proposition~\ref{prop:lure}]
\emph{The instance.} Let $S':=S-2\ge10$, $\mathcal S=\{s_0,u_1,\dots,u_{S'},v\}$, $\mathcal A=\{r,p\}$ and $H=2$, with every episode starting at $s_0$. Action $r$ leads to a uniformly random $u_i$, and action $p$ leads to $v$. Under both actions, every $u_i$ leads to a uniformly random $u_j$, and $v$ is absorbing. Rewards are $R^*(s_0,\cdot)=0$, $R^*(u_i,\cdot)=\frac12-\Delta$ and $R^*(v,\cdot)=\frac12$. The decoder has the three blocks $\{s_0\}$, $\{u_1,\dots,u_{S'}\}$ and $\{v\}$, the latent kernel is deterministic, and the emission of the middle block is uniform. Hence, $P^*(u_j\mid u_i,a)=1\cdot(1/S')$ factorises as Assumption~\ref{assum:block} requires, and the rewards are latent. Action $p$ is optimal, and every episode that plays $r$ costs $\Delta$.

\emph{AIF's evaluation at $s_0$.} AIF knows the decoder, the latent kernel, and the rewards, and learns the emission of each block under a Dirichlet$(1,\dots,1)$ prior, one row for the state reached at each step. Every episode emits the states $s_2$ and $s_3$ reached at steps $1$ and $2$. The blocks $\{s_0\}$ and $\{v\}$ have a single observation and carry no uncertainty. Since the predicted latent state after either action is deterministic, the state term vanishes under exact perception (Appendix~\ref{app:noisy-tv}), and the latent kernel is known, which makes novelty about $B$ vanish too. The only epistemic terms left are the informations $\iota_1$ and $\iota_2$ that $s_2$ and $s_3$ carry about the emission rows of their blocks (Eq.~\eqref{eq:efe-three-term}). Both are zero after $p$. After $r$, both $s_2$ and $s_3$ fall in the middle block, whatever the action at step~$2$, and $\iota_h$ is the novelty of the Dirichlet posterior of the middle block's row for step $h$. In units of reward, AIF's planning value is therefore $\frac12$ for $p$ and at least $\min\{\frac12-\Delta+(\iota_1+\iota_2)/\beta,\,r_0\}$ for $r$, under the recursion of Eq.~\eqref{eq:psi-slot} with its clamps as well as summed along a plan, and AIF prefers $r$ whenever $\iota_1>\beta\Delta$.

\emph{Novelty of a large and sparsely sampled block.} Each episode that plays $r$ adds one observation to each of the two rows of the middle block, and no other episode changes them. After $n$ observations of such a row, its posterior is $\mathrm{Dir}(\alpha)$ with $\alpha_i=1+c_i$, where $c_i$ counts the observations of $u_i$ and $\bar\alpha=S'+n$. At least $S'-n$ of the $\alpha_i$ equal $1$. By the closed form of Eq.~\eqref{eq:novelty-exact},
\begin{equation*}
\iota\;=\;\sum_i\frac{\alpha_i}{\bar\alpha}\big[\psi(\alpha_i+1)-\ln\alpha_i\big]-\big[\psi(\bar\alpha+1)-\ln\bar\alpha\big].
\end{equation*}
The inequalities $\ln(z+\frac12)\le\psi(z+1)\le\ln z+1/(2z)$ for $z>0$ \citep[Lemma~16]{lu2019information} make every bracket in the sum positive, make the subtracted bracket at most $1/(2\bar\alpha)$, and a bracket with $\alpha_i=1$ equals $\psi(2)=1-\gamma_{\mathrm E}\ge0.42$, with $\gamma_{\mathrm E}\approx0.5772$ the Euler--Mascheroni constant, since $\psi(1)=-\gamma_{\mathrm E}$ and $\psi(z+1)=\psi(z)+1/z$. Hence
\begin{equation*}
\iota\;\ge\;0.42\,\frac{S'-n}{S'+n}-\frac1{2(S'+n)}\;\ge\;\frac{0.42}3-\frac1{20}\;\ge\;0.09\qquad\text{for }n\le S'/2,
\end{equation*}
using $S'\ge10$.

\emph{Regret.} Let $J:=\lfloor S'/2\rfloor$. In each of the first $\min\{K,J\}$ episodes fewer than $J$ observations of either row have been made, whatever the agent did before. Hence, $\iota_1\ge0.09>\beta\Delta$ and AIF's planning value of $r$ exceeds that of $p$. Acting greedily, AIF plays $r$ in each of these episodes. Sampling from its softmax, it plays $r$ with probability above $\frac12$. In both cases we attain $\mathds{E}\mathfrak R_K\ge(\Delta/2)\min\{K,J\}$.

\emph{The other agents.} Since the MDP is a realizable block MDP with latent rewards, $X=3$ and $A=2$, Theorem~\ref{thm:enactive} with $\Phi$ the singleton of the known decoder gives the enactive agent regret $\mathcal O(H^2\sqrt{T\ln(HK/\delta)(1+\ln K)})=\widetilde{\mathcal O}(\sqrt T)$, independently of $S$. CaI and calibrated AIF act on the history only through the latent states $x_\tau=\phi(s_\tau)$, the actions and the rewards. The joint law of these variables is the same for every $S$, and so are the law of the sequence of actions and the regret.

\emph{Calibrated sensory curiosity.} The argument uses $\iota_1$ only through $\iota_1\ge0.09$ for $n\le S'/2$ and $\iota_1=\iota_2=0$ after $p$. It therefore applies unchanged to a bonus that adds $g(\iota_h)$ for each row, for any nondecreasing $g$ with $g(0)=0$ and $g(0.09)>\Delta$, which prefers $r$ whenever $g(\iota_1)>\Delta$. For $g(\iota)=c\sqrt\iota$ this holds when $c>\Delta/0.3$.
\end{proof}

\begin{proof}[Proof of Proposition~\ref{prop:self-confirming}]
\emph{The instance.} Take the MDP of Proposition~\ref{prop:lb-abstraction} with $\delta_0:=1/(8(H-2))\le\frac1{16}$, that is, $\eta=4\delta_0$ and $(H-2)\eta=\frac12$. Its states are $s_0,u_A,u_B,g$, its actions $a,b$, and its rewards are $1$ at $g$ and $0$ elsewhere. Let $\mathcal X=\{1,2,3,4\}$, let $\phi^\dagger$ map $s_0,u_A,u_B,g$ to $1,2,3,4$, and let $\psi$ map $s_0$ to $1$, both $u_A$ and $u_B$ to $2$, and $g$ to $4$. The MDP is a realizable block MDP for $\phi^\dagger$, with point-mass emissions and latent rewards. Both candidates make $R^*$ measurable. Hence, a factor that predicts rewards is the same for both. The candidate $\psi$ is the abstraction of Proposition~\ref{prop:lb-abstraction} with $\epsilon_{\text{rep}}(\psi)=\frac12H^2\eta=H^2/(4(H-2))\ge H/4$.

\emph{Part (i).} Let $D_k$ be the transitions of episodes $1,\dots,k-1$, and let $n_r$ count those that fall in a row $r=(x,c,h)$ of a candidate. A row with $m=4$ categories and $n$ observations has uniform-Dirichlet marginal likelihood at least $(n+1)^{-3}$ times its maximum likelihood (proof of Lemma~\ref{lem:redundancy}). The maximum likelihood is at least the likelihood under any fixed law. For $\psi$, fix the laws $\theta$ in which rows at label~$1$ predict label~$2$, rows at label~$2$ predict label~$4$ with probability $2\delta_0$ and label~$2$ otherwise, and rows at label~$4$ predict label~$4$. Since at most $6H$ rows of $\psi$ receive data, we attain
\begin{equation*}
\ln\Lambda^{\mathrm{lat}}_\psi(D_k)\;\ge\;\sum_{\tau}\ln\theta\big(\psi(s_{\tau+1})\mid\psi(s_\tau),a_\tau\big)-18H\ln(K+1),
\end{equation*}
the sum running over the transitions of $D_k$. Since $\phi^\dagger$ is a bijection from $\mathcal S$ onto $\mathcal X$, its rows are the rows $(s,a,h)$ of the ground MDP, with successor laws $P^*(\cdot\mid s,a)$. As the marginal likelihood of $\phi^\dagger$ is at most its maximum likelihood, we get
\begin{equation*}
\ln\Lambda^{\mathrm{lat}}_{\phi^\dagger}(D_k)\;\le\;\sum_\tau\ln P^*(s_{\tau+1}\mid s_\tau,a_\tau)+\sum_rn_rD_{\mathrm{KL}}\big(\widehat p_r\,\|\,P^*_r\big),
\end{equation*}
with $\widehat p_r$ the empirical successor law of row $r$ and $P^*_r$ its true law. Model the data as a table of independent successors drawn in advance for each row, as in Appendix~\ref{app:lem-novelty-order}. The first $n$ successors of a row are then i.i.d.\ draws from $P^*_r$, whatever the policy. By the method of types \citep[Thm.~11.2.1]{cover2006elements}, $\mathds{P}\big(nD_{\mathrm{KL}}(\widehat p_n\|P^*_r)\ge t\big)\le(n+1)^4e^{-t}$ for each row and each count $n$. A union bound over the $8H$ rows of $\phi^\dagger$ and the counts $n\le K$, at $t=4\ln(n+1)+\ln(8HK/\delta)$, shows that with probability at least $1-\delta$, for every $k$,
\begin{equation*}
\sum_rn_rD_{\mathrm{KL}}(\widehat p_r\|P^*_r)\le32H\ln(K+1)+8H\ln(8HK/\delta).
\end{equation*}

It remains to compare $\theta$ with $P^*$ transition by transition. From $s_0$, the successor is $u_A$ or $u_B$ with probability $\frac12$ each, and $\theta$ gives its label probability $1$. Hence, the term $\ln\theta-\ln P^*$ is $\ln2$. From $u_A$ under $a$, or $u_B$ under $b$, the successor is $g$ with probability $2\delta_0$, where the term is $0$, or a pool state with probability $\frac12(1-2\delta_0)$, where the term is $\ln2$. From $u_A$ under $b$, or $u_B$ under $a$, the successor is a pool state with probability $\frac12$, and the term is $\ln(2(1-2\delta_0))\ge\ln2-4\delta_0\ge\ln2-\frac14$. From $g$ the term is $0$. Every term is thus nonnegative, and the first transition of each episode contributes at least $\ln2-\frac14\ge0.44$. Collecting, with $\ln(K+1)\le\ln(8HK/\delta)$,
\begin{equation*}
\ln\Lambda^{\mathrm{lat}}_\psi(D_k)-\ln\Lambda^{\mathrm{lat}}_{\phi^\dagger}(D_k)\;\ge\;0.44(k-1)-58H\ln(8HK/\delta).
\end{equation*}

\emph{Part (ii).} Every transition into the pool $\{u_A,u_B\}$ lands on $u_A$ and $u_B$ with equal probability, whatever the previous state and action. Under a policy that acts through $\psi$, the action at a pool state depends on the history but not on whether the state is $u_A$ or $u_B$. Given the history and the action, the current pool state is therefore uniform. The next state is then $g$ with probability $\frac12\cdot2\delta_0=\delta_0$, and a uniform pool state otherwise. The labels and actions thus follow the MDP on $\{1,2,4\}$ in which label~$1$ leads to label~$2$, label~$2$ leads to label~$4$ with probability $\delta_0$ under both actions, and label~$4$ is absorbing.

\emph{Part (iii).} On the event of part~(i), the bound is positive for $k>k_0$, since $0.44\cdot132>58$. Then $\psi$ is the unique maximiser of $\Lambda^{\mathrm{lat}}_\phi$, and the agent acts through $\psi$. The proof of Proposition~\ref{prop:lb-abstraction} uses only that the action at a pool state does not depend on which pool state the agent is in. Each such episode therefore has regret at least $\epsilon_{\text{rep}}(\psi)/32\ge H/128$, and $\mathfrak R_K\ge H(K-k_0)/128$. Since GOLF on $\Phi$ has $\ln|\Phi|=\ln2$, $X=4$ and $A=2$, Theorem~\ref{thm:enactive}(a) gives $\mathfrak R_K=\mathcal O\big(H^2\sqrt{T\ln(HK/\delta)(1+\ln K)}\big)$.
\end{proof}

\end{document}